\pdfoutput=1

\documentclass{article}

\usepackage[preprint]{tmlr}

\usepackage{amsmath,amssymb,amsthm,mathtools,bm}
\usepackage{microtype}
\usepackage{graphicx}
\usepackage{booktabs,longtable,tabularx,array,threeparttable}
\usepackage{etoolbox,pdflscape}
\usepackage{enumitem}
\usepackage{caption}
\usepackage{url}

\usepackage[hidelinks]{hyperref}
\usepackage[nameinlink,noabbrev]{cleveref}

\newcommand{\tablefont}{\fontsize{10}{12}\selectfont}
\AtBeginEnvironment{table}{\tablefont}
\AtBeginEnvironment{longtable}{\tablefont}

\newtheorem{theorem}{Theorem}[section]

\newtheorem{proposition}[theorem]{Proposition}
\newtheorem{corollary}[theorem]{Corollary}
\newtheorem{assumption}[theorem]{Assumption}

\theoremstyle{definition}
\newtheorem{definition}[theorem]{Definition}
\newtheorem{algorithmdef}[theorem]{Algorithm}

\theoremstyle{remark}

\newcommand{\R}{\mathbb R}
\newcommand{\E}{\mathbb E}
\newcommand{\Prob}{\mathbb P}
\newcommand{\Hh}{\mathcal H}
\newcommand{\Ff}{\mathcal F}
\newcommand{\Aa}{\mathcal A}
\newcommand{\Cc}{\mathcal C}
\newcommand{\Kk}{\mathcal K}
\newcommand{\norm}[1]{\left\lVert #1\right\rVert}

\newcommand{\argmin}{\operatorname*{arg\,min}}

\newcommand{\Null}{\operatorname{null}}
\newcommand{\Range}{\operatorname{range}}

\newcommand{\tr}{\operatorname{tr}}

\newcommand{\kap}{\kappa}
\newcommand{\rhoP}{\rho_{\mathrm{persistent}}}
\newcommand{\DeltaP}{\Delta_{\mathrm{persistent}}}
\newcommand{\DeltaZero}{\Delta_{0}}
\newcommand{\lhat}{\widehat{\lambda}}

\newcommand{\DeltaPers}{\Delta_{\mathrm{persistent}}}
\newcommand{\rhoPers}{\rho}
\newcommand{\lambdahat}{\widehat{\lambda}}
\newcommand{\CI}{\mathrm{CI}_{95\%}}

\newcommand{\kappaF}{\kappa}

\title{Functional compatibility as a determinant of persistent neural learning}

\author{
\name Hossein Javidnia
\email hossein.javidnia@dcu.ie \\
\addr School of Computing\\
Dublin City University\\
Dublin, Ireland
}

\begin{document}

\maketitle

\begin{abstract}
\noindent Neural networks can acquire new capabilities while damaging existing ones, but what determines whether new learning persists remains unclear. We identify functional compatibility, the extent to which incoming learning can coexist with behaviour that must be preserved, as an experimentally manipulable causal determinant of persistence. From identical neural states, we vary compatibility while matching unrestricted learning opportunity and imposing a common retention requirement. Persistent learning increases with compatibility across independent directions, convolutional and transformer architectures, vision and text, and a ten-seed replication. Learning rules and retention constraints determine how much compatible opportunity is retained, whereas nonlinear geometry limits the matched intervention at larger update norms. Functional compatibility therefore reframes stability--plasticity from preventing forgetting to determining which new learning can coexist with existing function and persist.
\end{abstract}

\section{Introduction}
Continual learning is commonly framed as a competition between acquiring new information and preserving what a neural network already knows. Regularization, replay and constrained-gradient methods reduce interference by limiting parameter changes, revisiting old observations or projecting updates away from protected directions.\citep{kirkpatrick2017ewc,rolnick2019replay,lopezpaz2017gem,farajtabar2020ogd,saha2021gpm} Function-space approaches instead protect the behaviour of the network directly.\citep{benjamin2019functionspace} Recent work has also shown that deep networks can progressively lose plasticity during continual training.\citep{dohare2024plasticity} These strategies have substantially advanced continual learning, but they leave a more basic question unresolved: is there a property of the incoming learning itself that determines how much of it can coexist with behaviour that must be preserved? Identifying such a variable would recast stability--plasticity from a comparison of protection strategies into a causal question about the learning process itself.

We call this property \emph{functional compatibility}. Consider a current learning signal with active-coordinate gradient $g_t$, an active-coordinate projector $A_t$, and a projector $\Pi_t$ onto directions compatible with declared protected behaviour. For the scalar-gradient comparator used in our central analysis, compatibility is
\begin{equation}
\kappaF=\frac{\|\Pi_t g_t\|^2}{\|A_t g_t\|^2}.
\end{equation}
This quantity is not a post-hoc forgetting score. It measures, at the pre-update neural state, how much of the current active learning geometry lies in directions that are functionally compatible with what must be retained. Adaptive Functional Metaplasticity (AFM) provides a constructive framework for quantifying and exploiting this quantity. AFM compares protected learning with the endpoint that the same learner would actually accept from the identical state if no protection were imposed. If $\DeltaZero$ is the finite loss decrease achieved by that same-state no-protection endpoint, $\lhat$ is the accepted fraction of the compatible projected path, and $\eta$ is the requested assimilation coordinate, the certified local guarantee under the stated conditions is
\begin{equation}
\frac{\DeltaPers}{\DeltaZero}\geq\frac{\lhat\kappaF}{3}\geq\frac{\eta\kappaF}{3}.
\label{eq:main-bound}
\end{equation}
The bound yields a direct experimental prediction: if compatibility is a causal property of incoming learning rather than a retrospective correlate of interference, deliberately changing $\kappaF$ from the same neural state should change how much new learning is stored persistently under the same retention requirement.

Prior work has studied or exploited gradient alignment, protected subspaces, and interference-related update geometry in continual learning.\citep{riemer2019mer,farajtabar2020ogd,wang2021adamnscl,benjamin2019functionspace,stork2026interference,shu2026zeroth,gong2026drift,kitkana2026subliminal,abbes2026gradient} To our knowledge, this is the first controlled causal study to perform a matched-state, multi-level intervention on a quantified functional compatibility variable while separately controlling the unrestricted learning opportunity and retention requirement. We therefore test compatibility by intervention rather than observing it post hoc. The sections below give the full AFM formulation and theory, followed by the controlled causal experiment, robustness and finite-scale stress tests, chronological benchmark evidence, and reproducibility details.

\section{AFM framework and theoretical foundations}
The following sections give the complete AFM formulation, guarantees, implementation logic, benchmark protocols, ablations and execution analyses that support the paper.

\subsection{Problem formulation and design principle}
\label{sec:problem-setting}

We consider supervised continual learning on a nonanticipating stream in which the learner receives observations and labels but no task, session, episode, bucket, segment, intervention, or boundary identifiers. At protected round $t$, the persistent model parameters are $\theta_t\in\R^d$ and the bounded structural state is $S_t$. The deployed predictor is
\begin{equation}
F_t(x)=G_{\theta_t}(z(x))+S_t(z(x)),
\label{eq:deployed-predictor}
\end{equation}
where $z(x)$ denotes the representation available to the continual learner. For the pathwise retention and optimization results, the loss sequence may be arbitrary; stochastic assumptions are introduced only for the statistical statements that explicitly require them.

AFM separates three objects that are frequently conflated. First, \emph{persistent adaptation} is learning stored in the ordinary model parameters and therefore affects future predictions away from any finite intervention set. Second, \emph{finite deployed protection} refers to exact output constraints on declared observations. Third, \emph{population behavior} concerns inputs beyond those finite protected observations and requires additional assumptions or certificates. The distinction is consequential: exact protection of a finite evidence set does not imply preservation of an unseen population, and AFM does not make that inference.

\subsubsection{Same-state no-protection comparator}
\label{subsec:counterfactual-comparator}

At a protected update, AFM first evaluates the update that the corresponding no-protection learner would actually accept from the same complete pre-step state. The comparator uses the same parameters, structural state, current minibatch, active coordinates, learning-rate and backtracking rules, mutable buffers, and declared private randomness, but no retention constraint. If the resulting endpoint is $\theta_t^0$ with deployed current-minibatch logits $U_t$, its accepted prediction-space loss decrease is
\begin{equation}
\Delta_t^0=\mathcal L_t(F_t)-\mathcal L_t(U_t)>0.
\label{eq:counterfactual-decrease-main}
\end{equation}
This endpoint is a counterfactual reference only. During a protected transaction it is never installed as the persistent base model.

The comparator makes the plasticity cost of protection operational. Rather than comparing a protected step with a nominal gradient or an arbitrary step-size schedule, AFM asks how much of the learning achieved by the actual accepted no-protection transaction can be stored persistently without violating the declared retention geometry.

\subsubsection{Persistent compatible assimilation}
\label{subsec:persistent-assimilation-main}

Let $A_t$ denote the structural-availability projector and $\Pi_t$ the selected protected projector. In the scalar-gradient comparator used by the evaluated implementation,
\begin{equation}
d_t^0=-\alpha_tA_tg_t,
\qquad
v_t=\Pi_td_t^0=-\alpha_t\Pi_tg_t,
\label{eq:projected-comparator-main}
\end{equation}
where $g_t=\nabla\ell_t^{S_t}(\theta_t)$. Write $s_t^0=\|v_t\|$. The certified retention charge along this projected comparator path is
\begin{equation}
C_t(s)=E_ts+\frac{\overline H_t}{2}s^2,
\label{eq:charge-main}
\end{equation}
with $E_t$ the certified first-order leakage term and $\overline H_t$ the behavior-curvature bound. A predeclared assimilation coordinate $\eta_t\in[0,1]$ allocates
\begin{equation}
b_t^{\rm norm}=\eta_t C_t(s_t^0),
\qquad
\lambda_t=\max\{\lambda\in[0,1]:C_t(\lambda s_t^0)\le b_t^{\rm norm}\}.
\label{eq:eta-main}
\end{equation}
The compatible-gradient fraction is
\begin{equation}
\kappa_t=\frac{\|\Pi_tg_t\|^2}{\|A_tg_t\|^2}\in[0,1].
\label{eq:kappa-main}
\end{equation}
Thus $\eta_t$ specifies how much of the projected comparator charge is requested, whereas $\kappa_t$ measures how much active gradient energy is compatible with the selected protected geometry.

\subsubsection{Finite endpoint completion}
\label{subsec:finite-completion-main}

A persistent compatible step generally cannot reproduce the full current endpoint of the unrestricted learner. AFM therefore performs a second, explicitly separate operation in function space. After the safe persistent base move, a bounded compact-cardinal residual is constructed on the frozen representation. On the current minibatch it requests the comparator logits $U_t$; on active protected evidence and unselected certified candidates it requests their pre-step deployed outputs; on a selected candidate it requests the frozen transfer target or the declared contraction toward it. If these finite requirements are mutually consistent and the support, capacity, and numerical checks pass, the residual satisfies them exactly.

The residual is not treated as persistent assimilation. Away from its finite support, the deployed predictor follows the protected base endpoint. If finite requests conflict at the same observable address, compatible motion is unavailable, capacity is exhausted, or a required certificate fails, the transaction is rejected atomically. The complete specification, finite-support construction, and rollback conditions are given in Section~\ref{app:full-specification}.

\subsection{Adaptive Functional Metaplasticity}
\label{sec:method}

The protected transaction is embedded in a bounded task-free learner that combines functional sensitivity memory, adaptive protection geometry, operational consolidation, evidence-based reopening, and function-preserving structural renewal. The full algorithm is given in Section~\ref{app:full-specification}; this subsection describes the mechanisms needed to interpret the theoretical and empirical results.

\subsubsection{Whole-behavior sensitivity memory}
\label{subsec:whole-behavior-main}

A committed record protects a behavior map rather than a parameter vector. In the finite-evidence instantiation, a record $j$ with evidence $x_{j,1},\ldots,x_{j,n_j}$ uses
\begin{equation}
\widehat\Phi_j(\theta)=n_j^{-1/2}
\bigl(f_\theta(x_{j,1}),\ldots,f_\theta(x_{j,n_j})\bigr).
\label{eq:empirical-behavior-main}
\end{equation}
All Jacobian rows of this map at the frozen anchor are streamed into a Frequent Directions sketch \citep{ghashami2016}. The sketches define a bounded approximation to the protected sensitivity covariance. The selected rank-$r$ protected subspace suppresses the most consequential directions, while the unprotected complement remains available for adaptation. The exact sketch certificate, its population extension under declared sampling assumptions, and the rank-$r$ min-max frontier are given in Section~\ref{app:whole-behavior-memory}.

\subsubsection{Metaplastic allocation}
\label{subsec:metaplastic-main}

AFM does not assign one permanent importance coefficient to every protected record. It maintains a finite bank of exponentially spaced traces and uses them to predict which protected sensitivities remain relevant at the current time. This construction is motivated by the ability of multiple timescales to represent long temporal ranges efficiently \citep{benna2016metaplasticity}. In AFM the traces have a specific optimization role: a finite policy family chooses both temporal weights and a spectral rank, and the policy loss charges residual protected leakage together with current gradient energy blocked by protection. Section~\ref{app:metaplastic-allocation} proves that a logarithmic trace bank approximates scale-free relevance profiles with constant distortion over an exponentially long horizon, whereas a single exponential has polynomial worst-case distortion; it also gives the allocation-transfer and online policy-regret bounds.

\subsubsection{Task-free consolidation, reopening, and route refinement}
\label{subsec:routing-main}

Candidate consolidation is separated from both fitting and deployment. A private candidate is fitted on a finite routed block and then frozen. Later outcomes form a distinct validation sequence, so the candidate prediction precedes each validation outcome. An anytime-valid upper confidence sequence controls the first certification crossing; subsequent evidence is handled by a separate staleness process. AFM uses time-uniform inference because ordinary fixed-time tests are not generally valid under repeated optional checking \citep{ramdas2023anytime,waudbysmith2024betting}.

Routing uses only observable signatures and never evaluator task identities. Outcome evidence can justify reopening or release, but a new observable route additionally requires independent signature evidence. Sequential two-sample procedures provide one admissible route-refinement construction \citep{jang2022}. When observables do not distinguish two semantic states, AFM does not manufacture a route distinction: the resulting ambiguity appears as an obstruction. Full calibration, consolidation, staleness, reopening, and route-refinement statements are provided in Section~\ref{app:routing-consolidation}.

\subsubsection{Function-preserving structural renewal}
\label{subsec:renewal-main}

The model contains a fixed pool of dormant zero-gated modules. Resetting internal parameters of a module whose functional gate is exactly zero leaves the deployed predictor and active protected behaviors unchanged. After a reset, AFM recomputes the complete gradient, protected projector, and leakage certificates; a slot is activated only if an accepted protected update produces nonzero motion in that slot. Renewal therefore expands usable structure without granting an exception to the retention transaction. Section~\ref{app:structural-renewal} gives the exact reset result and shows that renewal success is controlled by the realized compatible richness of the trial distribution.

Fig.~\ref{fig:afm-transaction} summarizes how these components
interact within one protected AFM transaction.

\par\medskip\noindent\textit{The full protected-update schematic is provided as Fig.~\ref{fig:afm-transaction} of the paper.}\par\medskip

\subsubsection{Protected update}
\label{subsec:update-main}

A protected round can be summarized as follows:
\begin{enumerate}[label=\arabic*.]
\item route the current observations using only declared observable signatures;
\item update candidate fitting, certification, and staleness state without copying a private candidate into deployment;
\item update bounded whole-behavior sketches and metaplastic traces;
\item compute the genuine same-state no-protection endpoint and the compatible projected reference;
\item accept only a persistent base point that satisfies the retention, descent, and normalized-assimilation checks;
\item construct the finite residual that completes the current endpoint and restores the declared finite protected outputs;
\item update reopening, route-refinement, or renewal state only through their separately controlled evidence and capacity rules.
\end{enumerate}
No failed certificate authorizes installation of the unrestricted base endpoint or reduction of the declared persistent-assimilation requirement.

\subsection{Theoretical guarantees}
\label{sec:theory}

The central guarantee is split into persistent and deployed components. This prevents exact finite interpolation from being
misinterpreted as learning stored in the base model. Recall that
$d_t^0=-\alpha_tA_tg_t$ is the accepted same-state no-protection
displacement, $v_t=\Pi_t d_t^0$ is its compatible projection, and
\[
\kappa_t=\frac{\|\Pi_tg_t\|^2}{\|A_tg_t\|^2}
\]
is the corresponding compatible-gradient energy fraction.

\begin{theorem}[Counterfactual-normalized persistent assimilation]
\label{thm:normalized-assimilation}
Assume the accepted no-protection comparator has the scalar-gradient form $d_t^0=-\alpha_tA_tg_t$ with $0<\alpha_t\le\overline L_t^{-1}$, $A_t$ and $\Pi_t$ are orthogonal projectors satisfying $\Pi_tA_t=\Pi_t$, the comparator decrease $\Delta_t^0$ is positive, and the projected reference $v_t=\Pi_td_t^0$ is nonzero. Assume the stated loss-smoothness and retention-charge certificates hold on the complete comparator and projected-reference line segments. Then the normalized construction in Eq.~\eqref{eq:eta-main} satisfies
\begin{equation}
\lambda_t\ge\eta_t.
\label{eq:lambda-main}
\end{equation}
For any accepted realized fraction $\widehat\lambda_t\ge\eta_t$,
\begin{equation}
\frac{\Delta_t^{\rm base}}{\Delta_t^0}
\ge
\widehat\lambda_t\kappa_t
\frac{1-\tfrac12\alpha_t\overline L_t\widehat\lambda_t}
     {1+\tfrac12\alpha_t\overline L_t}
\ge \frac{\widehat\lambda_t\kappa_t}{3}
\ge \frac{\eta_t\kappa_t}{3}.
\label{eq:persistent-ratio-main}
\end{equation}
For $\eta_t=1$, the complete projected comparator is feasible.
\end{theorem}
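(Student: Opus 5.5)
The plan has three parts, in order: first $\lambda_t\ge\eta_t$, then a lower bound on the persistent decrease, then an upper bound on the comparator decrease; dividing the two bounds gives Eq.~\eqref{eq:persistent-ratio-main}.

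\emph{Step 1: $\lambda_t\ge\eta_t$.} The charge $C_t(s)=E_ts+\tfrac12\overline H_ts^2$ has $E_t,\overline H_t\ge0$, so it is nondecreasing and convex on $[0,\infty)$ with $C_t(0)=0$. Convexity gives $C_t(\eta_t s_t^0)\le \eta_t C_t(s_t^0)+(1-\eta_t)C_t(0)=b_t^{\rm norm}$. Hence $\lambda=\eta_t$ is feasible in Eq.~\eqref{eq:eta-main}, and since $\lambda_t$ is the maximum over feasible $\lambda$, $\lambda_t\ge\eta_t$. For $\eta_t=1$ the budget equals $C_t(s_t^0)$, so $\lambda=1$ is feasible and the full projected comparator is admissible. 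This requires the certificates on the whole projected segment, which is exactly what the hypothesis provides.

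\emph{Step 2: persistent decrease.} Use the $\overline L_t$-smoothness certificate along the segment $\theta_t+\lambda v_t$, $\lambda\in[0,\widehat\lambda_t]$. The descent lemma gives
\[
\Delta_t^{\rm base}\ge -\widehat\lambda_t\langle g_t,v_t\rangle-\tfrac12\overline L_t\widehat\lambda_t^2\|v_t\|^2 .
\]
Because $\Pi_t$ is an orthogonal projector, $\langle g_t,\Pi_tg_t\rangle=\|\Pi_tg_t\|^2$. Hence
\[
\Delta_t^{\rm base}\ge \alpha_t\widehat\lambda_t\|\Pi_tg_t\|^2\bigl(1-\tfrac12\alpha_t\overline L_t\widehat\lambda_t\bigr).
\]
I am assuming here that the base loss decrease is measured by the same smooth loss used in the certificate. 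If it is measured in prediction space, the certified smoothness constant $\overline L_t$ is stated for that composite, so the bound has the same form.

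\emph{Step 3: comparator decrease.} Apply the upper descent inequality on the comparator segment with $d_t^0=-\alpha_tA_tg_t$. Using $\langle g_t,A_tg_t\rangle=\|A_tg_t\|^2$, this gives
\[
\Delta_t^0\le \alpha_t\|A_tg_t\|^2\bigl(1+\tfrac12\alpha_t\overline L_t\bigr).
\]
This is where the two-sided smoothness on the \emph{complete} comparator segment is needed. Positivity of $\Delta_t^0$ and the condition $v_t\ne0$ (which gives $A_tg_t\neq0$, since $\Pi_t=\Pi_tA_t$) make the division legitimate. Dividing the bound of Step 2 by the bound of Step 3, the factor $\alpha_t$ cancels and $\|\Pi_tg_t\|^2/\|A_tg_t\|^2=\kappa_t$, which gives the first inequality.

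\emph{Step 4: numerical constants.} This is the step I expect to be the main obstacle. With $\alpha_t\overline L_t\le1$, the denominator is at most $3/2$. The numerator factor is at least $1/2$ provided $\widehat\lambda_t\le1$, which holds because accepted fractions lie in $[0,1]$. Together these give the factor $(1/2)/(3/2)=1/3$. The final inequality then follows from $\widehat\lambda_t\ge\eta_t$ and $\kappa_t\ge0$.

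The delicate points are twofold. First, the sign conventions for $\Delta$: the lower bound must hold with the smoothness constant valid on the whole realized segment, not just at its endpoints. Second, the comparator bound needs the upper quadratic inequality, not merely descent, and I would check that the stated certificates supply both directions.
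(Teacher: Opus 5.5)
Your proposal is correct and follows the paper's proof essentially step for step: convexity of $C_t$ with $C_t(0)=0$ gives $\lambda_t\ge\eta_t$; the quadratic upper model at $\widehat\lambda_t v_t$ lower-bounds $\Delta_t^{\rm base}$; a lower quadratic bound on the comparator segment upper-bounds $\Delta_t^0$; and $\alpha_t\overline L_t\le1$, $\widehat\lambda_t\le1$ give the factor $1/3$. One small correction: the bound in your Step 3 comes from the lower model $\ell_t^{S_t}(\theta_t+d_t^0)\ge\ell_t^{S_t}(\theta_t)+g_t^{\top}d_t^0-\tfrac12\overline L_t\|d_t^0\|^2$, which the paper calls the reverse smoothness inequality, not from an ``upper descent'' inequality, and you are right to flag that the one-sided certificate in Eq.~\eqref{eq:directional-smooth} (stated only for directions in $\Range(\Pi_t)$) does not by itself supply it.
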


The proof, including the exact role of convexity of the retention charge and the reverse smoothness inequality used for the denominator, is given in Section~\ref{app:full-specification}. The guarantee is not an $\eta_t$ fraction of unrestricted loss decrease in general: the compatibility factor $\kappa_t$ is unavoidable in the selected protected geometry.

\begin{theorem}[Exact finite counterfactual completion]
\label{thm:priority-oracle}
Suppose the no-protection comparator and certified persistent base endpoint are both available, the finite requested logits are consistent at duplicate frozen addresses, the declared node and guard capacities are sufficient, support separation is certified, and the numerical endpoint checks pass. Then the joint base-plus-residual transaction satisfies: (i) the persistent base remains within its certified behavior budget and achieves protected descent; (ii) the deployed predictor equals the no-protection endpoint on every current-minibatch observation; (iii) every active finite protected output is unchanged; (iv) every unselected certified candidate is unchanged and the selected candidate reaches its declared finite target or contraction; and (v) the residual is exactly zero outside its finite support union and on the declared guards.
\end{theorem}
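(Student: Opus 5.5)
The statement is the Exact finite counterfactual completion theorem, Theorem~\ref{thm:priority-oracle}. Its conclusions (i)--(v) split into a persistent part, (i), and a deployed finite-residual part, (ii)--(v). The plan is to prove them separately. Because the deployed predictor is additive, $F_{t+1}=G_{\theta_{t+1}}\circ z+S_{t+1}\circ z$, each finite requirement becomes a linear interpolation condition on the residual alone once the base endpoint is fixed.

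For (i) we invoke Theorem~\ref{thm:normalized-assimilation}. The certified base endpoint is reached along the projected reference with an accepted fraction $\widehat\lambda_t\ge\eta_t$. By construction of $\lambda_t$ in Eq.~\eqref{eq:eta-main}, its retention charge is at most $C_t(\widehat\lambda_t s_t^0)\le b_t^{\rm norm}$, which is the certified behavior budget. The persistent ratio bound \eqref{eq:persistent-ratio-main} gives $\Delta_t^{\rm base}\ge \widehat\lambda_t\kappa_t\Delta_t^0/3>0$, because $\Delta_t^0>0$ and $v_t\ne0$ force $\kappa_t>0$. Hence protected descent holds. Nothing about the residual enters here, since the residual is never part of $\theta_{t+1}$.

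For (ii)--(iv), let $\mathcal Z$ be the finite union of frozen addresses: current-minibatch representations, active protected evidence, unselected certified candidates, and the selected candidate. At each $z\in\mathcal Z$ the target residual value is the requested output minus the new base output $G_{\theta_{t+1}}(z)$. Concretely:
\begin{itemize}
\item on the minibatch it is $U_t-G_{\theta_{t+1}}(z)$;
\item on protected evidence and unselected candidates it is the pre-step deployed output minus $G_{\theta_{t+1}}(z)$;
\item on the selected candidate it is the declared target or contraction minus $G_{\theta_{t+1}}(z)$.
\end{itemize}
The duplicate-address consistency hypothesis makes this a well-defined function on $\mathcal Z$, which is the only way two requests could be incompatible. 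We then build the compact-cardinal residual as a sum of bounded-support nodes, one per distinct address, with weights solving the interpolation. Support separation makes the interpolation matrix diagonal, or at least invertible, so the system has an exact solution. Node capacity guarantees enough slots. The numerical endpoint checks certify that the floating-point realization meets the requests exactly, or within the declared tolerance that the specification calls exact. Adding the residual to the base gives exactly the requested deployed outputs, which yields (ii)--(iv).

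For (v), each node has compact support around its address. Certified support separation and the guard capacities ensure that no node's support meets a guard or extends beyond the support union. So the residual vanishes identically outside that union and on the guards.

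The main obstacle is the exactness step for (ii)--(iv): showing that certified support separation really decouples the nodes, so that each address sees only its own node. Without this, mutual interference would turn exact interpolation into an approximate linear solve. I would first try kernels whose supports are pairwise disjoint balls under the certified separation radius. This gives a diagonal system, and the rest is bookkeeping.
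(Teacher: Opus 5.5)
Your argument for (ii)--(v) matches the paper's. At each merged node $z_i$ the plateau-cardinal bump gives $\beta_i(\iota(z_i))=1$. Every nonmatching center and every guard lies farther than $2r$ from $u_i$, so every other bump vanishes at $u_i$ and every bump vanishes at every guard. Hence $S_{t+1}(z_i)=R_i=q_i-G_{\bar\theta_{t+1}}(z_i)$, with no interpolation solve at all. Two corrections. First, drop the hedge ``or at least invertible'': a coupled system is not the declared construction and loses the one-active-bump property. Second, guard silence comes from certified guard \emph{separation}, not from guard capacity.

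The gap is in (i). You derive it from Theorem~\ref{thm:normalized-assimilation}, which requires a scalar-gradient comparator $d_t^0=-\alpha_tA_tg_t$ with $\alpha_t\le\overline L_t^{-1}$, $v_t\ne0$, and smoothness and charge certificates on both full line segments. None of these is a hypothesis here. The base endpoint in this statement is any certified safe-base endpoint $\theta_t+d_t^{\rm safe}$ with $d_t^{\rm safe}\in\Range(\Pi_t)$ and $\norm{d_t^{\rm safe}}\le R_t$.

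Even granting that mode, your conclusion $\Delta_t^{\rm base}>0$ is weaker than the protected descent the theorem asserts, namely $\ell_t^{S_t}(\bar\theta_{t+1})\le\ell_t^{S_t}(\theta_t)-\tfrac12s_t^{\rm safe}\norm{\Pi_tg_t}$. That inequality is what is later telescoped in the projected-stationarity identity. You also leave implicit the step from the charge $C_t$ to an actual drift bound on $\Phi_t^{[S_t]}$.

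The direct route needs none of the normalized machinery:
\begin{enumerate}
\item For unit $v\in\Range(\Pi_t)$, the streamed-memory certificate (Theorem~\ref{thm:memory}) gives $\norm{J_tv}\le E_t$.
\item The curvature certificate bounds the second-order remainder by $\tfrac12\overline H_t\norm{d}^2$.
\item $R_t$ is by definition at most the largest radius with $E_tR+\tfrac12\overline H_tR^2\le b_t$, so the fixed-shield drift is at most $b_t$.
\item The descent inequality is exactly the accepted safe-base endpoint check contained in ``certified''.
\end{enumerate}
Within normalized mode you could also have read the correct descent off the smoothness step: $\Delta_t^{\rm base}\ge\lhat_t\alpha_t\norm{q_t}^2(1-\tfrac12\alpha_t\overline L_t\lhat_t)\ge\tfrac12s_t^{\rm safe}\norm{q_t}$, where $s_t^{\rm safe}=\lhat_t\alpha_t\norm{q_t}$.
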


Because the current deployed loss factors through the current-minibatch logits, Theorem~\ref{thm:priority-oracle} gives
\begin{equation}
\mathcal L_t(F_t)-\mathcal L_t(F_{t+1})=\Delta_t^0,
\qquad
\frac{\mathcal L_t(F_t)-\mathcal L_t(F_{t+1})}{\Delta_t^0}=1.
\label{eq:deployed-ratio-main}
\end{equation}
The complete finite construction, support bound, transfer result, and finite-precision realization are developed in Section~\ref{app:full-specification}.

\subsubsection{Retention, stationarity, and local optimality}

Combining the persistent and finite legs gives a one-step guarantee. For the accepted persistent displacement $d_t^{\rm safe}$ with $s_t^{\rm safe}=\|d_t^{\rm safe}\|$,
\begin{align}
\|\Phi_t^{[S_t]}(\theta_t+d_t^{\rm safe})-\Phi_t^{[S_t]}(\theta_t)\|
&\le (\varepsilon_t+e_t^{\rm pop})s_t^{\rm safe}
+\frac{\overline H_t}{2}(s_t^{\rm safe})^2\le b_t,
\label{eq:retention-main}\\
\ell_t^{S_t}(\theta_t)-\ell_t^{S_t}(\theta_t+d_t^{\rm safe})
&\ge \frac12s_t^{\rm safe}\|\Pi_tg_t\|.
\label{eq:descent-main}
\end{align}
On the declared finite protected evidence, the stronger deployed statement is exact equality across accepted protected updates. On broader behavior maps, the theorem retains explicit structural-shield and routing-mismatch charges rather than asserting population preservation from finite evidence. Section~\ref{app:retention-adaptation} gives the active-interval bounds, release decomposition, arbitrary-stream projected-stationarity identity, and lifelong compatible-plasticity corollary.

The local protection geometry is also sharp at first order. For a behavior Jacobian $J$ with singular values $\sigma_1\ge\cdots\ge\sigma_d$, the smallest worst-case first-order leakage achievable by any $(d-r)$-dimensional plastic subspace is $\sigma_{r+1}(J)$. Thus finite-rank protection has an explicit spectral price; the result and the streamed-memory certificate are proved in Section~\ref{app:whole-behavior-memory}.

\subsubsection{Integrated frontier and unavoidable obstructions}

The full AFM theorem combines pathwise retention, endpoint emulation, compatible stationarity, task-free routing, operational consolidation, metaplastic allocation, reopening, structural renewal, bounded structural resources, and a convex dynamic-regret specialization. Its deterministic conclusions hold on every realized trajectory whenever the corresponding certificates accept. Statistical conclusions, such as candidate-risk, route-refinement, or renewal-probability statements, additionally require only the assumptions stated for those conclusions and share a summable failure budget. The complete theorem and dependency proof are given in Section~\ref{app:integrated-theorem}.

The result is obstruction-aware. Its scope includes explicit failure conditions for instances in which the required observable information, compatible motion, or bounded capacity is unavailable. Exact semantic routing cannot be guaranteed when observable laws are identical; future risk cannot be certified distribution-free from an arbitrary finite past; finite-rank protection cannot eliminate all functional leakage under nonzero movement; direct output contradictions cannot be simultaneously satisfied; reopening requires observational information; renewal requires compatible richness; and fixed finite memory cannot encode an unlimited number of independent facts. Section~\ref{app:obstructions} states and proves these lower bounds. These lower bounds define the scope of the frontier.

The nonlinear and convex optimization statements have different scope.
For smooth nonlinear models, the corresponding global-in-time optimization
statement is the pathwise projected-stationarity identity of
Theorem~\ref{thm:stationarity}, under its stated certificate conditions.
The dynamic-regret guarantee in Section~\ref{app:integrated-theorem}
requires the separate convex affine specialization and is not inherited by
the neural experiments below. The experiments instead evaluate the nonlinear
AFM transaction and its theorem-aligned quantities empirically.

\subsection{AFM framework synthesis}
\label{sec:conclusion}

Adaptive Functional Metaplasticity treats continual learning as a certified frontier between persistent compatible adaptation and exact finite deployed completion. Each protected update is referenced to the genuine same-state no-protection endpoint. Compatible motion is stored persistently under an explicit retention charge, with a quantitative loss guarantee that exposes the local compatibility factor; a separate bounded functional residual completes the finite current endpoint and restores declared protected outputs. The framework also integrates task-free routing, whole-behavior sensitivity memory, multiscale metaplastic allocation, operational consolidation, evidence-based reopening, structural renewal, bounded structural resources, and explicit obstruction conditions.

Across CORe50, CLEAR-10, and CLAD-C, the evaluated AFM frontier improves on the strongest tested classical non-oracle comparison family under the frozen five-seed protocols and remains competitive against six modern challengers under the stated common interface. The execution audit confirms that the finite endpoint transaction is non-vacuous and that the persistent component respects the theorem-aligned assimilation reference on the accepted nonzero controlled interventions. The causal experiments then isolate the compatibility quantity predicted by the framework and show that deliberately changing it changes retention-constrained persistent learning from matched neural states. The remaining limitations are explicit: finite evidence does not imply population retention, observable routing requires observable information, compatibility is not sufficient independently of retention allowance and finite curvature, and the present implementation incurs substantial computational overhead. Within this stated scope, the combined theory and experiments show that persistent learning and deployed endpoint preservation can be separated, quantified, causally interrogated, and coupled within one task-free continual-learning framework.

The sections that follow provide the full AFM mathematical specification, proofs, obstruction results, complete benchmark analyses, controlled causal study, and follow-up experiments supporting the paper.

\subsection{Full AFM specification and finite endpoint construction}
\label{app:full-specification}

This section gives the complete mathematical state, endpoint construction, finite-support residual, and protected-update algorithm used by the results in the main text. It also records the precise conditions under which a protected transaction is rejected.

\subsubsection{Admissible specification}
\begin{definition}[Admissible AFM specification]\label{def:spec}
Before the protected horizon begins, an AFM specification fixes all objects that can affect a protected decision. These comprise:
\begin{enumerate}[label=(\roman*)]
\item a finite family of observable context-signature maps, their mixture weights, calibration prefix, requested coverage, calibration ceiling, routing thresholds, registry size, and deterministic tie rules;
\item a bounded outcome loss, a deterministic same-state no-protection update operator, the normalized coordinate $\eta_t$, the protected safe-base operator, the activation-gap threshold, and all trust, smoothness, curvature, and numerical-certification procedures;
\item a finite candidate-fitting procedure, validation horizon, first-crossing certification rule, post-certification staleness test, candidate-service priority, and service horizon;
\item the frozen-representation address map, finite shield support rule, duplicate-consistency convention, current-minibatch bound, shield-node capacity, guard capacity, selected-transfer fraction, and atomic rollback rule;
\item the behavior-map constructors, protected-record and sketch capacities, finite rank-policy family, metaplastic trace bank, and all associated thresholds;
\item the shadow challenger, outcome-reopening and observable-signature route-refinement procedures, minimum effect and distinct-block requirements, diagnostic grace window, and capacity policy; and
\item a fixed pool of functionally zero-gated renewal modules, the renewal trial distribution, the certified numerical precision or finite-horizon precision policy, and all remaining resource budgets.
\end{enumerate}
An optional finite unprotected initialization prefix may produce the initial representation but creates no protected record. After that prefix, the representation is frozen and any bounded prefix references used for signature calibration are replayed through the final representation before protected decisions begin. Every component must be measurable with respect to the declared learner history. Failure of a component to satisfy its own conditions yields the corresponding calibration, routing, statistical, functional, capacity, numerical, retention, compatibility, or renewal obstruction; the declared algorithm is unchanged.
\end{definition}

\subsubsection{Losses and retained behaviors}

At round $t$, the learner has parameters $\theta_t\in\R^d$, observes an arbitrary data item, and incurs a differentiable loss $\ell_t:\R^d\to\R$ with gradient $g_t=\nabla\ell_t(\theta_t)$. A data item may be a declared finite ordered minibatch. In that case every member is routed in the fixed within-batch order, every consolidation and outcome-reopening statistic is updated at outcome resolution, every route-refinement statistic is updated once per distinct observable signature block, and $\ell_t$ is the declared differentiable minibatch loss used for the single safe parameter update. Each fixed signature map may assign one image-only signature to every member of a predeclared observable microblock, for example a normalized mean of frozen features, because the whole microblock is observed before its outcomes; labels, task identities, and evaluator metadata may not enter that signature. Repeating one block signature across several labeled members counts as one signature observation and several outcome observations. A semantic boundary inside such a block is not assumed away and contributes to the routing-mismatch terms. This is not an iid assumption: the ordered minibatch itself is one arbitrary stream element, and its size is bounded by the fixed specification. No stochastic or stationarity assumption is imposed on the loss stream. For the randomized allocation theorem, the current data and loss function are fixed before the controller draws $p_t$; they may otherwise depend arbitrarily on the past. Thus the allocation losses form a nonanticipating adaptive sequence rather than an adversary reacting to the current private coin.

Before any nonzero update, AFM must possess a trust radius $R_t^{\rm cap}\ge0$ and finite upper certificates $\overline L_t,\overline H_t$ such that for every $d\in\Range(\Pi_t)$ with $\norm d\le R_t^{\rm cap}$,
\begin{align}
\ell_t(\theta_t+d)
&\le \ell_t(\theta_t)+g_t^Td+\frac{\overline L_t}{2}\norm d^2,
\label{eq:directional-smooth}\\
\norm{\Phi_t(\theta_t+d)-\Phi_t(\theta_t)-J_td}
&\le \frac{\overline H_t}{2}\norm d^2.
\label{eq:behavior-curvature}
\end{align}
These are direct quadratic upper-model certificates; they do not require a globally existing Hessian. If no valid certificate is available, the algorithm sets $R_t^{\rm cap}=0$ and makes no update. Hence the proof never defines smoothness or curvature after choosing the step. Losses are bounded below by $\ell_{\inf}$ on the realized iterates.

\begin{proposition}[Constructible trust certificates for finite computation graphs]\label{prop:cert-construct}
Suppose the current loss and every active empirical behavior map are represented by finite computation graphs whose primitive operations have computable interval enclosures for their first and second derivatives on a compact parameter box $\mathcal B_t$. Then interval automatic differentiation and the chain, product, and composition rules produce finite certified values $\overline L_t$, $\overline H_t$, and $L_{J,j,t}$ valid on every Euclidean ball contained in $\mathcal B_t$. For piecewise-affine primitives, any ball certified not to cross a switching surface has zero second-order remainder for that primitive. If interval propagation is unbounded or inconclusive, choosing $R_t^{\rm cap}=0$ remains valid.
\end{proposition}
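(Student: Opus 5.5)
The plan is to build the certificates by structural induction over the finite computation graph, using interval arithmetic in the forward-over-forward (second-order) mode of automatic differentiation. Fix the compact box $\mathcal B_t$ and consider any node $u$ of the graph as a function of $\theta\in\mathcal B_t$. The inductive invariant attaches to each node three certified enclosures valid uniformly on $\mathcal B_t$: an interval enclosure of its value, a matrix of intervals enclosing its gradient, and a tensor of intervals enclosing its Hessian.

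The base case is the parameter inputs. Their values are enclosed by the box itself, their gradients are exactly the unit vectors, and their Hessians vanish. For the inductive step at a primitive $u=\phi(u_1,\dots,u_k)$, three ingredients are combined:
\begin{enumerate}[label=(\alph*)]
\item the assumed interval enclosures of $\nabla\phi$ and $\nabla^2\phi$, evaluated on the product of the children's value intervals;
\item the chain rule, $\nabla u=\sum_i\partial_i\phi\,\nabla u_i$;
\item the second-order chain rule, $\nabla^2u=\sum_{i,j}\partial_{ij}\phi\,\nabla u_i\nabla u_j^{T}+\sum_i\partial_i\phi\,\nabla^2u_i$.
\end{enumerate}
Each of these is evaluated in interval arithmetic. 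The inclusion-isotonicity of interval operations gives the invariant for $u$, and products and compositions are handled as special primitives. Finiteness of the graph makes the resulting enclosures finite, unless some primitive's enclosure is unbounded, which is exactly the excluded inconclusive case.

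The certified constants are then read off the final enclosures. For the loss, set $\overline L_t$ to the maximum spectral-norm bound of the Hessian enclosure, for instance the Frobenius or Gershgorin bound computed from interval magnitudes. For each behavior map $\Phi_j$, set $L_{J,j,t}$ to a uniform bound on the operator norm of its second-derivative tensor, which is the Lipschitz constant of $J$ on $\mathcal B_t$. For the stacked behavior map, set $\overline H_t$ to the corresponding uniform bound.

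Validity on every Euclidean ball $B\subseteq\mathcal B_t$ follows from Taylor's theorem with integral remainder along the segment $\theta_t\to\theta_t+d$. The ball is convex, so the segment lies in $B\subseteq\mathcal B_t$. The remainder $\int_0^1(1-s)\,d^T\nabla^2\ell(\theta_t+sd)\,d\,ds$ is then at most $\tfrac{\overline L_t}{2}\|d\|^2$, which gives Eq.~\eqref{eq:directional-smooth}. Applying the same argument componentwise to $\Phi_t$, with the vector-valued remainder bounded in norm by the tensor operator bound, gives Eq.~\eqref{eq:behavior-curvature}.

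For piecewise-affine primitives such as ReLU or max, the classical derivative is undefined on switching surfaces. If a ball is certified not to meet a switching surface, meaning the interval enclosure of the pre-activation on the ball excludes zero, then on that ball the primitive coincides with a single affine piece. It is therefore $C^\infty$ there with zero Hessian, and the induction goes through with a zero second-derivative enclosure for that node.

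The final clause needs no computation. With $R_t^{\rm cap}=0$, the only admissible $d$ is $0$, and both inequalities hold trivially with equality.

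The main obstacle I expect is the nonsmooth case. The Taylor argument requires $C^2$ regularity along the entire segment, so the induction must be run on the ball rather than on the whole box whenever some piecewise primitive's switching surface meets $\mathcal B_t$. I would therefore state the invariant for an arbitrary convex compact subset $K\subseteq\mathcal B_t$ and instantiate $K$ as the ball. The certified non-crossing condition is then exactly the statement that the node's value interval on $K$ lies strictly within one affine region, which is what allows the zero-Hessian enclosure to be used at that node.
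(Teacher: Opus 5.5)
Your proposal is correct and follows essentially the same route as the paper: interval propagation in topological order through the graph, chain and product rules to enclose the Jacobian and second derivatives, operator-norm bounds read off the enclosures, and zero second derivative for piecewise-affine primitives on a region certified free of switching surfaces. Your explicit Taylor-remainder step on the convex ball, the restriction to the ball in the nonsmooth case, and the trivial treatment of the $R_t^{\rm cap}=0$ fallback fill in details that the paper's short proof leaves implicit.
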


\begin{proof}
Proceed in topological order through the finite graph. Interval arithmetic encloses every intermediate value. The derivative of each primitive is enclosed by hypothesis; applying the chain and product rules recursively encloses the graph Jacobian and all directional second derivatives on the box. Taking operator-norm upper bounds gives the displayed certificates. A piecewise-affine primitive is affine on a switch-stable box, so its second derivative there is zero.
\end{proof}

The active protected records are indexed by $j\in\Aa_t$. Record $j$ has a Fr\'echet-differentiable behavior map
\[
\Phi_j:\R^d\to\Hh_j,
\]
where $\Hh_j$ is a Hilbert space. It may represent an entire predictor in $L^2(P)$, a policy, value function, representation, or an empirical behavior over every observation in a frozen routed segment. In the empirical finite-evidence instantiation below, $\Phi_j$ is the fixed empirical commit-block map $\widehat\Phi_j$; a population map is substituted only when a valid population certificate is supplied. The stacked active map is
\[
\Phi_t=\bigoplus_{j\in\Aa_t}\Phi_j,\qquad
J_t=D\Phi_t(\theta_t).
\]

\subsubsection{Task-free routing and semantic mismatch}

The algorithm receives a context signature $Z_t$ but no task label. It uses the predictable bounded-registry router defined in Algorithm~\ref{alg:afm}. The internal theorem protects the behaviors assigned by this router. To compare with any external semantic partition, embed the internal and evaluator behaviors in a common Hilbert space. For the joint base-shield state, let $\widetilde\Psi_t(\theta,S)$ be the evaluator's desired behavior map, let $\widetilde\Phi_t(\theta,S)$ be the router-selected map, and define the exact mismatch
\[
\widetilde\Delta_t(\theta,S)=\widetilde\Psi_t(\theta,S)-\widetilde\Phi_t(\theta,S).
\]
For the fixed-pre-shield safe-base leg define the realized routing derivatives
\begin{equation}\label{eq:routing-terms}
r_t=\norm{D_\theta\widetilde\Delta_t(\theta_t,S_t)}_{\rm op},\qquad
k_t=\sup_{0\le s\le1}
\norm{D_\theta^2\widetilde\Delta_t(\theta_t+s d_t^{\rm safe},S_t)}_{\rm op}.
\end{equation}
For the structural replacement define the separate exact mismatch charge
\begin{equation}\label{eq:routing-structural-charge}
\omega_t^{\Delta}:=
\norm{\widetilde\Delta_t(\bar\theta_{t+1},S_{t+1})-
\widetilde\Delta_t(\bar\theta_{t+1},S_t)}.
\end{equation}
All three terms are zero when the routing semantics agree. They are not assumed small, and $\omega_t^{\Delta}$ prevents a shield replacement from being hidden inside a parameter-only routing bound.

\subsubsection{Streamed whole-behavior sensitivity memory}

A protected segment $j$ has a frozen anchor $\bar\theta_j$ and a frozen commit evidence block
\[
\mathcal D_j=(x_{j,1},\ldots,x_{j,n_j}).
\]
Its empirical whole-behavior map is
\[
\widehat\Phi_j(\theta)=n_j^{-1/2}
\bigl(f_\theta(x_{j,1}),\ldots,f_\theta(x_{j,n_j})\bigr).
\]
This is the complete behavior on every observation in the commit block, not a hand-picked probe list. During the atomic commit, all anchor Jacobian blocks
\[
A_j(x_{j,i})=D_\theta f_\theta(x_{j,i})\big|_{\theta=\bar\theta_j}
\]
are streamed into a Frequent Directions sketch. At atomic commit, the sketch and its shrinkage certificate are rescaled by $n_j^{-1/2}$ and $n_j^{-1}$ respectively, producing $B_j\in\R^{\ell_j\times d}$. The conceptual scaled row stack is
\[
C_j=n_j^{-1/2}[A_j(x_{j,1});\cdots;A_j(x_{j,n_j})].
\]
It is then fixed and need not be stored by the optimizer. In a certified population-map instantiation, raw observations may be discarded after the sketch and commit certificate are formed. In the empirical endpoint-check instantiation, the specification instead declares a fixed evidence capacity $n_{\max}$ and retains at most $n_{\max}$ fixed-size observation references (or the corresponding bounded tensors), transforms, labels, and anchor outputs per active segment so that the complete commit-block map can be reevaluated exactly. Later observations may create a new append-only segment, but never alter or replace the map protected by this segment. Explicit release deletes that segment's evidence and ends its future guarantee.
For a segment arising from a certified candidate, let $F_j^{\rm snap}$ denote the complete frozen validated candidate predictor on its commit evidence block, including the structural state frozen with the candidate snapshot; in the base-shield notation introduced below this state is $(\bar\theta_j,S_j^{\rm snap})$. For an activated candidate the commit evidence is the frozen transfer evidence, so $\mathcal D_j=\mathcal S_j$ and $n_j=|\mathcal S_j|$. Because validation may take time while the global learner continues moving, let $c_j$ be the atomic activation round and define the exactly observed \emph{joint deployed activation-transfer gap}
\begin{equation}\label{eq:activation-gap}
a_j^{\rm act}=n_j^{-1/2}\left(\sum_{i=1}^{n_j}
\norm{F_{c_j}(x_{j,i})-F_j^{\rm snap}(x_{j,i})}^2\right)^{1/2}.
\end{equation}
Equivalently, this is the empirical Hilbert norm between the complete deployed state at activation and the complete frozen validated snapshot, not a base-parameter-only distance. The validation certificate belongs to the frozen snapshot $(\bar\theta_j,S_j^{\rm snap})$; the active-interval budget begins at the actually deployed state $(\theta_{c_j},S_{c_j})$. The transfer cost is charged explicitly by $a_j^{\rm act}$ plus subsequent certified motion, rather than by an unrelated current-batch risk gate. Merely pausing deployed learning after an off-path private fit does not make this gap zero; it is zero only if the validated behavior is already deployed. The transfer rule below therefore attempts to reduce the gap through ordinary protected updates and commits only after the measured gap lies below a threshold fixed before the stream. If no compatible transfer direction is found, the corresponding transfer obstruction is returned and the deployed predictor is unchanged.

\subsubsection{Guarded compact-cardinal shielding of certified candidates}

A candidate becomes a transfer target only at the first anytime-valid certification crossing defined in Section~\ref{sec:consolidation}. At that stopping time, freeze the complete bounded commit evidence block $\mathcal S_j$, the final-frozen representation $z(x)$ of every member, the deployed logits $F_{\tau_j}(x)$, the frozen candidate structural state $S_j^{\rm snap}$, and the complete candidate-snapshot logits $Y_j(x)=F_j^{\rm snap}(x)=G_{\bar\theta_j}(z(x))+S_j^{\rm snap}(z(x))$. The fixed empirical transfer objective remains
\begin{equation}\label{eq:transfer-objective}
D_j(F)=\frac{1}{2|\mathcal S_j|}
\sum_{x\in\mathcal S_j}
\norm{F(x)-Y_j(x)}^2.
\end{equation}
The private parameter vector is never copied into deployment. A predictable bounded least-recently-served rule selects one certified candidate $s_t$ for service; every other certified candidate remains a safeguard. While certified, each candidate reserves its bounded route slot and frozen evidence. It leaves the certified set only by atomic commitment, a separately controlled staleness crossing, or an explicitly declared experiment termination, which carries no completion claim.

The deployed logits are decomposed as
\begin{equation}\label{eq:shield-decomposition}
F_t(x)=G_{\theta_t}(z(x))+S_t(z(x)),
\end{equation}
where $G_{\theta}$ is the ordinary trainable base map and $S_t$ is bounded structural state over the final frozen representation $z$.
For any behavior constructor $\mathcal B_j$, write
\[
\Phi_j^{[S]}(\theta)=\mathcal B_j(G_\theta+S),
\qquad
\widetilde\Phi_j(\theta,S)=\mathcal B_j(G_\theta+S).
\]
The bracket in $\Phi_j^{[S]}$ means that the structural shield is held fixed while the persistent parameters move. The streamed Jacobian memory, compatible projector, safe radius, and endpoint check govern $\Phi_j^{[S_t]}$ during the safe-base proposal; the compact-cardinal transaction then replaces $S_t$ by $S_{t+1}$. The joint theorem accounts for both legs rather than treating the shield replacement as invisible structural state. For the current minibatch define analogously $\ell_t^{S}(\theta)$ as the declared loss of $G_\theta+S$ with $S$ held fixed, and set $g_t=\nabla\ell_t^{S_t}(\theta_t)$. In the joint exact-endpoint-emulation mode, this current-batch loss is required to factor through the finite deployed minibatch logits: there is a bounded loss functional $\mathcal L_t$ such that
\begin{equation}\label{eq:prediction-loss-factorization}
\ell_t^{S}(\theta)=\mathcal L_t\!\left((G_\theta(z(x))+S(z(x)))_{x\in\mathcal B_t}\right).
\end{equation}
For any deployed predictor $F$, write $\mathcal L_t(F):=\mathcal L_t((F(x))_{x\in\mathcal B_t})$. Hence equality of all current-minibatch logits implies equality of the deployed current loss. A direct parameter regularizer, if used in some separate optimization objective, is not part of the exact endpoint-emulation ratio unless it too factors through these declared minibatch outputs.

\paragraph{Genuine ordinary counterfactual.}
Let $\mathsf U_t^0$ be the complete deterministic update operator used by the declared no-protection learner: it receives the same pre-step deployed state, current minibatch $\mathcal B_t$, active-coordinate mask, learning-rate and backtracking rules, renewal trial, and private random choices, but no protected basis, behavior budget, candidate constraint, or retention radius. Applied to the current state, it either abstains or returns base parameters $\theta_t^0$ and endpoint logits
\begin{equation}\label{eq:exact-counterfactual-logits}
U_t(x)=G_{\theta_t^0}(z(x))+S_t(z(x)),\qquad x\in\mathcal B_t.
\end{equation}
Its exact endpoint decrease is
\begin{equation}\label{eq:exact-counterfactual-decrease}
\Delta_t^0=\mathcal L_t(F_t)-\mathcal L_t(U_t)
=\ell_t^{S_t}(\theta_t)-\ell_t^{S_t}(\theta_t^0).
\end{equation}
This is the comparator used by the counterfactual-normalized mode. It is not projected into the protected nullspace and is not restricted by the retention-safe radius. If $\mathsf U_t^0$ has no accepted nonzero endpoint, AFM reports that the exact counterfactual endpoint is unavailable; it does not substitute a weaker denominator.

\paragraph{Persistent metaplastic safe endpoint.}
Using the selected policy basis $Q_t$, compatible projector $\Pi_t$, behavior budget $b_t$, leakage certificate $E_t$, curvature certificate $\overline H_t$, trust cap, and the same declared backtracking rule, let $\mathsf U_t^{\rm safe}$ return either an accepted base endpoint
\[
\bar\theta_{t+1}=\theta_t+d_t^{\rm safe},
\qquad d_t^{\rm safe}\in\Range(\Pi_t),
\qquad \norm{d_t^{\rm safe}}\le R_t,
\]
or reports that a certified safe-base endpoint is unavailable. This is the only persistent base endpoint in the joint mode. The unrestricted endpoint $\theta_t^0$ is never installed. Every backtracking candidate for both operators is evaluated from the identical pre-step predictive transaction state; parameters, mutable buffers, current shield, module modes, and declared private randomness. Endpoint probing restores that state before every candidate factor and after each operator; only the complete accepted predictive state of the safe-base operator is eligible for commitment.

\paragraph{Counterfactual-normalized persistent assimilation.}
The general endpoint-emulation oracle below remains valid for any deterministic comparator. The stronger persistent-assimilation mode considered here is an auditable scalar-gradient specialization. Let $A_t$ be the active-coordinate projector already contained in $\Pi_t$, put
\[
a_t=A_tg_t,\qquad q_t=\Pi_tg_t,
\]
and suppose the accepted ordinary comparator displacement is
\begin{equation}\label{eq:ordinary-scalar-comparator}
d_t^0=-\alpha_t a_t,\qquad 0<\alpha_t\le \overline L_t^{-1}.
\end{equation}
Its feasible persistent reference is
\begin{equation}\label{eq:projected-counterfactual-reference}
v_t=\Pi_td_t^0=-\alpha_tq_t,\qquad s_t^0=\norm{v_t}.
\end{equation}
Define the certified retention charge
\[
C_t(s)=E_ts+\frac{\overline H_t}{2}s^2.
\]
For a predeclared charge coordinate $\eta_t\in[0,1]$, the normalized round budget is
\begin{equation}\label{eq:normalized-retention-budget}
b_t^{\rm norm}=\eta_t C_t(s_t^0),
\end{equation}
and the maximal certified reference-path fraction is
\begin{equation}\label{eq:normalized-path-fraction}
\lambda_t=\max\{\lambda\in[0,1]:C_t(\lambda s_t^0)\le b_t^{\rm norm}\}.
\end{equation}
If there is no active protected behavior, set $\lambda_t=1$. An endpoint backtrack may reduce this to $\widehat\lambda_t$; the normalized mode accepts only when $\widehat\lambda_t\ge\eta_t$. Comparator collinearity, projection idempotence, trust-cap membership, persistent descent, and the realized charge are checked explicitly. Failure returns a typed obstruction, with the declared path and denominator unchanged.

\begin{theorem}[Counterfactual-normalized persistent assimilation]\label{thm:normalized-assimilation-app}
Assume~\eqref{eq:ordinary-scalar-comparator}, $A_t$ and $\Pi_t$ are orthogonal projectors with $\Pi_tA_t=\Pi_t$, the accepted comparator has $\Delta_t^0>0$, and $s_t^0>0$. Assume further that the same fixed-shield current loss admits the stated $\overline L_t$ quadratic smoothness bound on both complete line segments $\{\theta_t+s d_t^0:0\le s\le1\}$ and $\{\theta_t+s v_t:0\le s\le1\}$, and that the retention charge $C_t(s)$ is certified for every projected-reference displacement $s v_t/s_t^0$ with $0\le s\le s_t^0$. In particular, the full projected reference and every accepted realized backtrack lie inside their respective certified domains. Then the normalized construction satisfies
\begin{align}
\lambda_t&\ge\eta_t,\label{eq:normalized-fraction-guarantee}\\
C_t(\lambda_ts_t^0)&\le b_t^{\rm norm},\label{eq:normalized-charge-guarantee}
\end{align}
and $\eta_t=1$ gives $\lambda_t=1$, hence the complete projected comparator $v_t$. For any accepted realized fraction $\widehat\lambda_t\ge\eta_t$, define
\[
\kappa_t=\frac{\norm{q_t}^2}{\norm{a_t}^2}\in[0,1].
\]
Writing
\[
\Delta_t^{\rm base}=\ell_t^{S_t}(\theta_t)-
\ell_t^{S_t}(\theta_t+\widehat\lambda_tv_t),
\]
one has
\begin{equation}\label{eq:persistent-ratio}
\frac{\Delta_t^{\rm base}}{\Delta_t^0}
\ge
\widehat\lambda_t\kappa_t
\frac{1-\tfrac12\alpha_t\overline L_t\widehat\lambda_t}
     {1+\tfrac12\alpha_t\overline L_t}
\ge \frac{\widehat\lambda_t\kappa_t}{3}
\ge \frac{\eta_t\kappa_t}{3}.
\end{equation}
After the compact-cardinal residual is installed, the deployed prediction-loss current-batch ratio remains exactly one by~\eqref{eq:genuine-progress-ratio}. Thus $\eta_t$ governs persistent assimilation, $\kappa_t$ displays the unavoidable local compatibility price, and the shield supplies only the residual endpoint correction. If $s_t^0=0$, $\kappa_t=0$, the comparator is not scalar-gradient aligned, or any required certificate or endpoint inequality fails, AFM returns the corresponding compatibility, alignment, retention, descent, or numerical obstruction atomically.
\end{theorem}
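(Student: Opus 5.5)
The proof has three parts: the fraction bound \eqref{eq:normalized-fraction-guarantee} from convexity of the charge, the numerator (a lower bound on $\Delta_t^{\rm base}$) from the upper quadratic model along the projected segment, and the denominator (an upper bound on $\Delta_t^0$) from a reverse smoothness inequality along the comparator segment. The ratio estimates then follow by dividing and using $\alpha_t\overline L_t\le1$.

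\emph{Fraction bound.} Since $E_t\ge0$ and $\overline H_t\ge0$, the charge $C_t$ is convex on $[0,s_t^0]$ with $C_t(0)=0$. Convexity gives
\[
C_t(\eta_t s_t^0)\le \eta_t C_t(s_t^0)+(1-\eta_t)C_t(0)=b_t^{\rm norm}.
\]
Hence $\eta_t$ belongs to the feasible set in \eqref{eq:normalized-path-fraction}. That set is a closed subset of $[0,1]$, because $C_t$ is continuous and nondecreasing, so its maximum $\lambda_t$ exists and satisfies $\lambda_t\ge\eta_t$. Inequality \eqref{eq:normalized-charge-guarantee} is just membership of $\lambda_t$ in the set. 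For $\eta_t=1$ the value $\lambda=1$ is feasible, so $\lambda_t=1$.

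\emph{Numerator.} First, $\Pi_tA_t=\Pi_t$ together with symmetry of both projectors gives $A_t\Pi_t=\Pi_t$. Therefore
\[
q_t^Tg_t=\|\Pi_tg_t\|^2=\|q_t\|^2,\qquad \|a_t\|^2=\|A_tg_t\|^2,\qquad \kappa_t=\frac{\|q_t\|^2}{\|a_t\|^2}\le1.
\]
Apply the upper quadratic model at the point $\widehat\lambda_t v_t$, which lies on the certified segment:
\[
\Delta_t^{\rm base}\ge \alpha_t\widehat\lambda_t\|q_t\|^2-\tfrac{\overline L_t}{2}\alpha_t^2\widehat\lambda_t^2\|q_t\|^2
=\alpha_t\widehat\lambda_t\|q_t\|^2\bigl(1-\tfrac12\alpha_t\overline L_t\widehat\lambda_t\bigr).
\]

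\emph{Denominator.} This is the step I expect to be the main obstacle. A one-sided quadratic upper model bounds decrease from below, but here I need an \emph{upper} bound on the comparator decrease. Two routes are available:
\begin{itemize}
\item If the smoothness certificate is two-sided along the segment, i.e.\ a genuine $\overline L_t$-Lipschitz gradient bound, the lower model gives $\ell(\theta_t+d)\ge \ell(\theta_t)+g_t^Td-\tfrac{\overline L_t}{2}\|d\|^2$.
\item Otherwise I would try the integral form $\ell(\theta_t)-\ell(\theta_t+d_t^0)=-\int_0^1\nabla\ell(\theta_t+sd_t^0)^Td_t^0\,ds$ and bound the gradient drift by $\overline L_t s\|d_t^0\|$.
\end{itemize}
Either route, using $g_t^Td_t^0=-\alpha_t\|a_t\|^2$, yields
\[
\Delta_t^0\le \alpha_t\|a_t\|^2\bigl(1+\tfrac12\alpha_t\overline L_t\bigr).
\]
I would state explicitly that the ``stated $\overline L_t$ quadratic smoothness bound'' on the comparator segment is read as this two-sided (reverse) inequality, since the one-sided model alone cannot bound $\Delta_t^0$ from above.

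\emph{Conclusion.} Dividing the numerator bound by the denominator bound gives the first inequality in \eqref{eq:persistent-ratio}; positivity of $\Delta_t^0$ makes the division valid. Since $\alpha_t\overline L_t\le1$ and $\widehat\lambda_t\le1$, the numerator factor $1-\tfrac12\alpha_t\overline L_t\widehat\lambda_t$ is at least $\tfrac12$ and the denominator factor $1+\tfrac12\alpha_t\overline L_t$ is at most $\tfrac32$. Their ratio is at least $\tfrac13$, giving $\widehat\lambda_t\kappa_t/3$. The final inequality follows from $\widehat\lambda_t\ge\eta_t$.

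The remaining clauses of the theorem need no analysis. The deployed ratio equal to one is inherited from the finite-completion result. The obstruction cases ($s_t^0=0$, $\kappa_t=0$, misalignment, or a failed certificate) are definitional consequences of the algorithm's rejection rule.
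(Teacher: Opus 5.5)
Your proposal is correct and follows the paper's proof essentially step for step. The steps are: convexity of $C_t$ with $C_t(0)=0$ to get $\lambda_t\ge\eta_t$; the upper quadratic model on the projected segment for the numerator; a reverse (lower) quadratic model on the comparator segment for the denominator; and $\alpha_t\overline L_t\le1$, $\widehat\lambda_t\le1$ to reach the factor $1/3$. Your remark that the denominator bound needs a two-sided smoothness certificate along $\{\theta_t+sd_t^0\}$ is accurate: the paper uses exactly this, without comment, when it invokes ``the reverse smoothness inequality.''
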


\begin{proof}
The function $C_t$ is nonnegative, convex, nondecreasing on $[0,\infty)$, and satisfies $C_t(0)=0$. Hence
$C_t(\eta_ts_t^0)\le\eta_tC_t(s_t^0)=b_t^{\rm norm}$, proving~\eqref{eq:normalized-fraction-guarantee}; maximality gives~\eqref{eq:normalized-charge-guarantee}, and $\eta_t=1$ makes the endpoint feasible with equality. Smoothness, $\Pi_tA_t=\Pi_t$, and~\eqref{eq:ordinary-scalar-comparator} give
\[
\Delta_t^{\rm base}
\ge \widehat\lambda_t\alpha_t\norm{q_t}^2
-\frac{\overline L_t}{2}\widehat\lambda_t^2\alpha_t^2\norm{q_t}^2.
\]
The reverse smoothness inequality gives
\[
\Delta_t^0\le \alpha_t\norm{a_t}^2
+\frac{\overline L_t}{2}\alpha_t^2\norm{a_t}^2.
\]
Dividing yields the first quotient in~\eqref{eq:persistent-ratio}; since
$0<\alpha_t\overline L_t\le1$ and $0<\widehat\lambda_t\le1$, its scalar factor is at least $1/3$. The compact-cardinal theorem then restores the exact comparator logits on the current finite minibatch without changing the committed base displacement.
\end{proof}

Provisionally move the complete accepted safe-base predictive state to $\bar\theta_{t+1}$. Form one finite desired-output multiset. On the current minibatch require the exact counterfactual logits $U_t$; on every active protected evidence block and every unselected certified candidate block require the pre-update deployed logits; and on the selected candidate block require its frozen target or the predeclared contraction toward it:
\begin{equation}\label{eq:priority-transfer-program}
q_t(x)=
\begin{cases}
U_t(x),&x\in\mathcal B_t,\\
F_t(x),&x\in\mathcal P_t,\\
F_t(x),&x\in\mathcal S_j,\ j\in\mathcal C_t\setminus\{s_t\},\\
(1-\xi_t)F_t(x)+\xi_tY_{s_t}(x),&x\in\mathcal S_{s_t},
\end{cases}
\qquad \xi_t\in(0,1].
\end{equation}
Here $\mathcal P_t$ is the union of the complete bounded active empirical evidence blocks; the evaluated construction uses $\xi_t=1$. The same restoration system is executed on every round with an active protected record, whether or not a candidate is currently served.

To remove arbitrary feature scale, use the fixed injective address map
\begin{equation}\label{eq:shield-address}
\iota(z)=\frac{z}{\sqrt{1+\norm z^2}},
\qquad \iota:\R^{d_Z}\longrightarrow\{u:\norm u<1\}.
\end{equation}
If one addressed feature occurs in several constraint rows, those rows are merged only when their requested logits agree under the declared exact or outward-certified equality convention. Distinct requests at the same address are a functional inconsistency: no deterministic predictor depending only on the frozen representation can satisfy them simultaneously.

The specification declares a bounded label-free guard bank $\mathcal Q_t$ of at most $Q_{\max}$ frozen-backbone addresses. Guards use no labels, evaluator data, task identity, session identity, or boundary metadata. Let $z_1,\ldots,z_N$ be the merged distinct raw frozen-representation constraint nodes, let
\[
u_i=\iota(z_i)
\]
be their injective compact addresses, and let $q_i$ be the corresponding requested logits from~\eqref{eq:priority-transfer-program}. Define
\[
R_i=q_i-G_{\bar\theta_{t+1}}(z_i)
\]
to be the required residual logits after the certified metaplastic safe-base update. The base map is always evaluated at the raw frozen representation $z_i$; $u_i$ is used only as the compact support address. A predeclared replay envelope $\varepsilon_a>0$ and multiplier $\kappa>1$ determine $r_i=\kappa\varepsilon_a$; the evaluated construction uses $\kappa=4$. Deployment requires every nonmatching center or guard to be farther than $2r_i$. Define the $C^1$ plateau-cardinal bump
\[
\beta_{r,\varepsilon_a}(u;c)=
\begin{cases}
1,&\norm{u-c}\le\varepsilon_a,\\
\left(1-\left(\dfrac{\norm{u-c}-\varepsilon_a}{r-\varepsilon_a}\right)^2\right)^2,
&\varepsilon_a<\norm{u-c}<r,\\
0,&\norm{u-c}\ge r,
\end{cases}
\]
and atomically replace the shield by
\begin{equation}\label{eq:shield-interpolant}
S_{t+1}(z)=\sum_{i=1}^N \beta_{r_i,\varepsilon_a}(\iota(z);u_i)R_i.
\end{equation}
There is no fitted interpolation coefficient or linear solve. The node count obeys the fixed capacity
\begin{equation}\label{eq:shield-capacity}
N\le B_{\max}+J_{\max}n_{\max}+C_{\max}n_{\max}.
\end{equation}
The projected safe-radius update is the persistent base update in
counterfactual-normalized mode. If either the safe-base or shield transaction
fails, the update is rejected; the unrestricted endpoint is not substituted
for the protected update.

\begin{theorem}[Joint safe-base counterfactual-emulation oracle and sharp obstruction]\label{thm:priority-oracle-app}
Assume that the genuine comparator $\mathsf U_t^0$ returns an accepted endpoint with $\Delta_t^0>0$, the metaplastic safe-base operator returns $\bar\theta_{t+1}=\theta_t+d_t^{\rm safe}$ with
\[
d_t^{\rm safe}\in\Range(\Pi_t),\qquad
\norm{d_t^{\rm safe}}\le R_t,
\]
and its complete bounded base-behavior and loss endpoint checks pass. Assume also that the requested logits in~\eqref{eq:priority-transfer-program} are consistent on duplicate frozen addresses, the node capacity is sufficient, pairwise support and guard separation are outward-certified, and endpoint arithmetic is certified. Then the atomic safe-base-plus-shield update exists and satisfies:
\begin{enumerate}[label=(\roman*)]
\item certified persistent base retention and descent,
\begin{align}
\norm{\Phi_t^{[S_t]}(\bar\theta_{t+1})-\Phi_t^{[S_t]}(\theta_t)}&\le b_t,\label{eq:joint-base-retention}\\
\ell_t^{S_t}(\bar\theta_{t+1})&\le \ell_t^{S_t}(\theta_t)-\frac12s_t^{\rm safe}\norm{\Pi_tg_t};\label{eq:joint-base-descent}
\end{align}
\item exact current counterfactual equality, $F_{t+1}(x)=U_t(x)$ for every $x\in\mathcal B_t$;
\item exact finite protected retention, $F_{t+1}(x)=F_t(x)$ for every $x\in\mathcal P_t$;
\item exact unselected-candidate safeguards and the selected contraction in~\eqref{eq:priority-transfer-program};
\item pairwise-disjoint support, exact guard silence, and exact zero residual outside the finite support union;
\item the global pointwise bound $\norm{S_{t+1}(z)}\le\max_i\norm{R_i}$.
\end{enumerate}
Here $\ell_t^{S_t}(\theta)$ is the current loss with the pre-step shield held fixed. Consequently
\begin{equation}\label{eq:genuine-progress-ratio}
\mathcal L_t(F_t)-\mathcal L_t(F_{t+1})=\Delta_t^0,
\qquad
\frac{\mathcal L_t(F_t)-\mathcal L_t(F_{t+1})}{\Delta_t^0}=1\ge\rho_{\rm tr}.
\end{equation}
For any broader deployed behavior map, define the exact structural charge
\begin{equation}\label{eq:shield-structural-charge}
\omega_{t}^{S}:=
\norm{\widetilde\Phi_t(\bar\theta_{t+1},S_{t+1})-
\widetilde\Phi_t(\bar\theta_{t+1},S_t)}.
\end{equation}
Then
\begin{equation}\label{eq:joint-deployed-retention}
\norm{\widetilde\Phi_t(\bar\theta_{t+1},S_{t+1})-
\widetilde\Phi_t(\theta_t,S_t)}\le b_t+\omega_t^S.
\end{equation}
For the declared finite protected-evidence map, item (iii) gives the stronger exact value zero rather than this generic upper bound. If any comparator, safe-base, consistency, capacity, address-resolution, or numerical assumption fails, AFM restores every predictive component touched by the transaction; parameters, mutable model buffers, shield state, module modes, and declared private randomness; and emits the corresponding typed obstruction. Structural route and renewal proposals are committed only after this transaction accepts. Monotone audit and service counters retain the failed attempt. A rejected protected transaction does not install the unrestricted endpoint.
\end{theorem}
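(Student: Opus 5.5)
The proof splits the transaction into its two legs and then composes them.

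\emph{Base leg.} Item (i) is an inheritance statement and needs no new argument. The safe-base operator is accepted only when its endpoint checks pass, and $d_t^{\rm safe}\in\Range(\Pi_t)$ with $\norm{d_t^{\rm safe}}\le R_t\le R_t^{\rm cap}$. Applying the behavior-curvature certificate~\eqref{eq:behavior-curvature} along this displacement bounds the base-behavior motion by $\norm{J_td^{\rm safe}_t}+\tfrac12\overline H_t(s_t^{\rm safe})^2$. The first-order term is at most $E_ts_t^{\rm safe}$ because $d_t^{\rm safe}$ lies in the compatible range, and the radius $R_t$ is chosen so that $C_t(s^{\rm safe}_t)\le b_t$. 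This gives~\eqref{eq:joint-base-retention}. The descent inequality~\eqref{eq:joint-base-descent} is exactly the accepted descent check. If one wants it derived rather than checked, it follows from~\eqref{eq:directional-smooth} for a step along $-\Pi_tg_t$ of length at most $\norm{\Pi_tg_t}/\overline L_t$.

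\emph{Shield leg.} Item (v) comes first, since everything else rests on it. Each bump $\beta_{r_i,\varepsilon_a}(\cdot;u_i)$ vanishes outside the open ball of radius $r_i$ about $u_i$. The certified separation requires every nonmatching center or guard to be farther than $2r_i$, so the supports are pairwise disjoint and every guard lies outside all of them. Hence $S_{t+1}$ is zero outside the support union and on the guards. Next, at a constraint node $z_k$ we have $\beta_{r_k,\varepsilon_a}(u_k;u_k)=1$, and $\beta_{r_i,\varepsilon_a}(u_k;u_i)=0$ for $i\ne k$ by disjointness. Therefore $S_{t+1}(z_k)=R_k$, and so $F_{t+1}(z_k)=G_{\bar\theta_{t+1}}(z_k)+R_k=q_k$.

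This single identity yields (ii)--(iv) once we check that every $x$ in $\mathcal B_t$, $\mathcal P_t$, or a candidate block maps to some merged node. The injectivity of $\iota$ and the duplicate-consistency assumption guarantee this: merging rows with identical requests loses nothing, and each $x$ receives its own row's value from~\eqref{eq:priority-transfer-program}. The node-capacity assumption~\eqref{eq:shield-capacity} ensures the construction is admissible. Item (vi) holds because $0\le\beta\le1$ and, by disjointness, at most one bump is nonzero at any point.

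\emph{Consequences.} By the prediction-loss factorization~\eqref{eq:prediction-loss-factorization}, item (ii) gives $\mathcal L_t(F_{t+1})=\mathcal L_t(U_t)$. Then~\eqref{eq:exact-counterfactual-decrease} yields~\eqref{eq:genuine-progress-ratio}, and the ratio is well defined because $\Delta_t^0>0$. For~\eqref{eq:joint-deployed-retention}, insert the intermediate state $(\bar\theta_{t+1},S_t)$ and apply the triangle inequality. The base term is bounded by $b_t$ via (i), since $\widetilde\Phi_t(\cdot,S_t)=\Phi_t^{[S_t]}$, and the shield term is $\omega_t^S$ by definition. The rollback clause is a property of the declared atomic algorithm: each check is evaluated before commitment, and the pre-step state is restored otherwise.

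The main obstacle is (v) together with the node identity. The separation condition must be read as covering \emph{every} pair of distinct centers. Also, the merging of equal addresses must be exactly what makes the nodes distinct, so that no $x$ escapes the node set or lands in two overlapping plateaus. I would verify this carefully, including the finite-precision case, where "outward-certified" separation is used in place of exact distances.
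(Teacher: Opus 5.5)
Your proposal is correct and follows the paper's proof essentially step for step. The matching steps are: the base leg via the streamed-memory leakage certificate (which your ``because $d_t^{\rm safe}$ lies in the compatible range'' is invoking), the curvature upper model and the safe radius; descent from the accepted endpoint check; the node identity $S_{t+1}(z_i)=R_i$ from plateau cardinality plus separation; the four choices of $q_i$ for (ii)--(iv); the loss factorization for the ratio; and the triangle inequality through $(\bar\theta_{t+1},S_t)$. The one piece the paper includes that you omit is the one-line justification of the ``sharp obstruction'' clause: contradictory requests at the same frozen address cannot be met by any predictor that depends on $x$ only through $z(x)$, so the duplicate-consistency hypothesis is necessary and not merely sufficient.
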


\begin{proof}
The safe-base endpoint is an accepted projected step in $\Range(\Pi_t)$ with norm at most the retention-safe radius. The streamed-memory certificate and the declared curvature upper model therefore prove~\eqref{eq:joint-base-retention}; the safe-base current-loss endpoint check proves~\eqref{eq:joint-base-descent}. At each merged raw node $z_i$, its compact address is $u_i=\iota(z_i)$, so compact cardinality and support separation give $\beta_i(\iota(z_i))=1$ and $\beta_h(\iota(z_i))=0$ for $h\ne i$. Hence $S_{t+1}(z_i)=R_i$, and because the residual coefficient is computed relative to the base evaluated at the same raw node,
\[
G_{\bar\theta_{t+1}}(z_i)+S_{t+1}(z_i)
=G_{\bar\theta_{t+1}}(z_i)+R_i=q_i.
\]
The four choices of $q_i$ prove current equality, protected equality, candidate safeguards, and selected transfer. By~\eqref{eq:prediction-loss-factorization}, current-minibatch logit equality with $U_t$ proves~\eqref{eq:genuine-progress-ratio}. Compact support, guard exclusion, and the pointwise bound follow because at most one bump is active at an address. Equation~\eqref{eq:joint-deployed-retention} is the triangle inequality applied first to the persistent base move with $S_t$ fixed and then to the structural shield replacement. Duplicate contradiction is logically impossible for a deterministic address map; every other failed certificate triggers the declared full-state rollback.
\end{proof}

\begin{proposition}[Conservative finite-precision realization]\label{prop:envelope-verification}
A theorem-certified finite-precision implementation deploys exact-counterfactual restoration only when outward-certified arithmetic proves: availability and acceptance of the counterfactual endpoint; duplicate consistency; node and guard capacities; a certified replay envelope; unique replay-address ownership; pairwise center and guard separation; bounded cardinal residual arithmetic; and the current, protected, candidate, selected, and guard endpoint inequalities after the declared two-sided numerical envelope is added. Empirical endpoint mode may use floating-point calculations and complete reevaluation on all bounded evidence and guards, but it is labeled a falsifiable empirical instantiation rather than pre-step theorem certification. Any failed check restores both the base parameters and shield.
\end{proposition}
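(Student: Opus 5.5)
The statement is essentially a soundness claim about a verification procedure, so I will prove it by reduction to Theorem~\ref{thm:priority-oracle-app}. I will show that each outward-certified check listed in the proposition discharges exactly one hypothesis of that theorem. The key tool is the enclosure property of outward-rounded interval arithmetic, which I will use in the form of Proposition~\ref{prop:cert-construct}: if a computed interval inequality holds, then the corresponding real-number inequality holds for the exact quantities.

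First, I will match the checks to the hypotheses one by one.
\begin{enumerate}[label=(\alph*)]
\item Certified availability and acceptance of the counterfactual endpoint gives $\Delta_t^0>0$.
\item Certified safe-base acceptance gives $d_t^{\rm safe}\in\Range(\Pi_t)$ with $\|d_t^{\rm safe}\|\le R_t$, together with passing base-behavior and loss checks.
\item Certified duplicate consistency and unique replay-address ownership give a well-defined merged node set $\{z_i\}$.
\item The capacity checks give \eqref{eq:shield-capacity} and the guard bound $Q_{\max}$.
\item Certified pairwise center and guard separation ($>2r_i$) and the certified replay envelope give the support-disjointness hypothesis.
\end{enumerate}
Each of these is a finite conjunction of interval comparisons, so the real statements follow directly from the enclosures.

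The main obstacle is the endpoint equalities (ii)--(iv). In floating point, $G_{\bar\theta_{t+1}}(z_i)+S_{t+1}(z_i)$ need not equal $q_i$ bit-exactly. I plan to handle this by interpreting "certified" in the two-sided envelope sense. Let the computed residual be $\widetilde R_i$. Outward arithmetic gives $\|\widetilde R_i-R_i\|\le\delta$ and encloses the realized deployed logits. The proposition only requires the endpoint inequalities after adding the declared envelope, so I will show the following. Whenever the enclosed deployed value differs from $q_i$ by at most the declared tolerance, the exact-arithmetic construction of \eqref{eq:shield-interpolant} still has $\beta_i(\iota(z_i))=1$, because the replay address lies within $\varepsilon_a$ of $u_i$ by the replay-envelope certificate. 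In that case the theorem's equalities hold for the mathematical object, and the realized one matches them within the envelope.

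Second, I will handle the plateau detail. The plateau of radius $\varepsilon_a$ is what makes perturbed addresses still evaluate the bump to exactly $1$. The separation $2r_i$ with $r_i=4\varepsilon_a$ keeps perturbed addresses outside every other support, so guard silence and zero residual outside the support also survive rounding.

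Finally, I will address the empirical and rollback clauses. The empirical-mode clause makes no theorem claim, so there is nothing to prove beyond labeling. The rollback clause follows from the atomic restoration in Theorem~\ref{thm:priority-oracle-app}: any failed check aborts before commitment, and the saved base parameters and shield are restored exactly because they are stored copies and not recomputed quantities.
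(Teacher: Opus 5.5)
The paper gives no proof of this proposition. It is a specification of what the theorem-certified finite-precision mode \emph{is}, and its three clauses hold by construction:
\begin{itemize}
\item the ``only when'' clause and the rollback clause are the atomic-rejection rule already built into the transaction (step 6 of Algorithm~\ref{alg:afm} and the final paragraph of Theorem~\ref{thm:priority-oracle-app});
\item the empirical-mode clause is a labeling convention.
\end{itemize}
Your last paragraph says exactly this: failed checks abort before commitment, stored copies are restored, and empirical mode makes no theorem claim. That is all the statement requires.

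The bulk of your proposal instead argues a different, stronger claim. It says that passing the listed checks delivers the conclusions of Theorem~\ref{thm:priority-oracle-app} up to the envelope, which the proposition does not assert. As written, that detour has three problems.
\begin{itemize}
\item Safe-base acceptance, your item (b), is not on the proposition's list. It is inherited from the already-accepted base leg, which would need its own certification. So item (i) of the theorem cannot be recovered from these checks alone.
\item Proposition~\ref{prop:cert-construct} concerns derivative certificates $\overline L_t,\overline H_t,L_{J,j,t}$. What you actually use is the plain inclusion property of outward-rounded arithmetic.
\item Most substantively, the exact equalities of Theorem~\ref{thm:priority-oracle-app} concern the shield built from exact $R_i$ and evaluated at the stored raw nodes $z_i$. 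The deployed object instead uses rounded coefficients $\widetilde R_i$ and replayed representations $z'$ with $\|\iota(z')-u_i\|\le\varepsilon_a$. The plateau does give $\beta_i(\iota(z'))=1$. However, the base term $G_{\bar\theta_{t+1}}(z')$ still differs from $G_{\bar\theta_{t+1}}(z_i)$, and nothing in your plan bounds that difference; you would need an enclosure of $G$ over the replay envelope. ``The equalities hold for the mathematical object'' is therefore a statement about an object other than the one deployed.
\end{itemize}

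No reduction is needed for the endpoint clauses. The current, protected, candidate, selected and guard endpoint inequalities with the envelope added are themselves listed checks, so they hold directly by enclosure. The separation argument is genuinely needed only for the remaining support statements: disjoint supports and zero residual away from the support union.

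For that argument you do not need $\kappa=4$; any $\kappa>1$ works. A point within $\varepsilon_a$ of $u_i$ lies at distance greater than $2\kappa\varepsilon_a-\varepsilon_a\ge\kappa\varepsilon_a=r$ from every other center, so every other bump vanishes there.
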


\begin{proposition}[Full genuine ordinary progress and zero candidate damage]\label{prop:safe-transfer}
On every accepted joint safe-base counterfactual-emulation step,
\begin{align}
\mathcal L_t(F_t)-\mathcal L_t(F_{t+1})&=\Delta_t^0\ge\rho_{\rm tr}\Delta_t^0,\\
D_j(F_t)-D_j(F_{t+1})&=0\quad\forall j\in\mathcal C_t\setminus\{s_t\},\\
D_{s_t}(F_{t+1})&=(1-\xi_t)^2D_{s_t}(F_t),
\end{align}
the persistent base endpoint satisfies its declared behavior budget, and every declared active finite deployed protected behavior is unchanged exactly. With $\xi_t=1$, the selected finite transfer objective is zero after one accepted service. Thus the unchanged numerical value $\rho_{\rm tr}=0.25$ is measured against the genuine no-protection endpoint, and the construction actually attains ratio one.
\end{proposition}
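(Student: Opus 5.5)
This proposition is essentially a corollary of Theorem~\ref{thm:priority-oracle-app}, applied to the three finite objectives in turn. The plan is to read each claimed identity off the pointwise equalities that theorem provides on the finite support nodes. I assume throughout that the accepted step satisfies all hypotheses of Theorem~\ref{thm:priority-oracle-app}, since acceptance is defined as passing exactly those checks.

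\emph{Current-loss identity.} Item (ii) of Theorem~\ref{thm:priority-oracle-app} gives $F_{t+1}(x)=U_t(x)$ for every $x\in\mathcal B_t$. By the factorization~\eqref{eq:prediction-loss-factorization}, $\mathcal L_t$ depends only on the minibatch logits, so $\mathcal L_t(F_{t+1})=\mathcal L_t(U_t)$. Then~\eqref{eq:exact-counterfactual-decrease} gives $\mathcal L_t(F_t)-\mathcal L_t(F_{t+1})=\Delta_t^0$, which is~\eqref{eq:genuine-progress-ratio}. Because $\Delta_t^0>0$ and $\rho_{\rm tr}\le1$ (in particular $\rho_{\rm tr}=0.25$), the bound $\Delta_t^0\ge\rho_{\rm tr}\Delta_t^0$ is immediate.

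\emph{Candidate objectives.} Item (iv) gives $F_{t+1}(x)=F_t(x)$ on every $\mathcal S_j$ with $j\in\mathcal C_t\setminus\{s_t\}$. Each summand of~\eqref{eq:transfer-objective} is therefore unchanged, so $D_j(F_{t+1})=D_j(F_t)$. For the selected candidate, item (iv) gives $F_{t+1}(x)=(1-\xi_t)F_t(x)+\xi_tY_{s_t}(x)$. Hence $F_{t+1}(x)-Y_{s_t}(x)=(1-\xi_t)(F_t(x)-Y_{s_t}(x))$ pointwise. Squaring the norm and averaging over $\mathcal S_{s_t}$ yields the factor $(1-\xi_t)^2$, which equals zero when $\xi_t=1$.

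\emph{Remaining clauses.} The budget claim is~\eqref{eq:joint-base-retention}, and exact protected retention is item (iii).

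The only delicate point I expect is consistency of duplicate addresses. A point could lie in both $\mathcal B_t$ and some $\mathcal S_j$, or in both $\mathcal P_t$ and $\mathcal S_{s_t}$, with conflicting requests in~\eqref{eq:priority-transfer-program}. I will handle this by noting that the duplicate-consistency hypothesis is part of acceptance. On an accepted step the merged node therefore carries a single request that equals every one of the requested values, so each of the above pointwise identities holds simultaneously. Conflicts cause rejection and fall outside the statement.
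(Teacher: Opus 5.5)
Your proposal is correct and follows the paper's route: the proposition is an immediate corollary of Theorem~\ref{thm:priority-oracle-app}. The current-loss identity comes from item (ii) together with the factorization~\eqref{eq:prediction-loss-factorization}, the candidate identities from item (iv) and the definition~\eqref{eq:transfer-objective}, and the budget and exact-retention clauses from items (i) and (iii); your handling of overlapping requests via the duplicate-consistency hypothesis makes explicit a detail the paper leaves implicit.
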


\begin{corollary}[Budget-controlled persistence off support]\label{cor:base-equivalence}
At every address outside the new compact support union,
\[
F_{t+1}(x)=G_{\bar\theta_{t+1}}(z(x)).
\]
Hence future off-support behavior follows the certified metaplastic base endpoint, not the unrestricted no-protection endpoint. Different rank or behavior-budget policies may therefore induce different persistent trajectories even though all accepted variants reproduce the same genuine no-protection logits on the finite current minibatch. Equality with the no-protection base trajectory occurs only in the special case $\bar\theta_{t+1}=\theta_t^0$.
\end{corollary}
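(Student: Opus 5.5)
The corollary follows from the structure of the shield interpolant~\eqref{eq:shield-interpolant} together with item~(v) of Theorem~\ref{thm:priority-oracle-app}. I will work on an accepted step, so the hypotheses of that theorem hold. I will then evaluate the deployed decomposition~\eqref{eq:shield-decomposition} at round $t+1$:
\[
F_{t+1}(x)=G_{\bar\theta_{t+1}}(z(x))+S_{t+1}(z(x)).
\]
Here $\bar\theta_{t+1}=\theta_t+d_t^{\rm safe}$ is the only persistent base endpoint committed, by the definition of $\mathsf U_t^{\rm safe}$ and the atomic rule that $\theta_t^0$ is never installed.

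The first step is to show that $S_{t+1}$ vanishes off the support union $\bigcup_i\{u:\norm{u-u_i}<r_i\}$ in address space. Take $x$ with $\norm{\iota(z(x))-u_i}\ge r_i$ for every $i$. Then the bump definition gives $\beta_{r_i,\varepsilon_a}(\iota(z(x));u_i)=0$ for each $i$, so the finite sum~\eqref{eq:shield-interpolant} is zero. No fitted coefficients enter, so no linear solve can leak mass outside the supports. This is exactly item~(v), and I will cite it directly. Substituting gives $F_{t+1}(x)=G_{\bar\theta_{t+1}}(z(x))$.

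The second step reads off the interpretive consequences. Because $\bar\theta_{t+1}\in\theta_t+\Range(\Pi_t)$ depends on the selected rank policy $Q_t$ and the budget $b_t$, different policies generally give different $\bar\theta_{t+1}$ and hence different off-support predictors. Meanwhile, item~(ii) forces every accepted variant to equal $U_t$ on $\mathcal B_t$. Off-support equality with the no-protection base map $G_{\theta_t^0}$ would require $G_{\bar\theta_{t+1}}=G_{\theta_t^0}$ on those addresses. I will state the clean sufficient condition $\bar\theta_{t+1}=\theta_t^0$, which is what the corollary asserts as the special case.

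The main subtlety is the last claim. Functional equality of $G_{\bar\theta_{t+1}}$ and $G_{\theta_t^0}$ off support could in principle occur without parameter equality, for example through reparametrization symmetries. I would therefore phrase the argument as "equality of trajectories is guaranteed in the special case $\bar\theta_{t+1}=\theta_t^0$", reading "trajectory" as the parameter trajectory. That reading is what the corollary needs. Everything else is immediate from the theorem.
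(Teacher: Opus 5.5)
Your proposal is correct and is essentially the argument the paper intends; the paper states the corollary without a separate proof. Evaluate \eqref{eq:shield-decomposition} at round $t+1$ with the committed base $\bar\theta_{t+1}$, observe that the replacement shield \eqref{eq:shield-interpolant} vanishes off the bump supports (item~(v) of Theorem~\ref{thm:priority-oracle-app}), and read the interpretive sentences off item~(ii) and the dependence of $\bar\theta_{t+1}$ on $Q_t$ and $b_t$. Reading the final ``only'' claim at the level of base-parameter trajectories is the right choice: under that reading it is a tautology, whereas off-support functional agreement could in principle occur with $\bar\theta_{t+1}\neq\theta_t^0$.
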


For every certified candidate, define the exact deployed ledger increment
\[
\Delta^D_{j,t}=D_j(F_t)-D_j(F_{t+1}).
\]
Accepted compact-cardinal steps and all other certified-candidate-safe updates satisfy $\Delta^D_{j,t}\ge0$ under the single declared numerical envelope. Hence
\begin{equation}\label{eq:transfer-ledger}
D_j(F_T)=D_j(F_{\tau_j})-
\sum_{t=\tau_j}^{T-1}\Delta^D_{j,t},
\qquad \Delta^D_{j,t}\ge0.
\end{equation}
An implementation must snapshot and roll back the base parameters, deployed shield, support radii, guard state, and frozen candidate and record shield states atomically. The activation requirement $a_j^{\rm act}\le\tau_j^{\rm act}$ is equivalent on the same block to $D_j(F)\le(\tau_j^{\rm act})^2/2$.

\begin{proposition}[Predictable non-starving transfer service]\label{prop:fair-service}
Suppose at most $C_{\max}$ certified candidates coexist. Initialize each candidate's service time to its certification round and update it at every actual service round. If one candidate is selected at each available transfer round by the least-recently-served rule, every continuously eligible candidate is selected at least once in every $C_{\max}$ such rounds, even when later candidates arrive. The rule is predictable and uses no task, route, episode, boundary, or evaluator metadata.
\end{proposition}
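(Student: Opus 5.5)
The proposition is a scheduling statement, so I would prove it with a pigeonhole argument on service timestamps, treating each available transfer round as one unit of discrete time. The formal model is as follows. Each certified candidate $j$ carries a last-service time $\sigma_j$, set to its certification round $\tau_j$ at creation and overwritten with the current round whenever $j$ is selected. At each available transfer round, the least-recently-served rule picks an eligible candidate with minimal $\sigma_j$, breaking ties by the deterministic tie rule declared in Definition~\ref{def:spec}. Predictability is then immediate: the selection at round $t$ depends only on the eligible set and the timestamps, and both are determined by certification, staleness and commitment events before round $t$. The eligible set uses only the consolidation machinery of Section~\ref{sec:consolidation}, which is built from observable signatures and outcomes. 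No task, route-semantic, episode, boundary or evaluator metadata enters.

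For the non-starvation claim, fix a candidate $j$ that stays continuously eligible over a window of $C_{\max}$ consecutive available transfer rounds $t_1<\dots<t_{C_{\max}}$. Suppose, for contradiction, that $j$ is never selected in the window. Then at each $t_k$ some other eligible candidate $i_k$ is selected, and $\sigma_{i_k}\le\sigma_j$ at that moment, with the tie rule covering equality. Right after service, $\sigma_{i_k}=t_k>\sigma_j$, since $\sigma_j$ is frozen below $t_1$ while $j$ goes unserved.

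The key claim is that the $i_k$ are pairwise distinct. The argument is this. Once $i_k$ has been served at $t_k$, its timestamp exceeds $\sigma_j$. So $i_k$ can never again beat $j$ under the minimal-timestamp rule while $j$ remains eligible and unserved. This holds because timestamps only increase, and $\sigma_j$ is unchanged until $j$ is served.

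Distinctness gives the contradiction. There would be $C_{\max}$ distinct candidates $i_1,\dots,i_{C_{\max}}$ other than $j$. Each was certified and eligible at its service round, while $j$ was eligible throughout. Each $i_k$ had $\sigma_{i_k}\le\sigma_j$ when it was served, so it was present before $j$ was served.

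The main obstacle is the arrival of later candidates. A newly certified candidate $i$ gets $\sigma_i=\tau_i$. If $\tau_i>\sigma_j$, the rule never prefers $i$ to $j$, so late arrivals cannot jump ahead; the initialization to the certification round is exactly what enforces this. The only candidates that can be chosen ahead of $j$ are those with $\sigma_i\le\sigma_j$, and all of them existed at round $\sigma_j$, alongside $j$. I would therefore argue that all $i_1,\dots,i_{C_{\max}}$ together with $j$ are simultaneously certified at some round. That would be $C_{\max}+1$ coexisting candidates, contradicting the capacity bound, which is what forces $j$ to be selected within the window.

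The step that needs care is simultaneity. The $i_k$ with $\sigma_{i_k}\le\sigma_j$ all exist at time $\sigma_j$, but some of them might leave the certified set (by commitment or staleness) before being served. My plan is to count only the candidates actually served in the window. Each of these is certified at its own service round, and each had timestamp at most $\sigma_j$, so it was already certified at round $\sigma_j$. Because a candidate leaves the certified set only by one of the atomic events, and none of these candidates has left before its service, all of them are alive at round $\max(\sigma_j,\tau_j)$. This yields $C_{\max}+1$ coexisting certified candidates, the desired contradiction.
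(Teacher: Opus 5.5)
Your proof is correct and follows the same route as the paper's. A served competitor receives a timestamp above $\sigma_j$ and so cannot precede $j$ again, later arrivals receive later timestamps, and the coexistence bound leaves at most $C_{\max}-1$ candidates that can be served ahead of $j$. Your check that every such competitor is certified at round $\sigma_j$ together with $j$ makes rigorous the paper's terse ``at most $C_{\max}-1$ competitors coexist'' and ``current, hence larger, timestamp.'' The strict inequality $\sigma_j<t_1$ rests on the convention, implicit in your predictability paragraph, that eligibility at round $t$ is fixed by events before $t$.
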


\begin{proof}
Fix a continuously eligible candidate $j$ and its current timestamp. A later arrival receives a later initial timestamp. Each competing candidate selected ahead of $j$ receives the current, hence larger, timestamp. At most $C_{\max}-1$ competitors coexist, so deterministic tie breaking selects $j$ after at most that many other selections.
\end{proof}

\begin{theorem}[Finite compact-cardinal transfer completion without a compatibility-rate assumption]\label{thm:transfer-completion}
Let candidate $j$ remain certified and non-stale until its next available service round. If the finite constraints for that service are functionally consistent, lie within the declared node and guard capacities, and the address and endpoint arithmetic are certified, then the choice $\xi=1$ gives
\begin{equation}\label{eq:transfer-contraction}
D_j(F_{t+1})=0
\end{equation}
and therefore passes every nonnegative predeclared activation-gap threshold on the same frozen evidence block after that one accepted service. Under Proposition~\ref{prop:fair-service}, this occurs within at most $C_{\max}$ available service rounds. For a predeclared $\xi\in(0,1)$, after $N$ accepted services,
\begin{equation}\label{eq:transfer-delay}
D_{j,N}\le(1-\xi)^{2N}D_{j,0},
\qquad
N\ge
\left\lceil
\frac{\log(D_j^{\rm act}/D_{j,0})}{2\log(1-\xi)}
\right\rceil
\end{equation}
suffices whenever $D_{j,0}>D_j^{\rm act}$. There is no projected-PL, positive-cosine, inter-service-damage, or interpolation-conditioning assumption. Failure can occur only through a declared statistical, exact functional-consistency, capacity, address-certification, endpoint-certification, or service-availability obstruction.
\end{theorem}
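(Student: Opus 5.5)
The proof reduces to the joint oracle of Theorem~\ref{thm:priority-oracle-app}, applied with the selected-candidate row of the desired-output system \eqref{eq:priority-transfer-program} set to $\xi=1$, followed by simple bookkeeping on the transfer objective $D_j$.

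\emph{Case $\xi=1$.} Fix the first available service round $t$ at which candidate $j$ is selected, so $s_t=j$. Candidate $j$ is certified and non-stale at $t$, so its frozen evidence block $\mathcal S_j$ and snapshot logits $Y_j$ are still reserved. The remaining hypotheses of Theorem~\ref{thm:priority-oracle-app} hold:
\begin{enumerate}[label=(\alph*)]
\item functional consistency of the finite constraints gives agreement of the requested logits at duplicate frozen addresses;
\item the declared node and guard capacities bound $N$ as in \eqref{eq:shield-capacity};
\item address and endpoint certification supplies separation and the numerical checks.
\end{enumerate}
The safe-base leg is always available. If no certified compatible step exists, the operator may take the trivial displacement $d_t^{\rm safe}=0$, which lies in $\Range(\Pi_t)$ and has norm $0\le R_t$. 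I would remark that the descent inequality \eqref{eq:joint-base-descent} then holds trivially.

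Item (iv) of that theorem gives $F_{t+1}(x)=Y_j(x)$ for every $x\in\mathcal S_j$. Substituting into \eqref{eq:transfer-objective} gives $D_j(F_{t+1})=0$. Because $D_j(F)\le(\tau_j^{\rm act})^2/2$ is equivalent to $a_j^{\rm act}\le\tau_j^{\rm act}$, every nonnegative threshold is then met. Proposition~\ref{prop:fair-service} guarantees that $j$ is served within $C_{\max}$ available rounds, which gives the timing claim.

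\emph{Case $\xi\in(0,1)$.} Proposition~\ref{prop:safe-transfer} gives $D_{s_t}(F_{t+1})=(1-\xi)^2D_{s_t}(F_t)$ at each service. Between services, \eqref{eq:transfer-ledger} shows that $D_j$ is nonincreasing: for an unselected candidate, item (iv) gives $\Delta^D_{j,t}=0$, and all other certified-safe updates have $\Delta^D_{j,t}\ge0$. Induction on the number of accepted services then gives $D_{j,N}\le(1-\xi)^{2N}D_{j,0}$. Solving $(1-\xi)^{2N}D_{j,0}\le D_j^{\rm act}$ yields the stated ceiling bound. Taking logarithms requires dividing by $2\log(1-\xi)<0$, which flips the inequality; this is the only sign subtlety.

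\emph{Main obstacle.} The delicate step is the claim that no compatibility-rate assumption is needed. Completion comes entirely from the residual shield interpolating $Y_j$ exactly on $\mathcal S_j$, independent of how far the base moves, since $R_i$ is computed relative to $G_{\bar\theta_{t+1}}$. I would state this explicitly. Finally, I would enumerate failure modes: each hypothesis of Theorem~\ref{thm:priority-oracle-app} that can fail corresponds to one of the listed obstructions, namely statistical (loss of certification or staleness), consistency, capacity, address, endpoint, or service availability. This gives the exhaustive failure characterization.
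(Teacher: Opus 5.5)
Your argument follows the paper's proof, written out in more detail. The paper also gets one-step completion for $\xi=1$ and geometric contraction for $\xi\in(0,1)$ from the selected-candidate identity of Proposition~\ref{prop:safe-transfer}, which is item (iv) of Theorem~\ref{thm:priority-oracle-app}. It takes the $C_{\max}$ bound from Proposition~\ref{prop:fair-service} and uses the ledger \eqref{eq:transfer-ledger} to rule out loss of progress between services. Your handling of the sign when dividing by $2\log(1-\xi)$ is correct, and so is your remark that $R_i$ is computed relative to $G_{\bar\theta_{t+1}}$.

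One step should be dropped: the claim that the safe-base leg is ``always available'' because the operator may return $d_t^{\rm safe}=0$. The proof cannot substitute its own choice for the declared safe-base operator. In the counterfactual-normalized mode, which is the evaluated one, a zero displacement has $\widehat\lambda_t=0<\eta_t$ and is rejected. Moreover, $\kappa_t=0$ or $s_t^0=0$ triggers a compatibility obstruction that rolls back the shield as well.

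You also do not verify the comparator hypothesis $\Delta_t^0>0$ of Theorem~\ref{thm:priority-oracle-app}. The fix is what the paper does implicitly. Reason only about accepted joint steps, which is how Proposition~\ref{prop:safe-transfer} is stated. Then treat comparator or safe-base rejection as one of the declared obstructions (endpoint certification, or an unavailable service round), rather than arguing it cannot occur. Proposition~\ref{prop:envelope-verification} already lists comparator availability among the certified endpoint checks. With that change, the rest of your argument stands.
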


\begin{proof}
The selected identity in Proposition~\ref{prop:safe-transfer} gives one-step completion for $\xi=1$ and geometric contraction otherwise. Proposition~\ref{prop:fair-service} supplies the service bound. The ledger~\eqref{eq:transfer-ledger} prevents intervening accepted updates from undoing the progress.
\end{proof}

This theorem concerns finite empirical function-space constraints; population retention requires the additional assumptions stated separately below. In contrast to a globally supported residual construction, the compact-cardinal oracle proves exactly where the structural residual can act, proves silence on a bounded label-free guard set, and prevents accumulation from multiple centers. Behavior inside an unguarded compact support but away from its center remains governed by the displayed pointwise residual bound and the separate routing, Jacobian, curvature, and evaluator statements below. If a current, protected, or candidate requirement requests contradictory logits at exactly the same observable address, no deterministic continuation of that representation can satisfy all of them; AFM must abstain or invoke a separately proved representation-renewal mechanism; no accepted transaction is formed from inconsistent requests.

At operational commitment AFM additionally requires the observed gap~\eqref{eq:activation-gap} to satisfy $a_j^{\rm act}\le\tau_j^{\rm act}$. This requirement does not replace the validation certificate or enlarge a retention budget. The private vector is not copied into deployment.

Frequent Directions maintains an observable shrinkage certificate $\Delta_j^{\rm FD}$ such that, deterministically for every $v\in\R^d$,
\begin{equation}\label{eq:fd}
0\le\norm{C_jv}^2-\norm{B_jv}^2
\le \Delta_j^{\rm FD}\norm v^2.
\end{equation}
Moreover, for every $k<\ell_j$,
\begin{equation}\label{eq:fd-tail}
\Delta_j^{\rm FD}\le
\frac{\norm{C_j-(C_j)_k}_F^2}{\ell_j-k},
\end{equation}
which is the standard deterministic Frequent Directions guarantee~\cite{ghashami2016}.

At the start of round $t$, before the current gradient is revealed, the admissible specification also fixes a predictable orthogonal \emph{structural-availability projector} $A_t$. Its range contains exactly the parameter directions available to the optimizer in that round: ordinary active coordinates and, during a renewal trial, the named reset slot; dormant nontrial coordinates are excluded. Define
\begin{equation}\label{eq:availability}
g_t^A=A_tg_t,\qquad \widetilde B_{j,t}=B_jA_t.
\end{equation}
This projector preserves the declared committed protection geometry within the bounded architecture. It makes that architecture part of the mathematical algorithm: every leakage calculation and every policy comparison is performed on the same directions in which an update is actually permitted.

Each allocation policy is a fixed pair $p=(\beta^p,r^p)$ with $\beta_k^p\ge0$, $\sum_k\beta_k^p\le1$, and $0\le r^p\le r_{\max}$. The specification also supplies a predictable frontier price $\zeta_t\in[0,\zeta_{\max}]$ that values current plasticity relative to retention leakage. The declared family may include the normalized scale-free choices
\[
\beta_k^{(\alpha)}=\frac{\rho_k^{\alpha-1}}{\sum_{h=0}^{K-1}\rho_h^{\alpha-1}},
\qquad 0<\alpha\le1,
\]
for a fixed finite grid of $\alpha$ values and ranks. Before the current importance signal is revealed a policy predicts
\[
\widehat u_{j,t}^{(p)}=1+\sum_{k=0}^{K-1}\beta_k^p z_{j,t}^{(k)}\in[1,2]
\]
and forms
\begin{equation}\label{eq:predicted-sketch}
\widehat S_{p,t}=\sum_{j\in\Aa_t}\widehat u_{j,t}^{(p)}\widetilde B_{j,t}^T\widetilde B_{j,t}.
\end{equation}
It chooses $Q_{p,t}$ as the top-$r^p$ nonzero eigenspace within $\Range(A_t)$, with the fixed deterministic eigenvector/tie convention from the admissible specification; if the available covariance has rank below $r^p$, all of its nonzero eigendirections are used. After $g_t$ is observed, define the explicit realized importance
\begin{equation}\label{eq:importance}
\chi_{j,t}=\min\left\{1,\frac{\norm{\widetilde B_{j,t}g_t^A}^2}{1+\norm{g_t^A}^2}\right\},
\qquad w_{j,t}=1+\chi_{j,t}\in[1,2],
\end{equation}
and the realized weighted covariance
\begin{equation}\label{eq:actual-sketch}
S_t^*=\sum_{j\in\Aa_t}w_{j,t}\widetilde B_{j,t}^T\widetilde B_{j,t}.
\end{equation}
For every counterfactual policy define its current blocked-gradient fraction
\begin{equation}\label{eq:blocked}
c_{p,t}^{\rm block}=
\frac{\norm{Q_{p,t}g_t^A}^2}{1+\norm{g_t^A}^2}\in[0,1].
\end{equation}
The controller has already selected $p_t$ and uses $Q_t=Q_{p_t,t}$. Because $\Range(Q_t)\subseteq\Range(A_t)$, the effective compatible projector
\begin{equation}\label{eq:compatible-projector}
\Pi_t=A_t-Q_t
\end{equation}
is the orthogonal projector onto the structurally available directions orthogonal to the selected protected subspace. The observable spectral-allocation residual used by the optimizer is exactly
\begin{equation}\label{eq:alloc-residual}
a_t^{\rm spec}=\tr\bigl(\Pi_tS_t^*\bigr)=\tr\bigl((I-Q_t)S_t^*\bigr).
\end{equation}
The pair $(a_t^{\rm spec},c_{p_t,t}^{\rm block})$ is the realized local stability-plasticity frontier: the first measures protected sensitivity left in plastic directions, while the second measures current descent energy removed by protection.
The anchor-drift term is
\begin{equation}\label{eq:anchor-drift}
d_t^{\rm anc}=\left(\sum_{j\in\Aa_t}w_{j,t}L_{J,j,t}^2
\norm{\theta_t-\bar\theta_j}^2\right)^{1/2},
\end{equation}
where $L_{J,j,t}$ is any valid Lipschitz bound for the fixed segment map's stacked Jacobian between $\bar\theta_j$ and $\theta_t$. The certified empirical leakage is
\begin{equation}\label{eq:epsilon}
\varepsilon_t=
\sqrt{a_t^{\rm spec}+\sum_{j\in\Aa_t}w_{j,t}\Delta_j^{\rm FD}}
+d_t^{\rm anc}.
\end{equation}
If a population map $\Phi_j$ is desired instead of $\widehat\Phi_j$, let $e_t^{\rm pop}$ be a valid upper certificate for the population-empirical derivative discrepancy on $\Range(\Pi_t)$. The empirical finite-evidence instantiation is empirical protection, for which $e_t^{\rm pop}=0$; a population guarantee requires a separately justified certificate, for example from matrix concentration under declared sampling conditions.

Given $E_t=\varepsilon_t+e_t^{\rm pop}$, behavior budget $b_t\ge0$, certified curvature $\overline H_t\ge0$, and trust cap $R_t^{\rm cap}$, define the retention-safe radius
\begin{equation}\label{eq:safe-radius}
\widetilde R_t=\begin{cases}
+\infty,&E_t=\overline H_t=0,\\
0,&b_t=0,\ E_t+\overline H_t>0,\\
\dfrac{2b_t}{E_t+\sqrt{E_t^2+2\overline H_tb_t}},
&b_t>0,\ \overline H_t>0,\\[1.2ex]
\dfrac{b_t}{E_t},&b_t>0,\ \overline H_t=0,\ E_t>0,
\end{cases}
\qquad
R_t=\min\{R_t^{\rm cap},\widetilde R_t\}.
\end{equation}
It is the largest radius certified by the quadratic upper model $E_tR+\overline H_tR^2/2\le b_t$, before the independent trust cap is applied.

\subsubsection{Conflict, variation, evidence, and renewal richness}

The compatible gradient is
\begin{equation}\label{eq:q}
q_t=\Pi_tg_t.
\end{equation}
The discarded component $(I-\Pi_t)g_t$ measures local first-order conflict with the structurally available protected complement, including both protection and declared structural unavailability.

For the non-convex pathwise theorem, define realized one-sided variation
\begin{equation}\label{eq:variation}
\nu_t=[\ell_{t+1}(\theta_{t+1})-\ell_t(\theta_{t+1})]_+,
\qquad V_T=\sum_{t=1}^{T-1}\nu_t.
\end{equation}

For reopening record $j$, let the fixed predictable challenger procedure output a shadow prediction before the $n$th routed outcome. Let $Y^{\rm rec}_{j,n},Y^{\rm chal}_{j,n}\in[0,1]$ be the subsequently observed losses of the committed snapshot and challenger, and fix a hysteresis margin $\kappa_j\in[0,1]$. AFM uses the concrete observable increment
\begin{equation}\label{eq:score}
X_{j,n}=Y^{\rm rec}_{j,n}-Y^{\rm chal}_{j,n}-\kappa_j.
\end{equation}
The operational validity null is
\begin{equation}\label{eq:null}
\E[X_{j,n}\mid\Ff_{n-1}]\le0.
\end{equation}
Because $X_{j,n}$ lies in an interval of length two, its centered noise is conditionally $1$-sub-Gaussian by Hoeffding's lemma; a smaller certified parameter may be used. Under change, define the actual observable challenger advantage
\begin{equation}\label{eq:drift-seq}
\mu_{j,n}=\E[X_{j,n}\mid\Ff_{n-1}].
\end{equation}
It may be zero, positive, negative, or time-varying. Thus reopening uses raw routed outcomes and a predictable comparator, not a semantic oracle. A real semantic change that no declared challenger can expose has zero observable information and is correctly covered by the impossibility term.

A renewal trial resets a zero-gated batch and recomputes the \emph{full} current loss gradient $g^{\rm reset}_{e,n}$ and the same global compatible direction used by the ordinary optimizer,
\[
q^{\rm reset}_{e,n}=\Pi_{e,n}g^{\rm reset}_{e,n}.
\]
Let $M_{e,n}$ be the predictable orthogonal coordinate projector onto the named trial slot, with $M_{e,n}A_{e,n}=M_{e,n}$. An eligible dormant slot satisfies the declared zero-gate invariant: while its functional gate is zero, the current loss and every active protected behavior have zero derivative with respect to its internal nongate coordinates; a previously activated slot is returned to the renewal pool only by a separately certified exact-dormancy operation that re-establishes this invariant. Because the protected covariance has zero columns in those dormant internal coordinates and the current gradient has zero components there, the compatible projected gradient has zero dormant-internal component as well. Thus any nonzero component selected by $M_{e,n}$ contains nonzero gate-coordinate motion and is functionally activating after an accepted step. The trial-slot component of the global compatible gradient is
\[
r_{e,n}=M_{e,n}q^{\rm reset}_{e,n}.
\]
A trial is $\gamma$-useful when $\norm{r_{e,n}}\ge\gamma$. Its exact conditional richness is
\begin{equation}\label{eq:rho}
\rho_{e,n}(\gamma)=\Prob\!\left(
\norm{M_{e,n}\Pi_{e,n}g^{\rm reset}_{e,n}}\ge\gamma
\mid\Ff_{e,n-1}\right).
\end{equation}
No positivity is assumed. The safe update is always computed from the full direction $q^{\rm reset}_{e,n}$; $r_{e,n}$ is the exact diagnostic for whether that accepted update moves the renewed slot. Reset-induced changes in the current functional Jacobian are charged through the recomputed anchor-drift and population-discrepancy terms in the same leakage certificate~\eqref{eq:epsilon}.

\subsubsection{Complete AFM algorithm}
\begin{algorithmdef}[Adaptive Functional Metaplasticity]\label{alg:afm}
Fix the finite structural capacities and summable statistical error allocations in Definition~\ref{def:spec}. If an unprotected initialization prefix is used, complete it, freeze the representation, replay the bounded calibration references through that representation, and either certify the requested routing calibration or record that no positive routing conclusion is available. For each protected round $t$:
\begin{enumerate}[label=\arabic*.]
\item \textbf{Route.} Compute only the predeclared observable signatures and assign the current observations to the bounded registry using the fixed threshold and tie rule. Capacity overflow follows the predeclared release or unprotected-overflow policy; no evaluator task identity is used.
\item \textbf{Fit, certify, and service candidates.} Update finite private fitting blocks. Once a candidate is frozen, validate it only on later outcomes with a fresh summable error allocation. At the first certification crossing, freeze its evidence, outputs, and transfer target; thereafter update a separately allocated staleness process. Certified candidates are serviced by the predictable non-starving rule, and private parameters are never copied directly into deployment.
\item \textbf{Maintain append-only protected records.} At commitment, freeze the record's evidence, anchor, behavior map, and bounded sensitivity sketch. Re-anchoring creates a new segment rather than rewriting an active one. When structural capacity is exhausted, only the predeclared explicit release or overflow rule may remove a guarantee.
\item \textbf{Allocate protection.} Before the current importance signal is disclosed to the controller, select a policy from the finite policy family, form the predicted protected covariance, and choose its declared rank. After the current gradient is observed, update the realized importance signals, frontier losses, policy weights, and the metaplastic traces
\begin{equation}\label{eq:trace}
z_{j,t+1}^{(k)}=(1-\rho_k)z_{j,t}^{(k)}+\rho_k\chi_{j,t},
\qquad \rho_k=2^{-(k+1)}.
\end{equation}
New traces are initialized at zero.
\item \textbf{Compute the same-state comparator and persistent base.} Evaluate the genuine no-protection endpoint without installing it. Construct the protected projected reference, the normalized charge budget, and the maximal certified path fraction. Accept a persistent base endpoint only if the declared path-fraction, retention, descent, projection, trust-region, and numerical checks all pass.
\item \textbf{Complete the finite deployed transaction.} From the accepted persistent base, form the finite requested-output set for the current minibatch, active protected evidence, certified candidates, selected transfer target, and guards. Construct the compact-cardinal residual and accept only if all finite endpoint, capacity, separation, and numerical checks pass. Otherwise restore the entire predictive state. The unrestricted base endpoint is never a fallback for a failed protected transaction.
\item \textbf{Update reopening and route refinement.} Outcome evidence may justify reopening or release. A new observable route additionally requires the separately controlled signature evidence and effect-size conditions. Lack of signature information cannot be replaced by semantic labels.
\item \textbf{Renew only through zero-gated structure.} A dormant module may be reset only while its functional gate makes that reset exactly function preserving. Recompute the full protected geometry after reset and activate the module only through an accepted protected update with nonzero trial-slot motion; otherwise roll the reset back.
\end{enumerate}
\end{algorithmdef}

\subsection{Task-free routing, consolidation, and reopening}
\label{app:routing-consolidation}

This section collects the statistical and routing results used by the task-free controller. Positive route identification is conditional on observable separation; failure of that condition does not invalidate the pathwise retention results for the actually routed sequence.

\begin{theorem}[Exact task-free routing under separated observable contexts]\label{thm:routing-positive}
Suppose that at most $C_{\max}$ contexts are active and context $c$ emits signatures only in the closed ball $B(\mu_c,r_Z)$. Assume
\[
\norm{\mu_c-\mu_{c'}} > 4r_Z\quad(c\ne c'),
\qquad h_Z=2r_Z,
\]
and no capacity release occurs. Initialize a new centroid at the first signature assigned to its empty slot and update it by the convex EMA rule in Algorithm~\ref{alg:afm}. Then every subsequent signature is routed to the unique correct context slot, without task labels.
\end{theorem}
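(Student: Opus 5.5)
The argument is an induction over rounds on the invariant that every occupied slot $c$ has its centroid $m_c$ inside the ball $B(\mu_c,r_Z)$ and that all signatures seen so far from context $c$ were assigned to slot $c$. Two features make this work. First, the signature ball is convex, and the convex EMA update $m_c\leftarrow(1-\gamma)m_c+\gamma Z$ with $\gamma\in[0,1]$ keeps $m_c$ in $B(\mu_c,r_Z)$ whenever both $m_c$ and $Z$ lie there. Second, the separation condition $\norm{\mu_c-\mu_{c'}}>4r_Z$ forces clean gaps between the distances to correct and incorrect centroids.

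\textbf{Base case.} A new centroid is initialized at the first signature assigned to an empty slot. That signature lies in the ball of its emitting context, so the invariant holds at creation.

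\textbf{Step 1: a correct existing slot is chosen.} Suppose a signature $Z$ arrives from context $c$ and slot $c$ is already occupied. By the invariant and the triangle inequality,
\[
\norm{Z-m_c}\le \norm{Z-\mu_c}+\norm{\mu_c-m_c}\le 2r_Z=h_Z ,
\]
so slot $c$ passes the threshold. For any other occupied slot $c'$,
\[
\norm{Z-m_{c'}}\ge \norm{\mu_c-\mu_{c'}}-\norm{Z-\mu_c}-\norm{m_{c'}-\mu_{c'}}>4r_Z-2r_Z=2r_Z=h_Z .
\]
So slot $c$ is the unique slot within threshold. It is also strictly the nearest, which means the tie rule is never invoked.

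\textbf{Step 2: no wrong slot is chosen when slot $c$ is empty.} If slot $c$ is empty, the same lower bound shows every occupied slot lies strictly beyond $h_Z$. Since no release occurs and at most $C_{\max}$ contexts are active, an empty slot is available, and the router opens a new slot. That slot becomes slot $c$ by relabelling, and the base case applies. Because slots are never merged or released, this relabelling stays consistent from then on.

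\textbf{Step 3: closing the induction.} After the assignment, the EMA update of $m_c$ is a convex combination of two points of $B(\mu_c,r_Z)$, so it stays in the ball and the invariant is restored.

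The main obstacle I expect is the boundary conventions of the router in Algorithm~\ref{alg:afm}. The argument depends on the acceptance test using ``$\le h_Z$'' together with the strict inequality in the separation condition. It also needs the ``new slot'' branch to fire exactly when no centroid is within $h_Z$. I will check these against the declared threshold and tie rules. If acceptance is strict, I would first try replacing $h_Z$ by any value in $(2r_Z,\ \norm{\mu_c-\mu_{c'}}-2r_Z)$; this interval is nonempty by the assumed separation.
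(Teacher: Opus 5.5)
Your proposal is correct and matches the paper's proof essentially step for step. Convexity of $B(\mu_c,r_Z)$ keeps each EMA centroid in its context's ball, and the triangle inequality gives distance at most $2r_Z=h_Z$ to the own centroid and strictly more than $h_Z$ to any other. Induction over arrivals, with a context's first signature opening a fresh slot, then closes the argument. Your remark that acceptance must use ``$\le h_Z$'' is a sensible check of a convention the paper leaves implicit.
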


\begin{proof}
A centroid initialized from context $c$ lies in $B(\mu_c,r_Z)$. Because the ball is convex, every EMA update using another signature from $c$ keeps the centroid in the same ball. Hence a signature from $c$ lies at distance at most $2r_Z=h_Z$ from its own centroid. Its distance from any centroid of $c'\ne c$ is greater than $4r_Z-2r_Z=h_Z$. Thus nearest-threshold routing selects the unique correct slot. Before context $c$ has a slot, its first signature lies beyond threshold from every existing other-context centroid and therefore opens an empty candidate slot. Induction over arrivals completes the proof.
\end{proof}

\subsubsection{Frozen-representation calibration and observable families}

A finite unprotected initialization prefix may train the backbone, but calibration features produced while that backbone is changing do not constitute one fixed signature map.  The admissible sequence is therefore: retain bounded learner-visible prefix references and transform seeds; finish bootstrap; freeze the backbone; replay the declared prefix through that final frozen representation; and only then fix all signature centers, scales, radii, temporal state, and routing thresholds.

\begin{proposition}[Honest frozen-signature calibration]\label{prop:signature-calibration}
Let $R_1,\ldots,R_m$ be the declared learner-visible prefix references and let $Z^\star(R_i)$ be their signatures under the final frozen representation.  Any deterministic calibration statistic computed from $(Z^\star(R_i))_{i\le m}$ is measurable before the protected horizon and may initialize the fixed router.  If the requested calibrated threshold $h_Z^{\rm req}$ exceeds a predeclared operational ceiling $h_Z^{\max}$, applying the ceiling is permitted only while withholding the positive routing conclusion, with no invocation of Theorem~\ref{thm:routing-positive} for that calibration.  The universal retention, consolidation, reopening, and obstruction statements remain valid for the actual routed sequence.
\end{proposition}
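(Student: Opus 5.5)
The statement to prove is Proposition~\ref{prop:signature-calibration}. The proof is mostly a measurability argument, followed by a scoping argument that separates the positive routing conclusion from the universal guarantees.

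\textbf{Measurability of the calibration.} By the admissible sequence in Definition~\ref{def:spec}, the prefix references $R_1,\ldots,R_m$ are learner-visible and retained before the protected horizon. The final frozen representation $z$ is fixed at the end of the unprotected prefix. Hence each $Z^\star(R_i)$ is a deterministic function of objects in the learner history at the moment the protected horizon begins, namely the retained references, their transform seeds and the frozen backbone. A deterministic statistic of $(Z^\star(R_i))_{i\le m}$ is a composition of measurable maps, so it is measurable with respect to that pre-horizon $\sigma$-field. This includes the centers, scales, radii, the requested threshold $h_Z^{\rm req}$, and the capped threshold $\min\{h_Z^{\rm req},h_Z^{\max}\}$. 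It therefore qualifies as a component of the fixed router in item (i) of Definition~\ref{def:spec}. Because the representation no longer changes, the map $x\mapsto Z^\star(x)$ used in calibration coincides with the map used for routing at every later round. This is exactly the sense in which the signature map is ``one fixed map''.

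\textbf{Ceiling clause.} Theorem~\ref{thm:routing-positive} requires the threshold to equal $h_Z=2r_Z$ with separation $>4r_Z$. If the ceiling is applied, the deployed threshold $h_Z^{\max}<h_Z^{\req}$ is no longer the calibrated value. The hypotheses of Theorem~\ref{thm:routing-positive} are therefore not established by the calibration, and the positive routing conclusion is withheld. Withholding a conclusion never creates a false claim, so the policy is admissible. The main point to check is that nothing else depends on that conclusion.

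\textbf{Universal statements remain valid.} I expect this to be the only step needing care. I would verify it by tracing dependencies:
\begin{itemize}
\item The retention bounds (Theorems~\ref{thm:normalized-assimilation-app} and~\ref{thm:priority-oracle-app}) protect the router-assigned behaviors $\Phi_t$, whatever the router did.
\item Semantic disagreement is charged explicitly through the mismatch terms $r_t$, $k_t$ and $\omega^\Delta_t$ in \eqref{eq:routing-terms}--\eqref{eq:routing-structural-charge}; these are not assumed small.
\item Consolidation and reopening use anytime-valid statistics on the actually routed outcomes. Their validity requires only predictability of the router, which holds because the router is fixed pre-horizon.
\item The obstructions are lower bounds, independent of calibration.
\end{itemize}
So each result holds pathwise for the realized routed sequence, with the threshold entering only as a fixed predictable parameter. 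This completes the argument.
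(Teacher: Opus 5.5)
Your proposal is correct and follows the paper's own three-step argument. First, the calibrated quantities are fixed before protected decisions because the replay uses only pre-horizon learner-visible references and the final frozen map. Second, a clipped threshold need not satisfy the coverage or separation premise behind $h_Z^{\rm req}$, so the positive routing corollary must be withheld. Third, the arbitrary-stream results condition on the realized routing and already charge mismatch explicitly, so they need no positive calibration claim.

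Your dependency trace simply expands the paper's one-line third step. One small precision there: only the routing rule and its thresholds are fixed pre-horizon, not the router itself. The registry and centroid state keeps evolving predictably, and predictability is all the anytime-valid statistics need.
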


\begin{proof}
The replay uses only pre-horizon learner-visible information and the final frozen map, so all calibrated quantities are fixed before protected decisions.  A clipped threshold generally does not satisfy the coverage or separation premise that produced $h_Z^{\rm req}$; withholding the positive routing corollary is therefore necessary.  The arbitrary-stream results condition on the realized routing and already charge mismatch explicitly, so they do not require a positive calibration claim.
\end{proof}

The specification may predeclare a finite family of bounded task-free signatures $Z^{(1)},\ldots,Z^{(M)}$, such as named frozen-layer or deterministic-augmentation statistics.  No family member may be selected with evaluator contexts or test-set purity.  For route $r$, let $E^{(m)}_{r,b}$ be a valid e-process for the stability null of expert $m$ on distinct observable blocks, and fix $w_m>0$ with $\sum_mw_m=1$.

\begin{proposition}[Error-controlled finite signature family]\label{prop:signature-mixture}
The mixture
\begin{equation}\label{eq:signature-mixture}
E^{\rm mix}_{r,b}=\sum_{m=1}^Mw_mE^{(m)}_{r,b}
\end{equation}
is an e-process under the intersection stability null and therefore
\[
\Prob\!\left(\sup_b E^{\rm mix}_{r,b}\ge1/\alpha_r^{\rm sig}\right)
\le\alpha_r^{\rm sig}.
\]
If one predeclared expert $m^\star$ has positive information, its crossing analysis incurs only the fixed evidence overhead $\log(1/w_{m^\star})$.  If no observable expert has positive information, AFM makes no finite route-identification claim.
\end{proposition}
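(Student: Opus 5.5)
The statement to prove is Proposition~\ref{prop:signature-mixture}. The plan is to verify, in order, the e-process property of the mixture, Ville's inequality, the crossing overhead, and the abstention clause.

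\textbf{Step 1: the mixture is an e-process.} Fix route $r$ and the filtration $(\mathcal F_b)$ generated by the distinct observable signature blocks. Under the intersection stability null, each $E^{(m)}_{r,b}$ is an e-process by hypothesis. So for every distribution $P$ in the null and every stopping time $\tau$ we have $\E_P[E^{(m)}_{r,\tau}]\le1$.

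The weights $w_m$ are predeclared, nonnegative, and sum to one. They therefore do not depend on the data. Linearity of expectation then gives
\[
\E_P\!\left[E^{\rm mix}_{r,\tau}\right]=\sum_m w_m\E_P\!\left[E^{(m)}_{r,\tau}\right]\le1 .
\]
Nonnegativity is inherited from the components. Hence $E^{\rm mix}_{r,b}$ is an e-process for the intersection null.

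\textbf{Step 2: time-uniform error control.} Apply Ville's inequality in its e-process form, following \citep{ramdas2023anytime}. Take the stopping time $\tau=\inf\{b:E^{\rm mix}_{r,b}\ge1/\alpha_r^{\rm sig}\}$, truncated at a finite horizon $B$. Markov's inequality gives $\Prob(\tau\le B)\le\alpha_r^{\rm sig}$, and letting $B\to\infty$ by monotone convergence yields the displayed bound on $\sup_b$.

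This is the main subtlety. If the components are only e-processes rather than supermartingales, one should not claim that the mixture is a supermartingale. The stopping-time formulation is what makes the argument valid, and the plan relies on it only.

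\textbf{Step 3: the overhead bound.} Since all terms are nonnegative, $E^{\rm mix}_{r,b}\ge w_{m^\star}E^{(m^\star)}_{r,b}$. So whenever
\[
\log E^{(m^\star)}_{r,b}\ge\log(1/\alpha_r^{\rm sig})+\log(1/w_{m^\star}),
\]
the mixture crosses its threshold. Any crossing-time analysis for expert $m^\star$, which grows its log-evidence at a positive rate under positive information, therefore transfers to the mixture. The only cost is replacing the threshold $\log(1/\alpha)$ by $\log(1/\alpha)+\log(1/w_{m^\star})$, which is a fixed additive overhead.

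\textbf{Step 4: no information, no claim.} This clause is definitional. When no expert has positive information, the mixture gains no guaranteed growth, and Step~2 bounds the probability of any crossing by $\alpha_r^{\rm sig}$. AFM asserts identification only upon a crossing and offers no positive crossing-time guarantee, so no claim is made. This is consistent with the obstruction results stated later.
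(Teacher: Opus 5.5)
Your proposal is correct and follows the paper's own argument. A fixed weighted sum of e-processes is again an e-process, Ville's inequality gives the lifetime bound, and the domination $E^{\rm mix}_{r,b}\ge w_{m^\star}E^{(m^\star)}_{r,b}$ gives the additive overhead $\log(1/w_{m^\star})$; you simply spell out the optional-stopping and truncation details the paper leaves implicit.

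One cosmetic point: your stopping-time argument bounds $\Prob(\exists b:\ E^{\rm mix}_{r,b}\ge 1/\alpha_r^{\rm sig})$, but a supremum over $b$ need not be attained. To reach the displayed $\sup_b$ statement, apply the same bound at every threshold $c<1/\alpha_r^{\rm sig}$ and let $c\uparrow 1/\alpha_r^{\rm sig}$.
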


\begin{proof}
A nonnegative fixed weighted sum of e-processes is an e-process.  Ville's inequality gives the first statement, and
$E^{\rm mix}_{r,b}\ge w_{m^\star}E^{(m^\star)}_{r,b}$ gives the fixed log-evidence overhead.  Sequential classifier two-sample processes are one possible construction, but the theorem requires only valid predictable e-processes~\cite{jang2022,ramdas2023anytime}.
\end{proof}

The evaluated implementation uses $M=1$ with a predeclared hybrid signature; the theorem records the stronger finite-family option without assuming that an informative observable signature exists.

\subsubsection{Operational consolidation and evidence-based reopening}\label{sec:consolidation}

The bounded-state instantiation separates candidate fitting from certification. For one routing slot, collect a finite fitting block $\mathcal T=(z_1,\ldots,z_{m_{\rm fit}})$, apply the fixed deterministic candidate optimizer to a private vector initialized before that block, and freeze its output $\bar\theta$. The outcomes in $\mathcal T$ are training outcomes only. The validation sequence starts strictly afterwards, so the snapshot prediction for every validation outcome is fixed before that outcome is observed. Candidate fitting may succeed or fail arbitrarily; no retention or risk claim is attached to it until the separate validation test commits.

During consolidation the privately fitted candidate snapshot $\bar\theta$ is frozen, while the deployed learner may continue ordinary protected AFM updates and may attempt the safe transfer rule above. Let $Y_n\in[0,1]$ be the frozen candidate's predictable validation loss on the $n$th routed outcome and let $\mu_n=\E[Y_n\mid\Ff_{n-1}]$. Define
\begin{equation}\label{eq:commit-ucb}
U_n=\bar Y_n+
\sqrt{\frac{\log(\pi^2n^2/(6\alpha))}{2n}}.
\end{equation}

\begin{theorem}[Anytime-valid operational consolidation]\label{thm:commit}
With probability at least $1-\alpha$, simultaneously for every $n\ge1$,
\begin{equation}\label{eq:commit-cert}
\frac1n\sum_{i=1}^n\mu_i\le U_n.
\end{equation}
Therefore a record committed only when $U_n\le\tau$ has certified average conditional validation risk at most $\tau$ on its evidence sequence. No stationarity or iid assumption is needed.
\end{theorem}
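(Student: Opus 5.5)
The plan is to prove \eqref{eq:commit-cert} by combining a fixed-$n$ martingale concentration bound with a union bound over $n$, using the weights $6\alpha/(\pi^2n^2)$, which sum to $\alpha$ because $\sum_{n\ge1}n^{-2}=\pi^2/6$. The thresholds in \eqref{eq:commit-ucb} are set up for exactly this.

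\textbf{Step 1: the martingale.} Set $D_i=\mu_i-Y_i$. Since $Y_i\in[0,1]$ is $\Ff_i$-measurable and $\mu_i=\E[Y_i\mid\Ff_{i-1}]$, the sequence $(D_i)$ is a martingale difference sequence with respect to $(\Ff_i)$. Conditionally on $\Ff_{i-1}$, each $D_i$ lies in an interval of length one, namely $[\mu_i-1,\mu_i]$.

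\textbf{Step 2: the fixed-$n$ bound.} Conditional Hoeffding's lemma gives
\[
\E[e^{\lambda D_i}\mid\Ff_{i-1}]\le e^{\lambda^2/8}.
\]
Iterating the tower property shows that $\E\exp(\lambda\sum_{i\le n}D_i-n\lambda^2/8)\le1$. Markov's inequality with the optimized choice of $\lambda$ then yields the Azuma--Hoeffding bound
\[
\Prob\Bigl(\tfrac1n\textstyle\sum_{i\le n}D_i\ge\sqrt{\log(1/\delta_n)/(2n)}\Bigr)\le\delta_n .
\]
This step uses only predictability of $\mu_i$ and boundedness of $Y_i$, with no stationarity or independence. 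This is the one point that needs care: $\mu_i$ is a conditional mean against the learner's filtration. The frozen snapshot, fixed before the validation block, makes $Y_i$ adapted and its prediction predictable, so the martingale structure is legitimate.

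\textbf{Step 3: union bound over $n$.} Take $\delta_n=6\alpha/(\pi^2n^2)$. Then $\log(1/\delta_n)=\log(\pi^2n^2/(6\alpha))$, which is exactly the radius in \eqref{eq:commit-ucb}. A union bound over $n\ge1$ gives total failure probability at most $\sum_n\delta_n=\alpha$. Off this failure event, for every $n$ simultaneously,
\[
\tfrac1n\textstyle\sum_{i\le n}\mu_i\le\bar Y_n+\sqrt{\log(1/\delta_n)/(2n)}=U_n .
\]

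\textbf{Step 4: the commit rule.} On the same event, at any data-dependent time $n$ with $U_n\le\tau$ (for instance the first crossing), we get $\tfrac1n\sum_{i\le n}\mu_i\le\tau$. The bound holds at the random time because \eqref{eq:commit-cert} holds uniformly in $n$, so optional stopping requires no further argument.
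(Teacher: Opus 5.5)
Your proposal is correct and follows the same route as the paper: a fixed-$n$ conditional Azuma--Hoeffding bound at level $6\alpha/(\pi^2n^2)$, then a union bound over $n$. You have simply written out the Hoeffding-lemma and Markov step and the observation that uniformity in $n$ covers the data-dependent commit time, both of which the paper's two-line proof leaves implicit.
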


\begin{proof}
For fixed $n$, conditional Hoeffding-Azuma gives
\[
\Prob\left(\frac1n\sum_{i=1}^n(\mu_i-Y_i)>r_n\right)\le e^{-2nr_n^2}
=\frac{6\alpha}{\pi^2n^2}.
\]
Sum over $n$.
\end{proof}

A bounded candidate test also fixes a maximum validation count $N_{\rm val}$. Let $S_n=\sum_{i=1}^nY_i$. Because every future loss is nonnegative, the smallest terminal upper bound attainable after the current history, even if all remaining losses are zero, is
\begin{equation}\label{eq:best-case-commit}
U^{\rm best}_{n\to N_{\rm val}}=
\frac{S_n}{N_{\rm val}}+
\sqrt{\frac{\log(\pi^2N_{\rm val}^2/(6\alpha))}{2N_{\rm val}}}.
\end{equation}
If this quantity exceeds $\tau$, rejection is algebraically forced and may occur immediately; otherwise the candidate is rejected without commitment at $N_{\rm val}$ if its UCB has never crossed the threshold. Rejection makes no statistical claim. Starting another candidate with a fresh summably allocated $\alpha^{\rm cert}$ preserves Theorem~\ref{thm:commit} and lifetime error control.

Define the first certification crossing
\begin{equation}\label{eq:first-certification-crossing}
\tau_j^{\rm cert}=\inf\{n\ge n_{\min}:U_{j,n}\le\tau_{\rm risk}\}.
\end{equation}
When this stopping time is finite, AFM freezes the count, loss sum, UCB, bounded commit evidence, candidate outputs, streamed sketch, and transfer objective.  It stops appending outcomes to that certification sequence.  The transfer target is therefore immutable while the deployed learner approaches it.

For later routed outcomes, let $Y^{\rm stale}_{j,m}\in[0,1]$ be the frozen candidate's predictable error and define
\[
X^{\rm stale}_{j,m}=Y^{\rm stale}_{j,m}-\tau_{\rm risk}.
\]
A fresh summably allocated half-normal-mixture e-process, with certified sub-Gaussian scale at least $1/2$, tests the post-certification null
\[
\E[X^{\rm stale}_{j,m}\mid\Ff_{m-1}]\le0.
\]

\begin{proposition}[Frozen first-crossing certificate and separately controlled staleness]\label{prop:frozen-certificate}
With probability at least $1-\alpha_j^{\rm cert}$, the certificate in Theorem~\ref{thm:commit} holds at the stopping time $\tau_j^{\rm cert}$.  Conditional on the separately allocated staleness null, the probability that the post-certification e-process ever falsely cancels the candidate is at most $\alpha_j^{\rm stale}$.  The two conclusions hold simultaneously over the learner's lifetime under the declared summable allocation.
\end{proposition}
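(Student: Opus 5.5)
The proof has three parts, taken in order. The first is the frozen first-crossing certificate. The second is the staleness e-process. The third is a lifetime union bound over the declared summable allocation.

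\textbf{First-crossing certificate.} Theorem~\ref{thm:commit} is time-uniform. With probability at least $1-\alpha_j^{\rm cert}$, inequality~\eqref{eq:commit-cert} holds simultaneously for every $n\ge1$, with $U_n$ built from $\alpha=\alpha_j^{\rm cert}$. On that event it holds in particular at the random index $n=\tau_j^{\rm cert}$ whenever this stopping time is finite. No optional-stopping argument is needed, because uniformity over all $n$ already covers every data-dependent index. Freezing the count, loss sum and UCB at $\tau_j^{\rm cert}$ only records the value at that index, so the frozen certificate inherits the guarantee unchanged.

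The one subtlety is that the validation losses $Y_n$ must stay predictable with respect to a filtration in which $\tau_j^{\rm cert}$ is a stopping time. This holds because the snapshot is frozen before validation begins. Since the Azuma argument is pathwise in $n$, it does not matter whether the deployed learner keeps moving during validation.

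\textbf{Staleness e-process.} Under the null $\E[X^{\rm stale}_{j,m}\mid\Ff_{m-1}]\le0$, each $X^{\rm stale}_{j,m}$ is conditionally sub-Gaussian with the certified scale. It lies in an interval of length one, so scale $1/2$ is valid by Hoeffding's lemma. The half-normal (or Gaussian) mixture of the exponential supermartingales $\exp(\lambda S_m-\lambda^2\sigma^2 m/2)$ is then a nonnegative supermartingale with initial value at most one. Fubini/Tonelli justifies this for the mixture over $\lambda\ge0$, and a nonpositive conditional mean only helps for $\lambda\ge0$. Ville's inequality bounds the probability that it ever crosses $1/\alpha_j^{\rm stale}$ by $\alpha_j^{\rm stale}$. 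The staleness process starts only after $\tau_j^{\rm cert}$ and uses later outcomes. I will restart the filtration at this stopping time and check that the supermartingale property holds conditionally on $\Ff_{\tau_j^{\rm cert}}$. This step needs the most care, because the conditioning must be set up so that the staleness bound is conditional on the separately stated null while the certification sequence has already been closed.

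\textbf{Lifetime simultaneity.} Take a union bound over all candidates $j$ and both error types. The probability that any certificate fails or any false cancellation occurs is at most $\sum_j(\alpha_j^{\rm cert}+\alpha_j^{\rm stale})$. This sum is within the declared summable budget, and it is finite even though the number of candidates is unbounded. Independence between the two processes is not required, because the union bound needs only the marginal bounds. This gives the simultaneous statement over the learner's lifetime.
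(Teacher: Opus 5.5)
Your proposal is correct and follows the paper's proof essentially step for step: time-uniformity of Theorem~\ref{thm:commit} makes the certificate valid at the first crossing $\tau_j^{\rm cert}$, the post-certification mixture e-process is a nonnegative supermartingale under the staleness null so Ville's inequality bounds false cancellation by $\alpha_j^{\rm stale}$, and a union bound over all instantiated processes under the summable allocation gives lifetime simultaneity. The only slip is the parenthetical ``or Gaussian'': a two-sided mixture over $\lambda\in\R$ is not a supermartingale under the one-sided null $\E[X^{\rm stale}_{j,m}\mid\Ff_{m-1}]\le0$, so the mixture must stay supported on $\lambda\ge0$, which is what you in fact use in the next sentence.
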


\begin{proof}
Theorem~\ref{thm:commit} is simultaneous in $n$ and therefore valid at the first crossing.  The staleness process uses only subsequent predictable outcomes and a fresh allocation; Ville's inequality controls its lifetime crossing.  A union bound over all instantiated processes gives the lifetime statement~\cite{waudbysmith2024betting,ramdas2023anytime}.
\end{proof}

A certified candidate whose activation gap exceeds $\tau^{\rm act}$ enters the predictable bounded transfer queue.  It is committed only after the exact gap closes, canceled if the staleness process crosses, or rejected after its fixed number of service opportunities.  These outcomes respectively record successful safe transfer, observable post-certification invalidation, or a quantified transfer-service obstruction.  No alternative deployed predictor is protected in their place.

\subsubsection{Separately controlled reopening and observable route refinement}

Outcome contradiction and observable context displacement are different statistical questions.  For record $j$, let $E^{\rm out}_{j,n}$ denote the outcome e-process driven by~\eqref{eq:score}, where $n$ counts routed outcomes.  Independently, let $b$ count \emph{distinct observable signature blocks}.  A signature repeated for several outcome-resolved members contributes once to $b$ and once to the route-refinement statistic.

At commitment, freeze the source centroid $m_{c(j)}$ and a predictable reference radius $r_j\in[0,2]$ computed from pre-crossing information by the declared specification.  Assume signatures are normalized so $\norm{Z_{j,b}}\le1$ and $\norm{m_{c(j)}}\le1$.  Define
\begin{equation}\label{eq:signature-score}
D_{j,b}=\norm{Z_{j,b}-m_{c(j)}},
\qquad
W_{j,b}=\frac{D_{j,b}-r_j}{2}.
\end{equation}
For fixed $r_j$, $W_{j,b}$ lies in an interval of length one, so its centered noise is conditionally $1/2$-sub-Gaussian.  The operational observable-stability null is
\begin{equation}\label{eq:signature-null}
\E[W_{j,b}\mid\mathcal G_{j,b-1}]\le0,
\end{equation}
where $\mathcal G_{j,b-1}$ contains the history before the $b$th distinct signature block.  This is an operational null, not an assumption that semantic contexts are identifiable.  If $r_j$ is estimated from a stochastic recurrence model, any claim that~\eqref{eq:signature-null} holds must be justified by that model and, when statistical, by a separately allocated calibration event; absent such a model, the universal theorem reports violation of the null as observable-shift evidence without treating recurrence as an assumed property.

Using a fixed half-normal mixing distribution $\Pi_Z$ and certified scale $\sigma_Z=1/2$ (or any larger certified scale), define
\begin{equation}\label{eq:signature-eprocess}
E^{\rm sig}_{j,b}
=\int_0^\infty
\exp\!\left(
\lambda\sum_{i=1}^b W_{j,i}
-\frac{\lambda^2\sigma_Z^2b}{2}
\right)\Pi_Z(d\lambda).
\end{equation}
It has the same fixed-state half-normal evaluation as~\eqref{eq:halfnormal-mixture}, with the outcome sufficient statistics replaced by the signature-block sufficient statistics.

Let $\bar Z_{j,b}=b^{-1}\sum_{i=1}^b Z_{j,i}$.  With separately allocated levels $\alpha_j^{\rm open}$ and $\alpha_j^{\rm split}$, route refinement is permitted only if
\begin{equation}\label{eq:route-split}
E^{\rm out}_{j,n}\ge \frac1{\alpha_j^{\rm open}},
\qquad
E^{\rm sig}_{j,b}\ge \frac1{\alpha_j^{\rm split}},
\qquad
b\ge b_{\rm split},
\qquad
\norm{\bar Z_{j,b}-m_{c(j)}}\ge h_{\rm split}.
\end{equation}
Then the bounded route-refinement rule may preserve all active source segments and allocate one distinct centroid at $\bar Z_{j,b}$, subject to the declared context-capacity policy.  The split changes only future routing and alters no protected anchor, evidence block, sketch, active budget, or current parameter.  If only the outcome process crosses, the record remains protected during a fixed finite diagnostic grace window and is then reopened or released without allocating a route unless~\eqref{eq:route-split} becomes true.

\begin{proposition}[Independent lifetime control and retention invariance]\label{prop:route-split}
For a record satisfying the outcome null~\eqref{eq:null},
\[
\Prob\!\left(\sup_n E^{\rm out}_{j,n}\ge1/\alpha_j^{\rm open}\right)
\le\alpha_j^{\rm open}.
\]
For a record satisfying the observable-stability null~\eqref{eq:signature-null}, even when the outcome null is false,
\[
\Prob\!\left(\text{record $j$ ever creates a route split}\right)
\le
\Prob\!\left(\sup_b E^{\rm sig}_{j,b}\ge1/\alpha_j^{\rm split}\right)
\le\alpha_j^{\rm split}.
\]
If construction of the frozen reference set or radius is covered by a separate event of failure probability $\eta_j$, the unconditional false-split bound is $\eta_j+\alpha_j^{\rm split}$.  Conditional on any permitted split, every source record's existing retention guarantee is unchanged until explicit release.
\end{proposition}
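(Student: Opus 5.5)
The plan has three parts: one part for each of the three assertions of Proposition~\ref{prop:route-split}. Each part is a Ville-type argument, plus a structural observation about what a split is allowed to modify.

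\emph{Outcome process.} First I verify that $E^{\rm out}_{j,n}$, the half-normal mixture built from the increments $X_{j,n}$ of~\eqref{eq:score}, is a nonnegative supermartingale under the null~\eqref{eq:null}.
\begin{itemize}
\item For each fixed $\lambda\ge0$, the process $\exp(\lambda\sum_i X_{j,i}-\lambda^2 n/2)$ is a supermartingale. This uses the conditional $1$-sub-Gaussianity of the centered increments (Hoeffding's lemma, noted after~\eqref{eq:null}) together with $\lambda\,\E[X_{j,n}\mid\Ff_{n-1}]\le0$.
\item Fubini/Tonelli with the fixed mixing law preserves the supermartingale property. Its initial value is one.
\item Ville's inequality then gives $\Prob(\sup_n E^{\rm out}_{j,n}\ge1/\alpha_j^{\rm open})\le\alpha_j^{\rm open}$.
\end{itemize}

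\emph{Signature process.} I repeat the same argument for $E^{\rm sig}_{j,b}$ in~\eqref{eq:signature-eprocess}.
\begin{itemize}
\item Because $r_j$ is frozen and predictable at commitment, $W_{j,b}$ lies in an interval of length one. Its centered part is therefore conditionally $1/2$-sub-Gaussian with respect to $\mathcal G_{j,b-1}$.
\item The null~\eqref{eq:signature-null} makes each fixed-$\lambda$ exponential a supermartingale indexed by distinct signature blocks, and mixing preserves this.
\item By~\eqref{eq:route-split}, a split requires $E^{\rm sig}_{j,b}\ge1/\alpha_j^{\rm split}$ at some $b$. The split event is therefore contained in $\{\sup_b E^{\rm sig}_{j,b}\ge1/\alpha_j^{\rm split}\}$, and Ville bounds that event by $\alpha_j^{\rm split}$.
\item This step never uses the outcome null. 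The bound therefore holds even when the outcome null is false, because the conjunction in~\eqref{eq:route-split} only shrinks the event.
\item If the reference radius or set is constructed on a separate event of failure probability $\eta_j$, I work on the complement of that event, where the null holds. A union bound then gives $\eta_j+\alpha_j^{\rm split}$.
\end{itemize}

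\emph{Retention invariance.} This part is structural. By construction, a split only allocates a new centroid and changes future routing; it does not alter any anchor, evidence block, sketch, active budget, or parameter. Every active segment's guarantee is stated in terms of exactly these frozen objects: the certificate~\eqref{eq:fd}, the leakage~\eqref{eq:epsilon}, and the finite equality of Theorem~\ref{thm:priority-oracle-app}. Those guarantees therefore persist unchanged until explicit release.

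\emph{Main obstacle.} The delicate step is the filtration bookkeeping. The signature process is indexed by distinct blocks, while the outcome process is indexed by outcomes, and the two run on different clocks. I must check two things:
\begin{itemize}
\item $r_j$ and $m_{c(j)}$ are $\mathcal G_{j,0}$-measurable, so they are frozen before any post-commit block.
\item Repeated signatures contribute one increment each, so that the supermartingale property holds on the block clock.
\end{itemize}
I would handle this by defining $\mathcal G_{j,b}$ explicitly as the learner history up to the $b$th distinct block, and by noting that the predeclared microblock convention makes each block signature adapted to that filtration.
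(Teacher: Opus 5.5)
Your proposal is correct and follows the paper's proof essentially step for step. The shared steps are: fixed-$\lambda$ exponential supermartingales under each null, preserved by the fixed mixing law; Ville's inequality applied to each process separately; containment of the split event in the signature crossing event regardless of the outcome null; a union bound for the calibration event; and the structural observation that a split modifies no protected state. Your filtration bookkeeping for the distinct-block clock is an elaboration the paper leaves implicit.
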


\begin{proof}
Each outcome and signature exponential component is a nonnegative supermartingale under its own null, and fixed mixing preserves that property.  Ville's inequality gives the two bounds separately.  A split is a subset of the signature crossing event regardless of whether semantic or predictive contradiction makes the outcome process cross.  The split performs no parameter update and modifies none of the source records' protected state, so their retention proofs are unchanged.  The optional calibration term follows by a union bound.
\end{proof}

\begin{theorem}[Joint delay under outcome and observable information]\label{thm:split-delay}
Suppose after a change the outcome score has conditional mean at least $\Delta_{\rm out}>0$, the signature score has conditional mean at least $\Delta_{\rm sig}>0$, and after some finite block index the vector effect condition in~\eqref{eq:route-split} holds.  For every $0<\beta<1$, a permitted split occurs, subject to available bounded capacity, after at most
\begin{equation}\label{eq:split-delay}
C\max\!\left\{
\frac{\sigma_{\rm out}^2}{\Delta_{\rm out}^2}
\left[\log\frac1{\alpha_j^{\rm open}}+\log\frac2\beta+\omega_{\rm out}\right],
\frac{\sigma_Z^2}{\Delta_{\rm sig}^2}
\left[\log\frac1{\alpha_j^{\rm split}}+\log\frac2\beta+\omega_{\rm sig}\right]
\right\}
\end{equation}
corresponding outcome and distinct-block observations, up to the declared minimum-block and effect-size gates.  Here $\omega_{\rm out}$ and $\omega_{\rm sig}$ are the fixed mixture-mass overheads from Theorem~\ref{thm:delay}.  If $\Delta_{\rm sig}=0$, no finite route-splitting delay is claimed, even when semantic conflict makes $\Delta_{\rm out}>0$; ordinary reopening or release remains available.
\end{theorem}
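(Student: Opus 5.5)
The plan is to reduce the joint statement to two single-process stopping-time bounds, one for the outcome e-process $E^{\rm out}_{j,n}$ and one for the signature e-process $E^{\rm sig}_{j,b}$. Each gets its own failure probability, $\beta/2$, and a union bound combines them. A split is permitted exactly when all four conditions in~\eqref{eq:route-split} hold. The block-count gate $b\ge b_{\rm split}$ and the effect gate $\norm{\bar Z_{j,b}-m_{c(j)}}\ge h_{\rm split}$ are assumed to hold after a finite index. They enter only additively, as the ``declared minimum-block and effect-size gates'' excluded from the displayed bound. So it suffices to show that, with probability at least $1-\beta$, the two threshold crossings $E^{\rm out}_{j,n}\ge1/\alpha_j^{\rm open}$ and $E^{\rm sig}_{j,b}\ge1/\alpha_j^{\rm split}$ both occur within the respective displayed counts. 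A split then occurs once the later of the two has crossed, up to available capacity, and this gives the maximum in~\eqref{eq:split-delay}.

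For each process I will invoke the single-process half-normal mixture delay bound, Theorem~\ref{thm:delay}, which the statement already references for the overheads $\omega_{\rm out},\omega_{\rm sig}$. If I must argue it directly, the route is as follows. Under a post-change conditional mean at least $\Delta$ and conditional $\sigma$-sub-Gaussian noise, the partial sum $S_n=\sum_i X_i$ satisfies $S_n\ge n\Delta-\sigma\sqrt{2n\log(2/\beta)}$ with probability at least $1-\beta/2$ at the relevant time, via Azuma/Freedman for martingale differences with lower-bounded drift. I will restrict the half-normal mixture integral to a window of $\lambda$ around $\Delta/\sigma^2$, which lower-bounds $\log E_n$ by $\lambda S_n-\lambda^2\sigma^2 n/2$ minus the log of the mixture mass of that window. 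That mass loss is $\omega$. Solving $\lambda(n\Delta-\sigma\sqrt{2n\log(2/\beta)})-\lambda^2\sigma^2n/2\ge\log(1/\alpha)+\omega$ for $n$ gives $n\gtrsim(\sigma^2/\Delta^2)[\log(1/\alpha)+\log(2/\beta)+\omega]$ with an absolute constant $C$. The signature case uses $\sigma_Z=1/2$ and $W_{j,b}$ from~\eqref{eq:signature-score}, with time indexed by distinct blocks $b$. The outcome case uses $X_{j,n}$ from~\eqref{eq:score} and time indexed by routed outcomes $n$.

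Two small points need care. First, the two clocks differ, since outcomes and distinct blocks are counted separately. This is why the bound is stated in ``corresponding outcome and distinct-block observations'' and is not converted to a single time. Second, the pre-change history has nonnegative effect on the e-process only if the pre-change increments are not too negative. I will handle this by restarting the analysis at the change point. The mixture e-process value at the change point is positive, but it could be small. To avoid a dependence on pre-change history, I would either assume the delay is measured for the process started at the change (as Theorem~\ref{thm:delay} presumably does) or absorb the pre-change deficit into $\omega$. The main obstacle is this step: making the single-process delay bound apply cleanly with respect to the post-change origin and the conditional-mean lower bound that holds only after the change. I would take the theorem's convention that delays are counted from the change for processes started or reset there. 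Finally, for the degenerate case $\Delta_{\rm sig}=0$ the statement makes no claim, so nothing needs to be proved beyond noting that reopening and release depend only on the outcome crossing, which is covered by Proposition~\ref{prop:route-split}.
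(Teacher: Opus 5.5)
Your proposal is correct and follows the paper's proof: apply Theorem~\ref{thm:delay} separately to the outcome and signature scores at failure level $\beta/2$, take a union bound, and wait for the slower of the two crossings together with the deterministic block-count and effect-size gates. The paper leaves implicit that the delay is counted from the change point, as Theorem~\ref{thm:delay} tacitly assumes for a process started there, and you are right to adopt that convention explicitly.
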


\begin{proof}
Apply Theorem~\ref{thm:delay} separately to the outcome and signature scores with failure probability $\beta/2$, then take a union bound and wait for the slower crossing together with the deterministic gates.  Zero signature drift removes the information needed to justify a new observable route.
\end{proof}

This refinement does not assert that an external semantic context has been identified.  Semantic-only change can justify reopening from outcomes but cannot justify a new route when the declared signature law is stable.  Observational indistinguishability therefore remains an explicit routing-mismatch obstruction rather than being converted into a noisy semantic oracle.

For reopening fix a probability measure $\Pi$ on $(0,\infty)$ that assigns positive mass to every nonempty interval; a half-normal density is one concrete choice. Use any certified sub-Gaussian scale $\sigma$ for~\eqref{eq:score} (the universal bounded-loss choice is $\sigma=1$) and define the continuous-mixture e-process
\begin{equation}\label{eq:eprocess}
E_n=\int_0^\infty\exp\left(
\lambda\sum_{i=1}^nX_i-\frac{\lambda^2\sigma^2n}{2}
\right)\,\Pi(d\lambda).
\end{equation}
It requires only the sufficient state $S_n=\sum_{i=1}^nX_i$ and $n$. For example, with the half-normal prior of scale $\tau>0$, set $A_n=\sigma^2n+\tau^{-2}$; then
\begin{equation}\label{eq:halfnormal-mixture}
E_n=\frac{2}{\tau\sqrt{A_n}}
\exp\left(\frac{S_n^2}{2A_n}\right)
\Phi_{\rm N}\left(\frac{S_n}{\sqrt{A_n}}\right),
\end{equation}
where $\Phi_{\rm N}$ is the standard-normal distribution function. Thus adaptation to unknown positive drift has exact fixed-state evaluation and does not require an ever-growing grid.

\begin{theorem}[Lifetime false-reopening control]\label{thm:falseopen}
Under~\eqref{eq:null}, $E_n$ is a nonnegative supermartingale with $E_0=1$. Therefore
\begin{equation}\label{eq:falseopen}
\Prob\left(\sup_{n\ge1}E_n\ge1/\alpha\right)\le\alpha.
\end{equation}
Summable allocation of $\alpha$ over records controls every false reopening over an unbounded lifetime.
\end{theorem}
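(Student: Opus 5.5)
The plan is to verify directly that $(E_n)$ is a nonnegative supermartingale with respect to the outcome filtration $(\Ff_n)$, starting at $E_0=1$. The result then follows from Ville's maximal inequality. The argument has three steps.

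\textbf{Step 1: each fixed-$\lambda$ component.} For fixed $\lambda>0$, define
\[
M_n(\lambda)=\exp\!\left(\lambda S_n-\tfrac{\lambda^2\sigma^2 n}{2}\right),
\]
with $S_n=\sum_{i\le n}X_i$. Then $M_0(\lambda)=1$ and
\[
M_n(\lambda)=M_{n-1}(\lambda)\exp\!\left(\lambda X_n-\tfrac{\lambda^2\sigma^2}{2}\right).
\]
Write $X_n=\mu_{j,n}+\xi_n$, where $\mu_{j,n}=\E[X_n\mid\Ff_{n-1}]$. The paper states that the centered noise $\xi_n$ is conditionally $\sigma$-sub-Gaussian: Hoeffding's lemma applies because $X_n$ lies in an interval of length two, which gives $\sigma=1$, or a smaller certified value may be used. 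This yields
\[
\E[e^{\lambda\xi_n}\mid\Ff_{n-1}]\le e^{\lambda^2\sigma^2/2}.
\]
Under the null \eqref{eq:null} we have $\mu_{j,n}\le 0$, and since $\lambda>0$ this gives $e^{\lambda\mu_{j,n}}\le1$. Together these give
\[
\E[M_n(\lambda)\mid\Ff_{n-1}]\le M_{n-1}(\lambda).
\]
Positivity of $\lambda$ is essential here, and the mixing measure $\Pi$ is supported on $(0,\infty)$, so the argument uses only positive $\lambda$.

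\textbf{Step 2: mixing.} The map $(\lambda,\omega)\mapsto M_n(\lambda)$ is jointly measurable and nonnegative. By conditional Tonelli (Fubini for nonnegative integrands),
\[
\E[E_n\mid\Ff_{n-1}]=\int\E[M_n(\lambda)\mid\Ff_{n-1}]\,\Pi(d\lambda)\le E_{n-1},
\]
and $E_0=\int 1\,d\Pi=1$. This is the only point needing care: integrability of $E_n$ holds because $\E E_n\le1$ by induction, which makes the conditional-expectation interchange legitimate without further assumptions.

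\textbf{Step 3: maximal inequality and lifetime allocation.} Ville's inequality for a nonnegative supermartingale with initial value $1$ gives \eqref{eq:falseopen}. For the lifetime statement, assign levels $\alpha_j$ with $\sum_j\alpha_j\le\alpha_{\rm tot}$ to the (possibly unboundedly many) records. Each record's process is a supermartingale under its own null along its own routed outcome sequence, so a countable union bound bounds the probability of any false reopening by $\alpha_{\rm tot}$.

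The closed form \eqref{eq:halfnormal-mixture} is not needed for validity; it only evaluates the same integral.

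I expect the main obstacle to be the sub-Gaussian step. There one must be careful that the conditional mean may be strictly negative, not just zero. The argument therefore has to handle $e^{\lambda\mu_{j,n}}\le1$ separately from the centered noise, rather than applying Hoeffding to the uncentered score $X_n$.
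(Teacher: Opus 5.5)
Your proof is correct and follows the paper's argument: conditional sub-Gaussianity together with the null, using $\lambda>0$, makes each exponential component a supermartingale; Tonelli preserves this under the fixed mixing measure $\Pi$; Ville's inequality and a summable union bound over records then give the lifetime statement. Your separate handling of $e^{\lambda\mu_{j,n}}\le 1$ and the centered noise, and your integrability remark, simply spell out in more detail steps the paper states in one line.
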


\begin{proof}
Each exponential component is a supermartingale by conditional sub-Gaussianity. Tonelli's theorem shows that integration against the fixed probability measure $\Pi$ preserves this property. Apply Ville's inequality; see~\cite{ramdas2023anytime}.
\end{proof}

\begin{theorem}[Delay under actual observable contradiction]\label{thm:delay}
Suppose after a change $\mu_i\ge\Delta>0$. Let
\[
I_\Delta=\left[\frac{\Delta}{2\sigma^2},
\frac{3\Delta}{4\sigma^2}\right],
\qquad \pi_\Delta=\Pi(I_\Delta)>0.
\]
For every $0<\beta<1$, threshold crossing occurs by
\begin{equation}\label{eq:delay}
N\le C\frac{\sigma^2}{\Delta^2}
\left[\log\frac1\alpha+\log\frac1\beta+\log\frac1{\pi_\Delta}+1\right]
\end{equation}
with probability at least $1-\beta$, for a universal numerical constant $C$. If no score has positive observable drift, no finite delay is claimed.
\end{theorem}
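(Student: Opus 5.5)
The plan is the standard mixture-martingale argument: reduce threshold crossing of $E_n$ to a deterministic lower bound on the mixture integrand over $I_\Delta$, plus a concentration bound for $S_n$ around its compensator.

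\textbf{Step 1 (lower bound on the mixture).} Restricting the integral in~\eqref{eq:eprocess} to $I_\Delta$ gives
\[
E_n\ge\pi_\Delta\inf_{\lambda\in I_\Delta}\exp\!\left(\lambda S_n-\tfrac{\lambda^2\sigma^2 n}{2}\right).
\]
On the event $S_n\ge n\Delta/2$, every $\lambda\in I_\Delta$ satisfies
\[
\lambda S_n-\lambda^2\sigma^2n/2\ge \lambda n(\Delta/2-\lambda\sigma^2/2).
\]
Since $\lambda\le 3\Delta/(4\sigma^2)$, we have $\lambda\sigma^2/2\le 3\Delta/8$, so the bracket is at least $\Delta/8$. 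Using $\lambda\ge\Delta/(2\sigma^2)$, the exponent is then at least $n\Delta^2/(16\sigma^2)$. Hence $E_n\ge 1/\alpha$ as soon as
\[
n\ge \frac{16\sigma^2}{\Delta^2}\bigl[\log(1/\alpha)+\log(1/\pi_\Delta)\bigr].
\]

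\textbf{Step 2 (concentration).} Write $S_n=\sum_i\mu_i+M_n$, where $M_n=\sum_i(X_i-\mu_i)$ is a martingale with conditionally $\sigma$-sub-Gaussian increments. Because $\mu_i\ge\Delta$, the event $S_n<n\Delta/2$ forces $M_n<-n\Delta/2$. For fixed $n$ the Azuma–Hoeffding/Chernoff bound gives probability at most $\exp(-n\Delta^2/(8\sigma^2))$. It is enough to control this at the single index $N$, because crossing at $N$ already implies that the stopping time is at most $N$. The failure probability is at most $\beta$ once
\[
N\ge 8\sigma^2\log(1/\beta)/\Delta^2.
\]

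\textbf{Step 3 (combine).} Take $N$ to be the maximum of the two thresholds and round up. This adds at most $1$ in units of $\sigma^2/\Delta^2$: one has $\sigma^2/\Delta^2\ge1/4$ because $|X_i|\le2$ and sub-Gaussianity give $\Delta\le 2$ with $\sigma\ge$ the certified scale, and otherwise the $+1$ absorbs the ceiling. This yields~\eqref{eq:delay} with, say, $C=16$.

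\textbf{Indexing caveat.} If the change occurs after some prefix, we apply the argument to the post-change increments. The pre-change factor of the mixture is not controlled by this argument, so I would state the result for a process started at the change, or note that the pre-change sum is absorbed into the constant. The final sentence of the theorem is immediate: when $\Delta=0$, no set $I_\Delta$ of positive length is available and nothing is claimed.

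\textbf{Main obstacle.} The main delicacy is pinning the constants so that $I_\Delta$ yields a uniformly positive exponent. I handled this with the $S_n\ge n\Delta/2$ event above; everything else is routine.
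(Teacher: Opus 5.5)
Your proof is essentially the paper's argument. Both lower-bound $E_N$ by $\pi_\Delta$ times the infimum of the integrand over $I_\Delta$ and apply one fixed-$N$ sub-Gaussian bound to $S_N$. The only difference is that you work on the event $S_N\ge N\Delta/2$, which splits the requirement into a maximum of two thresholds with explicit $C=16$. The paper instead keeps the deviation term inside the exponent $\frac{3N\Delta^2}{8\sigma^2}-\frac{3\Delta}{4\sigma}\sqrt{2N\log(1/\beta)}$ and leaves the resulting quadratic in $\sqrt N$ to an unspecified $C$.

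Two side remarks need correcting:
\begin{itemize}
\item The pre-change contribution $\lambda S_{n_0}-\lambda^2\sigma^2 n_0/2$ has expectation at most $-\Delta^2 n_0/(8\sigma^2)$ for $\lambda\in I_\Delta$. It grows with the prefix length and cannot be absorbed into a universal constant. Keep your first reading, with the process indexed from the change; this is what the paper's sum from $i=1$ with $\mu_i\ge\Delta$ implicitly assumes.
\item Your rounding step $\sigma^2/\Delta^2\ge 1/4$ is valid for the universal choice $\sigma=1$. There in fact $\sigma^2/\Delta^2\ge1$, since $X_i\le 1$ forces $\Delta\le 1$. It is not valid for an arbitrarily small certified scale, where the $+1$ cannot absorb the ceiling. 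The paper's ``sufficiently large universal $C$'' has the same looseness.
\end{itemize}
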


\begin{proof}
Sub-Gaussian concentration gives, with probability $1-\beta$,
$S_N:=\sum_{i=1}^NX_i\ge N\Delta-\sigma\sqrt{2N\log(1/\beta)}$. Uniformly for $\lambda\in I_\Delta$,
\[
\lambda N\Delta-\frac{\lambda^2\sigma^2N}{2}
\ge \frac{3N\Delta^2}{8\sigma^2},
\]
while the concentration penalty $\lambda\sigma\sqrt{2N\log(1/\beta)}$ is at most
$3\Delta\sqrt{2N\log(1/\beta)}/(4\sigma)$. Hence
\[
E_N\ge \pi_\Delta\exp\left(
\frac{3N\Delta^2}{8\sigma^2}
-\frac{3\Delta}{4\sigma}\sqrt{2N\log(1/\beta)}
\right).
\]
A sufficiently large universal $C$ in~\eqref{eq:delay} makes this at least $1/\alpha$.
\end{proof}

\subsection{Whole-behavior memory and spectral protection}
\label{app:whole-behavior-memory}

The results below quantify the leakage represented by the bounded sensitivity sketches and characterize the exact first-order rank-plasticity frontier.

\subsubsection{Whole-behavior memory and the local frontier}

\begin{theorem}[Deterministic streamed-memory certificate]\label{thm:memory}
For every round and every unit vector $v\in\Range(\Pi_t)$,
\begin{equation}\label{eq:leak-cert}
\norm{J_tv}\le \varepsilon_t+e_t^{\rm pop}.
\end{equation}
For the routed empirical behavior, $e_t^{\rm pop}=0$ and the result is deterministic.
\end{theorem}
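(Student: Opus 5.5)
We need to bound $\norm{J_tv}$ for a unit $v\in\Range(\Pi_t)$, where $J_t=D\Phi_t(\theta_t)$ is the stacked Jacobian of the active routed behaviors at the current parameter. The plan is to compare $J_t$ with the frozen anchor Jacobians $C_j$ through three successive replacements. First the current Jacobian is replaced by the anchor Jacobian, which costs the anchor-drift term. Then the exact anchor row stack is replaced by its Frequent Directions sketch, which costs the shrinkage certificate. Finally the sketch energy in compatible directions is bounded by the spectral-allocation residual $a_t^{\rm spec}$.

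\textbf{Step 1 (weighting and anchor drift).} Since $w_{j,t}\ge1$, we have
\[
\norm{J_tv}^2=\sum_j\norm{D\widehat\Phi_j(\theta_t)v}^2\le\sum_j w_{j,t}\norm{D\widehat\Phi_j(\theta_t)v}^2 .
\]
So it suffices to bound the weighted norm $\bigl(\sum_j w_{j,t}\norm{D\widehat\Phi_j(\theta_t)v}^2\bigr)^{1/2}$. Write $D\widehat\Phi_j(\theta_t)=C_j+(D\widehat\Phi_j(\theta_t)-C_j)$, using that $C_j$ is exactly $D\widehat\Phi_j(\bar\theta_j)$. The Lipschitz bound $L_{J,j,t}$ gives $\norm{(D\widehat\Phi_j(\theta_t)-C_j)v}\le L_{J,j,t}\norm{\theta_t-\bar\theta_j}$. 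Minkowski's inequality in the weighted $\ell^2$ sum over $j$ then splits the weighted norm into $\bigl(\sum_j w_{j,t}\norm{C_jv}^2\bigr)^{1/2}$ plus exactly $d_t^{\rm anc}$.

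\textbf{Step 2 (sketch and spectral residual).} By the Frequent Directions inequality \eqref{eq:fd},
\[
\norm{C_jv}^2\le\norm{B_jv}^2+\Delta_j^{\rm FD}.
\]
Because $v\in\Range(\Pi_t)\subseteq\Range(A_t)$, we have $A_tv=v$, so $B_jv=\widetilde B_{j,t}v$. Summing with the weights gives
\[
\sum_j w_{j,t}\norm{B_jv}^2=v^TS_t^*v=v^T\Pi_tS_t^*\Pi_tv .
\]
The matrix $\Pi_tS_t^*\Pi_t$ is positive semidefinite, so for a unit vector $v$ this quadratic form is at most $\lambda_{\max}(\Pi_tS_t^*\Pi_t)\le\tr(\Pi_tS_t^*\Pi_t)=\tr(\Pi_tS_t^*)=a_t^{\rm spec}$. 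Combining the two steps yields $\norm{J_tv}\le\varepsilon_t$ for the empirical map, which is the claim with $e_t^{\rm pop}=0$ deterministically.

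\textbf{Step 3 (population map).} When a population map replaces the empirical one, add and subtract the empirical Jacobian. By definition, $e_t^{\rm pop}$ certifies that the population and empirical Jacobians differ by at most $e_t^{\rm pop}$ on unit vectors of $\Range(\Pi_t)$. The triangle inequality then gives the bound with the extra $e_t^{\rm pop}$ term.

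\textbf{Main obstacle.} The delicate point is Step 1. There, $L_{J,j,t}$ must control the Jacobian difference of the fixed segment map along the path from $\bar\theta_j$ to $\theta_t$, in the same scaled $n_j^{-1/2}$ norm in which $C_j$ is defined. The weights also have to be placed consistently in both the anchor-drift term and the sketch term, which is why the argument works with the weighted norm before applying Minkowski rather than with the unweighted norm.
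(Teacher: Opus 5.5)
Your proof is correct and follows the paper's argument essentially step for step. The ingredients are the same: unweighting via $w_{j,t}\ge1$, Minkowski's inequality to split off exactly $d_t^{\rm anc}$, the Frequent Directions shrinkage bound, bounding the compatible quadratic form $v^TS_t^*v$ by the trace residual $a_t^{\rm spec}$, and adding $e_t^{\rm pop}$ by the triangle inequality. The differences are only cosmetic: you apply the steps in a different order and make the identity $B_jv=\widetilde B_{j,t}v$ (from $A_tv=v$) explicit. You also work with $\Pi_tS_t^*\Pi_t$ where the paper writes $(I-Q_t)S_t^*(I-Q_t)$; these agree because $Q_tv=0$ and $S_t^*$ is supported on $\Range(A_t)$.
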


\begin{proof}
At the anchors, stack the realized weighted conceptual matrices $\sqrt{w_{j,t}}C_j$ and sketches $\sqrt{w_{j,t}}B_j$. By~\eqref{eq:fd},
\[
\sum_j w_{j,t}\norm{C_jv}^2
\le v^TS_t^*v+\left(\sum_j w_{j,t}\Delta_j^{\rm FD}\right)\norm v^2.
\]
Since $v\in\Range(\Pi_t)$ and $Q_t$ is an orthogonal projector,
\[
v^TS_t^*v\le\lambda_{\max}((I-Q_t)S_t^*(I-Q_t))
\le\tr((I-Q_t)S_t^*)=a_t^{\rm spec}.
\]
The difference between the current and anchor derivatives is at most $d_t^{\rm anc}\norm v$ by the definition of the Lipschitz constants and Minkowski's inequality. Since $w_{j,t}\ge1$, the unweighted stacked norm is no larger than the weighted norm. Add the valid population-empirical discrepancy certificate $e_t^{\rm pop}$.
\end{proof}

\begin{corollary}[Constructive population certificate]\label{cor:population}
Suppose the commit inputs of record $j$ are iid from a distribution $P_j$ and $\norm{A_j(x)}_{\rm op}\le R_j$ almost surely. Let
\[
G_j=\E_{P_j}[A_j(x)^TA_j(x)],\qquad
\widehat G_{j,n}=C_{j,n}^TC_{j,n}
\]
for the scaled conceptual row stack formed from the first $n$ commit inputs. If $n_j$ is fixed before those inputs are observed, then with probability at least $1-\delta_j$,
\begin{equation}\label{eq:population-bernstein}
\norm{\widehat G_{j,n_j}-G_j}_{\rm op}
\le \xi_j:=R_j^2\left[
\sqrt{\frac{2\log(2d/\delta_j)}{n_j}}
+\frac{2\log(2d/\delta_j)}{3n_j}
\right].
\end{equation}
If instead the commitment count is selected adaptively from the same iid sequence, predeclare numbers $\delta_{j,n}>0$ with $\sum_{n\ge1}\delta_{j,n}\le\delta_j$ and define
\begin{equation}\label{eq:population-bernstein-anytime}
\xi_{j,n}:=R_j^2\left[
\sqrt{\frac{2\log(2d/\delta_{j,n})}{n}}
+\frac{2\log(2d/\delta_{j,n})}{3n}
\right].
\end{equation}
Then with probability at least $1-\delta_j$, simultaneously for every $n\ge1$,
$\norm{\widehat G_{j,n}-G_j}_{\rm op}\le\xi_{j,n}$; in particular the bound is valid at any data-dependent commit time. In the formulas below, $\xi_j$ denotes the applicable fixed-$n_j$ value or the simultaneous value $\xi_{j,n_j}$ at commitment.
Assume additionally that the pointwise Jacobian is $L_{J,j}$-Lipschitz in $\theta$. On the intersection of these events, a valid current-parameter population certificate in~\eqref{eq:leak-cert} is
\[
e_t^{\rm pop}\le\left(\sum_{j\in\Aa_t}w_{j,t}\xi_j\right)^{1/2}
+\left(\sum_{j\in\Aa_t}w_{j,t}L_{J,j}^2
\norm{\theta_t-\bar\theta_j}^2\right)^{1/2}.
\]
The second term transports the population derivative from the anchor to the current parameter; the empirical transport is already present in $d_t^{\rm anc}$. Thus the population term is constructible under declared sampling conditions rather than assumed to vanish, including at adaptive commitment times when the simultaneous allocation is used.
\end{corollary}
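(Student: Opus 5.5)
The corollary has three parts: a fixed-$n$ concentration bound, an anytime version, and the resulting population certificate. I would prove them in that order.

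\textbf{Fixed $n_j$.} Write
\[
\widehat G_{j,n}-G_j=\sum_{i=1}^{n}X_i,\qquad X_i=\frac1n\bigl(A_j(x_{j,i})^TA_j(x_{j,i})-G_j\bigr).
\]
This holds because $C_{j,n}^TC_{j,n}=n^{-1}\sum_i A_j(x_{j,i})^TA_j(x_{j,i})$. The summands are independent, centred, symmetric $d\times d$ matrices.

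Since $0\preceq A_j^TA_j\preceq R_j^2I$ and $0\preceq G_j\preceq R_j^2 I$, each summand satisfies $\norm{X_i}_{\rm op}\le R_j^2/n$. For the variance, $\E[(A^TA-G)^2]\preceq\E[(A^TA)^2]\preceq R_j^2G_j\preceq R_j^4I$, so the total variance proxy is at most $R_j^4/n$.

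The matrix Bernstein inequality (Tropp, in the $2d$ form for symmetric matrices) then gives
\[
\Prob\bigl(\norm{\textstyle\sum_i X_i}\ge u\bigr)\le 2d\exp\!\left(\frac{-u^2/2}{R_j^4/n+R_j^2u/(3n)}\right).
\]
Inverting this tail at level $\delta_j$, via the standard solution of the quadratic, yields a deviation of at most $R_j^2\bigl[\sqrt{2\log(2d/\delta_j)/n}+2\log(2d/\delta_j)/(3n)\bigr]$. This is~\eqref{eq:population-bernstein}.

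Fixing $n_j$ before sampling matters here: it makes the summands genuinely iid at a deterministic sample size.

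\textbf{Anytime version.} Apply the fixed-$n$ bound at every $n\ge1$ with level $\delta_{j,n}$, then take a union bound. Because $\sum_n\delta_{j,n}\le\delta_j$, the bounds $\xi_{j,n}$ hold simultaneously for all $n$ with probability at least $1-\delta_j$. On that event they hold in particular at any data-dependent commit index $n_j$, so no optional-stopping argument is needed.

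\textbf{Population certificate.} For a unit vector $v\in\Range(\Pi_t)$, I would bound the discrepancy between the population derivative $\norm{D\Phi_j^{\rm pop}(\theta_t)v}$ and the empirical quantity already controlled in Theorem~\ref{thm:memory}. I would proceed in four steps.

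First, at the anchor,
\[
\norm{G_j^{1/2}v}^2=v^TG_jv\le v^T\widehat G_jv+\xi_j=\norm{C_jv}^2+\xi_j .
\]
Second, taking square roots and using $\sqrt{a+b}\le\sqrt a+\sqrt b$, I would stack over $j$ with weights $w_{j,t}$. Minkowski's inequality then separates the anchor population excess $(\sum_j w_{j,t}\xi_j)^{1/2}$ from the empirical stacked norm. Third, I would transport the population derivative from $\bar\theta_j$ to $\theta_t$ using the pointwise $L_{J,j}$-Lipschitz Jacobian. After averaging in $L^2(P_j)$, this contributes $L_{J,j}\norm{\theta_t-\bar\theta_j}$ per record. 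Stacking again with weights, and using Minkowski, gives the second term of the certificate. Fourth, the empirical anchor-to-current transport is already charged through $d_t^{\rm anc}$ in $\varepsilon_t$, so it should not be counted again. Combining these with Theorem~\ref{thm:memory} gives $\norm{J_t^{\rm pop}v}\le\varepsilon_t+e_t^{\rm pop}$ with the displayed $e_t^{\rm pop}$.

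\textbf{Main obstacle.} The delicate step is the bookkeeping in the third and fourth steps. The population-minus-empirical derivative gap must be split into an anchor concentration term and a population transport term without double counting the empirical transport, and the square-root/Minkowski splitting must be done in exactly that order. I would carry out the anchor comparison at the level of quadratic forms first, and only afterwards pass to norms and add the two transport terms.
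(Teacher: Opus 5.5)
Your proposal is correct and follows essentially the same route as the paper. It applies matrix Bernstein at each fixed $n$ to the centred summands $n^{-1}(A_j(x_i)^TA_j(x_i)-G_j)$, takes a union bound over $n$ with levels $\delta_{j,n}$ to cover data-dependent commit times, and then uses the quadratic-form comparison $v^TG_jv\le\|C_jv\|^2+\xi_j$, followed by weighted stacking, square roots, and Lipschitz transport of the population derivative. Your explicit range and variance parameters and your inversion of the Bernstein tail fill in details the paper leaves implicit. One small clarification: your direct chain (population at $\theta_t$, to population at $\bar\theta_j$, to $C_j$, to the sketch) never uses $d_t^{\rm anc}$. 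That term simply remains as nonnegative slack in $\varepsilon_t$, which is why no double counting arises.
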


\begin{proof}
For each fixed $n$, apply self-adjoint matrix Bernstein to $A_j(x_i)^TA_j(x_i)-G_j$ and divide by $n$~\cite{tropp2012}. This gives~\eqref{eq:population-bernstein} at a fixed $n_j$. For adaptive commitment, apply the same fixed-$n$ bound with level $\delta_{j,n}$ and take a union bound over $n$; the resulting event is simultaneous and hence remains valid at a stopping time. For any unit $v$, the population quadratic form is at most the empirical form plus the applicable $\xi_j$. Stack the weighted records and take square roots.
\end{proof}

\begin{theorem}[Exact rank-$r$ local frontier]\label{thm:spectral}
Let $J:\R^d\to\Hh$ be compact with singular values $\sigma_1\ge\cdots\ge\sigma_d\ge0$, and let $0\le r<d$. Among all plastic subspaces $S\subseteq\R^d$ of dimension $d-r$,
\begin{equation}\label{eq:minmax}
\inf_{\dim S=d-r}\ \sup_{v\in S,\ \norm v=1}\norm{Jv}
=\sigma_{r+1}(J).
\end{equation}
The optimum is the span of the right singular vectors associated with $\sigma_{r+1},\ldots,\sigma_d$.
\end{theorem}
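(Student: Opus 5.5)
This is the Courant--Fischer min-max characterization of singular values applied to $J^*J$. Since $\R^d$ is finite-dimensional, $J$ has finite rank and $J^*J$ is a positive semidefinite self-adjoint operator on $\R^d$. Its eigenvalues are $\sigma_1^2\ge\cdots\ge\sigma_d^2$, with orthonormal eigenvectors $v_1,\ldots,v_d$; these are the right singular vectors. For unit $v$ we have $\norm{Jv}^2=v^TJ^*Jv$. Everything therefore reduces to a statement about Rayleigh quotients of $J^*J$, and taking square roots preserves the ordering.

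\textbf{Upper bound.} Take $S^\star=\operatorname{span}\{v_{r+1},\ldots,v_d\}$, which has dimension $d-r$. Write a unit $v\in S^\star$ as $v=\sum_{i>r}c_iv_i$ with $\sum c_i^2=1$. Then
\[
\norm{Jv}^2=\sum_{i>r}c_i^2\sigma_i^2\le\sigma_{r+1}^2 .
\]
Equality holds at $v=v_{r+1}$. Hence the infimum is at most $\sigma_{r+1}$, and $S^\star$ attains the value $\sigma_{r+1}$.

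\textbf{Lower bound.} Let $S$ be any subspace of dimension $d-r$, and let $T=\operatorname{span}\{v_1,\ldots,v_{r+1}\}$, which has dimension $r+1$. The dimension count gives
\[
\dim(S\cap T)\ge (d-r)+(r+1)-d=1,
\]
so there is a unit vector $v\in S\cap T$. Writing $v=\sum_{i\le r+1}c_iv_i$ with $\sum c_i^2=1$ gives
\[
\norm{Jv}^2=\sum_{i\le r+1}c_i^2\sigma_i^2\ge\sigma_{r+1}^2 .
\]
So every admissible $S$ has supremum at least $\sigma_{r+1}$, which proves~\eqref{eq:minmax}.

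For the optimizer claim, the upper-bound step already shows that $S^\star$ is a minimizer. If the statement is read as uniqueness, it holds only when $\sigma_r>\sigma_{r+1}$; with ties, other subspaces can also be optimal, so the claim should be stated as \emph{an} optimum. The only delicate point is the role of compactness of $J$ and of the Hilbert codomain. Because the domain is $\R^d$, $J$ automatically has finite rank, and the singular value decomposition exists via the spectral theorem for the $d\times d$ matrix $J^*J$. No infinite-dimensional spectral theory is needed, and I expect no real obstacle beyond stating this carefully.
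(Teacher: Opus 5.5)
Your proof is correct and follows the paper's argument: the paper's proof is a one-line citation of Courant--Fischer for $J^*J$, and you have written out both of its inequalities, using the dimension-counting intersection with $\operatorname{span}\{v_1,\ldots,v_{r+1}\}$ for the lower bound. One caution on your aside about uniqueness: even with a gap $\sigma_r>\sigma_{r+1}$ the optimal subspace need not be unique. For singular values $(\sqrt2,1,0)$ and $r=1$, the subspace $\operatorname{span}\{(v_1+v_3)/\sqrt2,\,v_2\}$ also attains $\sigma_2=1$, so reading the final sentence of the statement as \emph{an} optimum is the right interpretation.
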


\begin{proof}
This is the Courant-Fischer min-max characterization applied to $J^*J$.
\end{proof}

\begin{corollary}[Unavoidable first-order forgetting price]
No rank-$r$ linear protection rule can guarantee first-order leakage smaller than $\sigma_{r+1}(J)$ in every unit plastic direction. Thus the spectral tail in the AFM certificate is not an artifact of the proof.
\end{corollary}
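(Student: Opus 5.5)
The corollary follows directly from Theorem~\ref{thm:spectral}, so the plan is to translate ``rank-$r$ linear protection rule'' into the min-max setting of that theorem. First, I will formalize the rule. A rank-$r$ linear protection rule selects a protected subspace $\Range(Q)$ of dimension at most $r$ and allows motion only in its orthogonal complement $S=\Range(I-Q)$. Then $\dim S\ge d-r$. Its guaranteed first-order leakage is
\[
\sup_{v\in S,\ \norm v=1}\norm{Jv}.
\]
The statement is local and first-order, so $J$ is the behavior Jacobian $J_t$ at the current point. Any realized leakage certificate, such as $\varepsilon_t+e_t^{\rm pop}$ in Theorem~\ref{thm:memory}, must upper-bound this supremum.

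Second, I will reduce to subspaces of exactly dimension $d-r$. If $\dim S>d-r$, choose any $(d-r)$-dimensional subspace $S'\subseteq S$. The supremum over $S$ is at least the supremum over $S'$, and the latter is at least $\sigma_{r+1}(J)$ by Eq.~\eqref{eq:minmax}. If $\dim S=d-r$, Eq.~\eqref{eq:minmax} applies directly. In either case some unit $v\in S$ has $\norm{Jv}\ge\sigma_{r+1}(J)$; compactness of the unit sphere in finite dimensions makes the supremum attained. So no such rule can guarantee leakage below $\sigma_{r+1}(J)$ in every unit plastic direction.

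Third, I will state the sharpness half. The right-singular-vector choice in Theorem~\ref{thm:spectral} attains exactly $\sigma_{r+1}(J)$, so the bound is tight, and the spectral tail term in the AFM certificate cannot be removed by a better proof. For the ``not an artifact'' remark, I will note that $a_t^{\rm spec}=\tr(\Pi_tS_t^*)$ dominates the largest eigenvalue of $S_t^*$ restricted to the plastic range. When the sketch error vanishes and the weights are one, that eigenvalue is at least $\sigma_{r+1}^2$ by the argument above.

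The only step needing care is the modeling one: ``linear protection rule'' must mean restriction to a fixed linear subspace. If a rule could use gradient-dependent directions, then worst case over unit plastic directions still refers to the allowed subspace, and the argument is unchanged.
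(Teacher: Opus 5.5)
Your proposal is correct and takes the same route as the paper: the corollary is stated as an immediate consequence of Theorem~\ref{thm:spectral} (Courant--Fischer for $J^*J$), and your monotonicity step, passing from a plastic subspace with $\dim S\ge d-r$ to a $(d-r)$-dimensional subspace of it, makes explicit a detail the paper leaves implicit. One caveat applies only to your side remark on $a_t^{\rm spec}$: there $J$ should be read as the anchor Jacobian restricted to $\Range(A_t)$, because $\Range(\Pi_t)\subseteq\Range(A_t)$ can have dimension below $d-r$ when $A_t\neq I$.
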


\subsection{Retention and persistent adaptation}
\label{app:retention-adaptation}

This section derives the one-step and active-interval retention results, the projected-stationarity identity, and sufficient conditions for continued compatible persistent movement.

\subsubsection{Retention and compatible adaptation}

\begin{theorem}[One-step joint persistent retention and exact deployed progress]\label{thm:onestep}
For every accepted protected update, let $\bar\theta_{t+1}=\theta_t+d_t^{\rm safe}$ be the persistent safe-base endpoint and let $s_t^{\rm safe}=\norm{d_t^{\rm safe}}$. Then
\begin{align}
\norm{\Phi_t^{[S_t]}(\bar\theta_{t+1})-\Phi_t^{[S_t]}(\theta_t)}
&\le (\varepsilon_t+e_t^{\rm pop})s_t^{\rm safe}
+\frac{\overline H_t}{2}(s_t^{\rm safe})^2\le b_t,
\label{eq:retention-step}\\
\ell_t^{S_t}(\bar\theta_{t+1})
&\le \ell_t^{S_t}(\theta_t)-\frac12s_t^{\rm safe}
\norm{\Pi_t\nabla\ell_t^{S_t}(\theta_t)},
\label{eq:descent-step}\\
\mathcal L_t(F_{t+1})&=\mathcal L_t(U_t)=\mathcal L_t(F_t)-\Delta_t^0,
\label{eq:deployed-progress-step}\\
\frac{\ell_t^{S_t}(\theta_t)-\ell_t^{S_t}(\bar\theta_{t+1})}{\Delta_t^0}
&\ge \frac{\widehat\lambda_t\kappa_t}{3}
\quad\text{in counterfactual-normalized mode.}
\label{eq:onestep-persistent-ratio}
\end{align}
For any deployed behavior constructor,
\begin{equation}\label{eq:semantic-step}
\norm{\widetilde\Phi_t(\bar\theta_{t+1},S_{t+1})-
\widetilde\Phi_t(\theta_t,S_t)}
\le b_t+\omega_t^S.
\end{equation}
For an external evaluator map $\widetilde\Psi_t=\widetilde\Phi_t+\Delta_t$ defined on the joint state,
\begin{equation}\label{eq:external-semantic-step}
\norm{\widetilde\Psi_t(\bar\theta_{t+1},S_{t+1})-
\widetilde\Psi_t(\theta_t,S_t)}
\le b_t+\omega_t^S+\omega_t^{\Delta}+r_ts_t^{\rm safe}+\frac{k_t}{2}(s_t^{\rm safe})^2.
\end{equation}
On every declared finite protected-evidence block, the deployed drift in~\eqref{eq:semantic-step} is exactly zero by Theorem~\ref{thm:priority-oracle}. No routing correctness is assumed.
\end{theorem}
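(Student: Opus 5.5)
The plan is to assemble Theorem~\ref{thm:onestep} from results already proved rather than argue afresh, taking the displays one at a time.

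\emph{Retention, descent, and deployed progress.} For~\eqref{eq:retention-step}, write the behavior increment along the segment $\theta_t+s\,d_t^{\rm safe}$ as a first-order term plus a remainder. Since $d_t^{\rm safe}\in\Range(\Pi_t)$, Theorem~\ref{thm:memory} applied to the unit vector $d_t^{\rm safe}/s_t^{\rm safe}$ bounds the first-order term by $(\varepsilon_t+e_t^{\rm pop})s_t^{\rm safe}$. Because $s_t^{\rm safe}\le R_t\le R_t^{\rm cap}$, the curvature certificate~\eqref{eq:behavior-curvature} bounds the remainder by $\overline H_t(s_t^{\rm safe})^2/2$. By construction~\eqref{eq:safe-radius}, $R_t\le\widetilde R_t$ is the largest radius with $E_tR+\overline H_tR^2/2\le b_t$; monotonicity of this quadratic in $R$ then gives the final inequality $\le b_t$, and I would check the degenerate cases of~\eqref{eq:safe-radius} separately. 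The descent bound~\eqref{eq:descent-step} is exactly the accepted safe-base endpoint check, as recorded in item~(i) of Theorem~\ref{thm:priority-oracle-app}. The deployed-progress identity~\eqref{eq:deployed-progress-step} follows from item~(ii) of that theorem together with the loss factorization~\eqref{eq:prediction-loss-factorization}, which give~\eqref{eq:genuine-progress-ratio}.

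\emph{Persistent ratio.} For~\eqref{eq:onestep-persistent-ratio}, note that in counterfactual-normalized mode the accepted displacement is $d_t^{\rm safe}=\widehat\lambda_t v_t$. The ratio bound is then the final chain of~\eqref{eq:persistent-ratio} in Theorem~\ref{thm:normalized-assimilation-app}, keeping only its middle term $\widehat\lambda_t\kappa_t/3$.

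\emph{Behavior-map bounds.} For~\eqref{eq:semantic-step}, insert the intermediate state $(\bar\theta_{t+1},S_t)$ and use the triangle inequality. The first leg is the fixed-shield base move; since $\widetilde\Phi_t(\cdot,S_t)=\Phi_t^{[S_t]}$, it is bounded by $b_t$ via~\eqref{eq:retention-step}. The second leg is $\omega_t^S$ by definition~\eqref{eq:shield-structural-charge}. This is precisely~\eqref{eq:joint-deployed-retention}. For~\eqref{eq:external-semantic-step}, write $\widetilde\Psi_t=\widetilde\Phi_t+\widetilde\Delta_t$. The $\widetilde\Phi_t$ part is already bounded by $b_t+\omega_t^S$. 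Split the change in $\widetilde\Delta_t$ into two pieces:
\begin{itemize}
\item the shield replacement at fixed $\bar\theta_{t+1}$, which equals $\omega_t^\Delta$ by~\eqref{eq:routing-structural-charge};
\item the parameter move at fixed $S_t$, handled by a second-order Taylor expansion along $\theta_t+s\,d_t^{\rm safe}$ with the derivative bound $r_t$ and the second-derivative supremum $k_t$ from~\eqref{eq:routing-terms}, giving $r_ts_t^{\rm safe}+\tfrac{k_t}{2}(s_t^{\rm safe})^2$.
\end{itemize}
Summing the pieces yields the claim. The exact-zero statement on finite protected evidence is item~(iii) of Theorem~\ref{thm:priority-oracle-app}.

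The main obstacle is bookkeeping rather than analysis: making sure each certificate is invoked on its valid domain, in particular that the safe step lies within $R_t^{\rm cap}$ and in $\Range(\Pi_t)$. I also need to check that the Taylor bound for $\widetilde\Delta_t$ uses only the realized segment over which $k_t$ is defined, so no unstated smoothness beyond~\eqref{eq:routing-terms} is required.
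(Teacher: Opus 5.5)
Your proposal is correct and follows the paper's proof step for step: retention comes from the streamed-memory certificate plus the curvature upper model and safe-radius clipping, descent and exact deployed progress come from the accepted safe-base check and the shield construction, the ratio comes from Theorem~\ref{thm:normalized-assimilation-app}, and the two behavior-map bounds come from the triangle inequality through $(\bar\theta_{t+1},S_t)$, using the fixed-$S_t$ Taylor bound on $\widetilde\Delta_t$ and the shield-replacement charge $\omega_t^{\Delta}$. The only extra bookkeeping worth adding is the trivial case $s_t^{\rm safe}=0$: there the unit vector $d_t^{\rm safe}/s_t^{\rm safe}$ is undefined, but every bound holds with zero.
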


\begin{proof}
The safe-base displacement lies in $\Range(\Pi_t)$ and is clipped by the root of the certified quadratic retention upper model, proving~\eqref{eq:retention-step}. The loss upper model and the accepted safe-base endpoint check prove~\eqref{eq:descent-step}; Theorem~\ref{thm:normalized-assimilation} gives~\eqref{eq:onestep-persistent-ratio} in the normalized mode. The shield construction reproduces the genuine no-protection logits on the complete current minibatch, proving~\eqref{eq:deployed-progress-step}. Equation~\eqref{eq:semantic-step} is the joint-state bound~\eqref{eq:joint-deployed-retention}; applying the same fixed-$S_t$ Taylor bound to $\widetilde\Delta_t$ and then charging its exact shield-replacement change by~\eqref{eq:routing-structural-charge} gives~\eqref{eq:external-semantic-step}. Exact finite deployed retention is item (iii) of Theorem~\ref{thm:priority-oracle}.
\end{proof}

\begin{corollary}[Active-interval joint retention and exact finite deployed retention]\label{cor:interval}
If record $j$ remains active on rounds $\tau,\ldots,T-1$, then every broader deployed behavior map obeys
\begin{equation}\label{eq:semantic-interval}
\norm{\widetilde\Phi_j(\theta_T,S_T)-
\widetilde\Phi_j(\theta_\tau,S_\tau)}
\le \sum_{t=\tau}^{T-1}
\left(b_t+\omega_{j,t}^S+\omega_{j,t}^{\Delta}+r_{j,t}s_t^{\rm safe}
+\frac{k_{j,t}}2(s_t^{\rm safe})^2\right).
\end{equation}
For the declared finite protected-evidence map, every accepted joint step restores the complete pre-step deployed value, so the stronger identity
\begin{equation}\label{eq:finite-deployed-interval}
\widetilde\Phi_j(\theta_T,S_T)=
\widetilde\Phi_j(\theta_\tau,S_\tau)
\end{equation}
holds up to the single declared outward numerical envelope. If the record was activated at $c_j$ from validated snapshot $(\bar\theta_j,S_j^{\rm snap})$, then
\begin{equation}\label{eq:snapshot-interval}
\norm{\widetilde\Phi_j(\theta_T,S_T)-
\widetilde\Phi_j(\bar\theta_j,S_j^{\rm snap})}
\le a_j^{\rm act},
\end{equation}
with no cumulative active-interval finite-evidence drift term. Thus the budget controls the persistent safe-base leg, the explicit $\omega^S$ ledger charges the structural replacement for broader maps, and the finite protected map is restored exactly.
\end{corollary}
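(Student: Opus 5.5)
The plan is to telescope the one-step bounds of Theorem~\ref{thm:onestep} across the active interval for the broad maps, and to chain the exact finite restorations of Theorem~\ref{thm:priority-oracle} for the finite evidence map.

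\textbf{Broad deployed maps, \eqref{eq:semantic-interval}.} For each accepted round $t\in\{\tau,\ldots,T-1\}$, apply the external-evaluator bound \eqref{eq:external-semantic-step} to record $j$'s component, with its record-specific charges $\omega^S_{j,t},\omega^\Delta_{j,t},r_{j,t},k_{j,t}$. Since the stacked norm dominates each block, a componentwise version holds with the same right-hand side; in particular $b_t$ bounds the $j$th block because it bounds the whole stacked active map. On rejected rounds the full-state rollback of Theorem~\ref{thm:priority-oracle} gives $(\theta_{t+1},S_{t+1})=(\theta_t,S_t)$, so those rounds contribute zero and the right-hand side can only be larger. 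Writing $\widetilde\Phi_j(\theta_T,S_T)-\widetilde\Phi_j(\theta_\tau,S_\tau)$ as a telescoping sum of one-step differences and applying the triangle inequality gives \eqref{eq:semantic-interval}. Here I use the evaluator form so that the routing terms appear explicitly; for the router-selected map the routing terms are zero.

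\textbf{Finite protected evidence map, \eqref{eq:finite-deployed-interval}.} Because record $j$ is active on every round in the interval, its commit evidence block is contained in $\mathcal P_t$ at each such round. Item (iii) of Theorem~\ref{thm:priority-oracle} then gives $F_{t+1}(x)=F_t(x)$ on the block after every accepted step. Rejected steps leave $F$ unchanged by rollback. The map $\widehat\Phi_j$ depends on the state only through these finitely many deployed logits, so induction over $t$ yields equality, up to the single declared numerical envelope.

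\textbf{Snapshot bound, \eqref{eq:snapshot-interval}.} The activation gap $a_j^{\rm act}$ in \eqref{eq:activation-gap} is exactly the empirical-map distance $\norm{\widetilde\Phi_j(\theta_{c_j},S_{c_j})-\widetilde\Phi_j(\bar\theta_j,S_j^{\rm snap})}$ on the commit block $\mathcal D_j=\mathcal S_j$. Applying \eqref{eq:finite-deployed-interval} with $\tau=c_j$ replaces $(\theta_{c_j},S_{c_j})$ by $(\theta_T,S_T)$ at zero cost. This gives the bound with no accumulated drift term.

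\textbf{Main obstacle.} The delicate step is the bookkeeping that makes the per-record charges well defined and consistent with the stacked certificate. I must check that the budget $b_t$ on the stacked map really controls each record's block, and that the record-specific routing charges are the components of \eqref{eq:routing-terms}. I must also check that activation at $c_j$ places $\mathcal S_j$ inside $\mathcal P_t$ from that round onward, so that the finite restoration applies on every round of the interval.
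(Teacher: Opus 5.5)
Your proposal is correct and follows essentially the same route as the paper. You telescope the one-step evaluator bound \eqref{eq:external-semantic-step} over the active interval, induct on the exact pre-step restoration of item (iii) of Theorem~\ref{thm:priority-oracle} for the finite evidence map, and bridge once from the validated snapshot through the activation gap $a_j^{\rm act}$. Your extra bookkeeping (rolled-back rounds contribute zero, and the stacked budget $b_t$ dominates record $j$'s block) makes explicit steps that the paper's proof leaves implicit.
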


\begin{proof}
Telescope the one-step joint bound~\eqref{eq:external-semantic-step} to obtain~\eqref{eq:semantic-interval}. Equation~\eqref{eq:finite-deployed-interval} follows by induction from exact pre-step restoration on the complete finite evidence block. Bridge once from the frozen validated snapshot to the activation state to obtain~\eqref{eq:snapshot-interval}.
\end{proof}

\begin{proposition}[Explicit activation and release decomposition]\label{prop:release}
Let record $j$ be activated at $c_j$ from snapshot $\bar\theta_j$ and released at $r_j\le T$, with the convention $r_j=T$ if it remains active. Then
\begin{equation}\label{eq:release-decomp}
\norm{\widetilde\Phi_j(\theta_T,S_T)-\widetilde\Phi_j(\bar\theta_j,S_j^{\rm snap})}
\le a_j^{\rm act}
+\norm{\widetilde\Phi_j(\theta_T,S_T)-\widetilde\Phi_j(\theta_{r_j},S_{r_j})}.
\end{equation}
The first term is the observed transfer from the validated snapshot to the deployed activation state, the active finite-evidence interval is exact, and the final term is exact post-release drift intentionally uncontrolled by a bounded registry. Thus delayed activation and capacity release are reported separately from protected forgetting.
\end{proposition}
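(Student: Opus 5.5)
The inequality \eqref{eq:release-decomp} should follow from a single triangle inequality through the deployed state at activation. I insert the intermediate point $\widetilde\Phi_j(\theta_{r_j},S_{r_j})$ and write
\[
\norm{\widetilde\Phi_j(\theta_T,S_T)-\widetilde\Phi_j(\bar\theta_j,S_j^{\rm snap})}
\le \norm{\widetilde\Phi_j(\theta_{r_j},S_{r_j})-\widetilde\Phi_j(\bar\theta_j,S_j^{\rm snap})}
+\norm{\widetilde\Phi_j(\theta_T,S_T)-\widetilde\Phi_j(\theta_{r_j},S_{r_j})}.
\]
The second summand is already the post-release term in the statement. It is zero under the convention $r_j=T$ for a record that is never released.

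For the first summand, I apply the snapshot-interval bound \eqref{eq:snapshot-interval} of Corollary~\ref{cor:interval} with terminal time $r_j$ in place of $T$. This is legitimate because record $j$ is active on the rounds $c_j,\ldots,r_j-1$, which is exactly the hypothesis of that corollary. The bound gives at most $a_j^{\rm act}$.

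To make this step transparent, I would split the first summand once more at the activation state $(\theta_{c_j},S_{c_j})$. The bridge from the snapshot to the activation state is exactly the observed gap \eqref{eq:activation-gap}, where $\widetilde\Phi_j$ is the empirical commit-block map on $\mathcal D_j=\mathcal S_j$. The segment from $c_j$ to $r_j$ contributes zero by the exact finite deployed identity \eqref{eq:finite-deployed-interval}. That identity follows by induction from item~(iii) of Theorem~\ref{thm:priority-oracle}, since each accepted step restores the pre-step deployed logits on $\mathcal P_t\supseteq\mathcal D_j$. Rejected transactions roll back the state entirely, so they change nothing either.

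The main point to check is scope. The exactness on the active interval holds for the declared finite protected-evidence map, up to the single numerical envelope. So $\widetilde\Phi_j$ here should be read as that empirical commit-block constructor, which is also the map for which $a_j^{\rm act}$ is defined. For a broader behavior map, the active-interval term would instead carry the ledger from \eqref{eq:semantic-interval}. I would state this explicitly.

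The interpretive claims then follow directly from the three pieces. The first term is the measured transfer, the active interval contributes nothing, and the release term is left uncontrolled by design.
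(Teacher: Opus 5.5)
Your proof is correct and follows the paper's own argument: insert $\widetilde\Phi_j(\theta_{r_j},S_{r_j})$, apply the triangle inequality, and bound the snapshot-to-release piece by $a_j^{\rm act}$ using the exact finite deployed retention of Corollary~\ref{cor:interval} on the active interval. Restricting $\widetilde\Phi_j$ to the empirical commit-block map is the right reading, since the active-interval exactness the paper invokes holds only for that finite-evidence map.
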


\begin{proof}
Insert and subtract $\widetilde\Phi_j(\theta_{r_j},S_{r_j})$, apply the triangle inequality, and use exact finite deployed retention from Corollary~\ref{cor:interval} on the active interval.
\end{proof}

\begin{theorem}[Arbitrary-stream safe-base projected-stationarity identity]\label{thm:stationarity}
Let
\[
\nu_t^{\rm joint}:=
\max\{\ell_{t+1}^{S_{t+1}}(\theta_{t+1})-
\ell_t^{S_t}(\theta_{t+1}),0\}
\]
be the realized variation of the pre-step-shield loss sequence. For every horizon $T$,
\begin{equation}\label{eq:stationarity}
\sum_{t=1}^{T-1}s_t^{\rm safe}
\norm{\Pi_t\nabla\ell_t^{S_t}(\theta_t)}
\le 2\bigl(\ell_1^{S_1}(\theta_1)-\ell_{\inf}+V_T^{\rm joint}\bigr),
\qquad V_T^{\rm joint}=\sum_{t=1}^{T-1}\nu_t^{\rm joint}.
\end{equation}
This holds pathwise for smooth non-convex losses. Independently, every accepted deployed update has exact current-batch ratio one by~\eqref{eq:genuine-progress-ratio}.
\end{theorem}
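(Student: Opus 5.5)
The plan is a telescoping argument built on the one-step descent inequality~\eqref{eq:descent-step} of Theorem~\ref{thm:onestep}, or~\eqref{eq:joint-base-descent} of Theorem~\ref{thm:priority-oracle-app}. I will prove the stationarity bound first and then treat the final sentence separately.

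\textbf{Step 1: per-round inequality.} On every accepted protected round,
\[
\tfrac12 s_t^{\rm safe}\norm{\Pi_t\nabla\ell_t^{S_t}(\theta_t)}\le \ell_t^{S_t}(\theta_t)-\ell_t^{S_t}(\bar\theta_{t+1}).
\]
On a rejected or abstaining round the state is rolled back atomically and $s_t^{\rm safe}=0$. The inequality then holds trivially with both sides zero.

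\textbf{Step 2: identify the next persistent point.} I will take $\theta_{t+1}=\bar\theta_{t+1}$, since the shield replacement changes only $S$ and not the base parameters. Renewal resets act only on zero-gated coordinates and are function preserving. Hence the loss change they induce is absorbed into the next round's value $\ell_{t+1}^{S_{t+1}}(\theta_{t+1})$, which is exactly what $\nu_t^{\rm joint}$ charges.

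\textbf{Step 3: chain the rounds.} Write
\[
\ell_t^{S_t}(\theta_{t+1})\ge \ell_{t+1}^{S_{t+1}}(\theta_{t+1})-\nu_t^{\rm joint},
\]
which holds by the definition of the positive part. Summing Step~1 over $t=1,\ldots,T-1$ and inserting this gives
\[
\sum_{t=1}^{T-1}\tfrac12 s_t^{\rm safe}\norm{\Pi_t\nabla\ell_t^{S_t}(\theta_t)}\le \ell_1^{S_1}(\theta_1)-\ell_T^{S_T}(\theta_T)+V_T^{\rm joint}.
\]
The lower bound $\ell_T^{S_T}(\theta_T)\ge\ell_{\inf}$ on realized iterates, which the paper assumes, then yields~\eqref{eq:stationarity} after multiplying by $2$.

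\textbf{Step 4: the remaining claims.} The identity is pathwise and uses no convexity, since it relies only on the accepted endpoint checks. The final sentence, exact current-batch ratio one, is a direct citation of~\eqref{eq:genuine-progress-ratio} from Theorem~\ref{thm:priority-oracle-app}.

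\textbf{Main obstacle.} The delicate point is Step~2: making sure the telescoping is consistent with the shield swap. The variation must compare the new-shield loss at round $t+1$ with the old-shield loss at round $t$, both evaluated at $\theta_{t+1}$. Otherwise the shield's effect on the current loss, which equals $\Delta_t^0$ rather than the base decrease, would break the chain. The definition of $\nu_t^{\rm joint}$ is designed to absorb exactly this mismatch, so I will verify the indexing carefully there.
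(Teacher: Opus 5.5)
Your proposal is correct and follows the paper's proof: apply the one-step descent bound \eqref{eq:descent-step}, bridge consecutive rounds with $\nu_t^{\rm joint}$, telescope, and invoke $\ell_{\inf}$, while keeping the shield's $\Delta_t^0$ ledger out of the base chain. Your convention $s_t^{\rm safe}=0$ on rolled-back rounds is a useful explicit addition that the paper leaves implicit. One small correction on renewal: a function-preserving reset leaves the loss exactly unchanged, so it is not charged through $\nu_t^{\rm joint}$, which compares both losses at the same $\theta_{t+1}$.
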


\begin{proof}
Apply~\eqref{eq:descent-step}, add the realized joint-loss variation, telescope, and use the lower bound on the final loss. The shield-emulated deployed progress is a separate exact ledger and is not substituted into the base stationarity proof.
\end{proof}

\begin{corollary}[Realized lifelong compatible base plasticity plus exact current emulation]\label{cor:plasticity}
On any infinite tail for which
\[
\sum_t\nu_t^{\rm joint}<\infty,
\qquad
\sum_t s_t^{\rm safe}=\infty,
\]
one has
\[
\liminf_{t\to\infty}
\norm{\Pi_t\nabla\ell_t^{S_t}(\theta_t)}=0.
\]
If additionally $\sum_tb_t<\infty$, the cumulative certified fixed-shield behavior charge of the safe-base legs is finite, while every declared finite deployed evidence block has exact active-interval retention. A broader joint behavior has finite total drift whenever its structural and routing charges are also summable. Thus indefinitely available compatible persistent movement, exact finite protection, and ratio-one current endpoint emulation coexist on the realized accepted tail.
\end{corollary}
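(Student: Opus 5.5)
The plan is to treat the statement as a consequence of the pathwise stationarity identity of Theorem~\ref{thm:stationarity}, followed by a standard divergent-series argument; the remaining claims then come from the retention results already proved. Fix the infinite tail, starting at some round $t_0$, on which $\sum_t\nu_t^{\rm joint}<\infty$ and $\sum_t s_t^{\rm safe}=\infty$. Restarting the telescoping at $t_0$ and using that losses are bounded below by $\ell_{\inf}$ on realized iterates, Theorem~\ref{thm:stationarity} gives, for every $T$,
\[
\sum_{t=t_0}^{T-1}s_t^{\rm safe}\norm{\Pi_t\nabla\ell_t^{S_t}(\theta_t)}
\le 2\Bigl(\ell_{t_0}^{S_{t_0}}(\theta_{t_0})-\ell_{\inf}+\sum_{t\ge t_0}\nu_t^{\rm joint}\Bigr)<\infty.
\]
Hence the weighted series $\sum_t s_t^{\rm safe}\norm{\Pi_t\nabla\ell_t^{S_t}(\theta_t)}$ converges.

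I would finish the first claim by contradiction. Suppose $\liminf_t\norm{\Pi_t\nabla\ell_t^{S_t}(\theta_t)}=2c>0$. Then for all large $t$ the projected gradient norm is at least $c$. The convergent weighted sum would therefore dominate $c\sum_t s_t^{\rm safe}$, which diverges by hypothesis. This contradiction yields $\liminf=0$.

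For the second part, summability of $b_t$ gives finite cumulative safe-base charge directly from the one-step bound~\eqref{eq:retention-step}. Summing that bound over the tail controls the fixed-shield drift of each safe-base leg by $\sum_t b_t<\infty$. Exact active-interval retention on declared finite evidence is identity~\eqref{eq:finite-deployed-interval} of Corollary~\ref{cor:interval}. For a broader joint behavior, the telescoped bound~\eqref{eq:semantic-interval} is finite whenever $\sum_t b_t$, $\sum_t\omega^S_{j,t}$ and $\sum_t\omega^\Delta_{j,t}$ are finite and the routing terms $r_{j,t}s_t^{\rm safe}+\tfrac{k_{j,t}}{2}(s_t^{\rm safe})^2$ are summable; I read "routing charges" as including these terms. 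Ratio-one emulation on each accepted step is~\eqref{eq:genuine-progress-ratio}.

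The main obstacle is bookkeeping rather than analysis. First, the tail must consist of accepted rounds, or rejected rounds must be treated as having $s_t^{\rm safe}=0$, which the stationarity sum tolerates. Second, the loss lower bound $\ell_{\inf}$ must hold at the restart point. Third, the interpretation of "routing charges" needs the caveat above, since divergent $\sum_t s_t^{\rm safe}$ alone does not make $\sum_t r_{j,t}s_t^{\rm safe}$ finite.
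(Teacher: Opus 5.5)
Your proposal is correct and follows the paper's own proof. The first claim is the same contradiction: the pathwise stationarity bound of Theorem~\ref{thm:stationarity} stays finite, while $\sum_t s_t^{\rm safe}=\infty$ would force it to diverge. The remaining claims are read off from Theorem~\ref{thm:onestep}, Corollary~\ref{cor:interval}, and~\eqref{eq:genuine-progress-ratio}, exactly as you do. Two points in your version tighten what the paper leaves implicit: the strict-margin handling of the $\liminf$ (using $2c$ versus $c$, where the paper says only ``bounded below by $\gamma$''), and the observation that the terms $r_{j,t}s_t^{\rm safe}$ and $\tfrac{k_{j,t}}{2}(s_t^{\rm safe})^2$ must be counted among the summable ``routing charges''.
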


\begin{proof}
If the liminf were bounded below by $\gamma>0$, the left side of~\eqref{eq:stationarity} would dominate $\gamma\sum_ts_t^{\rm safe}=\infty$. The safe-base behavior-charge and exact finite deployed-retention statements follow from Theorem~\ref{thm:onestep} and Corollary~\ref{cor:interval}.
\end{proof}

\begin{proposition}[A checkable sufficient safe-base movement condition]\label{prop:movement-mass}
Suppose an infinite non-barrier tail satisfies $\sum_t\nu_t^{\rm joint}<\infty$, $\sum_tb_t<\infty$, and $\sum_t\sqrt{b_t}=\infty$. If there is a constant $c_s>0$ such that every accepted nonzero safe-base update on that tail has
\[
s_t^{\rm safe}\ge c_s\sqrt{b_t},
\]
and, writing $\mathcal I=\{t:\text{the safe-base update is accepted and nonzero}\}$, the accepted rounds retain divergent square-root budget mass,
\[
\sum_{t\in\mathcal I}\sqrt{b_t}=\infty,
\]
then Corollary~\ref{cor:plasticity} applies. Equivalently, it suffices that the omitted rounds have finite square-root budget mass $\sum_{t\notin\mathcal I}\sqrt{b_t}<\infty$. The lower bound is an observable certificate condition on the realized safe radius, projected gradient, and endpoint backtracking factor; it is not inferred merely from first-order compatibility or from the shield update.
\end{proposition}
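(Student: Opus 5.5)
The plan is to reduce the statement to Corollary~\ref{cor:plasticity}. That corollary needs two things on the tail: $\sum_t\nu_t^{\rm joint}<\infty$, which is assumed here, and $\sum_t s_t^{\rm safe}=\infty$. So the whole argument comes down to showing divergence of total safe-base movement from the square-root budget hypotheses. The finite-charge and exact-retention conclusions of the corollary then transfer directly, because $\sum_tb_t<\infty$ is also assumed.

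The first step splits the sum by acceptance. Since $s_t^{\rm safe}\ge0$ on every round, we have $\sum_t s_t^{\rm safe}\ge\sum_{t\in\mathcal I}s_t^{\rm safe}$. On $\mathcal I$, the certificate lower bound gives $s_t^{\rm safe}\ge c_s\sqrt{b_t}$. Hence
\[
\sum_t s_t^{\rm safe}\ge c_s\sum_{t\in\mathcal I}\sqrt{b_t}=\infty .
\]
This establishes the divergence hypothesis of Corollary~\ref{cor:plasticity}, and we invoke it to obtain $\liminf_t\norm{\Pi_t\nabla\ell_t^{S_t}(\theta_t)}=0$, together with the finite cumulative behavior charge.

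The second step verifies the claimed equivalence. All terms are nonnegative, so
\[
\sum_t\sqrt{b_t}=\sum_{t\in\mathcal I}\sqrt{b_t}+\sum_{t\notin\mathcal I}\sqrt{b_t}
\]
is a valid decomposition. Because the left side diverges by hypothesis, if $\sum_{t\notin\mathcal I}\sqrt{b_t}<\infty$ then $\sum_{t\in\mathcal I}\sqrt{b_t}=\infty$. Conversely, the complement sum being finite is only sufficient and not necessary for divergence on $\mathcal I$; I would state it as the sufficient direction, which is how the claim reads. The assumption $\sum_tb_t<\infty$ is used only to pass the behavior-charge conclusion through Corollary~\ref{cor:plasticity}. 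It does not conflict with $\sum_t\sqrt{b_t}=\infty$, since, for example, $b_t=t^{-3/2}$ satisfies both.

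The only step needing care is interpretive: the movement lower bound must be a realized, checked quantity on accepted rounds and not a consequence of compatibility. I would remark, citing the safe-radius formula~\eqref{eq:safe-radius}, that when $E_t=O(\sqrt{b_t})$ and $\overline H_t$ is bounded, $\widetilde R_t\gtrsim\sqrt{b_t}$. This makes the condition plausible whenever the trust cap and backtracking factor do not shrink the step further. I would stress, however, that the proof itself uses only the stated inequality.
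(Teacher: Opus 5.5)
Your proposal is correct and follows the paper's proof: both bound $\sum_t s_t^{\rm safe}\ge\sum_{t\in\mathcal I}s_t^{\rm safe}\ge c_s\sum_{t\in\mathcal I}\sqrt{b_t}=\infty$ and then apply Corollary~\ref{cor:plasticity}. Your extra remarks are accurate but not needed for the argument; these are that finiteness of the omitted-round mass is sufficient but not necessary, and the lower bound on the safe radius $\widetilde R_t$ from \eqref{eq:safe-radius}.
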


\begin{proof}
The assumptions give
$\sum_ts_t^{\rm safe}\ge\sum_{t\in\mathcal I}s_t^{\rm safe}\ge c_s\sum_{t\in\mathcal I}\sqrt{b_t}=\infty$.
Apply Corollary~\ref{cor:plasticity}.
\end{proof}

\subsection{Multi-timescale metaplasticity and allocation}
\label{app:metaplastic-allocation}

The following results justify the logarithmic trace bank and the adaptive rank/timescale controller. The controller is evaluated on a frontier cost that contains both residual protected leakage and gradient energy blocked by protection.

For $\rho_k=2^{-(k+1)}$, the impulse response of trace $k$ at age $\tau\ge1$ is
\[
h_k(\tau)=\rho_k(1-\rho_k)^{\tau-1}.
\]
For $0<\alpha\le1$, define the unnormalized scale-free policy kernel
\begin{equation}\label{eq:scale-kernel}
M_{\alpha,K}(\tau)=\sum_{k=0}^{K-1}\rho_k^{\alpha-1}h_k(\tau)
=\sum_{k=0}^{K-1}\rho_k^\alpha(1-\rho_k)^{\tau-1}.
\end{equation}
A policy rescales this known finite kernel into its declared bounded weight range.

\begin{theorem}[Logarithmic EMA bank approximates scale-free relevance]\label{thm:multiscale}
For every $0<\alpha\le1$ and $K\ge2$, there are constants $0<c_\alpha\le C_\alpha<\infty$, independent of $K$ and $\tau$, such that
\begin{equation}\label{eq:powerlaw}
c_\alpha\tau^{-\alpha}
\le M_{\alpha,K}(\tau)
\le C_\alpha\tau^{-\alpha},
\qquad 1\le\tau\le2^{K-2}.
\end{equation}
Consequently, $K=O(\log H)$ actual EMA traces approximate a power-law relevance profile over ages $1,\ldots,H$ with constant multiplicative distortion.
\end{theorem}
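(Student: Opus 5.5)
We must prove Theorem~\ref{thm:multiscale}: for $M_{\alpha,K}(\tau)=\sum_{k=0}^{K-1}\rho_k^\alpha(1-\rho_k)^{\tau-1}$ with $\rho_k=2^{-(k+1)}$, we have $c_\alpha\tau^{-\alpha}\le M_{\alpha,K}(\tau)\le C_\alpha\tau^{-\alpha}$ on $1\le\tau\le2^{K-2}$. The plan is to split the sum at the index $k^\ast=\lfloor\log_2\tau\rfloor$, where $\rho_k\tau$ crosses order one. The two sides of the split will be treated separately: fast traces, whose decay factor $(1-\rho_k)^{\tau-1}$ is exponentially small, and slow traces, whose weight $\rho_k^\alpha$ is geometrically small. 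Throughout we use the elementary bounds $e^{-2\rho(\tau-1)}\le(1-\rho)^{\tau-1}\le e^{-\rho(\tau-1)}$, valid for $\rho\le1/2$.

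\textbf{Lower bound.} Keep only the single term $k=k^\ast$. The hypothesis $\tau\le2^{K-2}$ guarantees $k^\ast\le K-2$, so this index lies in the bank. Since $2^{k^\ast}\le\tau<2^{k^\ast+1}$, we have $\rho_{k^\ast}=2^{-(k^\ast+1)}\in(1/(2\tau)\cdot\tfrac12,\,1/(2\tau)]$, more precisely $\rho_{k^\ast}\in(\tfrac{1}{4\tau},\tfrac{1}{2\tau}]$. This gives
\[
\rho_{k^\ast}^\alpha\ge 4^{-\alpha}\tau^{-\alpha},
\qquad
(1-\rho_{k^\ast})^{\tau-1}\ge e^{-2\rho_{k^\ast}\tau}\ge e^{-1}.
\]
Hence $c_\alpha=4^{-\alpha}e^{-1}$ works. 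The factor $1-\rho$ never vanishes because $\rho_k\le1/2$.

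\textbf{Upper bound.} Drop the truncation at $K$ and bound the infinite sum over $k\ge0$, which makes the constant independent of $K$.

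For slow traces, $k\ge k^\ast$, bound the decay factor by $1$. Then $\sum_{k\ge k^\ast}2^{-\alpha(k+1)}=2^{-\alpha(k^\ast+1)}/(1-2^{-\alpha})$. Since $2^{k^\ast+1}>\tau$, this is at most $\tau^{-\alpha}/(1-2^{-\alpha})$.

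For fast traces, $k<k^\ast$, write $j=k^\ast-k\ge1$. Then $\rho_k\tau\ge 2^{j-1}$, so $(1-\rho_k)^{\tau-1}\le e\cdot e^{-2^{j-1}}$, using $\rho_k\le1/2$ to absorb the $-1$ in the exponent. The weight satisfies $\rho_k^\alpha=2^{\alpha j}\rho_{k^\ast}^\alpha\le2^{\alpha j}(2\tau)^{-\alpha}$, hence $\le 2^{\alpha j}\tau^{-\alpha}$. The fast-trace contribution is therefore at most $\tau^{-\alpha}\,e\sum_{j\ge1}2^{\alpha j}e^{-2^{j-1}}$. This series converges because the double-exponential decay beats the geometric growth.

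Adding both parts gives $C_\alpha=(1-2^{-\alpha})^{-1}+e\sum_{j\ge1}2^{j}e^{-2^{j-1}}$.

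\textbf{Consequence.} For a horizon $H$, choose $K=\lceil\log_2H\rceil+2$, so that $[1,H]\subseteq[1,2^{K-2}]$. Constant multiplicative distortion then holds with $O(\log H)$ traces.

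The only delicate step is the fast-trace tail: we must check that it is dominated by $\tau^{-\alpha}$ uniformly in $\tau$, including small $\tau$ where $k^\ast=0$ and the fast sum is empty. The uniform constants above handle this. Note also that $\alpha=1$ is allowed, since $1-2^{-\alpha}>0$ still holds there.
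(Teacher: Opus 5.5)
Your argument follows the paper's proof. The lower bound keeps one dyadic term with $\rho_k\tau=\Theta(1)$. The upper bound splits the sum exactly at $\rho_k\tau<1$ versus $\rho_k\tau\ge1$: a geometric slow part plus a double-exponentially decaying fast part. Your absorption $e^{\rho_k}\le e^{1/2}$ is a small improvement, since it removes the paper's separate $\tau=1$ case and its use of $\tau-1\ge\tau/2$.

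There is one arithmetic slip to fix. With $k^\ast=\lfloor\log_2\tau\rfloor$ you have $2^{k^\ast}\le\tau<2^{k^\ast+1}$, so $\rho_{k^\ast}\tau\in[1/2,1)$, not $(1/4,1/2]$; for example, $\tau=3$ gives $\rho_1\tau=3/4$. As written, two steps therefore fail:
\begin{itemize}
\item the lower-bound step $e^{-2\rho_{k^\ast}\tau}\ge e^{-1}$ (for $\tau=3$ it gives $e^{-3/2}$);
\item the intermediate $\rho_{k^\ast}^\alpha\le(2\tau)^{-\alpha}$ in your fast-trace estimate.
\end{itemize}

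The interval you wrote is what the paper gets from the ceiling index $k_\ast=\lceil\log_2\tau\rceil$, which still lies in $\{0,\dots,K-2\}$ when $\tau\le2^{K-2}$. Using that index for the lower bound recovers your constant $c_\alpha=4^{-\alpha}e^{-1}$. Keeping the floor index instead gives the valid constant $c_\alpha=2^{-\alpha}e^{-2}$.

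Your upper bound is unaffected. The inequalities it actually relies on, $\rho_k\tau\ge2^{j-1}$ and $\rho_k^\alpha<2^{\alpha j}\tau^{-\alpha}$, follow directly from the correct bracket $2^{k^\ast}\le\tau<2^{k^\ast+1}$. With that correction the proof is complete.
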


\begin{proof}
Choose $k_*$ so that $\rho_{k_*}\tau\in[1/4,1/2]$, which exists in the stated range. Since $\rho_{k_*}\le1/2$,
\[
(1-\rho_{k_*})^{\tau-1}\ge \exp(-2\rho_{k_*}(\tau-1))\ge e^{-1}.
\]
The $k_*$ term is therefore at least $e^{-1}4^{-\alpha}\tau^{-\alpha}$.

For $\tau=1$, the upper bound is the finite geometric sum $\sum_k\rho_k^\alpha$. For $\tau\ge2$, split the sum into $\rho_k\tau<1$ and $\rho_k\tau\ge1$. The first part is a geometric sum bounded by a constant times $\tau^{-\alpha}$. For the second, $\tau-1\ge\tau/2$ and $(1-\rho_k)^{\tau-1}\le e^{-\rho_k(\tau-1)}$; grouping dyadic values of $\rho_k\tau$ shows that the sum is at most
$\tau^{-\alpha}\sum_{m\ge0}2^{\alpha(m+1)}e^{-2^{m-1}}$, which is finite.
\end{proof}

\begin{theorem}[Every single exponential has polynomial worst-case distortion]\label{thm:single}
Let $H\ge3$ be odd and $f(\tau)=\tau^{-\alpha}$. If $g(\tau)=a q^{\tau-1}$, $a>0$, $0<q\le1$, satisfies
\[
C^{-1}f(\tau)\le g(\tau)\le Cf(\tau),\qquad 1\le\tau\le H,
\]
then
\begin{equation}\label{eq:single-lower}
C\ge 2^{-\alpha/2}H^{\alpha/4}.
\end{equation}
\end{theorem}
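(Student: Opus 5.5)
The plan is a three-point midpoint argument. Under the hypothesis, $\log g$ is affine in $\tau$, while $\log f(\tau)=-\alpha\log\tau$ is strictly convex. A two-sided multiplicative approximation therefore forces $\log C$ to absorb the convexity defect of $\log f$ at a midpoint. Oddness of $H$ is used only to make $m=(H+1)/2$ an integer node in $[1,H]$, so that the hypothesis can be evaluated at $\tau=1$, $\tau=m$ and $\tau=H$.

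\textbf{Step 1 (affine identity for $g$).} Since $\log g(\tau)=\log a+(\tau-1)\log q$, we have
\[
\log g(m)=\tfrac12\bigl(\log g(1)+\log g(H)\bigr).
\]
This uses only the affinity of $\tau\mapsto\log g(\tau)$. It is valid for every $a>0$ and $0<q\le1$, including $q=1$.

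\textbf{Step 2 (pointwise log errors).} Write $e(\tau)=\log g(\tau)-\log f(\tau)$. The hypothesis gives $|e(\tau)|\le\log C$ at the three nodes. Define the convexity defect
\[
D=\tfrac12\bigl(\log f(1)+\log f(H)\bigr)-\log f(m).
\]
Subtracting the corresponding identity for $f$ from Step 1 gives
\[
D=e(m)-\tfrac12\bigl(e(1)+e(H)\bigr)\le 2\log C.
\]

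\textbf{Step 3 (evaluate the defect).} Directly,
\[
D=-\tfrac{\alpha}{2}\log H+\alpha\log\tfrac{H+1}{2}=\alpha\log\frac{H+1}{2\sqrt H}.
\]
Since $H+1>H$, we have $\frac{H+1}{2\sqrt H}\ge\frac{\sqrt H}{2}$. Hence
\[
2\log C\ge\alpha\log\bigl(\sqrt H/2\bigr),
\]
that is, $C\ge(\sqrt H/2)^{\alpha/2}=2^{-\alpha/2}H^{\alpha/4}$, which is \eqref{eq:single-lower}.

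\textbf{Expected obstacle.} The only delicate point is keeping the signs and the factor $2$ in Step 2 correct. I will bound $e(m)\le\log C$ and $-\tfrac12\bigl(e(1)+e(H)\bigr)\le\log C$ separately, which gives $D\le 2\log C$.
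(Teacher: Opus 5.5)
Your proof is correct and is the paper's argument in logarithmic form: the paper uses the geometric-mean identity $g(m)^2=g(1)g(H)$ with $m=(H+1)/2$ to obtain $C^4\ge f(1)f(H)/f(m)^2=m^{2\alpha}H^{-\alpha}\ge H^{\alpha}/4^{\alpha}$, which is exactly your inequality $2\log C\ge D$ combined with $m\ge H/2$. As in the paper, your final step implicitly uses $\alpha>0$, which holds in the surrounding setting $0<\alpha\le 1$.
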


\begin{proof}
Let $m=(H+1)/2$. Exponential sequences satisfy $g(m)^2=g(1)g(H)$. The approximation inequalities imply
\[
C^2f(m)^2\ge g(m)^2\ge C^{-2}f(1)f(H),
\]
so $C^4\ge f(1)f(H)/f(m)^2=m^{2\alpha}H^{-\alpha}\ge H^\alpha/4^\alpha$.
\end{proof}

For each allocation policy $p\in\mathcal P$, let $Q_{p,t}$ be the projector it would select before the current importance is revealed. After the current signal, define its realized frontier cost and bounded expert loss
\begin{align}
F_t(p)&=\tr((I-Q_{p,t})S_t^*)+\zeta_t c_{p,t}^{\rm block},\label{eq:frontier-cost}\\
\mathcal L_t(p)&=\Lambda^{-1}F_t(p)\in[0,1].\label{eq:policy-loss}
\end{align}
The design uses declared bounds $\norm{B_j}_F^2\le M_j$ and
\[
\Lambda=2\sum_{j=1}^{J_{\max}}M_j+\zeta_{\max},
\]
so the loss is bounded for every registry state. The first term penalises retention leakage; the second penalises plastic gradient energy blocked by the protected subspace. Thus larger rank is not free and cannot dominate merely by freezing more coordinates.

\begin{theorem}[Prediction error to functional-leakage transfer]\label{thm:allocation-transfer}
Fix a rank $r$. Let $Q_{p,t}$ be a top-$r$ projector of $\widehat S_{p,t}$ and let $Q_t^*$ be a top-$r$ projector of $S_t^*$. Define
\[
a_{p,t}=\tr((I-Q_{p,t})S_t^*),\qquad
a_t^*=\tr((I-Q_t^*)S_t^*)=\sum_{i>r}\lambda_i(S_t^*).
\]
Then
\begin{equation}\label{eq:allocation-transfer}
0\le a_{p,t}-a_t^*
\le 2r\norm{\widehat S_{p,t}-S_t^*}_{\rm op}
\le 2r\sum_{j\in\Aa_t}
|\widehat u_{j,t}^{(p)}-w_{j,t}|\,\norm{B_j}_{\rm op}^2.
\end{equation}
Thus a metaplastic policy that predicts the realized retention weights accurately attains correspondingly near-oracle functional leakage. If its prediction is exact, its rank-$r$ allocation is oracle optimal for that round.
\end{theorem}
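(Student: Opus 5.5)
The plan is to rewrite the leakage gap as a difference of captured traces and compare the two projectors through Ky Fan's maximum principle, with a three-term telescoping decomposition. Since $\tr((I-Q)S_t^*)=\tr(S_t^*)-\tr(QS_t^*)$ for any projector $Q$,
\[
a_{p,t}-a_t^*=\tr(Q_t^*S_t^*)-\tr(Q_{p,t}S_t^*).
\]
Both $S_t^*$ and $\widehat S_{p,t}$ are positive semidefinite. Ky Fan's principle says that for a positive semidefinite $S$ the maximum of $\tr(QS)$ over orthogonal projectors of rank at most $r$ equals $\sum_{i\le r}\lambda_i(S)$. This maximum is attained by any top-$r$ eigenprojector, including one that uses only the nonzero eigendirections when $\operatorname{rank}S<r$. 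Because $Q_t^*$ attains it for $S_t^*$, the lower bound $a_{p,t}-a_t^*\ge0$ follows at once. The identity $a_t^*=\sum_{i>r}\lambda_i(S_t^*)$ is the same statement.

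For the middle inequality, I would insert $\widehat S_{p,t}$ and split as
\[
\bigl[\tr(Q_t^*S_t^*)-\tr(Q_t^*\widehat S_{p,t})\bigr]
+\bigl[\tr(Q_t^*\widehat S_{p,t})-\tr(Q_{p,t}\widehat S_{p,t})\bigr]
+\bigl[\tr(Q_{p,t}\widehat S_{p,t})-\tr(Q_{p,t}S_t^*)\bigr].
\]
The middle bracket is $\le0$ by Ky Fan applied to $\widehat S_{p,t}$, since $Q_{p,t}$ is its top-$r$ projector. For the outer brackets, write $E=S_t^*-\widehat S_{p,t}$. For any projector $Q$ of rank at most $r$, take an orthonormal basis $u_1,\dots,u_m$ of its range with $m\le r$. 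Then
\[
|\tr(QE)|=\Bigl|\sum_{i\le m}u_i^TEu_i\Bigr|\le r\norm{E}_{\rm op}.
\]
This gives the factor $2r$. One point needs checking: both $Q_{p,t}$ and $Q_t^*$ are taken within $\Range(A_t)$ in the paper's construction. Both matrices are built from $\widetilde B_{j,t}=B_jA_t$ and so vanish off $\Range(A_t)$. Hence restricting to that range changes neither the eigenvalue sums nor the Ky Fan comparison.

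The last inequality follows from the definitions~\eqref{eq:predicted-sketch} and~\eqref{eq:actual-sketch}:
\[
\widehat S_{p,t}-S_t^*=\sum_{j\in\Aa_t}(\widehat u_{j,t}^{(p)}-w_{j,t})\widetilde B_{j,t}^T\widetilde B_{j,t}.
\]
The triangle inequality then bounds its operator norm by $\sum_j|\widehat u_{j,t}^{(p)}-w_{j,t}|\norm{B_jA_t}_{\rm op}^2$. Finally, $\norm{B_jA_t}_{\rm op}\le\norm{B_j}_{\rm op}$ because $A_t$ is an orthogonal projector. Exact prediction makes the right side zero, so $a_{p,t}=a_t^*$ and the allocation is oracle optimal for that round.

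The only delicate step is applying Ky Fan correctly when ranks are deficient or eigenvalues are tied. The statement should be used in its "at most rank $r$" form, so that the tie convention and a rank-deficient $\widehat S_{p,t}$ do not break the middle-bracket inequality. Everything else is routine trace and norm bookkeeping.
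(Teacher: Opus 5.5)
Your proposal is correct and follows the paper's proof. The paper also rewrites the gap as $\tr(Q_t^*S_t^*)-\tr(Q_{p,t}S_t^*)$, uses Ky Fan for both projectors, bounds the gap by $|\tr(Q_t^*(S_t^*-\widehat S_{p,t}))|+|\tr(Q_{p,t}(S_t^*-\widehat S_{p,t}))|\le 2r\norm{S_t^*-\widehat S_{p,t}}_{\rm op}$, and finishes with the triangle inequality on the covariance difference. Your version spells out what the paper leaves implicit: the nonpositive middle bracket, the ``rank at most $r$'' handling of rank-deficient or tied cases, and the step $\norm{B_jA_t}_{\rm op}\le\norm{B_j}_{\rm op}$.
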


\begin{proof}
By Ky Fan's maximum principle, $Q_t^*$ maximises $\tr(QS_t^*)$ and $Q_{p,t}$ maximises $\tr(Q\widehat S_{p,t})$ over rank-$r$ orthogonal projectors. Hence
\begin{align*}
a_{p,t}-a_t^*
&=\tr(Q_t^*S_t^*)-\tr(Q_{p,t}S_t^*)\\
&\le |\tr(Q_t^*(S_t^*-\widehat S_{p,t}))|
   +|\tr(Q_{p,t}(S_t^*-\widehat S_{p,t}))|\\
&\le 2r\norm{S_t^*-\widehat S_{p,t}}_{\rm op}.
\end{align*}
The final inequality follows from the covariance definitions and the triangle inequality.
\end{proof}

\begin{theorem}[High-probability metaplastic allocation regret]\label{thm:hedge}
For $P\ge2$, run exponential weights over the declared policies with learning rate $\eta_H=\sqrt{8\log(P)/T}$ and sample $p_t$ from the current weights before observing $\mathcal L_t$. For $P=1$ the conclusion below holds trivially with both overhead terms involving $\log P$ equal to zero. For every $0<\delta<1$, with probability at least $1-\delta$,
\begin{equation}\label{eq:hedge}
\sum_{t=1}^T\mathcal L_t(p_t)
\le \min_{p\in\mathcal P}\sum_{t=1}^T\mathcal L_t(p)
+\sqrt{\frac{T\log P}{2}}
+\sqrt{\frac{T\log(1/\delta)}{2}}.
\end{equation}
Equivalently, multiplying by $\Lambda$,
\[
\sum_{t=1}^T F_t(p_t)
\le \min_{p\in\mathcal P}\sum_{t=1}^T F_t(p)
+\Lambda\sqrt{\frac{T\log P}{2}}
+\Lambda\sqrt{\frac{T\log(1/\delta)}{2}}.
\]
The selected frontier costs contain exactly the spectral residual entering the memory certificate and exactly the current gradient fraction blocked by protection. Hence the timescale/rank controller is coupled simultaneously to retention and compatible adaptation rather than analyzed separately.
\end{theorem}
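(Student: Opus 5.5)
The plan is to reduce Theorem~\ref{thm:hedge} to the standard Hedge regret bound for losses in $[0,1]$, followed by an Azuma--Hoeffding step that replaces the expected loss under the weights by the realized loss of the sampled policy. First I check the hypotheses. By \eqref{eq:policy-loss} and the choice of $\Lambda$, each $\mathcal L_t(p)$ lies in $[0,1]$ for every registry state. The bound follows from the two terms of $F_t(p)$:
\begin{itemize}
\item $\tr((I-Q_{p,t})S_t^*)\le\tr(S_t^*)\le\sum_j w_{j,t}\norm{\widetilde B_{j,t}}_F^2\le 2\sum_j M_j$, using $w_{j,t}\le2$ and $\norm{B_jA_t}_F\le\norm{B_j}_F$;
\item $\zeta_tc^{\rm block}_{p,t}\le\zeta_{\max}$.
\end{itemize}
The setup also says that the data and loss at round $t$ are fixed before the controller draws $p_t$. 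So the whole loss vector $(\mathcal L_t(p))_{p\in\mathcal P}$ is determined before the private coin, and may depend adaptively on the past. This is exactly the nonanticipating adversary setting in which Hedge's guarantee holds.

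Second, I apply the exponential-weights analysis to the mixture loss $\bar{\mathcal L}_t=\sum_p w_t(p)\mathcal L_t(p)$. I track the potential $\Phi_t=\log\sum_p e^{-\eta_H\sum_{s\le t}\mathcal L_s(p)}$. Hoeffding's lemma for a $[0,1]$-valued variable gives
\[
\Phi_t-\Phi_{t-1}\le-\eta_H\bar{\mathcal L}_t+\eta_H^2/8 .
\]
Telescoping from $\Phi_0=\log P$ and lower-bounding $\Phi_T$ by the best single policy yields
\[
\sum_t\bar{\mathcal L}_t\le\min_p\sum_t\mathcal L_t(p)+\frac{\log P}{\eta_H}+\frac{\eta_H T}{8}.
\]
With $\eta_H=\sqrt{8\log P/T}$ the last two terms equal $\sqrt{T\log P/2}$. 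When $P=1$, $\bar{\mathcal L}_t=\mathcal L_t(p_t)$ and this step is trivial.

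Third, I define $Y_t=\mathcal L_t(p_t)-\bar{\mathcal L}_t$. Conditional on the history and on round $t$'s loss vector, $p_t\sim w_t$, so $Y_t$ is a martingale difference sequence taking values in an interval of length one. Azuma--Hoeffding then gives $\sum_tY_t\le\sqrt{T\log(1/\delta)/2}$ with probability at least $1-\delta$. Adding this to the second step gives \eqref{eq:hedge}, and multiplying by $\Lambda$ gives the $F_t$ form. The closing remark, that the costs contain the leakage term and the blocked-gradient term, is just \eqref{eq:alloc-residual} and \eqref{eq:blocked} read off from \eqref{eq:frontier-cost}.

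The main obstacle is the filtration in the third step. The loss vector at round $t$ may depend on $p_1,\dots,p_{t-1}$ through the realized trajectory: the selected projector shapes later parameters and importances. I would handle this by using the filtration $\mathcal G_{t}$ generated by everything up to the revelation of round $(t{+}1)$'s loss vector. The weights $w_t$ are $\mathcal G_{t-1}$-measurable because they are built from revealed losses only. The stated nonanticipation assumption makes $p_t$ conditionally distributed as $w_t$ given $\mathcal G_{t-1}$ together with the current loss vector. With this filtration, $\E[Y_t\mid\cdot]=0$ holds and Azuma applies. The deterministic Hedge bound needs no probabilistic assumption, since it holds for any sequence of loss vectors.
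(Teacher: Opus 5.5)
Your proposal is correct and follows the paper's proof: the exponential-weights potential argument bounds the cumulative conditional-mean loss by the best policy plus $\log P/\eta_H+\eta_H T/8=\sqrt{T\log P/2}$, and Azuma--Hoeffding on the bounded martingale differences between sampled and mean loss adds the final term. Your explicit check that $\mathcal L_t(p)\in[0,1]$ follows from the choice of $\Lambda$, and your filtration argument for the adaptive but nonanticipating loss sequence, supply details that the paper's short proof leaves implicit.
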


\begin{proof}
The exponential-weights potential argument bounds the cumulative conditional mean loss by the best expert plus $\log P/\eta_H+\eta_HT/8$. The difference between sampled and conditional-mean loss is a bounded martingale difference; Azuma-Hoeffding adds the final term. Simplifying the chosen learning rate gives~\eqref{eq:hedge}.
\end{proof}

\begin{proposition}[Metaplastic atomization invariance]\label{prop:atomization}
The trace, policy, spectral-loss, allocation-transfer, and Hedge results remain valid if each record contribution $B_j^TB_j$ is decomposed into any finite PSD atomization
\[
B_j^TB_j=\sum_{a\in\mathcal I_j}M_a,
\qquad M_a\succeq0.
\]
Give atom $a$ the explicit realized signal
\[
\chi_{a,t}=\min\left\{1,
\frac{g_t^TM_ag_t}{1+\norm{g_t}^2}\right\},
\]
its own $K$ traces, and a policy weight, and form the predicted and realized covariances from the corresponding nonnegative weighted sums $\sum_a\widehat u_{a,t}A_tM_aA_t$ and $\sum_aw_{a,t}A_tM_aA_t$. The memory-certificate result also remains valid under atom-specific weights provided each atom carries a declared certified PSD sensitivity/error decomposition whose weighted sum upper-bounds the same conceptual whole-behavior derivative energy, Frequent-Directions error, and anchor-transport terms used in Theorem~\ref{thm:memory}. A purely algebraic split of $B_j^TB_j$ does not by itself create a stronger independently weighted derivative certificate; when no atom-specific certificate is supplied, the original record-level memory certificate is retained while atomization is used only by the allocation controller. In particular, certified atoms may be individual sketch eigen-directions, parameter groups, or rank-one individual synaptic sensitivities. Persistent state grows with the fixed number of retained atoms, not with elapsed lifetime.
\end{proposition}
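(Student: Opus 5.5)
The plan is to verify, one result at a time, that each ingredient of the trace, policy, spectral-loss, allocation-transfer and Hedge results uses the record contributions only through a finite sum of PSD matrices, each carrying a bounded signal, its own traces and a weight. Replacing the index set $\Aa_t$ by the atom index set $\bigcup_{j}\mathcal I_j$ should then leave every proof formally unchanged. The memory certificate is treated separately afterwards.

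\textbf{Trace and allocation results.}
\begin{itemize}
\item The traces in~\eqref{eq:trace} are driven by a scalar signal in $[0,1]$. The atom signal $\chi_{a,t}$ is clipped to $[0,1]$ by construction, so Theorems~\ref{thm:multiscale} and~\ref{thm:single} are unaffected; they concern only the kernel and never the record structure.
\item The predicted and realized covariances become $\widehat S_{p,t}=\sum_a\widehat u_{a,t}A_tM_aA_t$ and $S_t^*=\sum_a w_{a,t}A_tM_aA_t$. Both remain PSD, and their weights lie in $[1,2]$.
\item In the proof of Theorem~\ref{thm:allocation-transfer}, the Ky Fan step uses only that these are symmetric PSD operators. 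The last inequality becomes
\[
2r\sum_a|\widehat u_{a,t}-w_{a,t}|\,\norm{A_tM_aA_t}_{\rm op},
\]
with $\norm{A_tM_aA_t}_{\rm op}\le\norm{M_a}_{\rm op}$.
\item For the Hedge bound, the normalizer $\Lambda$ must still bound the frontier cost $F_t(p)$. I would use $\sum_a\tr M_a=\norm{B_j}_F^2\le M_j$, so the atoms of record $j$ have total weighted trace at most $2M_j$. This keeps $F_t(p)\le\Lambda$, and Theorem~\ref{thm:hedge} applies verbatim.
\item The blocked fraction $c^{\rm block}_{p,t}$ does not involve records at all.
\end{itemize}

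\textbf{Memory certificate.} Theorem~\ref{thm:memory} used the per-record Frequent Directions inequality~\eqref{eq:fd} together with record-level anchor Lipschitz constants, each weighted by $w_{j,t}$.
\begin{itemize}
\item With atom weights, the needed inequality is that $\sum_j\norm{C_jv}^2$ is at most the weighted atom quadratic form plus weighted error terms.
\item Since $w_{a,t}\ge1$, the unweighted sum is dominated by the weighted one exactly when each atom carries a certified PSD decomposition of the conceptual energy and of the FD and transport errors. This is precisely the declared hypothesis.
\item Under that hypothesis the proof of Theorem~\ref{thm:memory} repeats line by line with sums over atoms.
\end{itemize}

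\textbf{Main obstacle.} The delicate step is the negative remark that a purely algebraic split does not give a stronger certificate. A plausible counterexample would weight an atom by $w_{a,t}$ inside $S_t^*$ while the FD error $\Delta_j^{\rm FD}$ and the anchor drift remain record-level. An atom-weighted residual $\tr(\Pi_tS_t^*)$ would then not control $\norm{C_jv}$: the FD gap in~\eqref{eq:fd} is not itself split across atoms. The fallback is to keep $\varepsilon_t$ computed from record-level weights, so that Theorem~\ref{thm:memory} holds as before while atoms feed only $Q_t$. Since Theorem~\ref{thm:memory} holds for an arbitrary selected projector $Q_t$, this decoupling is valid.

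\textbf{State size.} Finally, persistent state consists of $K$ traces and a weight per retained atom. Its size is therefore fixed by the declared atom capacity and independent of elapsed time.
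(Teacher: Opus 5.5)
Your overall route matches the paper's. You relabel records as atoms and note that the trace, Ky Fan/allocation-transfer and Hedge arguments use only nonnegative weighted sums of PSD terms, trace residuals under $I-Q_t$, operator-norm perturbations and bounded expert losses. You then treat the memory certificate separately: under the declared atomwise certificate, or else by retaining the record-level one. Your explicit checks make the paper's ``verbatim after relabeling'' concrete, and they are correct:
\begin{itemize}
\item $\sum_{a\in\mathcal I_j}\tr M_a=\norm{B_j}_F^2\le M_j$ keeps $F_t(p)\le\Lambda$;
\item $\norm{A_tM_aA_t}_{\rm op}\le\norm{M_a}_{\rm op}$ gives the transfer bound;
\item Theorem~\ref{thm:memory} holds for any orthogonal $Q_t$ with $\Range(Q_t)\subseteq\Range(A_t)$, which is exactly what justifies letting the atoms choose $Q_t$ while $\varepsilon_t$ stays record-level.
\end{itemize}

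One piece of reasoning is wrong, however: your ``main obstacle'' paragraph. An atom-weighted residual combined with record-level Frequent Directions and drift terms \emph{does} control the quantity Theorem~\ref{thm:memory} certifies, namely the unweighted $\norm{J_tv}$. For unit $v\in\Range(\Pi_t)$ one has $A_tv=v$ and $w_{a,t}\ge1$, so
\[
\sum_j\norm{C_jv}^2\le\sum_j\norm{B_jv}^2+\sum_j\Delta_j^{\rm FD}
=\sum_a v^TA_tM_aA_tv+\sum_j\Delta_j^{\rm FD}
\le\tr(\Pi_tS_t^*)+\sum_j\Delta_j^{\rm FD}.
\]
Adding the record-level transport term (with any weights $\ge1$) then gives a valid certificate with no atomwise hypothesis at all. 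So your counterexample is not one, and ``exactly when'' in your memory paragraph should read ``if''.

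What a purely algebraic split cannot supply is the \emph{weighted} inequality in the first display of the proof of Theorem~\ref{thm:memory} with atom weights on the left-hand side. The reason is that neither $C_j^TC_j$ nor the gap $C_j^TC_j-B_j^TB_j$ in~\eqref{eq:fd} is decomposed across atoms. That is the precise sense in which the split yields no stronger independently weighted certificate. Your fallback conclusion is unaffected, but the justification you give for the caveat should be replaced by this one.
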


\begin{proof}
For the controller results, every relevant argument uses only nonnegative weighted sums of PSD contributions, their trace residual under $I-Q_t$, operator-norm perturbations of those sums, and bounded expert losses. Replacing one PSD term by a finite certified sum therefore reproduces the trace, spectral-cost, allocation-transfer, and Hedge arguments verbatim after relabeling records by atoms. For the derivative-memory statement, Theorem~\ref{thm:memory} additionally uses a certified upper bound on conceptual derivative energy plus sketch and anchor-transport errors. If these quantities are decomposed atomwise with a weighted sum that upper-bounds the same whole-behavior quantities, the identical Minkowski and trace argument applies; otherwise the record-level certificate is left unchanged. This is precisely the stated condition.
\end{proof}

\subsection{Function-preserving structural renewal}
\label{app:structural-renewal}

Structural renewal is restricted to modules whose zero functional gate makes internal reset exactly function preserving. Any subsequent activation must pass the same protected transaction as ordinary learning.

A renewal batch consists of reserved modules $a_i\varphi_{w_i}$ whose gates initially satisfy $a_i=0$. The architecture applies the same gate to the module's contribution to the realized predictor and to every active protected behavior map. Equivalently, for every active $j$, $\Phi_j$ is independent of $w_i$ whenever $a_i=0$. This is the functional, rather than merely output-level, definition of a dormant renewable module.

\begin{theorem}[Exact reset and certified activation]\label{thm:renew-safe}
Resetting any internal module parameter while its functional gate is zero changes the realized predictor, current loss value, and every active protected behavior by exactly zero. After reset, recompute $A_t$, $\Pi_t$, $\varepsilon_t$, $e_t^{\rm pop}$, and the full current gradient $g_t^{\rm reset}$. Let
\[
q_t^{\rm reset}=\Pi_tg_t^{\rm reset},\qquad
r_t=M_tq_t^{\rm reset},
\]
where $M_t$ projects onto an eligible named trial slot satisfying the zero-gate invariant above. The ordinary AFM update in direction $-q_t^{\rm reset}$ satisfies the behavioral bound~\eqref{eq:retention-step} and loss decrease~\eqref{eq:descent-step}. Moreover, whenever that update is accepted with step length $s_t>0$ and $r_t\ne0$, its trial-slot displacement is
\[
M_td_t=-\frac{s_t}{\norm{q_t^{\rm reset}}}r_t\ne0,
\]
and, because the dormant internal nongate component is identically zero under the invariant, this nonzero slot displacement contains nonzero gate motion. Activation is therefore functionally nonvacuous and covered by the same certificate. If no such accepted motion occurs, rolling back the reset restores the exact pretrial parameter state.
\end{theorem}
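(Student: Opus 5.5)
The proof splits into three parts, taken in order: exact reset invariance, the certificate for the post-reset update, and nonvacuous activation together with rollback.

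\emph{Reset invariance.} By the functional definition of a dormant module, every active $\Phi_j$ is independent of $w_i$ whenever $a_i=0$. The same gate multiplies the module's contribution to the deployed predictor $F_t=G_{\theta_t}+S_t$, so both are constant in $w_i$. Since the current loss factors through the deployed logits via~\eqref{eq:prediction-loss-factorization}, it is unchanged as well. Replacing $w_i$ by any reset value therefore changes all three quantities by exactly zero.

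\emph{Certified update.} After the reset, $A_t$, $\Pi_t$, $\varepsilon_t$, $e_t^{\rm pop}$ and $g_t^{\rm reset}$ are recomputed, so the post-reset state is simply a new pre-step state. The ordinary protected update in direction $-q_t^{\rm reset}$ lies in $\Range(\Pi_t)$ and is clipped by the radius $R_t$ of~\eqref{eq:safe-radius}. Theorem~\ref{thm:memory} applies at this state; the reset-induced Jacobian change is already charged through the recomputed anchor-drift term $d_t^{\rm anc}$ and $e_t^{\rm pop}$. Theorem~\ref{thm:onestep} then gives~\eqref{eq:retention-step} and~\eqref{eq:descent-step} verbatim, with no new argument needed.

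\emph{Nonvacuous activation.} The accepted displacement is $d_t=-s_t q_t^{\rm reset}/\norm{q_t^{\rm reset}}$, and applying the linear map $M_t$ gives the displayed formula for $M_td_t$. It is nonzero exactly when $s_t>0$ and $r_t\ne0$. We then split the slot coordinates into gate and internal nongate parts. Under the zero-gate invariant, $g_t^{\rm reset}$ has zero internal components, because the loss has zero derivative there. Each protected sketch $B_j$ also has zero columns in those coordinates, because every $\Phi_j$ has zero derivative there and Frequent Directions only forms linear combinations of Jacobian rows. Hence $S_t^*$ and the selected $Q_t$ act trivially on the internal coordinates. Also $A_t$ is a coordinate projector containing the trial slot, with $M_tA_t=M_t$. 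It follows that $\Pi_t=A_t-Q_t$ maps $g_t^{\rm reset}$ to a vector with zero internal component, so the nonzero vector $M_td_t$ must lie in the gate coordinate. Finally, if no accepted motion occurs, the reset is undone by restoring the stored pretrial parameter snapshot, exactly as in the atomic rollback of Theorem~\ref{thm:priority-oracle-app}.

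The main obstacle is the claim that $Q_t$ has no internal-coordinate component. This needs the top-$r$ eigenspace of a PSD matrix with zero rows and columns in those coordinates to lie in their orthogonal complement. That holds for nonzero eigendirections, which is what the construction selects. I would prove it first, as a one-line lemma: an eigenvector with nonzero eigenvalue lies in the range of the matrix.
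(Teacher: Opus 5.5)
Your proposal is correct and follows the paper's proof essentially step by step. It uses reset invariance from the functional zero-gate definition, applies Theorem~\ref{thm:onestep} verbatim at the recomputed post-reset state, applies $M_t$ to the normalized compatible step, and uses exact transactional rollback. Your nonzero-eigendirection lemma simply makes explicit the claim, asserted in the setup paragraph before the theorem, that $\Pi_t g_t^{\rm reset}$ has zero dormant-internal component. One small correction: $Q_t$ is the top eigenspace of the predicted covariance $\widehat S_{p_t,t}$, not of $S_t^*$, but that matrix has the same zero rows and columns, so your lemma applies unchanged.
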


\begin{proof}
By construction, a zero functional gate removes the module from both the realized predictor and every active behavior map, so reset has exactly zero immediate functional effect. The post-reset proposal is the ordinary global compatible direction in $\Range(\Pi_t)$, not a separately projected partial gradient; Theorem~\ref{thm:onestep} therefore applies directly after recomputing the certificates. Applying $M_t$ to the displayed update gives the trial-slot displacement formula. The dormant-slot invariant eliminates nongate internal motion while the gate is zero, so $r_t\ne0$ implies nonzero gate displacement and therefore a nonzero module contribution after the accepted step. Transactional rollback is exact because the pretrial slot state and zero gate are retained until acceptance.
\end{proof}

\begin{theorem}[Renewal success is exactly controlled by actual richness]\label{thm:renew-richness}
During stall episode $e$, use fresh conditional randomness at each trial. Let $I_{e,n}$ indicate a $\gamma$-useful batch and let the predictable conditional richness be
$\rho_{e,n}(\gamma)=\E[I_{e,n}\mid\Ff_{e,n-1}]$. For every $x>0$ and every $N$,
\begin{equation}\label{eq:no-renew}
\Prob\left(
I_{e,1}=\cdots=I_{e,N}=0,
\quad \sum_{n=1}^N\rho_{e,n}(\gamma)\ge x
\,\middle|\,\Ff_{e,0}
\right)\le e^{-x}.
\end{equation}
Consequently, whenever the realized cumulative conditional richness reaches $\log(1/\delta_e)$, failure beyond that point has probability at most $\delta_e$. No known lower bound is required by the algorithm. If every richness value is zero, no sampling algorithm using that renewal family can succeed.
\end{theorem}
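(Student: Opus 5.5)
The plan is to treat the bound \eqref{eq:no-renew} as a standard exponential-supermartingale (Freedman/Ville-type) estimate for a sequence of conditionally Bernoulli indicators. For $n\ge0$ I will set
\[
M_n=\mathbf 1\{I_{e,1}=\cdots=I_{e,n}=0\}\exp\Bigl(\sum_{i=1}^n\rho_{e,i}(\gamma)\Bigr),
\qquad M_0=1.
\]
The key one-step computation uses the fact that $\rho_{e,n}$ is $\Ff_{e,n-1}$-measurable (predictable) and that $I_{e,n}\in\{0,1\}$ with conditional mean $\rho_{e,n}$. From these,
\[
\E[\mathbf 1\{I_{e,n}=0\}\mid\Ff_{e,n-1}]=1-\rho_{e,n}\le e^{-\rho_{e,n}}.
\]
It follows that $\E[M_n\mid\Ff_{e,n-1}]\le M_{n-1}$, so $(M_n)$ is a nonnegative supermartingale with $\E[M_N\mid\Ff_{e,0}]\le1$. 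The fresh conditional randomness at each trial is used only to guarantee that $I_{e,n}$ is $\Ff_{e,n}$-measurable and that its conditional law given $\Ff_{e,n-1}$ is Bernoulli$(\rho_{e,n})$. No independence beyond this is needed.

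Next I will apply Markov's inequality. On the event in \eqref{eq:no-renew} we have $M_N\ge e^{x}$, so the conditional probability of that event is at most $e^{-x}\E[M_N\mid\Ff_{e,0}]\le e^{-x}$.

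For the ``consequently'' clause, the horizon at which cumulative richness reaches $\log(1/\delta_e)$ is random. I therefore plan to avoid a fixed-$N$ argument and instead use Ville's inequality, $\Prob(\sup_n M_n\ge 1/\delta_e\mid\Ff_{e,0})\le\delta_e$. Define the stopping time $\tau=\inf\{n:\sum_{i\le n}\rho_{e,i}\ge\log(1/\delta_e)\}$, which is a stopping time by predictability. On the event that all trials fail up to $\tau$, we have $M_\tau\ge1/\delta_e$, which gives the bound. Alternatively, optional stopping on the bounded-below supermartingale gives the same conclusion. I expect this time-uniform step to be the only delicate point, namely making sure that the event is measured at a stopping time rather than at a deterministic $N$. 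The case $\tau=\infty$ is harmless, because no claim is made there.

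The final sentence of the theorem is immediate. If $\rho_{e,n}(\gamma)=0$ for all $n$, then $\E[I_{e,n}]=0$, so $I_{e,n}=0$ almost surely for every trial. Since $\gamma$-usefulness, i.e.\ $r_{e,n}\ne0$ with norm at least $\gamma$, is exactly the condition under which Theorem~\ref{thm:renew-safe} yields nonzero gate motion, no trial from that renewal family is ever useful. Hence any sampling scheme drawing from it fails almost surely.
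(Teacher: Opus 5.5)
Your proposal is correct and follows the paper's proof essentially step for step: the same supermartingale $M_N$, the same one-step bound $(1-\rho_{e,n})e^{\rho_{e,n}}\le 1$, Markov's inequality for fixed $N$, and Ville's inequality at the hitting time of cumulative richness for the time-uniform consequence. One small quibble: the zero-richness clause needs only the definition of success ($\|r_{e,n}\|\ge\gamma$, which is strictly stronger than the $r_{e,n}\neq0$ condition in Theorem~\ref{thm:renew-safe}), so calling the two conditions ``exactly'' the same overstates things, though this does not affect the argument.
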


\begin{proof}
Define
\[
M_N=\mathbf 1\{I_{e,1}=\cdots=I_{e,N}=0\}
\exp\left(\sum_{n=1}^N\rho_{e,n}(\gamma)\right).
\]
On the event of no previous success,
\[
\E[(1-I_{e,N})e^{\rho_{e,N}}\mid\Ff_{e,N-1}]
=(1-\rho_{e,N})e^{\rho_{e,N}}\le1.
\]
Hence $(M_N)$ is a nonnegative supermartingale with $M_0=1$. Markov's inequality gives~\eqref{eq:no-renew} for each fixed $N$. Moreover, if $\tau_x=\inf\{N:\sum_{n=1}^N\rho_{e,n}(\gamma)\ge x\}$, then failure through $\tau_x$ implies $M_{\tau_x}\ge e^x$; Ville's inequality therefore gives $\Prob(\tau_x<\infty,\ I_{e,1}=\cdots=I_{e,\tau_x}=0\mid\Ff_{e,0})\le e^{-x}$, which proves the stated random-hitting-time consequence. The zero-richness statement follows from the definition.
\end{proof}

\begin{corollary}[Isotropic trial-slot compatible-gradient model]\label{cor:gaussian}
Condition on the composed linear map $P=M\Pi$ and suppose it is an orthogonal projector of rank $h$ on the trial coordinates. If the post-reset full gradient is $g^{\rm reset}=\omega z$ with $z\sim\mathcal N(0,I_d)$ independent of $P$, then
\begin{equation}\label{eq:chi}
\frac{\norm{Pg^{\rm reset}}^2}{\omega^2}\sim\chi^2_h.
\end{equation}
Hence $h>0$ and $\omega>0$ imply explicit positive richness. This corollary is optional; the universal theorem uses the actual richness~\eqref{eq:rho} and does not assume isotropy or commutation of $M$ and $\Pi$.
\end{corollary}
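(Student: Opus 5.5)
The statement to prove is Corollary~\ref{cor:gaussian}. This is a standard Gaussian projection fact, so I will argue directly rather than through the supermartingale machinery of Theorem~\ref{thm:renew-richness}.

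\emph{Step 1: reduce to a standard normal projection.} Condition on $P$. Since $z$ is independent of $P$, its conditional law is still $\mathcal N(0,I_d)$. Because $P$ is an orthogonal projector of rank $h$, I will write $P=UU^T$ with $U\in\R^{d\times h}$ having orthonormal columns spanning $\Range(P)$. Then
\[
\norm{Pg^{\rm reset}}^2/\omega^2=\norm{UU^Tz}^2=\norm{U^Tz}^2 .
\]

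\emph{Step 2: identify the law.} The vector $U^Tz$ is Gaussian with mean zero and covariance $U^TU=I_h$. Hence $\norm{U^Tz}^2$ is a sum of $h$ squared independent standard normals, which is $\chi^2_h$. Since this conditional law does not depend on the value of $P$, the identity in~\eqref{eq:chi} holds both conditionally and unconditionally.

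\emph{Step 3: positive richness.} For $h>0$ the $\chi^2_h$ distribution has a density that is positive on $(0,\infty)$. Therefore, for any $\gamma$,
\[
\rho(\gamma)=\Prob\!\left(\chi^2_h\ge\gamma^2/\omega^2\right)>0,
\]
and this is explicit, namely the upper regularized incomplete gamma function $\Gamma(h/2,\gamma^2/(2\omega^2))/\Gamma(h/2)$. Here I identify $M\Pi g^{\rm reset}$ with $Pg^{\rm reset}$, which is the quantity in~\eqref{eq:rho}. Under the stated hypothesis, $P=M\Pi$ is itself a projector, so no commutation argument is needed. Feeding this $\rho$ into Theorem~\ref{thm:renew-richness} then yields a geometric failure bound.

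\emph{Main obstacle.} The only delicate point is the hypothesis that the composition $M\Pi$ is an orthogonal projector, since that generally requires $M$ and $\Pi$ to commute. The corollary assumes this outright, so I only need to note that if it failed, one would instead get a weighted sum of chi-squares with weights given by the squared singular values of $M\Pi$. That sum is still positive with positive probability whenever $M\Pi\ne0$, which explains why the statement calls the corollary optional.
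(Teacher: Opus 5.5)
Your proof is correct and follows the paper's route. The paper simply invokes rotational invariance of the standard Gaussian to conclude that the projection of $z$ onto the $h$-dimensional range of $P$ is a standard $h$-variate Gaussian, and your factorization $P=UU^T$ with $U^Tz\sim\mathcal N(0,I_h)$ makes that step explicit; the incomplete-gamma formula for the richness and your remark on the non-projector case are correct additions that the statement does not need.
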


\begin{proof}
Under the stated projector condition, rotational invariance of the standard Gaussian makes its projection onto the $h$-dimensional trial-compatible subspace a standard $h$-variate Gaussian.
\end{proof}

A countable sequence of episodes uses summable $\delta_e$ and a fixed structural pool. Failed or unactivated trials reuse their zero-gated slots exactly. A successful activation consumes its slot until a separately certified structural move returns that module to an exactly zero-gated dormant state. If no dormant slot remains, the renewal family's realized richness is zero and the algorithm reports a structural-capacity obstruction and does not allocate additional parameters. Thus lifetime resource use remains fixed; computation and delay are the sums of the trial budgets, with the horizon-dependent precision stated in~\eqref{eq:word-resource}.

\subsection{Integrated frontier theorem and convex specialization}
\label{app:integrated-theorem}

The convex branch supplies a global dynamic-regret statement when an exact projection oracle is available. The integrated theorem then combines the pathwise, statistical, resource, and obstruction-aware components of AFM.

\subsubsection{Convex affine-behavior dynamic regret}

The nonlinear theorem above is pathwise and stationarity-based. A stronger global regret statement requires convexity.

\begin{assumption}[Convex affine branch]\label{ass:convex}
There is a compact convex set $\Kk\subset\R^d$ of diameter $D$. Every $\ell_t$ is convex and differentiable on a neighborhood of $\Kk$, with $\norm{\nabla\ell_t(\theta)}\le G$ for every $\theta\in\Kk$ (and hence is $G$-Lipschitz on $\Kk$). The exact retention-compatible set $\Cc_t\subseteq\Kk$ is nonempty, closed, and convex; this holds, for example, for affine behavior maps and convex norm tolerances. The initial point satisfies $\theta_1\in\Cc_1$, so every projected iterate is retention feasible.
\end{assumption}

In AFM's exact-convex solver mode use the exact projection:
\begin{equation}\label{eq:convex-update}
\theta_{t+1}=\Pi_{\Cc_{t+1}}(\theta_t-\eta g_t).
\end{equation}
Let
\[
\theta_t^*\in\argmin_{\theta\in\Kk}\ell_t(\theta),\qquad
\theta_t^\dagger\in\argmin_{\theta\in\Cc_t}\ell_t(\theta),
\]
and define
\[
\Gamma_t=\ell_t(\theta_t^\dagger)-\ell_t(\theta_t^*)\ge0,
\qquad P_T^\dagger=\sum_{t=1}^{T-1}\norm{\theta_{t+1}^\dagger-\theta_t^\dagger}.
\]
For the final projection inequality use the harmless convention
$\Cc_{T+1}=\Cc_T$ and $\theta_{T+1}^\dagger=\theta_T^\dagger$.

\begin{theorem}[Dynamic regret with unavoidable compatibility price]\label{thm:dynreg}
Under Assumption~\ref{ass:convex}, for every $\eta>0$,
\begin{equation}\label{eq:dynreg}
\sum_{t=1}^T\bigl[\ell_t(\theta_t)-\ell_t(\theta_t^*)\bigr]
\le
\sum_{t=1}^T\Gamma_t
+\frac{D^2}{2\eta}
+\left(\frac D\eta+G\right)P_T^\dagger
+\frac{\eta G^2T}{2}.
\end{equation}
The first term cannot be removed by any retention-feasible learner.
\end{theorem}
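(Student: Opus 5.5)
The plan is the standard dynamic-regret analysis of projected online gradient descent, compared against the moving constrained comparator $\theta_t^\dagger$, followed by a split of each loss gap into a constrained part and the compatibility gap $\Gamma_t$. First decompose
\[
\ell_t(\theta_t)-\ell_t(\theta_t^*)=\bigl[\ell_t(\theta_t)-\ell_t(\theta_t^\dagger)\bigr]+\Gamma_t .
\]
It then suffices to bound $\sum_t[\ell_t(\theta_t)-\ell_t(\theta_t^\dagger)]$ by $\frac{D^2}{2\eta}+(\frac D\eta+G)P_T^\dagger+\frac{\eta G^2T}{2}$. By convexity and differentiability on a neighborhood of $\Kk$, each term is at most $g_t^T(\theta_t-\theta_t^\dagger)$, since $\theta_t,\theta_t^\dagger\in\Kk$.

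Next bound the linearized term using nonexpansiveness of the exact projection onto the closed convex set $\Cc_{t+1}$. Compare with $\theta_{t+1}^\dagger$, which lies in $\Cc_{t+1}$; for $t=T$ the convention $\Cc_{T+1}=\Cc_T$, $\theta_{T+1}^\dagger=\theta_T^\dagger$ covers the last step. This gives
\[
\norm{\theta_{t+1}-\theta_{t+1}^\dagger}^2\le\norm{\theta_t-\eta g_t-\theta_{t+1}^\dagger}^2 .
\]
I will expand instead around $\theta_t^\dagger$ by inserting the drift $\theta_{t+1}^\dagger-\theta_t^\dagger$. Concretely, write $\norm{\theta_{t+1}-\theta_t^\dagger}\le\norm{\theta_{t+1}-\theta_{t+1}^\dagger}+\norm{\theta_{t+1}^\dagger-\theta_t^\dagger}$ and keep the cross terms. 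This yields
\[
g_t^T(\theta_t-\theta_t^\dagger)\le\frac{\norm{\theta_t-\theta_t^\dagger}^2-\norm{\theta_{t+1}-\theta_{t+1}^\dagger}^2}{2\eta}+\frac{\eta G^2}{2}+\frac{D}{\eta}\norm{\theta_{t+1}^\dagger-\theta_t^\dagger}+G\norm{\theta_{t+1}^\dagger-\theta_t^\dagger}.
\]
The $D/\eta$ factor comes from the difference of squared distances, $\norm a^2-\norm b^2\le(\norm a+\norm b)\norm{a-b}$, where every distance between points of $\Kk$ is at most $D$. The $G$ factor arises if one first uses convexity against $\theta_{t+1}^\dagger$ and then pays $|g_t^T(\theta_{t+1}^\dagger-\theta_t^\dagger)|\le G\norm{\theta_{t+1}^\dagger-\theta_t^\dagger}$.

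Summing over $t$, the squared-distance terms telescope to at most $\norm{\theta_1-\theta_1^\dagger}^2/(2\eta)\le D^2/(2\eta)$. The drift terms sum to $(D/\eta+G)P_T^\dagger$, with the $t=T$ drift equal to zero by the convention, and the gradient terms sum to $\eta G^2T/2$. Adding back $\sum_t\Gamma_t$ gives the displayed bound. The main obstacle is the bookkeeping in this step. The iterate $\theta_{t+1}$ lies in $\Cc_{t+1}$ while the comparator changes from $\theta_t^\dagger$ to $\theta_{t+1}^\dagger$, so the projection inequality must be applied to the comparator that belongs to the new set. The path-length terms must be charged with exactly the coefficients $D/\eta$ and $G$ and no more; I would first try the standard route of applying convexity at $\theta_{t+1}^\dagger$ and correcting with Lipschitzness.

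Finally, for unavoidability: any retention-feasible learner plays $\theta_t\in\Cc_t$, so $\ell_t(\theta_t)\ge\ell_t(\theta_t^\dagger)$ and its dynamic regret is at least $\sum_t\Gamma_t$. The first term therefore cannot be removed.
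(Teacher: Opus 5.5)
Your proposal is correct and follows the paper's proof essentially step for step: convexity against $\theta_t^\dagger$, projection nonexpansiveness with the new-set comparator $\theta_{t+1}^\dagger\in\Cc_{t+1}$, and the diameter swap $\norm{\theta_t-\theta_{t+1}^\dagger}^2\le\norm{\theta_t-\theta_t^\dagger}^2+2D\norm{\theta_{t+1}^\dagger-\theta_t^\dagger}$ for the $D/\eta$ coefficient; then the bound $g_t^T(\theta_{t+1}^\dagger-\theta_t^\dagger)\le G\norm{\theta_{t+1}^\dagger-\theta_t^\dagger}$ for the $G$ coefficient, telescoping, and the same feasibility argument $\ell_t(\theta)\ge\ell_t(\theta_t^\dagger)$ on $\Cc_t$ for unavoidability. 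The triangle inequality on $\norm{\theta_{t+1}-\theta_t^\dagger}$ that you mention is never used and can be dropped, since the squared-distance difference you describe right after it is what produces the bound.
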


\begin{proof}
Convexity gives
$\ell_t(\theta_t)-\ell_t(\theta_t^\dagger)\le g_t^T(\theta_t-\theta_t^\dagger)$.
Projection nonexpansiveness with comparator $\theta_{t+1}^\dagger\in\Cc_{t+1}$ yields
\[
2\eta g_t^T(\theta_t-\theta_{t+1}^\dagger)
\le \norm{\theta_t-\theta_{t+1}^\dagger}^2
-\norm{\theta_{t+1}-\theta_{t+1}^\dagger}^2+\eta^2G^2.
\]
The diameter bound implies
\[
\norm{\theta_t-\theta_{t+1}^\dagger}^2
\le\norm{\theta_t-\theta_t^\dagger}^2
+2D\norm{\theta_{t+1}^\dagger-\theta_t^\dagger},
\]
and
$g_t^T(\theta_{t+1}^\dagger-\theta_t^\dagger)\le G\norm{\theta_{t+1}^\dagger-\theta_t^\dagger}$.
Sum and telescope. Finally add $\Gamma_t$.
\end{proof}

\subsubsection{Integrated frontier theorem}

\begin{theorem}[Maximal Adaptive Functional Metaplasticity Frontier]\label{thm:main}
Fix any finite horizon $T$ and any nonanticipating stream-generating law, with the convention that at each round the current data item and loss function are fixed before the controller's current private policy or renewal draw. Conditional on every realized stream and learner history, the deterministic retention, endpoint-emulation, stationarity, obstruction, and resource statements below hold pathwise whenever their declared certificates accept. For those clauses that invoke conditional-risk, null, iid population, policy-randomization, or renewal-richness assumptions, suppose the corresponding stated conditions hold under the stream-generating law. Run AFM with fixed structural budgets and summable statistical error allocations. Allocate the total failure budget $\delta$ across consolidation, outcome reopening, observable route splitting, optional reference calibration, optional population certificates, policy randomization, refresh or append events, and renewal episodes; for an unbounded lifetime, use any summable allocation across doubling epochs and event indices. AFM executes a nonzero step only when the pre-step certificates~\eqref{eq:directional-smooth}-\eqref{eq:behavior-curvature} and every invoked population certificate are valid; Proposition~\ref{prop:cert-construct} supplies them constructively for a broad finite-graph class; otherwise it sets the trust radius to zero. Then, jointly over the stochastic observations covered by the invoked statistical assumptions and AFM's private randomization, with probability at least $1-\delta$ all probabilistic conclusions below hold simultaneously, while the pathwise conclusions hold on every realized trajectory.

\begin{enumerate}[label=(\roman*)]
\item \textbf{Operational task-free routing, constructive safe transfer, and consolidation.} Routing is defined on every stream and exact under the non-oracular separation conditions of Theorem~\ref{thm:routing-positive}. Frozen-representation calibration is honest in the sense of Proposition~\ref{prop:signature-calibration}: inadequate calibration produces an explicit obstruction and no positive routing claim. A finite predeclared signature family retains lifetime false-alarm control by Proposition~\ref{prop:signature-mixture}. Dual-evidence route refinement has lifetime false-split probability at most its independently allocated signature budget by Proposition~\ref{prop:route-split}, even when predictive contradiction is real, and it never deletes source protection. Every committed record satisfies its first-crossing time-uniform average conditional risk certificate~\eqref{eq:commit-cert}, the separately controlled staleness condition of Proposition~\ref{prop:frozen-certificate}, and its predeclared activation-gap admission bound. Every accepted shielded candidate-transfer step satisfies Proposition~\ref{prop:safe-transfer}: it reproduces the full ordinary current endpoint, exactly preserves active empirical behaviors and unselected candidates, and completes the selected empirical transfer in one service under $\xi=1$. Theorem~\ref{thm:priority-oracle} gives the declared deterministic construction for consistent finite constraints and a sharp functional obstruction for contradictory duplicate features. Proposition~\ref{prop:fair-service} is predictable and non-starving, and Theorem~\ref{thm:transfer-completion} gives the displayed finite service bound without a projected-compatibility assumption. No uncommitted candidate receives a retention claim.

\item \textbf{Whole-behavior retention.} On every accepted protected round the persistent safe-base leg obeys the budget-controlled fixed-shield bound~\eqref{eq:retention-step}. The complete declared finite deployed evidence is stronger: it is restored exactly on every accepted protected step, so its active-interval drift is zero up to the declared arithmetic envelope and its total drift from the validated snapshot is only the observed activation-transfer gap. Any broader deployed or evaluator map carries the explicit internal structural-shield charge $\omega_t^S$, the structural routing-mismatch charge $\omega_t^{\Delta}$, and the fixed-shield routing derivative terms in~\eqref{eq:external-semantic-step}-\eqref{eq:semantic-interval}. Proposition~\ref{prop:release} separates activation, exact active finite retention, and post-release drift.

\item \textbf{Arbitrary non-convex adaptation.} The persistent metaplastic base satisfies the exact pathwise projected-stationarity inequality~\eqref{eq:stationarity}. In counterfactual-normalized mode, every accepted protected round additionally stores at least the persistent current-loss fraction~\eqref{eq:persistent-ratio}, with the compatible-gradient fraction $\kappa_t$ displaying the explicit local compatibility price for the selected protected projector; $\eta_t=1$ recovers the full projected comparator. Independently, every accepted deployed update reproduces one hundred percent of the genuine no-protection current endpoint by~\eqref{eq:genuine-progress-ratio}. On tails satisfying Corollary~\ref{cor:plasticity}, finite cumulative certified safe-base behavior charge, exact finite deployed protection, and realized lifelong compatible base plasticity coexist; a broader joint map additionally pays its explicit structural-shield charge.

\item \textbf{Metaplastic timescale optimality within the declared family.} Whenever the declared finite family contains the normalized scale-free coefficient choices above, the logarithmic EMA bank contains constant-distortion scale-free policies by Theorem~\ref{thm:multiscale}, whereas one exponential has polynomial distortion by Theorem~\ref{thm:single}. The deterministic transfer bound~\eqref{eq:allocation-transfer} converts temporal prediction error into excess leakage, while~\eqref{eq:hedge} competes on the realized composite frontier cost~\eqref{eq:frontier-cost}, including the gradient energy blocked by protection.

\item \textbf{Evidence-based reopening and bounded route refinement.} False outcome reopening and false observable splitting are controlled by separate lifetime allocations.  Under~\eqref{eq:null}, false reopening is bounded by $\alpha_j^{\rm open}$; under~\eqref{eq:signature-null}, false splitting is bounded by $\alpha_j^{\rm split}$ even when a real semantic conflict violates the outcome null.  Theorem~\ref{thm:delay} gives finite reopening delay under positive outcome information, and Theorem~\ref{thm:split-delay} gives finite split delay only when both outcome and distinct-block signature information are positive and the effect/capacity gates hold.  With semantic-only change, ordinary reopening or release remains valid but no observable route separation is claimed.

\item \textbf{Structural renewal.} Every reset is exactly preserving for the realized predictor and all active protected behaviors. The probability and delay of finding nonzero trial-slot motion inside the global compatible descent direction are controlled by the realized richness sequence through~\eqref{eq:no-renew}. Failed trials reuse fixed slots; successful activations consume them until separately certified exact dormancy is restored. Exhausted structural capacity and zero richness are reported as obstructions, not replaced by assumptions.

\item \textbf{Bounded structural resources and explicit precision.} The number of persistent scalar or fixed-size vector states is
\begin{equation}\label{eq:word-resource}
\begin{aligned}
W={}&O(d)+O(L_{\max}d)+O(C_{\max}\ell_{\rm cand}d)
+O((J_{\max}+C_{\max})d)+O(C_{\max}d_Z)\\
&+O(C_{\max}m_{\rm fit}d_{\rm ref})
+O\!\left((J_{\max}+C_{\max})n_{\max}(d_{\rm ref}+d_{\Phi})\right)\\
&+O(KA_{\max})+O(PK)\\
&+O(J_{\max}+C_{\max})+O(N_{\rm shield}(d_Z+d_\Phi+2))+O(Q_{\max}d_Z)+O(\text{renewal slots}),
\end{aligned}
\end{equation}
independent of elapsed lifetime. Here $B_{\max}$ is the declared current-minibatch cardinality bound and $N_{\rm shield}\le B_{\max}+J_{\max}n_{\max}+C_{\max}n_{\max}$ is the fixed cardinal-node cap and $Q_{\max}$ is the fixed label-free guard-address cap. The $C_{\max}$ terms include one private candidate initialization or frozen snapshot, the finite $m_{\rm fit}$ fitting-reference block before freezing, bounded staging sketches, frozen first-crossing validation sufficient statistics and evidence, fixed candidate-transfer ledger and queue state, and fixed sufficient state for the separately allocated staleness, outcome, and distinct-block signature e-processes. The fitting block is deleted from decision state after the frozen snapshot is produced. The empirical endpoint-check mode stores at most $n_{\max}$ fixed-size observation references and behavior outputs per candidate or active segment; a certified population-map mode may omit the active-segment evidence term after commitment. Referenced observations belong to the external input source, not to learner-owned adaptive state. Likewise, an optional append-only execution log is external output never read by the algorithm and is not counted as persistent decision state; retaining such a log is an experimental reproducibility choice; because the algorithm never reads it, it is not part of persistent decision state. For an exact guarantee through horizon $T$, cumulative counters, log-wealth, and error-allocation indices require $O(\log T+\log(1/\delta))$ bits per such scalar, in addition to the declared numerical precision for model and sketch entries; certified rounding errors are included in the same obstruction certificates. A fixed-bit implementation therefore declares a maximum certified horizon or an explicit restart/precision policy. New anchors are append-only while active; when the registry is full, any deletion is an explicit capacity release whose guarantee ends and whose heavy segment state is removed.

\item \textbf{Convex global performance.} Under Assumption~\ref{ass:convex}, an admissible specification that supplies a deterministic exact Euclidean projection oracle for each closed convex retention set instantiates the exact-convex safe-base mode and satisfies~\eqref{eq:dynreg}. The compatibility price $\sum_t\Gamma_t$ is unavoidable. Oracle failure or uncertified feasibility is an explicit convex-oracle obstruction; no uncertified approximate solver is substituted.
\end{enumerate}

The theorem is universal in the following precise sense: it does not require routing mismatch, spectral tails, anchor drift, curvature, conflict, variation, test delay, or renewal delay to be small. When they are small, the displayed inequalities yield strong continual learning. When they are large, the lower bounds below show why no bounded learner can generally remove them.
\end{theorem}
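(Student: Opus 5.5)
The proof is an assembly argument: each clause (i)--(viii) of Theorem~\ref{thm:main} is the conjunction of results already established, so the work is to check that their hypotheses are met on the accepted rounds and to combine the probabilistic parts under one failure budget. First we separate the \emph{pathwise} clauses from the \emph{probabilistic} ones. The pathwise clauses are the deterministic retention, endpoint emulation, stationarity, transfer completion, reset exactness, resource count and the convex regret bound. The probabilistic clauses are consolidation certificates, staleness, false reopening, false splitting, population certificates, Hedge regret and renewal failure.

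For the pathwise part, we fix a realized trajectory and verify on each accepted round that the pre-step certificates \eqref{eq:directional-smooth}--\eqref{eq:behavior-curvature} hold. This is either by hypothesis, constructively via Proposition~\ref{prop:cert-construct}, or trivially with $R_t^{\rm cap}=0$. We then invoke the earlier results clause by clause:
\begin{enumerate}[label=(\alph*)]
\item Clause (i) uses Theorem~\ref{thm:routing-positive}, Proposition~\ref{prop:signature-calibration}, Theorem~\ref{thm:priority-oracle-app}, Proposition~\ref{prop:safe-transfer}, Proposition~\ref{prop:fair-service} and Theorem~\ref{thm:transfer-completion}.
\item Clause (ii) uses Theorem~\ref{thm:onestep}, Corollary~\ref{cor:interval} and Proposition~\ref{prop:release}.
\item Clause (iii) uses Theorem~\ref{thm:stationarity}, Theorem~\ref{thm:normalized-assimilation-app} together with \eqref{eq:genuine-progress-ratio}, and Corollary~\ref{cor:plasticity}.
\item Clause (iv) uses Theorems~\ref{thm:multiscale}, \ref{thm:single} and \ref{thm:allocation-transfer}.
\item Clause (vi), the reset part, uses Theorem~\ref{thm:renew-safe}.
\item Clause (viii) uses Theorem~\ref{thm:dynreg} under Assumption~\ref{ass:convex}.
\end{enumerate}
The key observation is that none of these arguments uses routing correctness or smallness of mismatch, spectral tails, drift or curvature; those quantities enter only as explicit additive charges. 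This justifies the universality remark at the end of the statement.

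Clause (vii) is a direct count. We enumerate the declared persistent state and sum the sizes:
\begin{enumerate}[label=(\alph*)]
\item parameters and sketches, $O(d)$ and $O(L_{\max}d)$;
\item candidate snapshots and staging sketches;
\item centroids, $O(C_{\max}d_Z)$;
\item fitting references and evidence blocks bounded by $n_{\max}$;
\item traces $O(KA_{\max})$ and policy weights $O(PK)$;
\item e-process sufficient statistics, which are fixed-size by \eqref{eq:halfnormal-mixture};
\item shield nodes, capped via \eqref{eq:shield-capacity};
\item guards and renewal slots.
\end{enumerate}
Each item is bounded by a specification constant, so the total is independent of $T$. The $O(\log T+\log(1/\delta))$-bit remark follows because counters up to $T$ and log-wealth thresholds of size $\log(1/\alpha)$ must be represented exactly.

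For the probabilistic part, we list every invoked event: Theorem~\ref{thm:commit} and Proposition~\ref{prop:frozen-certificate} for each candidate, Theorem~\ref{thm:falseopen} and Proposition~\ref{prop:route-split} for each record, Corollary~\ref{cor:population} when it is used, Theorem~\ref{thm:hedge} per doubling epoch, and Theorem~\ref{thm:renew-richness} per episode. Each event holds with failure probability at most its allocated level. We then take a union bound over this countable family; the allocations are summable with total at most $\delta$. The delay statements, Theorems~\ref{thm:delay} and \ref{thm:split-delay}, carry their own $\beta$ and are stated conditionally, so they need no budget.

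The main obstacle is the measurability and conditioning bookkeeping that makes the union bound legitimate on an adaptive stream. Many of the events are indexed by random stopping times: commit times, certification crossings, activation times and renewal episodes. I would handle this first by noting that every e-process and confidence sequence used is time-uniform, so each is valid at any stopping time. Second, I would assign error levels to \emph{index slots}, the $k$-th instantiated process of each type, rather than to random objects, so that the allocation is predeclared and the union bound applies without conditioning on data-dependent events. A second subtlety is the Hedge bound on an unbounded horizon. For this I would use doubling epochs, each with its own $\delta$-share, concatenating the per-epoch bounds of Theorem~\ref{thm:hedge}.
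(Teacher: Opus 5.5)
Your proposal is correct and follows essentially the paper's proof: each clause is assembled from the component results, (vii) is a direct count, and one union bound covers the summably allocated consolidation, staleness, reopening, splitting, population, Hedge and renewal failure events. Your index-slot allocation and stopping-time remarks only make explicit what the paper compresses into ``by the summable allocation,'' and the two items to add are Proposition~\ref{prop:signature-mixture} among the probabilistic events for clause (i) and Theorem~\ref{thm:memory}, which underlies the retention bound you invoke in clause (ii).
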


\begin{proof}
Item (i) combines the routing algorithm, Theorem~\ref{thm:routing-positive}, Propositions~\ref{prop:signature-calibration},~\ref{prop:signature-mixture},~\ref{prop:route-split},~\ref{prop:frozen-certificate},~\ref{prop:fair-service}, and~\ref{prop:safe-transfer}, and Theorems~\ref{thm:commit} and~\ref{thm:transfer-completion}, with a union bound over summably allocated records and statistical processes.  The optional finite initialization prefix supplies the arbitrary initial state; its bounded references are replayed through the final frozen representation before signature calibration. Item (ii) follows from Theorems~\ref{thm:memory},~\ref{thm:priority-oracle}, and~\ref{thm:onestep}, Corollary~\ref{cor:interval}, and Proposition~\ref{prop:release}. Item (iii) combines Theorems~\ref{thm:normalized-assimilation},~\ref{thm:priority-oracle},~\ref{thm:onestep}, and~\ref{thm:stationarity} with Corollary~\ref{cor:plasticity}. Item (iv) intersects Theorems~\ref{thm:multiscale},~\ref{thm:single},~\ref{thm:allocation-transfer}, and~\ref{thm:hedge}. Item (v) is Proposition~\ref{prop:route-split} and Theorems~\ref{thm:falseopen},~\ref{thm:delay}, and~\ref{thm:split-delay}. Item (vi) is Theorems~\ref{thm:renew-safe} and~\ref{thm:renew-richness}. Item (vii) follows by counting fixed arrays, bounded empirical evidence, the declared $A_{\max}$ trace bank, and sketches; Frequent Directions does not store the conceptual row stacks. Exact counters up to $T$ and confidence levels down to $\delta$ have the stated logarithmic bit length. Item (viii) is Theorem~\ref{thm:dynreg} under the supplied exact projection-oracle condition. The deterministic implications in items (i)-(viii) are pathwise once their premises and certificates are fixed. The union of all invoked statistical, policy-randomization, and renewal failure events has probability at most $\delta$ under the stated stream-generating law by the summable allocation.
\end{proof}

\subsection{Obstructions and maximality}
\label{app:obstructions}

The main theorem does not assume away incompatibility, observational ambiguity, finite memory, or finite structural capacity. The following constructions show why the corresponding terms or abstentions are necessary.

\subsubsection{Matching obstructions and maximality}

\begin{proposition}[Routing unidentifiability]
If two semantic contexts induce the same law for every observable context signature and outcome available to the router, but require different retained behaviors, then no task-free router can identify the correct context with error probability below $1/2$ under an equal prior.  More specifically, if outcome laws differ but the declared signature law is identical, outcome evidence can justify reopening while no task-free test can justify a new observable route with both nontrivial power and distribution-free false-split control. Consequently, a nonzero routing-mismatch term or outcome-only release is unavoidable.
\end{proposition}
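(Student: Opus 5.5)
The plan is to reduce all three claims to indistinguishability arguments under an equal prior, and to build the specific two-context instance explicitly.

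\emph{Unidentifiability.} Let the two semantic contexts be $c_1,c_2$, and let $\mathcal O$ denote the full observable record available to the router: signatures, outcomes, and learner history. By hypothesis, the joint law of $\mathcal O$ is the same under $c_1$ and $c_2$. Any task-free router, possibly randomized with private coins independent of the context, is a measurable map $\hat c(\mathcal O,U)$. Its correct-identification probability is $\frac12\Prob_1(\hat c=c_1)+\frac12\Prob_2(\hat c=c_2)$. The two laws coincide, so this equals $\frac12\Prob(\hat c=c_1)+\frac12\Prob(\hat c=c_2)\le\frac12$. The error probability is therefore at least $1/2$, which equals the Bayes risk; equivalently, the total-variation distance between the two laws is zero.

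\emph{Tests based on the signature law.} Suppose the outcome laws differ but the signature law is identical. Reopening remains justifiable because the outcome score~\eqref{eq:score} can have positive drift; Theorem~\ref{thm:delay} then gives finite crossing while Theorem~\ref{thm:falseopen} keeps its false-alarm control. For a new observable route, take a false-split level $\alpha<1$ and note that route refinement is defined by the signature-stability null~\eqref{eq:signature-null}. A test with distribution-free false-split control must reject with probability at most $\alpha$ under every law satisfying that null. The key step is to make this concrete. Fix a signature law that satisfies the null, and consider the pair (null world, semantic-change world) in which only the outcome laws differ. In the semantic-change world the signature process has exactly the law it has in the null world, so any split decision depending on the signature evidence rejects with the same probability in both worlds. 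That probability is at most $\alpha$, so the test has no power above its level.

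The main obstacle I expect is ruling out tests that also use outcomes to decide a split. Such a test is not prevented from using outcomes; the point is that an outcome-only departure does not falsify the signature-stability null. The operational false-split null is defined on signature information, and semantic-only change leaves it true. A rejection driven by outcomes would therefore count as a false split. Demanding distribution-free control uniformly over all outcome laws compatible with the null, including the altered one, forces the power to be at most $\alpha$. This matches Proposition~\ref{prop:route-split} and Theorem~\ref{thm:split-delay}, where $\Delta_{\rm sig}=0$ gives no finite delay.

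\emph{Consequence.} The contexts cannot be separated, and the retained behaviors differ, $\Psi_{c_1}\neq\Psi_{c_2}$. Whatever single map the router selects on the common observable address therefore differs from at least one of the two targets, with probability at least $1/2$. The routing mismatch $\widetilde\Delta_t$ is then nonzero on that event. The only remaining safe actions are to pay the mismatch charge or to perform an outcome-only release.
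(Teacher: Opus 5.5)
Your proposal is correct and follows the paper's route. The paper argues that identical observation laws give zero total variation, so no router beats random guessing, and that in the semantic-only case the signature law is the same under the ``same route'' and ``new route'' hypotheses, so any signature-based split test faces the same zero-total-variation obstruction. Your additional steps go slightly beyond the paper's two-line proof without changing its approach: you note that the semantic-change world itself satisfies the observable-stability null, so level-$\alpha$ control caps power at $\alpha$ even for tests that also use outcomes, and you address the reopening and mismatch clauses explicitly.
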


\begin{proof}
For the fully identical case the two hypotheses induce identical observation distributions, so the total variation distance is zero and binary testing cannot beat random guessing.  In the semantic-only case, condition on the outcome evidence that reveals invalidity: the marginal law of the declared signatures remains identical under ``same route'' and ``new route'' hypotheses, so any signature-based split test faces the same zero-total-variation obstruction.
\end{proof}

\begin{proposition}[No distribution-free future-risk certificate from a finite commit history]\label{prop:future-risk}
For every consolidation rule that commits after a finite observed history, there exist two data streams that are identical through the commit time but have different future outcome laws, one on which the committed predictor remains valid and one on which its future risk is arbitrarily poor. Therefore Theorem~\ref{thm:commit} can certify its observed conditional evidence sequence, while a future-risk guarantee requires an explicit recurrence, exchangeability, drift, or other predictive assumption.
\end{proposition}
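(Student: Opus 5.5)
The proof is a two-stream indistinguishability construction, in the same spirit as the routing-unidentifiability argument just above. Fix an arbitrary consolidation rule. Pick any realized prefix on which it commits at a finite time $\tau$, together with the frozen candidate $\bar\theta$ and committed predictor $f$ produced on that prefix. The commit time is a stopping time adapted to the learner history. Hence the event $\{\tau=n\}$ and the committed predictor are determined by the observations up to $n$. We will construct two streams $P^{(1)}$ and $P^{(2)}$ that share the conditional law of every observation through round $n$, so the rule commits to the same $f$ under both.

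After round $n$, the two streams differ only in their outcome laws.

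\emph{Stream one} continues with conditional outcome laws on which $f$ has loss at most $\tau_{\rm risk}$. The simplest choice is to keep the labels generated by $f$ itself, or the label law already in force before commitment.

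\emph{Stream two} keeps the same covariate law. Its labels are chosen adversarially with respect to the now-fixed $f$. For bounded losses in $[0,1]$ with at least two distinct label values, take $Y=\arg\max_y \ell(f(X),y)$. This drives the conditional risk to the loss ceiling, or to any prescribed level $M$ within the loss range. This is where the phrase ``arbitrarily poor'' is read relative to the declared loss scale.

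Nonanticipation is preserved in both streams, because the future laws depend only on the past, which includes $f$. Neither stream is iid or stationary, and the consolidation setting permits exactly this, since Theorem~\ref{thm:commit} imposes no stationarity.

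Next, conclude the argument. The laws of the two streams restricted to $\Ff_n$ coincide. Therefore any statistic, certificate, or upper bound that the rule reports at commitment is the same random variable under both streams, with total variation zero on $\Ff_n$. Suppose such a certificate claimed future risk at most $\tau_{\rm risk}<M$ with probability above $1/2$ under both streams. It would then be wrong with probability above $1/2$ under stream two. This contradicts distribution-free validity. So no distribution-free future-risk certificate exists.

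Theorem~\ref{thm:commit} is untouched by this. Its guarantee concerns the average of $\mu_i$ over the already observed validation sequence, and that average is identical on both streams. This explains why an extra predictive assumption (recurrence, exchangeability, or bounded drift) is required to bridge from the observed evidence to future risk.

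The main obstacle is the quantifier over ``every consolidation rule,'' including randomized rules and rules that never commit on some paths. I would handle it as follows. Restrict to prefixes where commitment occurs with positive probability, and condition on the learner's private randomness, which is independent of the stream. If the rule never commits, the statement is vacuous. A secondary point to state explicitly is the degenerate case in which the loss is constant in the label, where no predictor can be made poor. I would assume a non-degenerate bounded loss.
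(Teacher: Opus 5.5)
Your proposal is correct and matches the paper's argument. Fix a finite history on which the rule commits, then extend it twice: once with outcomes the frozen snapshot predicts perfectly, and once with outcomes that maximize its bounded loss. The learner's information at commitment is identical in both worlds, so Theorem~\ref{thm:commit}'s evidence-sequence certificate is untouched while no future-risk claim can be distribution-free. Your extra handling of randomized rules, non-committing paths and degenerate losses is fine. For stream one, use the perfectly-predicted continuation rather than ``the label law already in force,'' because the latter does not by itself guarantee future risk at most $\tau_{\rm risk}$.
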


\begin{proof}
Fix the realized finite history that triggers commitment. Extend it in one world with outcomes predicted perfectly by the snapshot and in the other with outcomes chosen to maximize its bounded loss. The learner has identical information at commitment in both worlds.
\end{proof}

\begin{proposition}[Curvature obstruction]
For every linear operator $J$ with nontrivial nullspace and every $H>0$, there is a smooth behavior map into an enlarged Hilbert space whose derivative at zero has first component $J$ and whose change along every $d\in\Null(J)$ equals $H\norm d^2/2$ in an orthogonal output coordinate.
\end{proposition}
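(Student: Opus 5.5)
The construction will be explicit. Write $\mathcal H'=\Hh\oplus\R$ and define
\[
\Phi(\theta)=\Bigl(J\theta,\ \tfrac{H}{2}\norm{P\theta}^2\Bigr),
\]
where $P$ is the orthogonal projector onto $\Null(J)$.

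The first component is linear. The second is a quadratic form, so $\Phi$ is $C^\infty$ and even polynomial.

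The derivative at zero comes next. The second coordinate has gradient $H P\theta$, which vanishes at $\theta=0$. Hence
\[
D\Phi(0)=(J,0).
\]
Its first component is exactly $J$, and its second component is zero.

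Now fix any $d\in\Null(J)$. Then $Jd=0$ and $Pd=d$, so
\[
\Phi(d)-\Phi(0)=\Bigl(0,\ \tfrac{H}{2}\norm d^2\Bigr).
\]
So the change along the nullspace direction sits entirely in the added orthogonal output coordinate and equals $H\norm d^2/2$, as claimed.

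I would also state the intended consequence, matched against the paper's certificates. Take $\Pi=P$, or any projector whose range lies in $\Null(J)$. Then the first-order leakage term $\sup_{\norm v=1,\,v\in\Range(\Pi)}\norm{D\Phi(0)v}$ is zero, yet the realized behavior change is exactly $\tfrac{H}{2}s^2$ at step length $s$.

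This shows two things. First, the $\overline H_t s^2/2$ term in the retention charge $C_t(s)$ is attained, so it cannot be dropped. Second, no nonzero movement in the protected nullspace is leakage-free once curvature is present. Together these make the curvature constant in \eqref{eq:behavior-curvature} sharp.

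The one point needing care is making "nontrivial nullspace" meaningful in infinite dimensions. I would note that $\Null(J)$ is closed, since $J$ is bounded, so the orthogonal projector $P$ exists. No real obstacle remains.
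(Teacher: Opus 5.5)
Your construction is correct and is essentially the paper's: the paper takes $\widetilde\Phi(\theta)=(J\theta,(H/2)\norm\theta^2)$, and your $\norm{P\theta}^2$ in place of $\norm\theta^2$ agrees with it on $\Null(J)$. The paper's version needs no projector, so your remark about the closedness of $\Null(J)$ is unnecessary, and it is moot anyway since the parameter space is $\R^d$.
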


\begin{proof}
Use $\widetilde\Phi(\theta)=(J\theta,(H/2)\norm\theta^2)$.
\end{proof}

\begin{proposition}[Anchor-drift obstruction]\label{prop:anchor-lower}
A certificate based only on the derivative at a frozen anchor cannot generally control current leakage without a term proportional to representation drift. Specifically, for every $L>0$ and displacement $a$, the one-dimensional map
\[
\Phi(\theta)=\frac{L}{2}\theta^2
\]
has $D\Phi(0)=0$ but $|D\Phi(a)|=L|a|$. Hence any universally valid anchor-based derivative certificate must charge an order-$L\norm{\theta-\bar\theta}$ term or refresh the anchor.
\end{proposition}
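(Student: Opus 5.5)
The statement to prove is the anchor-drift obstruction (Proposition~\ref{prop:anchor-lower}). The plan has two parts: verify the explicit one-dimensional example, then turn it into the claimed lower bound on any anchor-based certificate.

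\emph{Verifying the example.} For $\Phi(\theta)=\tfrac{L}{2}\theta^2$ one computes $D\Phi(\theta)=L\theta$. Hence $D\Phi(0)=0$ and $|D\Phi(a)|=L|a|$. This part is immediate.

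\emph{Formalizing a certificate.} I will treat an anchor-based derivative certificate as any rule that bounds $\norm{D\Phi(\theta)v}$ for unit $v$. The rule may depend only on:
\begin{itemize}
\item the anchor derivative $D\Phi(\bar\theta)$,
\item declared Lipschitz information such as $L$,
\item the displacement $\theta-\bar\theta$.
\end{itemize}
This mirrors the structure of $\varepsilon_t$ in~\eqref{eq:epsilon}.

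\emph{Comparing with the zero map.} Take the anchor $\bar\theta=0$. The map $\Phi_0\equiv0$ has the same anchor derivative, namely zero, and its Jacobian is also $L$-Lipschitz. So the certificate receives identical inputs for $\Phi_0$ and for $\Phi$, and must return the same value $c(0,L,a)$ at $\theta=a$. Validity for $\Phi$ forces $c(0,L,a)\ge L|a|$. A certificate built from the anchor sketch alone contributes zero, since the sketch is computed from $D\Phi(0)=0$. Therefore the deficit $L|a|=L\norm{\theta-\bar\theta}$ must come from an explicit drift term. The alternative is to refresh the anchor to $a$, after which $D\Phi(a)$ is measured directly.

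\emph{Embedding into higher dimensions.} To place the example inside the vector setting $\R^d$, I will use $\theta\mapsto\tfrac{L}{2}\langle u,\theta\rangle^2$ for a unit vector $u$ and displacement $a$ along $u$. This reproduces the same bound and matches the Minkowski transport term $d_t^{\rm anc}$ in Theorem~\ref{thm:memory}.

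\emph{Main obstacle.} The only delicate step is making "anchor-based certificate" precise enough that the indistinguishability argument is rigorous. I will handle this by stating the admissible inputs explicitly, as in the formalization above, before running the comparison with $\Phi_0$.
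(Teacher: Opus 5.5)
Your proof is correct and follows the paper's argument. Differentiating $\Phi$ gives $D\Phi(0)=0$ and $|D\Phi(a)|=L|a|$, and this saturates the Lipschitz transport term $d_t^{\rm anc}$. Your explicit formalization of an anchor-based certificate, together with the comparison with $\Phi_0\equiv0$, only makes rigorous the ``hence'' clause that the paper states informally. Strictly, validity on $\Phi$ alone already forces $c(0,L,a)\ge L|a|$; $\Phi_0$ merely shows that this charge reflects pure drift rather than anchor information.
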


\begin{proof}
Differentiate the displayed map. At the anchor the sensitivity is zero, while at displacement $a$ it is $La$. This saturates the Lipschitz-Jacobian transport order used in~\eqref{eq:anchor-drift}.
\end{proof}

\begin{proposition}[Compatibility lower bound]
Every learner constrained to $\Cc_t$ has regret relative to the unconstrained optimum at least $\Gamma_t$ on round $t$. Hence $\sum_t\Gamma_t$ in~\eqref{eq:dynreg} cannot be removed.
\end{proposition}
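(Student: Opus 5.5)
The claim is a one-round lower bound, so I will prove it by comparing any feasible point with the constrained minimizer, using only the definitions preceding Theorem~\ref{thm:dynreg}. Fix a round $t$ and any learner whose round-$t$ iterate $\theta_t$ must lie in the retention-compatible set $\Cc_t$, as in Assumption~\ref{ass:convex}. The point $\theta_t^\dagger$ minimizes $\ell_t$ over $\Cc_t$; it exists because $\Cc_t$ is a nonempty closed subset of the compact set $\Kk$ and $\ell_t$ is continuous there. Hence $\ell_t(\theta_t)\ge\ell_t(\theta_t^\dagger)$ for every feasible $\theta_t$. Subtracting $\ell_t(\theta_t^*)$ from both sides gives
\[
\ell_t(\theta_t)-\ell_t(\theta_t^*)\ge\ell_t(\theta_t^\dagger)-\ell_t(\theta_t^*)=\Gamma_t .
\]
This is the per-round regret lower bound relative to the unconstrained optimum. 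Note that it needs no convexity, only feasibility and the definition of $\theta_t^\dagger$.

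Summing over $t=1,\dots,T$ shows that every retention-feasible learner has cumulative dynamic regret at least $\sum_t\Gamma_t$. This holds for every choice of $\eta$, every sequence of iterates, and every horizon.

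For the statement that $\sum_t\Gamma_t$ "cannot be removed" from~\eqref{eq:dynreg}, I will argue as follows. The remaining terms of~\eqref{eq:dynreg} can be made negligible relative to $T$: when $P_T^\dagger$ is sublinear, choosing $\eta\propto T^{-1/2}$ makes $D^2/(2\eta)+\eta G^2T/2=O(\sqrt T)$. The lower bound, by contrast, shows that any valid upper bound must still be at least $\sum_t\Gamma_t$. So no bound of the form ``a quantity independent of $\Gamma$ plus $o(\cdot)$'' can hold in general.

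To make this concrete, I will exhibit a simple instance in which $\Gamma_t$ is positive and linear in $T$. Take $\Kk=[-1,1]$, $\Cc_t=[-1,0]$, and $\ell_t(\theta)=(\theta-1)^2/2$. Then $\theta_t^*=1$, $\theta_t^\dagger=0$, and $\Gamma_t=1/2$, so the cumulative lower bound is $T/2$. This example is the only step requiring any care, namely checking that it meets Assumption~\ref{ass:convex}, including $\theta_1\in\Cc_1$; I expect no real obstacle.
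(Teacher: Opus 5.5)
Your argument is correct and is the paper's proof: feasibility $\theta_t\in\Cc_t$ and the definition of $\theta_t^\dagger$ give $\ell_t(\theta_t)\ge\ell_t(\theta_t^\dagger)=\ell_t(\theta_t^*)+\Gamma_t$, and no convexity is needed. Your explicit instance with $\Gamma_t=1/2$ and $P_T^\dagger=0$ makes the ``cannot be removed'' clause concrete, which the paper leaves implicit (one small caveat on your general remark: with $\eta\propto T^{-1/2}$ the term $(D/\eta)P_T^\dagger$ is $o(T)$ only when $P_T^\dagger=o(\sqrt T)$, so for merely sublinear $P_T^\dagger$ you would need to tune $\eta$ to $P_T^\dagger$).
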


\begin{proof}
By definition, $\ell_t(\theta)\ge\ell_t(\theta_t^\dagger)=\ell_t(\theta_t^*)+\Gamma_t$ for every $\theta\in\Cc_t$.
\end{proof}

\begin{proposition}[Reopening information lower bound]
Let validity and obsolescence generate iid laws $P_0,P_1$ with $I=D_{\rm KL}(P_1\|P_0)$, where $0<\alpha<1-\beta<1$. If a rule stops by time $N$ with probability at least $1-\beta$ under $P_1$ and at most $\alpha$ under $P_0$, then
\[
NI\ge {\rm kl}(1-\beta,\alpha).
\]
If $I=0$, reliable finite-delay reopening is impossible.
\end{proposition}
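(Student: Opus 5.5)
This is a sequential-testing lower bound, and I would prove it with the standard data-processing (change-of-measure) argument on the event that the rule stops by time $N$.

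Fix a rule, with possible internal randomization. Let $\mathcal E=\{\tau\le N\}$, which is measurable with respect to the first $N$ observations and the private randomness. Write $Q_0,Q_1$ for the joint laws of $(U,X_1,\ldots,X_N)$ under the two hypotheses, where $U$ is the private randomness. Under the hypotheses, $Q_1(\mathcal E)\ge 1-\beta$ and $Q_0(\mathcal E)\le\alpha$.

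The first step computes the divergence. Because the observations are iid and the private randomness has the same law under both hypotheses, the chain rule for KL divergence gives
\[
D_{\rm KL}(Q_1\|Q_0)=N\,D_{\rm KL}(P_1\|P_0)=NI .
\]
To make this legitimate for a rule that stops early, I would simply let it observe all $N$ samples and ignore the ones after stopping. This changes nothing about $\mathcal E$ and avoids any need for Wald's identity.

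The second step applies the data-processing inequality to the indicator $\mathbf 1_{\mathcal E}$:
\[
D_{\rm KL}(Q_1\|Q_0)\ge {\rm kl}\bigl(Q_1(\mathcal E),Q_0(\mathcal E)\bigr),
\]
where ${\rm kl}(p,q)$ is the Bernoulli divergence.

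The third, and the only mildly delicate, step is monotonicity. I need ${\rm kl}(p,q)\ge{\rm kl}(1-\beta,\alpha)$ whenever $p\ge1-\beta$ and $q\le\alpha$. This follows because ${\rm kl}(p,q)$ is nondecreasing in $p$ for $p\ge q$ and nonincreasing in $q$ for $q\le p$. Both facts come from differentiating ${\rm kl}$. The assumption $\alpha<1-\beta$ guarantees that we stay in the region $p>q$ where these monotonicities hold. Combining the three steps gives $NI\ge{\rm kl}(1-\beta,\alpha)$.

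For the final claim, suppose $I=0$. Then $P_1=P_0$, so $Q_1=Q_0$, and hence $Q_1(\mathcal E)=Q_0(\mathcal E)$. This contradicts $Q_1(\mathcal E)\ge1-\beta>\alpha\ge Q_0(\mathcal E)$. Equivalently, the right side ${\rm kl}(1-\beta,\alpha)$ is strictly positive because $1-\beta\ne\alpha$, so no finite $N$ can satisfy the bound. Either way, reliable finite-delay reopening is impossible when $I=0$.

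I expect the main obstacle to be only bookkeeping. One must handle randomized stopping rules by conditioning on $U$, or by including $U$ in the joint law as above, and one must confirm that extending to a fixed horizon $N$ leaves the stopping event unchanged.
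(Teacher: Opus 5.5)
Your proposal is correct and is essentially the paper's argument, which is the single step of applying data processing for KL divergence to the event $\{\tau\le N\}$. You additionally write out what the paper leaves implicit: the tensorization $D_{\rm KL}(Q_1\|Q_0)=NI$ (including the handling of private randomization) and the monotonicity of ${\rm kl}$ needed to pass from ${\rm kl}(Q_1(\mathcal E),Q_0(\mathcal E))$ to ${\rm kl}(1-\beta,\alpha)$.
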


\begin{proof}
Apply data processing for KL divergence to the event that the rule stops by time $N$.
\end{proof}

\begin{proposition}[Renewal zero-richness obstruction]
If $\rho_{e,n}(\gamma)=0$ for every trial, no number of samples from that renewal family can produce a $\gamma$-useful compatible direction.
\end{proposition}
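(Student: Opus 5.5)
The statement is the renewal zero-richness obstruction: if $\rho_{e,n}(\gamma)=0$ for every trial $n$, then no number of trials from that renewal family produces a $\gamma$-useful compatible direction. The plan is to unwind the definition of richness in~\eqref{eq:rho} and convert a zero conditional probability into an almost-sure statement about each trial. After that, a countable union bound over trials finishes the argument.

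\textbf{Step 1 (per trial).} By~\eqref{eq:rho} and Theorem~\ref{thm:renew-richness}, the usefulness indicator $I_{e,n}=\mathbf 1\{\norm{M_{e,n}\Pi_{e,n}g^{\rm reset}_{e,n}}\ge\gamma\}$ has conditional expectation $\rho_{e,n}(\gamma)$ given $\Ff_{e,n-1}$. If this is zero, then $\E[I_{e,n}]=\E[\rho_{e,n}(\gamma)]=0$ by the tower property. Since $I_{e,n}\ge0$, it follows that $I_{e,n}=0$ almost surely.

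\textbf{Step 2 (all trials).} Take a union over the countably many trial indices $n\ge1$. This gives
\[
\Prob(\exists n: I_{e,n}=1)\le\sum_n\Prob(I_{e,n}=1)=0.
\]
Hence, almost surely, every trial yields $\norm{r_{e,n}}<\gamma$, no matter how many trials are drawn.

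\textbf{Step 3 (no sampling algorithm helps).} Any adaptive choice of the number of trials, or a stopping rule, is still a random index within this countable family. The event that it finds a useful trial is therefore contained in the null event from Step 2.

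The only step that needs care is the "no sampling algorithm" phrasing. The key point is that the conditioning in~\eqref{eq:rho} already accounts for adaptivity: each trial's richness is defined conditionally on the full past $\Ff_{e,n-1}$. I would also remark that the statement is consistent with~\eqref{eq:no-renew}, since $\sum_n\rho_{e,n}=0$ never reaches any positive $x$.
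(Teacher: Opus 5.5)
Your proof is correct and matches the paper's argument: the paper's proof is the single observation that a countable union of probability-zero success events has probability zero. Your Step 1, which uses the tower property to turn zero conditional richness into a null event for each trial, and your Step 3, which covers adaptive or random stopping by the same union, spell out details that the paper's one-line proof leaves implicit.
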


\begin{proof}
The union of countably many probability-zero success events has probability zero.
\end{proof}

\begin{proposition}[Finite-memory obstruction]
A learner with at most $B$ persistent bits cannot exactly recall every sequence of $T>B$ independent unbiased bits.
\end{proposition}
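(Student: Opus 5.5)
This is a pigeonhole (counting) argument. I will model the learner's persistent state after processing the stream as a deterministic function of the input bits and its private randomness. First treat a deterministic learner. The state after reading $x=(x_1,\ldots,x_T)\in\{0,1\}^T$ is $\sigma(x)\in\{0,1\}^B$, and exact recall means there is a decoder $\mathrm{dec}$ with $\mathrm{dec}(\sigma(x))=x$ for every $x$. Since $|\{0,1\}^T|=2^T>2^B$, the map $\sigma$ cannot be injective. So there are $x\ne x'$ with $\sigma(x)=\sigma(x')$, and the decoder returns the same sequence on both, so it fails on at least one of them. This settles the worst-case form of the statement.

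To make the statement quantitative for independent unbiased bits, I will bound the success probability. The decoder's output set $\{\mathrm{dec}(s):s\in\{0,1\}^B\}$ contains at most $2^B$ sequences. Since $x$ is uniform on $\{0,1\}^T$, the probability that $x$ is recalled exactly is at most $2^B/2^T=2^{B-T}\le 1/2$ whenever $T>B$. A randomized learner is handled by conditioning on its private randomness $\omega$, which is independent of the stream. For each fixed $\omega$ the learner is deterministic, so the same $2^{B-T}$ bound holds, and averaging over $\omega$ preserves it. If the decoder is also allowed its own randomness, I will condition on that as well.

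The main point to check is the modelling step: every piece of information used at recall time must lie in the $B$ persistent bits. External logs and input references are excluded by the resource accounting in item (vii) of Theorem~\ref{thm:main}. Without that exclusion, someone could object that references to the stream itself are being retained. I will therefore state explicitly that the decoder may depend only on the persistent state, together with randomness independent of $x$, and cite the convention of that resource clause. An information-theoretic version, $H(x\mid\sigma)\ge T-B>0$ followed by Fano's inequality, gives the same conclusion, but the counting bound is simpler and fully sufficient.
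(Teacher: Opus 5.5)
Your proposal is correct and takes the same approach as the paper: the paper's proof is the same pigeonhole count of $2^T$ histories against at most $2^B$ persistent states, where two colliding histories must give different answers to some recall query. Your quantitative $2^{B-T}$ success bound and the conditioning on private randomness are valid strengthenings that the paper does not spell out, but the stated claim does not need them.
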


\begin{proof}
There are $2^T$ histories and at most $2^B$ persistent states, so two histories collide and require different answers to some recall query.
\end{proof}

\begin{proposition}[Finite-state lifetime-testing obstruction]\label{prop:finite-test}
Consider a detector with finitely many nonalarm states under an iid null on a finite alphabet with full support. Suppose that from every nonalarm state there exists an observation word that triggers alarm. Then the detector eventually false alarms with probability one. Consequently, a truly finite-state detector cannot have both false-alarm probability below one over an unbounded lifetime and nonzero eventual sensitivity from every state; exact anytime guarantees require growing precision, an externally supplied clock/epoch policy, or sacrificed delayed sensitivity.
\end{proposition}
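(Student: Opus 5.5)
The plan is to model the detector as a finite Markov chain on its states, with the alarm states merged into one absorbing state, and then use the reachability hypothesis to show that the probability of staying out of alarm decays geometrically. Let $\mathcal N$ be the finite set of nonalarm states and $\Sigma$ the finite alphabet. Under the iid null with full support, set $p_{\min}=\min_{a\in\Sigma}P(a)>0$. For each nonalarm state $s$, the hypothesis gives a finite word $w_s$ that drives $s$ into alarm. Take $L=\max_{s\in\mathcal N}|w_s|$, which is finite because $\mathcal N$ is finite. Pad words if needed, or simply note that once alarm is reached it stays absorbed as a false alarm. Then from any nonalarm state, the probability of reaching alarm within the next $L$ observations is at least $c:=p_{\min}^{L}>0$, since the specific word $w_s$ occurs with probability at least $p_{\min}^{|w_s|}$.

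The second step splits the horizon into consecutive blocks of length $L$. By the Markov property of the deterministic transition driven by iid inputs, conditionally on the state at the start of block $k$ being nonalarm, the probability of avoiding alarm throughout that block is at most $1-c$. Iterating gives $\Prob(\text{no alarm by time }kL)\le(1-c)^k\to0$. So the detector false alarms eventually with probability one. This is the main claim.

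For the consequence, I would argue by contraposition. Suppose a finite-state detector has lifetime false-alarm probability strictly below one under the null. Then by the first part some nonalarm state reachable with positive probability (or every state, for sensitivity "from every state") must admit no alarm-triggering word. In that state, delayed sensitivity is lost under every observation sequence. The conclusion is that exact anytime guarantees need unbounded state, meaning growing precision or an external clock, or else a sacrifice of sensitivity. This matches the $O(\log T)$ bit accounting in item (vii) of Theorem~\ref{thm:main}.

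The only delicate point, and the one I expect to be the main obstacle, is the uniformity of $c$. That uniformity needs both finiteness of $\mathcal N$ and full support on a finite alphabet. I would state explicitly that the transition rule is deterministic, or at least time-homogeneous given the state, so that the block bound holds uniformly regardless of history. If the detector uses private randomness, I would fold it into the input alphabet, assuming finitely many outcomes with positive probability.
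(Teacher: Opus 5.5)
Your proposal is correct and follows essentially the same argument as the paper: finiteness of the nonalarm states and full support give a uniform lower bound $c=p_{\min}^{L}>0$ on the probability of the selected triggering word, and splitting time into blocks of length $L$ gives survival probability at most $(1-c)^k\to 0$. Your contrapositive reading of the consequence, the restriction to states reachable with positive probability, and folding private randomness into the alphabet are sound refinements that the paper leaves implicit.
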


\begin{proof}
Because the nonalarm state set is finite, choose for each state a triggering word and let $L$ be their maximum length. Full support and finiteness give a uniform lower bound $p>0$ on the probability of the selected word from any state. Conditional on survival at the start of each successive block of length $L$, alarm occurs within the block with probability at least $p$. Hence survival through $m$ blocks is at most $(1-p)^m\to0$.
\end{proof}

The operational scope of consolidation is forced by Proposition~\ref{prop:future-risk}; the anchor-drift order is forced by Proposition~\ref{prop:anchor-lower}; the exact spectral lower bound is Theorem~\ref{thm:spectral}; the single-timescale obstruction is Theorem~\ref{thm:single}; dynamic-regret lower bounds of matching path-variation order are known in online convex optimization~\cite{yang2016,zhang2018}.

\begin{corollary}[Componentwise maximality of the universal claim]\label{cor:maximality}
No bounded task-free learner can, over every realized stream, simultaneously guarantee all of the following with the corresponding obstruction term deleted: always-correct semantic routing; future-valid consolidation from an arbitrary finite past; zero finite-rank functional leakage under nonzero plastic movement; zero nonlinear or anchor-drift damage; exact adaptation to directly incompatible objectives; finite-delay reopening at zero observational information; successful renewal at zero compatible richness; exact recall of unbounded independent information with fixed bounded-bit persistent state; and exact infinite-horizon testing with both finite state and sensitivity after arbitrary delay. Therefore every universal theorem retaining all AFM mechanisms must either display these terms, abstain, release information, or impose assumptions that make them small.
\end{corollary}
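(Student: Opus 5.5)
The plan is a proof by exhibiting, for each listed guarantee, the obstruction result stated earlier that makes it impossible once its term is deleted. The corollary is a conjunction of nine impossibility claims, so it suffices to show that \emph{each} one fails separately on some stream. A learner that guaranteed all of them simultaneously would in particular guarantee each one, so componentwise failure proves the conjunction false. I will therefore go through the list in order and match every item to a counterexample already in the paper. For each counterexample I need to check two things: the constructed stream or map is admissible for a bounded task-free learner, and it forces the displayed term or an abstention.

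The matching is as follows.
\begin{enumerate}[label=(\alph*)]
\item Always-correct semantic routing is refuted by the routing-unidentifiability proposition. Two contexts with identical observable laws give total-variation distance zero, so the error probability is at least $1/2$.
\item Future-valid consolidation from a finite past is refuted by Proposition~\ref{prop:future-risk}. Two streams agree up to the commit time and then diverge.
\item Zero finite-rank leakage under nonzero plastic movement is refuted by Theorem~\ref{thm:spectral} and its corollary. Whenever $\sigma_{r+1}(J)>0$, every $(d-r)$-dimensional plastic subspace contains a unit direction with first-order leakage at least $\sigma_{r+1}(J)$.
\item Zero nonlinear damage is refuted by the curvature obstruction, via the map $(J\theta,(H/2)\norm\theta^2)$. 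Zero anchor-drift damage is refuted by Proposition~\ref{prop:anchor-lower}.
\item Exact adaptation to directly incompatible objectives is refuted by the compatibility lower bound, which gives $\Gamma_t>0$. Contradictory duplicate-address requests, as in Theorem~\ref{thm:priority-oracle-app}, give a second, function-space form of the same obstruction.
\item Finite-delay reopening at zero information is refuted by the KL lower bound with $I=0$.
\item Renewal at zero richness is refuted by the zero-richness proposition.
\item Unbounded exact recall with bounded persistent bits is refuted by the pigeonhole finite-memory proposition.
\item Infinite-horizon testing with both finite state and delayed sensitivity is refuted by Proposition~\ref{prop:finite-test}.
\end{enumerate}

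The main obstacle is making sure each cited counterexample really applies to an \emph{arbitrary} bounded task-free learner, and not only to AFM's particular mechanism. Items (c) and (d) need the most care. For (c), I must argue that any rank-$r$ linear protection rule that moves a nonzero amount in its plastic subspace incurs leakage of order $\sigma_{r+1}(J)\norm d$ in some direction. Choosing the behavior map to be linear removes any escape through higher-order cancellation. For (d), I need the quadratic term to appear in an output coordinate that is orthogonal to the Jacobian range. Then no first-order certificate can absorb it, and no choice of direction inside $\Null(J)$ avoids it.

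The remaining items are information-theoretic or counting arguments, and their proofs are independent of the learner. The final sentence of the corollary then follows by contraposition. Any universal theorem that retains the corresponding mechanism has to handle each of these counterexample instances, and it can do so only by displaying the term, abstaining, releasing information, or adding assumptions that exclude the instance.
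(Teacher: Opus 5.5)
Your proposal is correct and follows the paper's own proof. That proof likewise goes item by item, citing the routing-unidentifiability, future-risk, spectral min-max, curvature and anchor-drift, compatibility, reopening-information, zero-richness, finite-memory and finite-state-testing obstructions, and it observes that each construction belongs to the universal stream class. Your extra remarks are consistent refinements, not a different route: taking a linear behavior map for the spectral item, placing the quadratic term in an orthogonal output coordinate, and noting the duplicate-address contradiction as a function-space form of the compatibility obstruction.
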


\begin{proof}
Apply, respectively, routing unidentifiability, Proposition~\ref{prop:future-risk}, Theorem~\ref{thm:spectral}, the curvature and anchor-drift obstructions, the compatibility lower bound, the reopening information lower bound, the zero-richness obstruction, the finite-memory counting argument, and Proposition~\ref{prop:finite-test}. Each construction is a member of the universal stream class, so deleting any associated term falsifies the corresponding universal guarantee.
\end{proof}

These results establish maximality of the theorem's structure and quantifiers, not sharpness of every numerical constant or minimax optimality for every restricted subclass.

\subsubsection{Guarantee and obstruction summary}
The integrated result couples eight mechanism families. Task-free routing is guaranteed under observable separation and otherwise retains an explicit routing-mismatch term. Consolidation couples first-crossing validation, staleness control, finite transfer service, and atomic rollback; contradictory output requests or unavailable service remain explicit obstructions. Whole-behavior memory provides deterministic derivative-leakage control subject to spectral tail, finite sampling, anchor drift, curvature, and deliberate release. Metaplastic allocation competes within the declared timescale/rank family while finite rank leaves unavoidable residual leakage. Compatible adaptation gives certified descent and projected stationarity, with direct conflict outside the protected tangent space. Reopening and route refinement require outcome and, for splitting, observable-signature information. Structural renewal is exactly function preserving at zero gate and succeeds only when compatible richness is present. Finally, the fixed structural resources require explicit finite precision over a certified horizon, and the convex branch pays the unavoidable compatibility cost in dynamic regret. These are the same mechanism-guarantee-obstruction dependencies used in the proof of the integrated theorem.

\section{Controlled causal compatibility experiment}
\subsection{Compatibility as a causal variable}

We reconstructed 750 fully unfrozen pre-update states from three systems: a CIFAR-10 convolutional network, a CIFAR-10 vision transformer, and a character-level transformer trained on WikiText-2.\citep{krizhevsky2009cifar,dosovitskiy2021vit,vaswani2017attention,merity2017pointer} Each intervention at a state began from the same complete parent checkpoint, including model parameters, optimizer state, stream position, replay reservoir, and random-number-generator state. We then generated learning signals spanning six requested compatibility levels, from the minimum attainable boundary to near-complete compatibility, using a generalized function-space eigenproblem. Realized compatibility was recomputed and used in all analyses.

Changing direction can also change the amount of finite learning available without protection. To isolate compatibility from this opportunity, we matched the genuine unrestricted decrease $\DeltaZero$ across compatibility conditions within each state. The mean within-state spread in $\DeltaZero$ was 0.27--0.28\% and never exceeded 0.40\% (Table~\ref{tab:ed-fidelity}). Each condition was then evaluated under the same absolute retention budget. The inferential unit was the seed trajectory: slopes were estimated within neural state across compatibility levels, averaged over 50 states within seed, and uncertainty was computed across independent seed effects rather than across the much larger number of method-level rows.

Compatibility causally changed persistent learning (Fig.~\ref{fig:causal}). At the intermediate retention budget $\beta=0.5$, the slope of the persistent-progress ratio $\rhoPers=\DeltaPers/\DeltaZero$ on realized $\kappaF$ for the compatible-projection branch was 1.014 in the convolutional network, 1.001 in the vision transformer, and 0.965 in the text transformer. Every seed-level slope was positive. Thus, for this branch, persistent progress increased approximately in proportion to realized compatibility. The effect was not a generic consequence of changing direction: unrestricted, distillation, and EWC branches converted compatibility less efficiently, whereas replay and DER++ showed little seed-resolved slope under the same matched test.\citep{buzzega2020der} Nor did compatibility act independently of the retention constraint. With zero retention allowance the slope was zero; as the budget relaxed, the projected slope became strongly positive, approaching one in all systems, more gradually in text (Table~\ref{tab:ed-budget-slopes}). Compatibility therefore describes available persistent-learning geometry, while the learning rule and retention budget determine how much of that geometry is actually used.

\begin{figure}[tbp]
\centering

\begin{minipage}{0.90\textwidth}
\raggedright
{\bfseries\large a}\par
\vspace{0.05em}
\centering
\includegraphics[width=\linewidth]{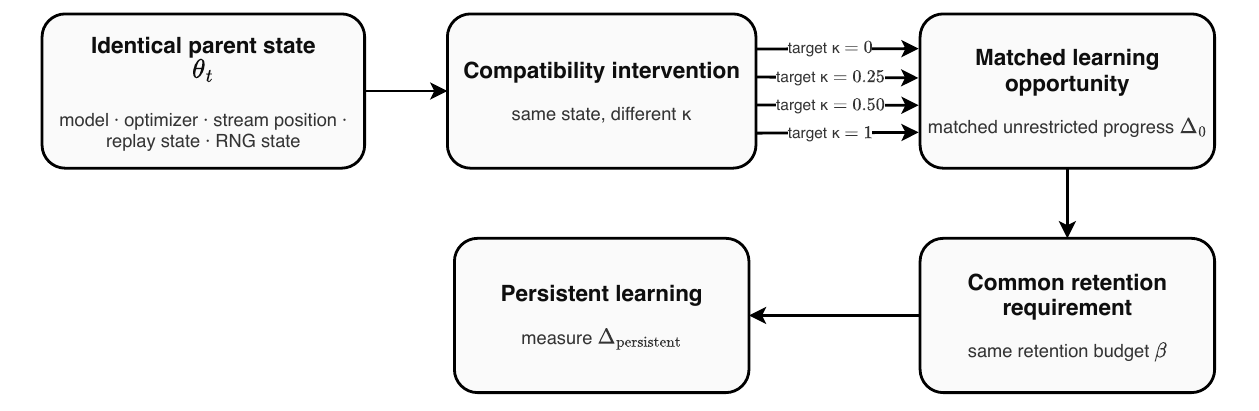}
\end{minipage}

\vspace{0.30em}

\begin{minipage}[t]{0.47\textwidth}
\raggedright
{\bfseries\large b}\par
\vspace{0.05em}
\centering
\includegraphics[width=\linewidth]{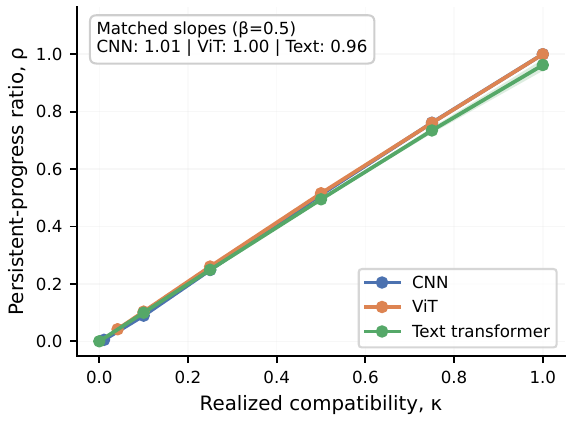}
\end{minipage}
\hfill
\begin{minipage}[t]{0.44\textwidth}
\raggedright
{\bfseries\large c}\par
\vspace{0.05em}
\centering
\includegraphics[width=\linewidth]{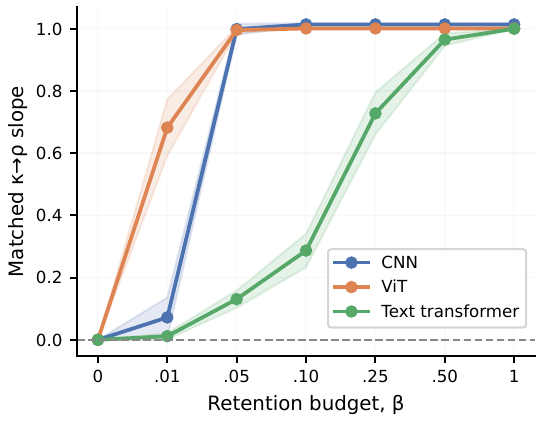}
\end{minipage}

\vspace{0.30em}

\begin{minipage}{0.64\textwidth}
\raggedright
{\bfseries\large d}\par
\vspace{0.05em}
\centering
\includegraphics[width=\linewidth]{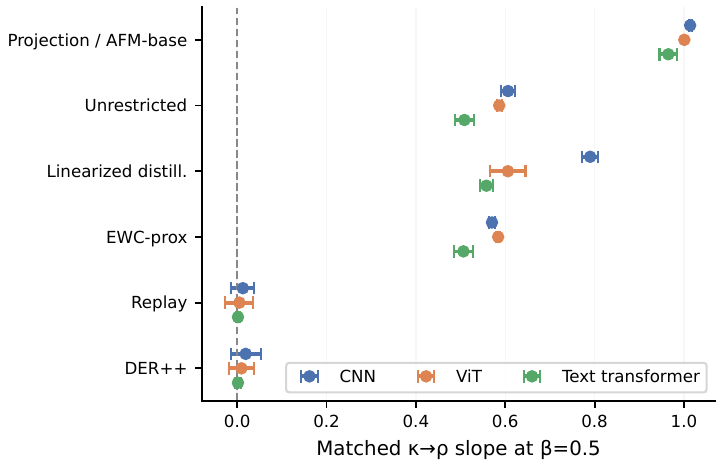}
\end{minipage}

\caption{\textbf{Functional compatibility causally controls persistent learning.}
\textbf{a}, Experimental logic. All interventions at a causal state start from the identical parent neural state, including the same model, optimizer, stream position, replay state and random-number-generator state. Functional compatibility is manipulated while unrestricted learning progress $\Delta_0$ is matched, after which the same retention requirement is applied and persistent learning is measured.
\textbf{b}, Persistent-progress ratio versus realized functional compatibility at $\beta=0.5$ in three fully unfrozen systems.
\textbf{c}, Compatibility slope as a function of retention budget.
\textbf{d}, Matched compatibility slopes for distinct learning rules at $\beta=0.5$.
For \textbf{b--d}, $n=5$ independent seed trajectories per system and 50 reconstructed causal states per seed; state-level slopes are aggregated within seed, and shaded bands and error bars show two-sided 95\% Student-$t$ intervals across the five seed effects.}
\label{fig:causal}

\end{figure}

\begin{table}[tbp]\centering\tablefont
\caption{Fidelity of the controlled compatibility intervention}\label{tab:ed-fidelity}
\begin{tabularx}{0.98\textwidth}{l *{5}{>{\centering\arraybackslash}X}}\toprule
System & Mean $\kappa$ at request 0 & Nonzero MAE & Maximum error & Mean $\Delta_0$ spread & Maximum spread\\\midrule
CNN & 0.010681 & $1.1\times10^{-5}$ & $1.16\times10^{-4}$ & 0.283\% & 0.391\%\\
ViT & 0.041266 & $4.0\times10^{-6}$ & $8.9\times10^{-5}$ & 0.282\% & 0.396\%\\
Text transformer & 0.000083 & $5.0\times10^{-6}$ & $1.30\times10^{-3}$ & 0.268\% & 0.390\%\\\bottomrule
\end{tabularx}
\vspace{1em}
\parbox{0.94\textwidth}{\tablefont Compatibility errors exclude the requested-zero condition. Within-state unrestricted-progress spread is $(\max_\kappa\Delta_0-\min_\kappa\Delta_0)/\mathrm{mean}_\kappa\Delta_0$. The zero request is a minimum-compatibility boundary condition in the two vision systems.}
\end{table}

\begin{table}[tbp]\centering\tablefont
\caption{Full compatibility slopes across retention budgets}\label{tab:ed-budget-slopes}
\begin{tabular}{rccc}\toprule
$\beta$ & CNN & ViT & Text transformer\\\midrule
0 & 0.000 [0.000,0.000] & 0.000 [0.000,0.000] & 0.000 [0.000,0.000]\\
0.01 & 0.072 [0.008,0.136] & 0.682 [0.592,0.772] & 0.012 [0.002,0.021]\\
0.05 & 0.999 [0.981,1.016] & 0.996 [0.987,1.005] & 0.131 [0.103,0.158]\\
0.10 & 1.014 [1.010,1.017] & 1.001 [1.000,1.002] & 0.287 [0.233,0.341]\\
0.25 & 1.014 [1.010,1.017] & 1.001 [1.000,1.002] & 0.728 [0.660,0.795]\\
0.50 & 1.014 [1.010,1.017] & 1.001 [1.000,1.002] & 0.965 [0.945,0.985]\\
1.00 & 1.014 [1.010,1.017] & 1.001 [1.000,1.002] & 1.000 [1.000,1.000]\\\bottomrule
\end{tabular}
\vspace{1em}
\parbox{0.94\textwidth}{\tablefont Entries are mean seed-level slopes of persistent-progress ratio on realized compatibility for the projected AFM-compatible proposal with 95\% Student-$t$ intervals.}
\end{table}

The following sections document the matched-state causal intervention, statistical analysis, natural-state validation and theorem-aligned mechanism audit used to test functional compatibility as an intervened variable.

\subsection{Purpose and scope}

The purpose of this experiment is not to add another benchmark comparison to a continual-learning method paper. It is to isolate a more general scientific question:
\begin{quote}
When the pre-update neural state, protected evidence, current inputs, and learning scale are controlled, does changing the functional compatibility of the incoming learning signal change the amount of current learning that can be stored persistently while satisfying a fixed retention criterion?
\end{quote}

The controlled experiment addresses this question causally by intervening on compatibility. The natural-state validation addresses a complementary observational question by measuring compatibility as it arises during ordinary continual learning without constructing compatibility-targeted teacher signals.

The causal compatibility study is distinct from the chronological AFM benchmark programme in both design and statistical treatment. It is not an independent laboratory replication; it is a controlled intervention study built from reconstructed states of the evaluated systems. Numerical claims are limited to procedures and quantities that were explicitly recorded or recomputed.

\subsection{Experimental systems and state reconstruction}

Three fully trainable systems are used:
\begin{table}[ht]
\centering
\caption{Experimental matrix. All neural representations remain unfrozen during the reported causal and natural-state experiments.}
\label{causal:tab:systems}
\begin{tabular}{lllll}
\toprule
System & Modality & Seeds & Causal states/seed & Natural states/seed \\
\midrule
CIFAR-10 CNN & vision & 11, 29, 47, 71, 101 & 50 & 50 \\
CIFAR-10 ViT & vision & 11, 29, 47, 71, 101 & 50 & 50 \\
Text transformer & text & 11, 29, 47, 71, 101 & 50 & 50 \\
\bottomrule
\end{tabular}
\end{table}

The two vision systems use CIFAR-10 \citep{krizhevsky2009cifar}; the vision-transformer system follows the Vision Transformer architecture \citep{dosovitskiy2021vit}. The text system uses the WikiText-2 training corpus \citep{merity2017pointer} obtained from Salesforce's published dataset distribution and is evaluated as a character-level transformer \citep{vaswani2017attention}.

For each system-seed trajectory, the experiment resumes from a saved pre-probe parent checkpoint containing the complete model, optimizer, stream position, replay reservoir and random-number-generator state. This is important for causal matching: every method and compatibility intervention at a given causal state starts from the same pre-update neural parameters and the same stored history rather than from separately evolved trajectories.

The design contains 750 causal states:
\[
3\ \text{systems}\times 5\ \text{seeds}\times 50\ \text{states}=750.
\]
Each state is evaluated under six requested compatibility conditions and seven method branches:
\[
750\times 6\times 7=31{,}500
\]
method-level causal outcomes. These 31,500 rows are repeated measurements nested within 750 causal states and 15 system-seed trajectories. They must not be treated as 31,500 statistically independent experimental units.

\subsection{Controlled causal compatibility intervention}

\subsubsection{Compatibility construction}

At a fixed causal state, the current-learning signal is altered in function space rather than by injecting an arbitrary parameter-space gradient. The implementation constructs residual modes through the generalized compatibility eigenproblem
\begin{equation}
J_c P J_c^\top r
=
\lambda J_cJ_c^\top r,
\label{causal:eq:gen-eigen}
\end{equation}
and mixes low- and high-compatibility residual modes to target
\[
\kap_{\mathrm{request}}\in\{0,\ 0.1,\ 0.25,\ 0.5,\ 0.75,\ 1\}.
\]
The actual compatibility is recomputed after construction:
\begin{equation}
\kap
=
\frac{\|Pq\|^2}{\|q\|^2}.
\label{causal:eq:kappa}
\end{equation}
All statistical analyses use this realized value rather than assuming that the requested value was achieved exactly.

The six interventions at a given state therefore differ in the direction of the current learning signal while sharing the same pre-update state, protected evidence, and current inputs. This within-state construction is the central causal control.

\subsubsection{Matching the genuine unrestricted decrease}

Changing a learning direction can also change the magnitude of the finite current-task improvement. To avoid interpreting a trivial change in unrestricted learning difficulty as a compatibility effect, the experiment matches the genuine unrestricted finite decrease across the six compatibility conditions.

For each causal state, the common target decrease was set to 50\% of the smallest already-positive unrestricted decrease across the six compatibility conditions. A one-dimensional bisection was then performed along each original direction to match that finite decrease without rotating the direction. The resulting common denominator is
\begin{equation}
\DeltaZero
=
\mathcal{L}_{\mathrm{current}}(\theta_{\mathrm{pre}})
-
\mathcal{L}_{\mathrm{current}}(\theta_{\mathrm{unrestricted}}).
\label{causal:eq:delta0}
\end{equation}

The intervention protocol additionally specifies gradient-norm matching. The causal summary used for the present numerical validation does not record the causal gradient norm, so the validation reported here verifies finite-\(\DeltaZero\) matching directly and does not claim a separate numerical revalidation of gradient-norm matching.

\subsubsection{Common retention budget}

For each causal state, the reference protected drift is
\begin{equation}
D_{\mathrm{ref}}
=
\max_{\kap}D_{\mathrm{unrestricted}}(\kap).
\label{causal:eq:dref}
\end{equation}
Every method and every compatibility condition at that state is evaluated under the same absolute budget
\begin{equation}
D
\le
\max\left(10^{-8},\beta D_{\mathrm{ref}}\right),
\label{causal:eq:budget}
\end{equation}
with
\[
\beta\in\{0,\ 0.01,\ 0.05,\ 0.1,\ 0.25,\ 0.5,\ 1\}.
\]
Thus \(\beta\) controls retention strictness independently of the requested compatibility level.

Protected drift need not be monotone in nonlinear step scale. The implementation therefore evaluates all 33 candidate endpoint scales from zero to the full proposal and selects the best feasible persistent endpoint rather than assuming that a one-dimensional monotone backtracking search is valid for every generic method.

\subsubsection{Proposal mechanisms}

Seven method labels are evaluated from the identical pre-update state:
\begin{enumerate}[leftmargin=2em]
\item unrestricted learning;
\item replay \citep{rolnick2019replay};
\item projection;
\item linearized distillation;
\item EWC-proximal updating \citep{kirkpatrick2017ewc};
\item DER++ \citep{buzzega2020der};
\item the AFM-compatible projected proposal.
\end{enumerate}

In this causal comparison, the generic AFM-compatible proposal is identical to the projection proposal family. Cross-system slope summaries therefore collapse these duplicate branches into a single projection/AFM-base family to prevent double counting.

Native AFM is evaluated separately. It uses its own accepted backtracking fraction \(\lhat\), finite functional completion, protected endpoint restoration, and deployed-update checks. Native AFM is therefore not conflated with the generic 33-scale projected frontier.

\subsection{Natural-state validation}

The natural-state validation removes the compatibility intervention entirely. It reuses the same three architectures, five seeds, and parent checkpoints, but follows the ordinary deterministic continual-learning stream with the true current labels.

Starting from each parent checkpoint, 50 pre-update states are sampled by a fixed schedule, every tenth parent step over the following 500 steps. No state is selected or rejected according to its measured compatibility. At each sampled state the experiment records:
\begin{itemize}[leftmargin=2em]
\item the supervised current gradient and its norm;
\item naturally occurring functional compatibility \(\kap\);
\item the genuine unrestricted decrease \(\DeltaZero\);
\item method-specific persistent decrease \(\DeltaP\);
\item the normalized persistent progress
\begin{equation}
\rhoP=\frac{\DeltaP}{\DeltaZero};
\label{causal:eq:rho}
\end{equation}
\item protected retention drift and retention-pass status.
\end{itemize}

Protected examples use a separate deterministic probe RNG. The validation branches are discarded after measurement, and only the ordinary supervised parent update is committed before continuing along the stream. Consequently, measurement does not alter the subsequent parent trajectory through branch-specific parameter updates or reservoir RNG consumption.

The natural validation contains
\[
3\times5\times50=750
\]
ordinary states and
\[
750\times7=5{,}250
\]
method outcomes. The stored retention tolerance is \(0.005\) for every natural-state row, and the recorded pass indicator agrees exactly with the condition \(D\le 0.005\).

\subsection{Independent statistical analysis}

\subsubsection{Inferential unit}

The experimental rows are strongly matched. Six compatibility levels share one causal state, 50 states share one seed trajectory, and five seed trajectories are available per system. Statistical significance should therefore not be calculated by treating all method-level rows as independent.

For the independent causal analysis, a compatibility slope is first computed inside every causal state:
\begin{equation}
b_s
=
\frac{
\sum_j(\kap_{sj}-\bar{\kap}_s)(\rho_{sj}-\bar{\rho}_s)
}{
\sum_j(\kap_{sj}-\bar{\kap}_s)^2
},
\label{causal:eq:state-slope}
\end{equation}
where \(j\) indexes the six compatibility interventions. The 50 state slopes are averaged within each seed. The five resulting seed-level slopes are the inferential replicates for each system. The report gives the mean seed slope and a two-sided 95\% Student-\(t\) interval with four degrees of freedom.

This procedure preserves the matched intervention structure and avoids pseudo-replication. The explicitly seed-resolved intervals are used for the primary causal inference throughout the paper. Aggregate cross-system summaries are reported separately as descriptive results.

\subsubsection{Interpretation of the normalized ratio}

The ratio \(\rhoP\) is useful because it normalizes persistent progress by the genuine same-state unrestricted decrease. It is not, however, mathematically bounded above by one for arbitrary proposal mechanisms. A method-specific direction can occasionally produce a larger finite decrease than the reference unrestricted direction.

The ratio is also numerically sensitive when \(\DeltaZero\) is very small. This matters particularly in the natural CNN data, where the minimum natural \(\DeltaZero\) is \(3.89\times10^{-4}\) and some normalized ratios become very large. Natural-state analysis therefore emphasizes medians, retention status, and AFM's pointwise theorem-aligned margin rather than interpreting the raw mean ratio alone.

\subsubsection{AFM theorem-aligned diagnostic}

For native AFM, the stored theorem-aligned reference is
\begin{equation}
\rho_{\mathrm{ref}}
=
\frac{\lhat\kap}{3}.
\label{causal:eq:theory-ref}
\end{equation}
The empirical margin is
\begin{equation}
M
=
\rhoP-\frac{\lhat\kap}{3}.
\label{causal:eq:margin}
\end{equation}
This is an empirical theorem-alignment diagnostic. It is not presented as a proof that a nonlinear neural execution satisfies every outward-certified assumption of the analytical theorem.

\subsection{Results}

\subsubsection{Intervention fidelity}

Table~\ref{causal:tab:fidelity} audits the two most important controls. For all nonzero requested compatibility levels, realized compatibility tracks the target with extremely small mean absolute error. The requested-zero condition is different: the lowest attainable mean realized compatibility is approximately \(0.0107\) for the CNN, \(0.0413\) for the ViT, and \(8.33\times10^{-5}\) for the text transformer. The zero request should therefore be described as a minimum-compatibility boundary condition, not as an exact \(\kap=0\) intervention in the two vision systems.

The second control is the finite unrestricted decrease. Within each causal state, the relative spread
\[
\frac{\max_\kap \DeltaZero-\min_\kap\DeltaZero}
{\operatorname{mean}_\kap \DeltaZero}
\]
averages only \(0.27\%\) to \(0.28\%\) and never exceeds \(0.40\%\) in any system. This is strong evidence that the six causal directions were compared at essentially matched finite unrestricted progress.

\begin{table}[ht]
\centering
\caption{Audit of causal intervention fidelity. Compatibility errors exclude the requested-zero condition. Relative $\DeltaZero$ spread is computed within each matched causal state.}
\label{causal:tab:fidelity}
\tablefont
\setlength{\tabcolsep}{3.2pt}
\begin{tabular}{lrrrrr}
\toprule
System & Mean $\kappa$ at request 0 & Nonzero MAE & Nonzero max error & Mean $\DeltaZero$ spread & Max spread \\
\midrule
CIFAR-10 CNN & 0.010681 & $1.1\times10^{-5}$ & $1.16\times10^{-4}$ & 0.283\% & 0.391\% \\
CIFAR-10 ViT & 0.041266 & $4.0\times10^{-6}$ & $8.9\times10^{-5}$ & 0.282\% & 0.396\% \\
Text transformer & 0.000083 & $5.0\times10^{-6}$ & $1.30\times10^{-3}$ & 0.268\% & 0.390\% \\
\bottomrule
\end{tabular}
\end{table}

\subsubsection{Causal effect of compatibility under an intermediate retention budget}

Table~\ref{causal:tab:slopes05} reports the independent matched seed-level slope analysis at \(\beta=0.5\). This value is used as an illustrative intermediate retention budget, not as a retrospectively declared primary endpoint.

The projected AFM-compatible proposal shows a slope very close to one in every system:
\[
1.014\quad\text{CNN},\qquad
1.001\quad\text{ViT},\qquad
0.965\quad\text{text}.
\]
All five seed-level slopes are positive in all three systems, and each system-level 95\% interval lies well above zero. The cross-system descriptive mean over the 15 system-seed slopes is \(0.993\) with a 95\% \(t\) interval of \([0.980,1.006]\).

This means that, for the projected compatible proposal under this retention budget, a one-unit increase in realized functional compatibility is accompanied by an approximately one-unit increase in the normalized amount of current learning retained persistently. The result replicates across a convolutional vision model, a vision transformer, and a text transformer with all representations trainable.

\begin{table}[ht]
\centering
\caption{Matched causal slopes of $\rhoP$ on realized $\kap$ at $\beta=0.5$. Each system interval uses five seed-level matched slopes. The final column is a descriptive mean over 15 system-seed slopes and is not a claim about an unobserved population of architectures.}
\label{causal:tab:slopes05}
\tablefont
\setlength{\tabcolsep}{4pt}
\resizebox{\textwidth}{!}{
\begin{tabular}{lcccc}
\toprule
Method & CIFAR-10 CNN & CIFAR-10 ViT & Text transformer & Cross-system \\
\midrule
Projection / AFM-base & 1.014 [1.010, 1.017] & 1.001 [1.000, 1.002] & 0.965 [0.945, 0.985] & 0.993 [0.980, 1.006] \\
Unrestricted & 0.606 [0.591, 0.622] & 0.586 [0.581, 0.592] & 0.509 [0.488, 0.529] & 0.567 [0.542, 0.592] \\
Linearized distillation & 0.790 [0.772, 0.809] & 0.606 [0.567, 0.645] & 0.558 [0.542, 0.573] & 0.651 [0.593, 0.710] \\
EWC-prox & 0.570 [0.564, 0.577] & 0.584 [0.580, 0.589] & 0.506 [0.485, 0.528] & 0.554 [0.533, 0.574] \\
Replay & 0.013 [-0.013, 0.039] & 0.005 [-0.026, 0.036] & 0.002 [-0.003, 0.006] & 0.007 [-0.004, 0.017] \\
DER++ & 0.019 [-0.015, 0.053] & 0.010 [-0.019, 0.038] & 0.002 [-0.004, 0.008] & 0.010 [-0.001, 0.021] \\
\bottomrule
\end{tabular}}
\end{table}

The cross-method result is informative in two directions. Unrestricted, linearized-distillation, and EWC-proximal branches also show clearly positive compatibility slopes under the common \(\beta=0.5\) retention constraint. Replay and DER++ do not show a seed-resolved positive slope in any individual system under this independent analysis. Consequently, the data do not support the statement that every continual-learning algorithm converts available compatibility into persistent progress at the same rate. They support the more precise statement that compatibility changes the available retention-constrained geometry, while proposal mechanisms differ strongly in how efficiently they exploit that geometry.

\subsubsection{Retention strength changes the observable compatibility slope}

Table~\ref{causal:tab:beta-sensitivity} shows the projected AFM-compatible proposal across all stored retention budgets. The zero-budget condition is a sanity check: persistent progress is zero and the compatibility slope is zero. For every positive \(\beta\), the matched slope is positive in all three systems. The two CIFAR-10 systems reach an approximately unit slope by \(\beta=0.05\) to \(0.10\), whereas the text transformer requires a more permissive retention budget before approaching unit conversion.

\begin{table}[ht]
\centering
\caption{Retention-budget sensitivity for the projected AFM-compatible proposal. Entries are mean matched seed slopes with 95\% Student-\(t\) intervals.}
\label{causal:tab:beta-sensitivity}
\tablefont
\begin{tabular}{rccc}
\toprule
\(\beta\) & CIFAR-10 CNN & CIFAR-10 ViT & Text transformer \\
\midrule
0 & 0.000 [0.000, 0.000] & 0.000 [0.000, 0.000] & 0.000 [0.000, 0.000] \\
0.01 & 0.072 [0.008, 0.136] & 0.682 [0.592, 0.772] & 0.012 [0.002, 0.021] \\
0.05 & 0.999 [0.981, 1.016] & 0.996 [0.987, 1.005] & 0.131 [0.103, 0.158] \\
0.10 & 1.014 [1.010, 1.017] & 1.001 [1.000, 1.002] & 0.287 [0.233, 0.341] \\
0.25 & 1.014 [1.010, 1.017] & 1.001 [1.000, 1.002] & 0.728 [0.660, 0.795] \\
0.50 & 1.014 [1.010, 1.017] & 1.001 [1.000, 1.002] & 0.965 [0.945, 0.985] \\
1.00 & 1.014 [1.010, 1.017] & 1.001 [1.000, 1.002] & 1.000 [1.000, 1.000] \\
\bottomrule
\end{tabular}
\end{table}

This interaction is scientifically important. Compatibility is not asserted to be the only determinant of persistent learning. The retention budget controls how much of an available compatible direction can be accepted. In AFM notation, this is the distinction between the compatibility factor \(\kap\) and the accepted path fraction \(\lhat\).

\subsubsection{Native AFM mechanism audit}

Native AFM is evaluated independently of the generic 33-scale projection frontier. Across all 3,750 controlled observations with requested \(\kap\in\{0.1,0.25,0.5,0.75,1\}\), persistent acceptance is 3,750/3,750. The theorem-aligned empirical margin in Eq.~\eqref{causal:eq:margin} is nonnegative in 3,750/3,750 of these accepted observations. Finite functional completion succeeds in 3,745/3,750 cases, or 99.87\%. The five unsuccessful finite completions occur in the CNN and are recorded as functional-constraint inconsistencies. No other obstruction type is recorded for these nonzero conditions.

For all successful finite completions, the stored finite current error, finite endpoint error, and protected endpoint error are \(0.0\) at the recorded precision, and the stored deployed progress ratio is exactly \(1.0\). These observations verify execution of the finite endpoint-completion mechanism on the declared finite constraints; they must not be interpreted as evidence of population retention away from the finite support.

Table~\ref{causal:tab:afm-native} gives the detailed native AFM summary.

\begin{table}[htbp]
\centering
\caption{Native AFM controlled-intervention results. ``Margin pass'' is the fraction of accepted rows with \(\rhoP-\lhat\kap/3\ge0\). ``Finite'' is successful finite completion divided by all rows at that condition.}\label{causal:tab:afm-native}
\resizebox{\textwidth}{!}{
\begin{tabular}{llrrrrrrr}
\toprule
System & Req. \(\kap\) & Realized \(\kap\) & Mean \(\rhoP\) & Mean \(\lhat\) & Min. margin & Accept & Margin pass & Finite \\
\midrule
CIFAR-10 CNN & 0.00 & 0.010681 & 0.00524 & 0.9519 & -0.003857 & 97.2\% & 78.6\% & 97.2\% \\
CIFAR-10 CNN & 0.10 & 0.100000 & 0.08937 & 1.0000 & 0.008058 & 100\% & 100\% & 99.6\% \\
CIFAR-10 CNN & 0.25 & 0.250000 & 0.24836 & 1.0000 & 0.083615 & 100\% & 100\% & 99.6\% \\
CIFAR-10 CNN & 0.50 & 0.499998 & 0.51097 & 1.0000 & 0.241686 & 100\% & 100\% & 99.6\% \\
CIFAR-10 CNN & 0.75 & 0.749999 & 0.76107 & 1.0000 & 0.407596 & 100\% & 100\% & 99.6\% \\
CIFAR-10 CNN & 1.00 & 0.999986 & 0.99974 & 1.0000 & 0.593211 & 100\% & 100\% & 99.6\% \\
\addlinespace
CIFAR-10 ViT & 0.00 & 0.041266 & 0.04206 & 1.0000 & -0.001655 & 100\% & 99.6\% & 100\% \\
CIFAR-10 ViT & 0.10 & 0.100000 & 0.10326 & 1.0000 & 0.025761 & 100\% & 100\% & 100\% \\
CIFAR-10 ViT & 0.25 & 0.250000 & 0.26034 & 1.0000 & 0.089506 & 100\% & 100\% & 100\% \\
CIFAR-10 ViT & 0.50 & 0.500000 & 0.51547 & 1.0000 & 0.274879 & 100\% & 100\% & 100\% \\
CIFAR-10 ViT & 0.75 & 0.750001 & 0.76052 & 1.0000 & 0.418082 & 100\% & 100\% & 100\% \\
CIFAR-10 ViT & 1.00 & 0.999990 & 0.99964 & 1.0000 & 0.649724 & 100\% & 100\% & 100\% \\
\addlinespace
Text transformer & 0.00 & 0.000083 & 0.00054 & 0.6902 & -0.000032 & 96.8\% & 95.0\% & 96.8\% \\
Text transformer & 0.10 & 0.100000 & 0.09963 & 1.0000 & 0.050312 & 100\% & 100\% & 100\% \\
Text transformer & 0.25 & 0.250000 & 0.24924 & 1.0000 & 0.144862 & 100\% & 100\% & 100\% \\
Text transformer & 0.50 & 0.500000 & 0.49905 & 1.0000 & 0.304095 & 100\% & 100\% & 100\% \\
Text transformer & 0.75 & 0.750000 & 0.74915 & 1.0000 & 0.474841 & 100\% & 100\% & 100\% \\
Text transformer & 1.00 & 0.999985 & 0.99991 & 1.0000 & 0.654466 & 100\% & 100\% & 100\% \\
\bottomrule
\end{tabular}}
\end{table}

The requested-zero boundary is reported separately. Among accepted requested-zero native AFM rows, 65 have a negative theorem-aligned empirical margin: 52 CNN rows, one ViT row, and 12 text rows. The negative margins are small in absolute value, with the most negative value \(-0.003857\). None of these 65 rows had certification of the relevant finite curvature or step condition, so they are not treated as certified theorem violations. The nonzero intervention levels have a uniformly nonnegative theorem-aligned empirical margin.

\subsubsection{Natural-state validation}

The natural-state experiment is intentionally not another causal sweep. Natural compatibility occupies different ranges in the three systems:
\[
\begin{aligned}
\text{CNN: }&0.067\text{ to }0.443,\qquad \operatorname{mean}=0.228,\\
\text{ViT: }&0.278\text{ to }0.618,\qquad \operatorname{mean}=0.422,\\
\text{text: }&0.903\text{ to }0.976,\qquad \operatorname{mean}=0.950.
\end{aligned}
\]
This is itself useful evidence that the compatibility regimes are architecture- and stream-dependent.

Native AFM satisfies the stored \(D\le0.005\) retention criterion on all 750 natural states. Its theorem-aligned margin is positive on all 750 observations. Table~\ref{causal:tab:natural-afm} reports the natural AFM summary.

\begin{table}[ht]
\centering
\caption{Native AFM on unmanipulated natural states. Median \(\rhoP\) is emphasized because the ratio can be heavy-tailed when \(\DeltaZero\) is small.}
\label{causal:tab:natural-afm}
\begin{tabular}{lrrrrrr}
\toprule
System & \(n\) & Mean \(\kap\) & \(\kap\) range & Mean \(\lhat\) & Median \(\rhoP\) & Min. margin \\
\midrule
CIFAR-10 CNN & 250 & 0.228 & [0.067, 0.443] & 0.552 & 0.348 & 0.0995 \\
CIFAR-10 ViT & 250 & 0.422 & [0.278, 0.618] & 0.670 & 0.277 & 0.0848 \\
Text transformer & 250 & 0.950 & [0.903, 0.976] & 0.107 & 0.119 & 0.0199 \\
\bottomrule
\end{tabular}
\end{table}

The text system illustrates why compatibility alone is not a complete predictor of realized persistent progress. Its natural compatibility is very high, approximately \(0.95\), but its mean native AFM path fraction is only \(0.107\). The resulting median persistent ratio is approximately \(0.119\). The causal and natural experiments therefore support the joint interpretation
\[
\text{persistent progress}
\quad\text{depends on}\quad
\text{available compatibility}\times\text{accepted retention-constrained path},
\]
rather than the stronger and unsupported claim that \(\kap\) alone determines \(\rhoP\) in every natural state.

The common retention criterion also distinguishes the method branches sharply in the natural experiment.

\begin{table}[ht]
\centering
\caption{Retention-pass rate on the 250 natural states per system under the stored common tolerance \(D\le0.005\). This table measures retention feasibility under this specific criterion, not general method quality.}
\label{causal:tab:natural-pass}
\begin{tabular}{lrrr}
\toprule
Method & CIFAR-10 CNN & CIFAR-10 ViT & Text transformer \\
\midrule
AFM & 100.0\% & 100.0\% & 100.0\% \\
Projection & 12.0\% & 34.8\% & 0.0\% \\
Linearized distillation & 11.6\% & 32.0\% & 0.0\% \\
EWC-prox & 0.0\% & 0.0\% & 0.0\% \\
Replay & 0.0\% & 0.0\% & 0.0\% \\
DER++ & 0.0\% & 0.0\% & 0.0\% \\
Unrestricted & 0.0\% & 0.0\% & 0.0\% \\
\bottomrule
\end{tabular}
\end{table}

These rates should not be converted into a claim that AFM universally dominates the other algorithms. The experiment applies one common strict retention criterion to branch updates generated from a common state. It shows that native AFM's own acceptance mechanism consistently produces a retention-feasible update in these sampled natural states, whereas the other stored branch proposals often do not satisfy that same absolute threshold.

\subsubsection{Local causal scale and ordinary natural scale}

The controlled intervention deliberately chooses the weakest of six unrestricted endpoints as the common finite-decrease target. This makes the causal probe local, especially in the two vision systems. Table~\ref{causal:tab:delta-scales} makes this scale difference explicit.

\begin{table}[ht]
\centering
\caption{Finite unrestricted decrease in the controlled causal experiment and in the ordinary natural-state validation.}
\label{causal:tab:delta-scales}
\begin{tabular}{lrrrr}
\toprule
System & Causal mean \(\DeltaZero\) & Causal median & Natural mean \(\DeltaZero\) & Natural median \\
\midrule
CIFAR-10 CNN & \(1.64\times10^{-6}\) & \(1.48\times10^{-6}\) & 0.1199 & 0.1204 \\
CIFAR-10 ViT & \(8.85\times10^{-6}\) & \(7.01\times10^{-6}\) & 0.1582 & 0.1535 \\
Text transformer & \(2.32\times10^{-4}\) & \(1.67\times10^{-4}\) & 0.0270 & 0.0266 \\
\bottomrule
\end{tabular}
\end{table}

This distinction is central to interpretation. The causal experiment is strong for isolating local geometry because \(\DeltaZero\) is deliberately matched. By itself it does not establish that the same quantitative slope describes large unrestricted updates. The natural-state validation operates at ordinary stream-generated decreases that are much larger, particularly for vision. Its role is therefore complementary rather than redundant.

\subsection{Statistical and scientific significance}

The strongest result is not a comparison of average test accuracy. It is the combination of intervention control, replication, and mechanism alignment.

First, realized compatibility is experimentally controllable over nearly the full \([0,1]\) range, with very small nonzero target error. Second, the finite unrestricted denominator is matched within state to substantially better than 1\% relative spread. Third, under a nontrivial common retention constraint, changing realized compatibility produces a large and precisely replicated change in persistent progress for the projected compatible learner. At \(\beta=0.5\), the slope is approximately one in all three fully unfrozen systems, including a text transformer. Fourth, the effect is not a universal property of every proposal rule: methods differ substantially in how efficiently they convert the changed geometry into a retention-feasible persistent endpoint. Fifth, native AFM's own mechanism produces a nonnegative theorem-aligned empirical margin on every nonzero controlled intervention and on every natural-state validation observation.

The most defensible central scientific statement is therefore:
\begin{quote}
Functional compatibility is a causal control variable for the retention-constrained persistent-learning frontier. The amount of compatible geometry available to the current update changes how much current progress can be retained persistently, while the realized progress also depends on the accepted path fraction and on how effectively the learning rule exploits the available compatible direction.
\end{quote}

The experiment does not support several stronger formulations. It does not establish a universal numerical law for every architecture, modality, loss, or continual-learning algorithm. It does not show that every method has a unit \(\kap\)-to-\(\rhoP\) slope. It does not show that natural \(\kap\) alone predicts \(\rhoP\) independently of \(\lhat\). It does not turn finite endpoint completion into a population-retention statement.

\subsection{Scope and limitations of the causal analysis}

The controlled intervention and its follow-up studies define a specific empirical scope. Five points are important for interpreting the results.

\begin{enumerate}[leftmargin=2em]
\item \textbf{The requested-zero condition is a minimum-compatibility boundary in the vision systems.}
Mean realized compatibility at the requested-zero condition is 0.0107 for the CNN and 0.0413 for the ViT. We therefore describe this condition as minimum compatibility rather than exact zero.

\item \textbf{The matched controlled intervention is deliberately local.}
The weakest of the six unrestricted directions determines the common positive learning target, making the matched causal probe small relative to ordinary vision updates. This is useful for causal identification but does not imply that the same normalized slope must hold at arbitrary finite step size. The multiscale study and fixed-norm bridge quantify this boundary directly.

\item \textbf{Near-zero theorem-aligned margins require the theorem conditions to be distinguished from the diagnostic quantity.}
All 65 accepted requested-zero observations with negative theorem-aligned margins lacked certification of the relevant finite curvature or step condition. Numerical verification agreed with the reported compatibility values to within $1.5883\times10^{-9}$ and with the persistent-ratio values exactly. These rows are therefore not counted as certified theorem violations. Nonzero intervention levels have uniformly nonnegative theorem-aligned empirical margins.

\item \textbf{Natural-state ratios can be unstable when unrestricted progress is small.}
In the CNN natural-state data, small values of $\DeltaZero$ can produce large ratios. Medians and absolute persistent progress are therefore reported alongside ratio summaries where this matters.

\item \textbf{Natural-state validation is observational.}
It demonstrates that compatibility, accepted path length and retention-feasible AFM updates arise during ordinary stream learning, but the controlled matched-state intervention supplies the causal identification. Complete architecture, optimizer, replay, projector and analysis configurations are available with the released code identified in the paper.
\end{enumerate}

\subsection{Controlled causal experiment synthesis}

The controlled experiment succeeds at the question it was designed to answer. It changes functional compatibility while holding the parent continual-learning state fixed, uses realized rather than requested compatibility in analysis, matches the genuine unrestricted finite decrease across compatibility conditions, and evaluates all methods under the same state-specific retention budget. The resulting projected compatibility slopes are large, reproducible across seeds, and close to one across CNN, ViT, and text-transformer systems once the retention budget permits the compatible direction to be expressed.

The natural-state validation then removes the engineered compatibility targets and demonstrates the same framework on ordinary stream states. Native AFM satisfies the stored retention criterion on all 750 natural states, while the empirical \(\rhoP-\lhat\kap/3\) margin is positive on all 750 observations. Together, the causal and natural experiments support a general retention-plasticity interpretation centered on functional compatibility, but they also show that compatibility must be considered jointly with the accepted path fraction and the update mechanism.

The most important scientific distinction is therefore between a broad geometric result and a method-specific performance claim. The evidence supports the former: functional compatibility causally changes the persistent-learning frontier under matched retention constraints. AFM supplies a theorem-aligned mechanism for measuring and exploiting that geometry.

\subsection{Complete cross-system slope summary}

For completeness, Table~\ref{causal:tab:pipeline-slopes} reports the aggregate cross-system point slopes and 95\% intervals. The primary inferential analysis in the paper uses the independently reconstructed seed-level effects, which provide the explicit inferential unit and interval construction used for the main claims.

\begin{longtable}{llrrrr}
\caption{Aggregate cross-system slopes.}\label{causal:tab:pipeline-slopes}\\
\toprule
Method & \(\beta\) & CNN & ViT & Text & Pooled \\
\midrule
\endfirsthead
\toprule
Method & \(\beta\) & CNN & ViT & Text & Pooled \\
\midrule
\endhead
Projection / AFM-base & 0.00 & 0.000 & 0.000 & -0.000 & -0.000 \\
Projection / AFM-base & 0.01 & 0.072 & 0.682 & 0.012 & 0.255 \\
Projection / AFM-base & 0.05 & 0.999 & 0.996 & 0.131 & 0.708 \\
Projection / AFM-base & 0.10 & 1.014 & 1.001 & 0.287 & 0.767 \\
Projection / AFM-base & 0.25 & 1.014 & 1.001 & 0.728 & 0.914 \\
Projection / AFM-base & 0.50 & 1.014 & 1.001 & 0.965 & 0.993 \\
Projection / AFM-base & 1.00 & 1.014 & 1.001 & 1.000 & 1.005 \\
\addlinespace
Unrestricted & 0.00 & 0.000 & 0.000 & 0.000 & 0.000 \\
Unrestricted & 0.01 & 0.055 & 0.537 & 0.010 & 0.201 \\
Unrestricted & 0.05 & 0.728 & 0.769 & 0.105 & 0.534 \\
Unrestricted & 0.10 & 0.723 & 0.743 & 0.195 & 0.554 \\
Unrestricted & 0.25 & 0.668 & 0.679 & 0.442 & 0.597 \\
Unrestricted & 0.50 & 0.606 & 0.586 & 0.509 & 0.567 \\
Unrestricted & 1.00 & 0.000 & 0.000 & 0.000 & 0.000 \\
\addlinespace
Linearized distillation & 0.00 & 0.000 & 0.000 & 0.000 & 0.000 \\
Linearized distillation & 0.01 & 0.060 & 0.552 & 0.010 & 0.208 \\
Linearized distillation & 0.05 & 0.838 & 0.826 & 0.101 & 0.589 \\
Linearized distillation & 0.10 & 0.910 & 0.833 & 0.212 & 0.652 \\
Linearized distillation & 0.25 & 0.801 & 0.689 & 0.483 & 0.658 \\
Linearized distillation & 0.50 & 0.790 & 0.606 & 0.558 & 0.651 \\
Linearized distillation & 1.00 & 0.789 & 0.597 & 0.511 & 0.632 \\
\addlinespace
EWC-prox & 0.00 & 0.000 & 0.000 & 0.000 & 0.000 \\
EWC-prox & 0.01 & 0.057 & 0.535 & 0.010 & 0.200 \\
EWC-prox & 0.05 & 0.731 & 0.768 & 0.105 & 0.535 \\
EWC-prox & 0.10 & 0.728 & 0.741 & 0.195 & 0.555 \\
EWC-prox & 0.25 & 0.678 & 0.683 & 0.443 & 0.601 \\
EWC-prox & 0.50 & 0.570 & 0.584 & 0.506 & 0.554 \\
EWC-prox & 1.00 & 0.010 & 0.007 & 0.006 & 0.008 \\
\addlinespace
Replay & 0.00 & 0.000 & 0.000 & 0.000 & 0.000 \\
Replay & 0.01 & 0.000 & 0.000 & 0.000 & 0.000 \\
Replay & 0.05 & 0.000 & 0.000 & 0.000 & 0.000 \\
Replay & 0.10 & 0.001 & 0.000 & 0.000 & 0.001 \\
Replay & 0.25 & 0.007 & 0.002 & 0.001 & 0.003 \\
Replay & 0.50 & 0.013 & 0.005 & 0.002 & 0.007 \\
Replay & 1.00 & 0.022 & 0.009 & 0.001 & 0.011 \\
\addlinespace
DER++ & 0.00 & 0.000 & 0.000 & 0.000 & 0.000 \\
DER++ & 0.01 & 0.000 & 0.000 & 0.000 & 0.000 \\
DER++ & 0.05 & 0.001 & 0.000 & 0.000 & 0.001 \\
DER++ & 0.10 & 0.003 & 0.003 & 0.001 & 0.002 \\
DER++ & 0.25 & 0.006 & 0.002 & 0.003 & 0.004 \\
DER++ & 0.50 & 0.019 & 0.010 & 0.002 & 0.010 \\
DER++ & 1.00 & 0.049 & 0.020 & 0.000 & 0.023 \\
\bottomrule
\end{longtable}

\subsection{Natural-state descriptive ratios}

Table~\ref{causal:tab:natural-rho} reports both means and medians to expose ratio skew. The unrestricted branch is exactly one by construction because it defines \(\DeltaZero\). Large positive or negative means in other CNN branches arise from a small number of states with small \(\DeltaZero\); the corresponding medians are substantially more stable.

\begin{longtable}{llrrr}
\caption{Natural-state persistent-progress ratios and retention pass rate.}\label{causal:tab:natural-rho}\\
\toprule
System & Method & Mean \(\rhoP\) & Median \(\rhoP\) & Retention pass \\
\midrule
\endfirsthead
\toprule
System & Method & Mean \(\rhoP\) & Median \(\rhoP\) & Retention pass \\
\midrule
\endhead
CIFAR-10 CNN & AFM & 0.937 & 0.348 & 100.0\% \\
CIFAR-10 CNN & Projection & 1.787 & 0.631 & 12.0\% \\
CIFAR-10 CNN & Linearized distillation & 1.789 & 0.632 & 11.6\% \\
CIFAR-10 CNN & EWC-prox & 1.075 & 1.008 & 0.0\% \\
CIFAR-10 CNN & Replay & -13.393 & -3.674 & 0.0\% \\
CIFAR-10 CNN & DER++ & -2.477 & -0.223 & 0.0\% \\
CIFAR-10 CNN & Unrestricted & 1.000 & 1.000 & 0.0\% \\
\addlinespace
CIFAR-10 ViT & AFM & 0.341 & 0.277 & 100.0\% \\
CIFAR-10 ViT & Projection & 0.503 & 0.503 & 34.8\% \\
CIFAR-10 ViT & Linearized distillation & 0.505 & 0.505 & 32.0\% \\
CIFAR-10 ViT & EWC-prox & 0.999 & 0.999 & 0.0\% \\
CIFAR-10 ViT & Replay & 0.651 & 0.748 & 0.0\% \\
CIFAR-10 ViT & DER++ & 0.929 & 0.941 & 0.0\% \\
CIFAR-10 ViT & Unrestricted & 1.000 & 1.000 & 0.0\% \\
\addlinespace
Text transformer & AFM & 0.102 & 0.119 & 100.0\% \\
Text transformer & Projection & 0.950 & 0.952 & 0.0\% \\
Text transformer & Linearized distillation & 0.952 & 0.953 & 0.0\% \\
Text transformer & EWC-prox & 0.999 & 1.000 & 0.0\% \\
Text transformer & Replay & 1.003 & 0.999 & 0.0\% \\
Text transformer & DER++ & 1.005 & 1.004 & 0.0\% \\
Text transformer & Unrestricted & 1.000 & 1.000 & 0.0\% \\
\bottomrule
\end{longtable}

\section{Generality, natural-state behaviour, and finite-scale boundary}
\subsection{Robust across directions and models}

A causal variable should not depend on one specially selected direction. We therefore generated four independent learning directions at each nonzero compatibility level. At $\beta=0.5$, the within-$\kappaF$ variation across these directions was only 3.8\%, 2.1\% and 3.0\% of the between-$\kappaF$ variation in the convolutional network, vision transformer and text transformer, respectively, while the corresponding compatibility slopes were 1.046, 1.029 and 0.986 (Fig.~2a,b). This substantially weakens the alternative explanation that the original dose response arose from a particular generalized-eigenmode construction.

We next doubled the number of seed trajectories and added a stronger vision transformer. Across ten seeds, the $\beta=0.5$ slopes were 1.015 for the convolutional network, 1.001 for the baseline vision transformer, 0.999 for the stronger vision transformer, and 0.960 for the text transformer; the descriptive equal-system pooled slope was 0.994 (Fig.~\ref{fig:generality}c; Table~\ref{tab:ed-tenseed}). A separate multiscale experiment enlarged the designed intervention while preserving the matched causal construction. Vision slopes remained near one throughout this expanded local regime (Fig.~\ref{fig:generality}d; Table~\ref{follow:tab:multiscale}). Calibration against ordinary training showed that even the largest controlled multiscale interventions remained substantially smaller than ordinary updates in vision and closer, although still smaller, in text (Fig.~\ref{fig:boundary}). The multiscale experiment therefore tests the robustness of the local causal relation, not invariance at arbitrary step magnitude.

\begin{figure}[tbp]
\centering

\begin{minipage}{0.90\textwidth}
\raggedright
{\bfseries\large a}\par
\vspace{0.15em}
\centering
\includegraphics[width=\linewidth]{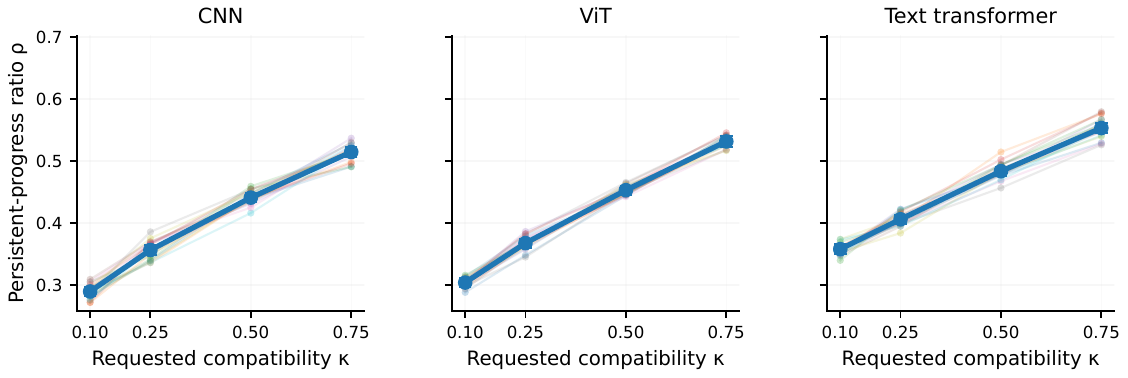}
\end{minipage}

\vspace{0.6em}

\begin{minipage}[t]{0.37\textwidth}
\raggedright
{\bfseries\large b}\par
\vspace{0.15em}
\centering
\includegraphics[width=\linewidth]{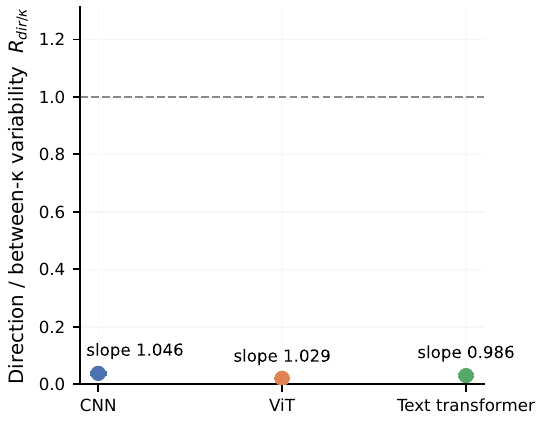}
\end{minipage}
\hfill
\begin{minipage}[t]{0.45\textwidth}
\raggedright
{\bfseries\large c}\par
\vspace{0.15em}
\centering
\includegraphics[width=\linewidth]{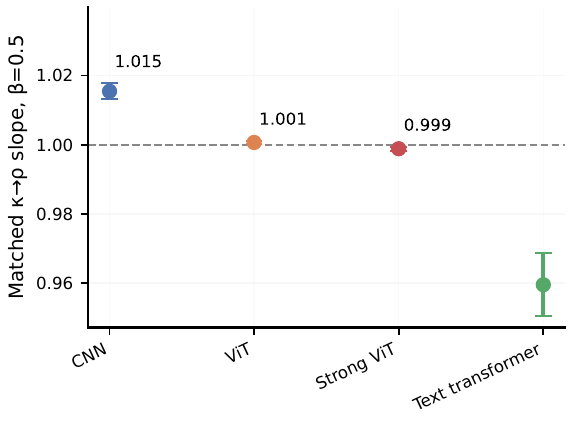}
\end{minipage}

\vspace{0.5em}

\begin{minipage}{0.55\textwidth}
\raggedright
{\bfseries\large d}\par
\vspace{0.15em}
\centering
\includegraphics[width=\linewidth]{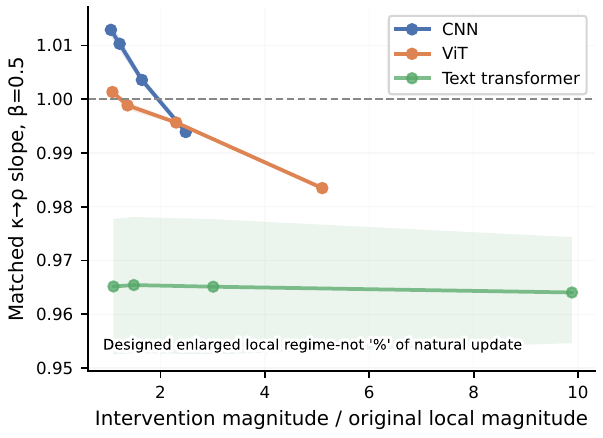}
\end{minipage}

\caption{\textbf{The compatibility effect generalizes across directions, models and local scales.}
\textbf{a}, Four independently constructed directions at each nonzero compatibility level; light traces and points show individual directions and the dark line shows the mean.
\textbf{b}, Ratio of within-compatibility directional variation to between-compatibility variation at $\beta=0.5$.
For \textbf{a,b}, $n=5$ seed trajectories per system, 50 states per seed, and four directions at each of four nonzero compatibility levels.
\textbf{c}, Ten-seed compatibility slopes for the convolutional network, baseline vision transformer, stronger vision transformer and text transformer; $n=10$ seed trajectories per system and intervals use 100,000 deterministic seed-bootstrap resamples.
\textbf{d}, Multiscale compatibility slopes in the designed local expansion; $n=5$ seed trajectories per system and 50 states per seed. The scale coordinate is an enlargement of the local intervention and is not a percentage of a natural training update.}
\label{fig:generality}
\end{figure}

\begin{table}[tbp]\centering\tablefont
\caption{Independent-direction robustness across budgets}\label{tab:ed-direction}
\begin{tabular}{lccc}\toprule
$\beta$ & CNN $R$/slope & ViT $R$/slope & Text $R$/slope\\\midrule
0.01 & 0.3718 / 0.0725 & 0.2467 / 0.2740 & 0.4353 / $-0.0066$\\
0.05 & 0.0922 / 0.6876 & 0.0409 / 0.8535 & 0.5321 / 0.0927\\
0.10 & 0.0577 / 0.8917 & 0.0220 / 0.9566 & 0.3445 / 0.3367\\
0.25 & 0.0400 / 1.0130 & 0.0254 / 1.0133 & 0.1151 / 0.8285\\
0.50 & 0.0379 / 1.0465 & 0.0206 / 1.0294 & 0.0302 / 0.9861\\
1.00 & 0.0347 / 1.0556 & 0.0205 / 1.0295 & 0.0178 / 1.0033\\\bottomrule
\end{tabular}
\vspace{1em}
\parbox{0.94\textwidth}{\tablefont $R=R_{\mathrm{dir}/\kappa}$ is within-compatibility standard deviation across four independently constructed directions divided by between-compatibility variation.}
\end{table}

\begin{table}[tbp]\centering\tablefont
\caption{Ten-seed pooled compatibility response}\label{tab:ed-tenseed}
\begin{tabular}{ccc}\toprule
$\beta$ & Equal-system pooled slope & 95\% CI\\\midrule
0.01 & 0.36331 & [0.35075,0.37667]\\
0.05 & 0.77171 & [0.76572,0.77759]\\
0.10 & 0.82114 & [0.81433,0.82756]\\
0.25 & 0.93121 & [0.92425,0.93841]\\
0.50 & 0.99360 & [0.99123,0.99600]\\
1.00 & 1.00379 & [1.00326,1.00442]\\\bottomrule
\end{tabular}
\vspace{1em}
\parbox{0.94\textwidth}{\tablefont Equal-system pooled AFM/projection slopes over ten common seed trajectories in the CNN, baseline ViT, stronger ViT and text transformer. Intervals use 100,000 deterministic seed-bootstrap resamples.}
\end{table}

\subsection{Compatibility in ordinary learning}

Compatibility also arose spontaneously during ordinary, unmanipulated learning. Across 750 natural states, mean $\kappaF$ was 0.228 in the convolutional network, 0.422 in the vision transformer, and 0.950 in the text transformer (Fig.~\ref{fig:natural}a). Native AFM satisfied the common recorded retention criterion at all 750 states. Yet the text model, despite its high compatibility, accepted a mean persistent path fraction of only 0.107 and had a median persistent-progress ratio of 0.119. The two vision systems accepted substantially larger path fractions. Natural-state behaviour is therefore consistent with the controlled causal result while demonstrating that $\kappaF$ is not a complete scalar law: even a highly compatible direction must fit inside the retention-constrained path that the learner can safely accept.

AFM provides a constructive realization of this geometry rather than only a measurement procedure (Fig.~3d). In the controlled causal suite, every nonzero native-AFM request accepted a persistent update and every such row had a nonnegative theorem-aligned empirical margin; finite completion succeeded in 3,745 of 3,750 nonzero cases. The chronological programme evaluated AFM on CORe50\citep{lomonaco2017core50}, CLEAR-10\citep{lin2021clear} and CLAD-C\citep{verwimp2023clad} over five frozen seeds per protocol. AFM produced retention-plasticity hypervolumes of 0.248, 0.381 and 0.513, respectively, and exceeded the strongest tested classical non-oracle family on every paired seed in all three protocols. Against six recent challengers under a common predictive-base and learner-visible-information interface, AFM had the highest dataset-mean primary score on CLEAR-10 and CLAD-C; FGH\citep{michel2026fgh} was the only dataset-level reversal, exceeding AFM on CORe50 by 2.8\% (Table~\ref{tab:ed-benchmarks}). On the equal-weighted three-dataset primary summary, AFM exceeded CCL-DC\citep{wang2024ccldc}, MKD\citep{michel2024mkd}, LPR\citep{yoo2024lpr}, FGH, aL-SAR\citep{seo2025budgeted} and OCAR\citep{urettini2025ocar} by 7.0\%, 11.4\%, 16.3\%, 21.7\%, 31.7\% and 36.8\%, respectively. These are not equal-compute or equal-storage claims, and the benchmark representation prefix was frozen after initial learning. They establish that the certified retention-plasticity frontier is executable and empirically competitive, while the causal experiments address the broader scientific question of why persistent learning is possible in some directions and not others.

\begin{figure}[tbp]
\centering

\begin{minipage}[t]{0.47\textwidth}
\raggedright
{\bfseries\large a}\par
\vspace{0.15em}
\centering
\includegraphics[width=\linewidth]{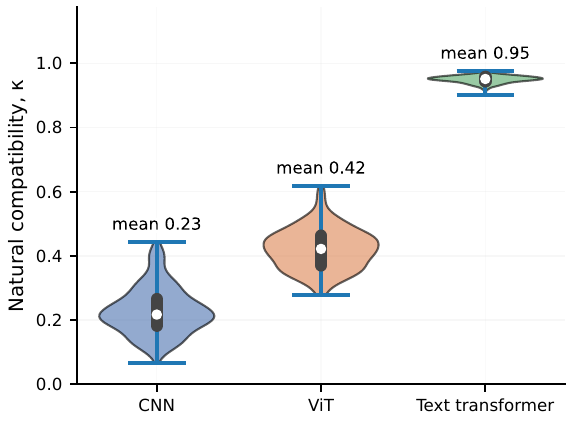}
\end{minipage}
\hfill
\begin{minipage}[t]{0.47\textwidth}
\raggedright
{\bfseries\large b}\par
\vspace{0.15em}
\centering
\includegraphics[width=\linewidth]{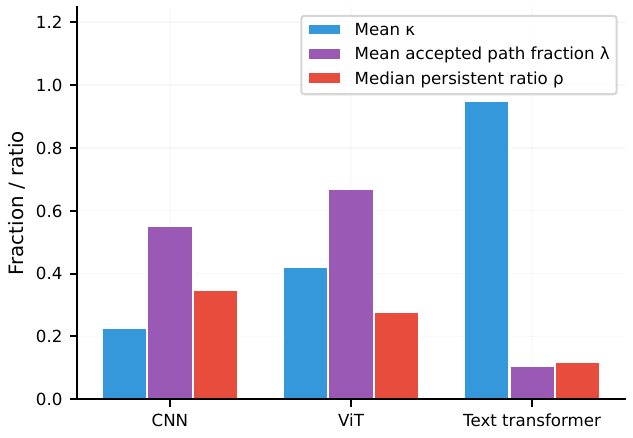}
\end{minipage}

\vspace{0.6em}

\begin{minipage}[t]{0.47\textwidth}
\raggedright
{\bfseries\large c}\par
\vspace{0.15em}
\centering
\includegraphics[width=\linewidth]{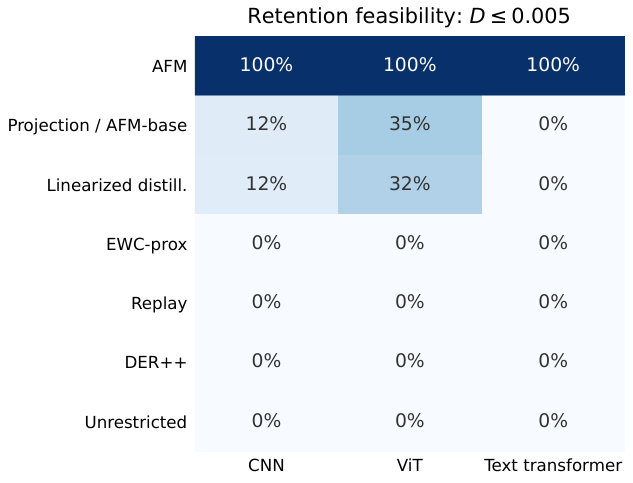}
\end{minipage}
\hfill
\begin{minipage}[t]{0.47\textwidth}
\raggedright
{\bfseries\large d}\par
\vspace{0.15em}
\centering
\includegraphics[width=\linewidth]{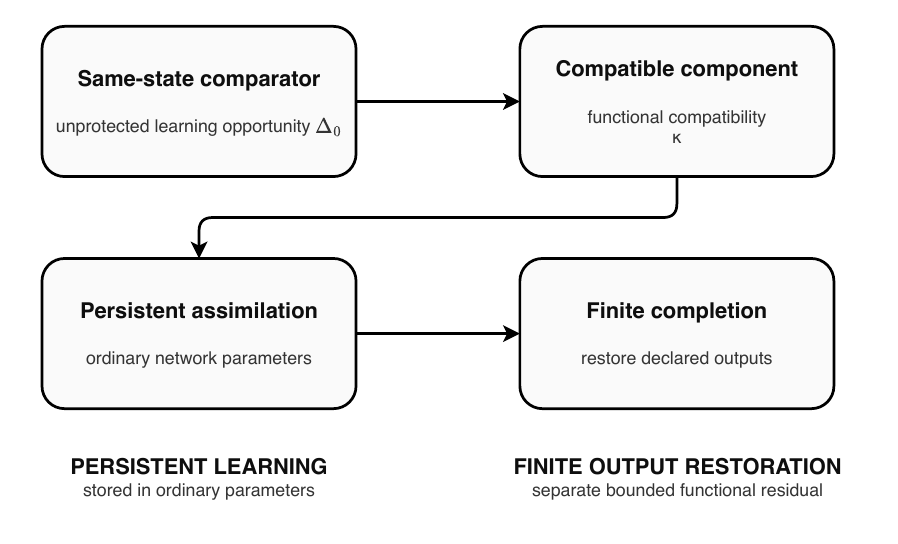}
\end{minipage}

\caption{\textbf{Functional compatibility arises during ordinary learning and AFM converts it into a protected learning frontier.}
\textbf{a}, Natural compatibility distributions across 750 unmanipulated states, comprising $n=5$ seed trajectories and 50 states per seed in each of three systems.
\textbf{b}, Natural compatibility, accepted AFM path fraction and persistent-progress ratio.
\textbf{c}, Retention-pass rates under the common recorded $D\leq0.005$ criterion; rates describe feasibility under this criterion rather than universal method quality.
\textbf{d}, Schematic of the AFM transaction separating persistent learning in ordinary network parameters from finite output restoration.}
\label{fig:natural}
\end{figure}

\begin{table}[tbp]\centering\tablefont
\caption{AFM chronological retention-plasticity benchmarks}\label{tab:ed-benchmarks}
\begin{tabular}{lrrrrrrr}\toprule
Dataset & AFM & CCL-DC & MKD & FGH & aL-SAR & LPR & OCAR\\\midrule
CORe50 & 0.2478 & 0.2016 & 0.1799 & \textbf{0.2546} & 0.1801 & 0.1640 & 0.1436\\
CLEAR-10 & \textbf{0.3807} & 0.3731 & 0.3513 & 0.3773 & 0.3572 & 0.3485 & 0.3164\\
CLAD-C & \textbf{0.5129} & 0.4921 & 0.4931 & 0.3058 & 0.3296 & 0.4694 & 0.3747\\\bottomrule
\end{tabular}
\vspace{1em}
\begin{tabular}{lrrrr}\toprule
Dataset & AFM & Strongest classical non-oracle & Mean difference & 95\% paired CI\\\midrule
CORe50 & 0.2478 & No protection 0.2388 & +0.0090 & [0.0059,0.0126]\\
CLEAR-10 & 0.3807 & No protection 0.3740 & +0.0068 & [0.0053,0.0082]\\
CLAD-C & 0.5129 & A-GEM 0.5019 & +0.0110 & [0.0054,0.0166]\\\bottomrule
\end{tabular}
\vspace{1em}
\parbox{0.94\textwidth}{\tablefont Primary score is the protocol-specific retention-plasticity hypervolume. Modern challengers use frozen method-native configurations; this comparison controls predictive base and learner-visible information but is not an equal-compute/storage study.}
\end{table}

\subsection{Nonlinear geometry sets the boundary}

The matched-$\DeltaZero$ intervention was deliberately constructed in a local regime, where the available unrestricted learning opportunity could be controlled across compatibility levels. We therefore asked how far the intervention remains realizable as update magnitude approaches ordinary training scales. We constructed a fixed-norm bridge in the two vision systems using ten seeds and 500 preselected states per architecture. At each state we attempted four compatibility levels at each of four target update norms: 1\%, 10\%, 50\%, and 100\% of the median natural unrestricted update norm. Unlike the matched-$\DeltaZero$ experiment, $\DeltaZero$ was now an outcome rather than a matched quantity. A state-scale condition was admitted only if all four same-norm directions produced positive unrestricted finite progress.

The intervention encountered a sharp nonlinear boundary (Fig.~\ref{fig:boundary}; Table~\ref{tab:ed-bridge}). At 1\% natural norm, 231 of 500 vision-transformer states across all ten seeds supported the four-level comparison, whereas only one convolutional-network state did. At 10\% and above, neither architecture supported a jointly feasible four-level continuum. Failure of the four-level comparison at these scales does not estimate a zero or negative compatibility effect; rather, the matched intervention itself becomes infeasible because finite nonlinear geometry makes one or more same-norm directions non-descending.

Where the fixed-norm bridge remained feasible, compatibility still increased the absolute amount of persistent learning. In the 231 feasible vision-transformer states at 1\% natural norm, the slope of $\DeltaPers$ on $\kappaF$ for the projection/AFM-base branch was $3.064\times10^{-4}$ with a 95\% interval of $[3.031,3.093]\times10^{-4}$. The slope of the normalized ratio $\rhoPers$, however, was $-0.572$ with an interval crossing zero because the denominator $\DeltaZero$ itself changed with compatibility. Stratifying states by the coefficient of variation of $\DeltaZero$ left the absolute persistent slope essentially unchanged while the ratio slope shifted from positive to strongly negative (Fig.~\ref{fig:boundary}). Thus the matched-$\DeltaZero$ experiment measures the fraction of an equal learning opportunity that can persist; at fixed finite norm, absolute persistent progress is the cleaner causal outcome.

\begin{figure}[tbp]
\centering

\begin{minipage}[t]{0.47\textwidth}
\raggedright
{\bfseries\large a}\par
\vspace{0.15em}
\centering
\includegraphics[width=\linewidth]{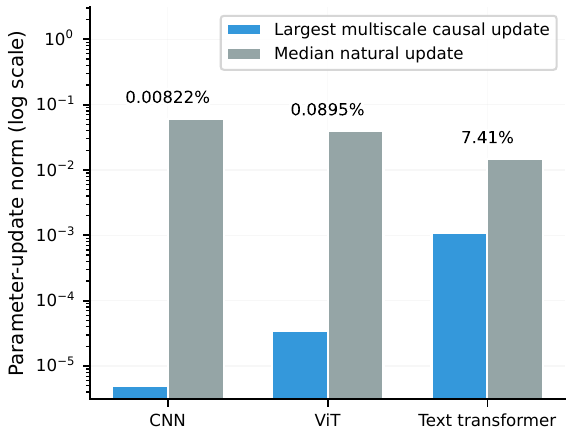}
\end{minipage}
\hfill
\begin{minipage}[t]{0.47\textwidth}
\raggedright
{\bfseries\large b}\par
\vspace{0.15em}
\centering
\includegraphics[width=\linewidth]{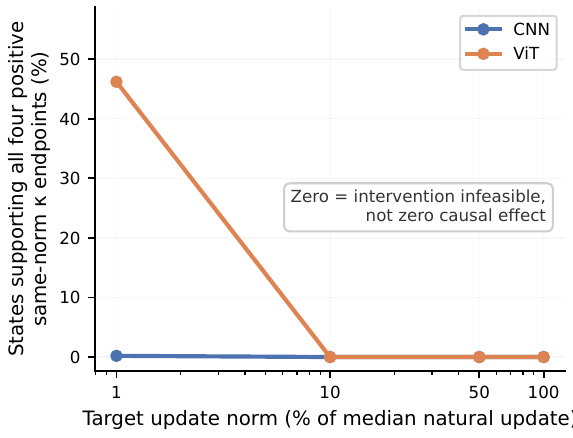}
\end{minipage}

\vspace{0.6em}

\begin{minipage}[t]{0.58\textwidth}
\raggedright
{\bfseries\large c}\par
\vspace{0.15em}
\centering
\includegraphics[width=\linewidth]{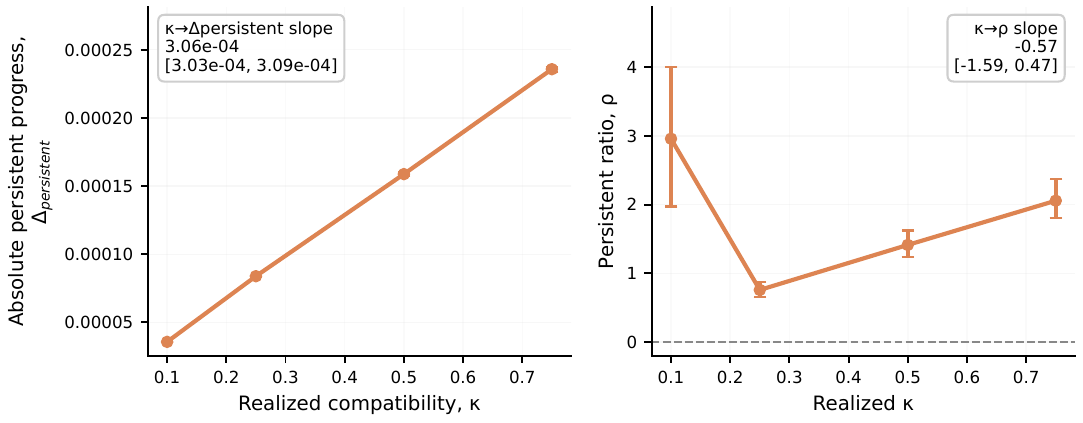}
\end{minipage}
\hfill
\begin{minipage}[t]{0.36\textwidth}
\raggedright
{\bfseries\large d}\par
\vspace{0.15em}
\centering
\includegraphics[width=\linewidth]{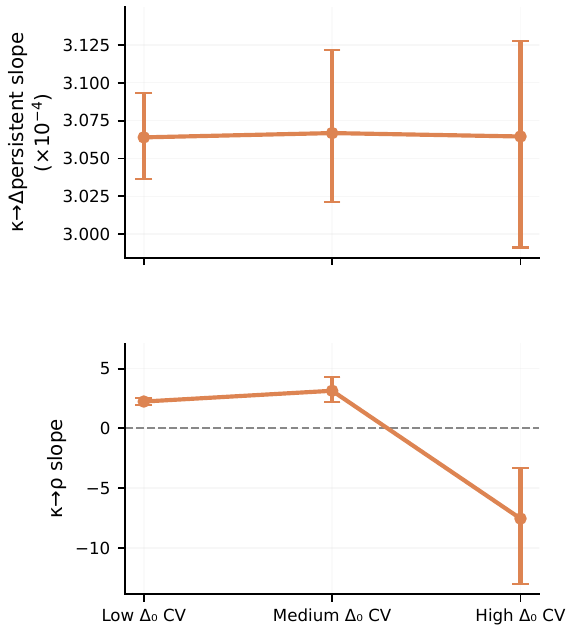}
\end{minipage}

\caption{\textbf{Nonlinear geometry sets the finite-scale boundary of compatibility-controlled learning.}
\textbf{a}, Largest designed causal update norms compared with median natural unrestricted update norms; natural calibration uses 250 ordinary states per original system.
\textbf{b}, Fraction of 500 preselected states per architecture, 10 seeds by 50 states, supporting all four positive same-norm compatibility endpoints at 1\% to 100\% of median natural update norm. Zero feasibility denotes absence of the matched four-level intervention, not a zero causal effect.
\textbf{c}, In 231 feasible vision-transformer states at 1\% natural norm, distributed across all 10 seeds, compatibility increases absolute persistent progress whereas the normalized ratio does not show a positive slope.
\textbf{d}, Stratification of the same 231 states by unrestricted-progress heterogeneity leaves the absolute persistent slope essentially invariant while the ratio slope reverses sign. Bridge intervals use $n=10$ seed-level effects and 100,000 deterministic seed-bootstrap resamples.}
\label{fig:boundary}
\end{figure}

\begin{table}[tbp]\centering\tablefont
\caption{Near-zero theorem-aligned boundary assessment}\label{tab:ed-zeroaudit}
\begin{tabular}{lr}\toprule
System & Negative requested-zero empirical margins\\\midrule
CNN & 52\\
ViT & 1\\
Text transformer & 12\\\midrule
Total & 65\\\bottomrule
\end{tabular}
\vspace{1em}
\parbox{0.94\textwidth}{\tablefont Every row lacked certification of the relevant finite curvature or step condition. Numerical verification agreed with the reported $\kappa$ values to within $1.5883\times10^{-9}$ and with the persistent-ratio values exactly.}
\end{table}

\begin{table}[tbp]\centering\tablefont
\caption{Fixed-norm bridge feasibility and decomposition}\label{tab:ed-bridge}
\begin{tabularx}{0.98\textwidth}{l >{\centering\arraybackslash}X >{\centering\arraybackslash}X >{\centering\arraybackslash}X >{\centering\arraybackslash}X}\toprule
System & Natural-norm fraction & Feasible / attempted & Seeds with feasibility & Target norm\\\midrule
CNN & 0.01 & 1/500 (0.2\%) & 1 & 0.00060570\\
CNN & 0.10-1.00 & 0/1500 & 0 & 0.00605702-0.06057018\\
ViT & 0.01 & 231/500 (46.2\%) & 10 & 0.00039144\\
ViT & 0.10-1.00 & 0/1500 & 0 & 0.00391441-0.03914410\\\bottomrule
\end{tabularx}
\vspace{1em}
\begin{tabularx}{0.98\textwidth}{>{\raggedright\arraybackslash}p{0.22\textwidth} >{\centering\arraybackslash}X >{\centering\arraybackslash}X}\toprule
Method, ViT 1\%, $\beta=0.5$ & Absolute persistent-progress slope [95\% CI] & Normalized-ratio slope [95\% CI]\\\midrule
EWC-prox & $4.3693\times10^{-5}$ [3.5585,5.1115]$\times10^{-5}$ & $-7.895$ [$-10.701$,$-5.197$]\\
Linearized distillation & $2.59767\times10^{-4}$ [2.54328,2.65080]$\times10^{-4}$ & $-3.494$ [$-5.588$,$-1.502$]\\
Projection / AFM base & $3.06428\times10^{-4}$ [3.03053,3.09322]$\times10^{-4}$ & $-0.572$ [$-1.591$,0.455]\\
Unrestricted & $4.38287\times10^{-5}$ [3.55718,5.12805]$\times10^{-5}$ & $-7.891$ [$-10.717$,$-5.190$]\\\bottomrule
\end{tabularx}
\vspace{1em}
\parbox{0.94\textwidth}{\tablefont Bridge admission requires positive unrestricted finite progress for all four same-norm compatibility directions. Across 4,000 attempted state-scale conditions, 232 were feasible. Maximum absolute compatibility error was $9.656\times10^{-6}$ and maximum relative update-norm error was $3.674\times10^{-5}$.}
\end{table}

The following sections report the prespecified robustness tests, the near-zero boundary assessment, the fixed-norm bridge, and the decomposition that distinguishes absolute persistent progress from denominator-sensitive normalized ratios.

\subsection{Purpose and scope}

These follow-up experiments test direction specificity, intervention scale, seed and architecture generality, the near-zero diagnostic boundary, and finite-norm extension toward ordinary update magnitudes. Positive, null and infeasible outcomes are reported because each defines part of the empirical scope of the compatibility claim.

The experimental claim under investigation is deliberately narrower than a universal statement that compatibility alone determines learning. The causal proposition is:
\begin{quote}
At a fixed neural state and under a fixed retention criterion, changing the functionally compatible component of the incoming learning signal changes how much new learning can be stored persistently. The magnitude of that effect depends on the retention budget and learning rule, and finite nonlinear geometry can limit how far the local compatibility intervention can be extended.
\end{quote}

The experiments below distinguish three questions:
\begin{enumerate}
    \item Is the observed effect specific to one constructed direction or does it survive independently generated directions with the same compatibility?
    \item Does the local effect persist as the intervention is enlarged, across more seeds and more architectures?
    \item What happens when the intervention is brought toward ordinary update magnitude, where finite nonlinear loss geometry is no longer negligible?
\end{enumerate}

\subsection{Notation and common estimands}

Let $q$ denote the incoming functional learning signal and let $P$ denote the projection onto the locally protected-compatible subspace. The realized compatibility used throughout the causal interventions is
\begin{equation}
    \kappa = \frac{\|Pq\|^2}{\|q\|^2}.
\end{equation}
The unrestricted finite learning opportunity from the same pre-update state is denoted
\begin{equation}
    \DeltaZero > 0,
\end{equation}
and the amount of current learning that remains stored in ordinary parameters after enforcing the retention criterion is denoted $\DeltaPers$. When the unrestricted denominator is appropriate for comparison, the normalized persistent-progress ratio is
\begin{equation}
    \rhoPers = \frac{\DeltaPers}{\DeltaZero}.
\end{equation}

The retention budget is indexed by $\beta$. In the original causal suites, the grid was
\begin{equation}
    \beta\in\{0,0.01,0.05,0.1,0.25,0.5,1\}.
\end{equation}
Unless stated otherwise, causal slopes are estimated within a fixed neural state across the compatibility intervention levels, state-level slopes are averaged within seed, and inference is performed at the seed level rather than treating states as independent replicates.

A theorem-aligned diagnostic used in the native AFM analyses is
\begin{equation}
    M = \rhoPers - \frac{\lambdahat\kappa}{3}.
\end{equation}
This quantity is an empirical alignment diagnostic. It is not interpreted as a newly certified finite nonlinear theorem bound unless the required finite curvature and step conditions have themselves been certified.

\subsection{Reference: original controlled causal experiment}

The follow-up studies were motivated by the original controlled causal experiment. That experiment used a fully unfrozen CIFAR-10 CNN, a fully unfrozen CIFAR-10 ViT, and a fully unfrozen character-level text transformer. There were five seed trajectories (11, 29, 47, 71, 101), 50 causal states per seed, and six requested compatibility levels
\begin{equation}
    \kappa^\star\in\{0,0.1,0.25,0.5,0.75,1\}.
\end{equation}
The seven method branches were AFM/projection, unrestricted, replay, linearized distillation, EWC-prox, DER++, and the native AFM mechanism. The controlled analysis contained 31,500 wide method outcomes, while the nonlinear frontier analysis contained 220,500 rows; validation reported complete expected coverage.

In this experiment, the unrestricted finite decrease $\DeltaZero$ was deliberately matched across the compatibility intervention levels within a state. Therefore $\rhoPers$ directly compared the fraction of approximately equal unrestricted learning opportunities that survived the retention requirement. At $\beta=0.5$, the matched $\kappa\mapsto\rhoPers$ slope for the projection/AFM-base branch was near one in all three systems (Table~\ref{follow:tab:original-reference}).

\begin{table}[ht]
\centering
\caption{Reference primary causal slopes at retention budget $\beta=0.5$. Intervals are two-sided 95\% Student-$t$ intervals over seed-level effects; the cross-system row is a descriptive interval over the 15 system--seed effects.}
\label{follow:tab:original-reference}
\begin{tabular}{lcc}
\toprule
System & Matched slope & 95\% CI \\
\midrule
CIFAR-10 CNN & 1.01377 & [1.01027, 1.01727] \\
CIFAR-10 ViT & 1.00082 & [0.99999, 1.00165] \\
Text transformer & 0.96497 & [0.94527, 0.98466] \\
Cross-system descriptive mean & 0.99319 & [0.98043, 1.00594] \\
\bottomrule
\end{tabular}
\end{table}

These reference results established the phenomenon in a deliberately local, $\DeltaZero$-matched regime. The four experiments below were run specifically to challenge alternative explanations and test generality.

\subsection{Experiment 1: multiscale causal compatibility}
\label{follow:sec:multiscale}

\subsubsection{Question}

The first stress test asked whether the matched compatibility effect survives when the intervention is expanded beyond the original very local operating scale. The design retained the same three systems, five seeds, seven methods, six requested compatibility levels, and seven retention budgets. Four intervention scale indices were used:
\begin{equation}
    s\in\{0.05,0.2,0.5,0.9\}.
\end{equation}
The multiscale design defines these values as a \emph{logarithmic expansion from the original local intervention to a common peak}. They are therefore intervention-scale coordinates and must not be interpreted as 5\%, 20\%, 50\%, or 90\% of a natural training update.

Validation reported 882,000 expected and observed frontier rows, 15/15 completed runs, and no validation failures.

\subsubsection{Key result}

For AFM/projection at a moderate retention budget ($\beta=0.5$), the matched compatibility slope remained close to one throughout the designed multiscale range in both vision systems (Table~\ref{follow:tab:multiscale}).

\begin{table}[ht]
\centering
\caption{AFM/projection matched $\kappa$ slopes across the designed multiscale range at $\beta=0.5$.}
\label{follow:tab:multiscale}
\begin{tabular}{lcccc}
\toprule
System & $s=0.05$ & $s=0.2$ & $s=0.5$ & $s=0.9$ \\
\midrule
CIFAR-10 CNN & 1.01290 & 1.01032 & 1.00359 & 0.99390 \\
CIFAR-10 ViT & 1.00135 & 0.99886 & 0.99567 & 0.98347 \\
\bottomrule
\end{tabular}
\end{table}

The CNN also illustrates the interaction between compatibility and the retention budget. At the tight budget $\beta=0.01$, the AFM/projection matched slope rose from 0.10272 at $s=0.05$ to 0.16003 at $s=0.2$, 0.48762 at $s=0.5$, and 0.96935 at $s=0.9$. At $\beta=0.05$, the corresponding CNN slopes were 1.01171, 1.01032, 1.00359, and 0.99390. Thus compatibility is not the sole determinant of attainable persistent progress: the path fraction permitted by the retention budget also matters.

\subsubsection{Scale calibration against natural updates}

A calibration compares the multiscale intervention norms with ordinary unrestricted update norms measured in the natural-state validation. Update norm is the meaningful cross-experiment scale quantity because the causal and natural experiments use different current objectives; their raw $\DeltaZero$ values are not directly commensurate.

At the largest multiscale index $s=0.9$, the causal updates were still much smaller than ordinary natural updates, especially in vision (Table~\ref{follow:tab:scale-calibration}).

\begin{table}[ht]
\centering
\caption{Largest designed multiscale intervention compared with the median natural unrestricted update norm.}
\label{follow:tab:scale-calibration}
\begin{tabular}{lrrr}
\toprule
System & Mean causal norm & Median natural norm & Causal/natural ratio \\
\midrule
CIFAR-10 CNN & $4.9766\times10^{-6}$ & 0.0605702 & $8.2163\times10^{-5}$ \\
CIFAR-10 ViT & $3.5026\times10^{-5}$ & 0.0391441 & $8.9479\times10^{-4}$ \\
Text transformer & $1.0863\times10^{-3}$ & 0.0146534 & 0.0741306 \\
\bottomrule
\end{tabular}
\end{table}

These correspond approximately to gaps of 12,170-fold for the CNN, 1,118-fold for the ViT, and 13.5-fold for the text transformer. The resulting scale gap motivates the fixed-norm natural-scale bridge in Section~\ref{follow:sec:bridge}.

\subsubsection{Interpretation}

The multiscale experiment supports robustness over a deliberately enlarged \emph{local} causal regime. It does not establish natural-update-scale invariance. Its principal value is that the compatibility effect is not confined to one single infinitesimal numerical setting, while the calibration makes clear where the local experiment still sits relative to ordinary training updates.

\begin{figure}[tbp]
\centering

\begin{minipage}[t]{0.47\textwidth}
\raggedright
{\bfseries\large a}\par
\vspace{0.15em}
\centering
\includegraphics[width=\linewidth]{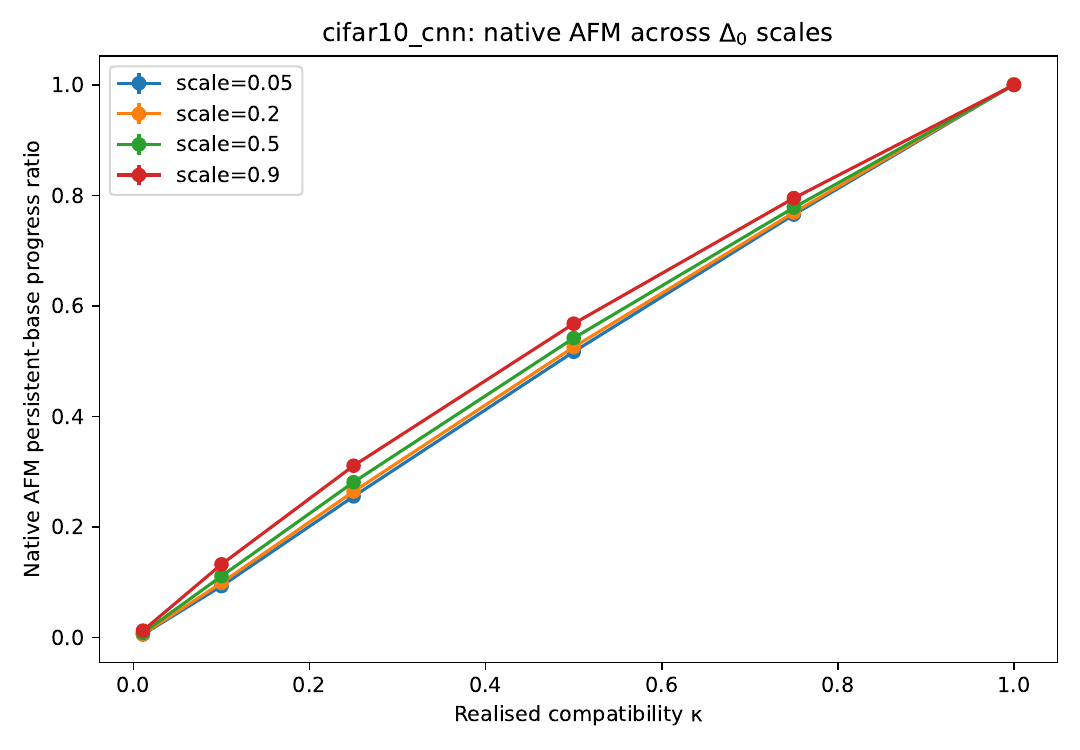}
\end{minipage}
\hfill
\begin{minipage}[t]{0.47\textwidth}
\raggedright
{\bfseries\large b}\par
\vspace{0.15em}
\centering
\includegraphics[width=\linewidth]{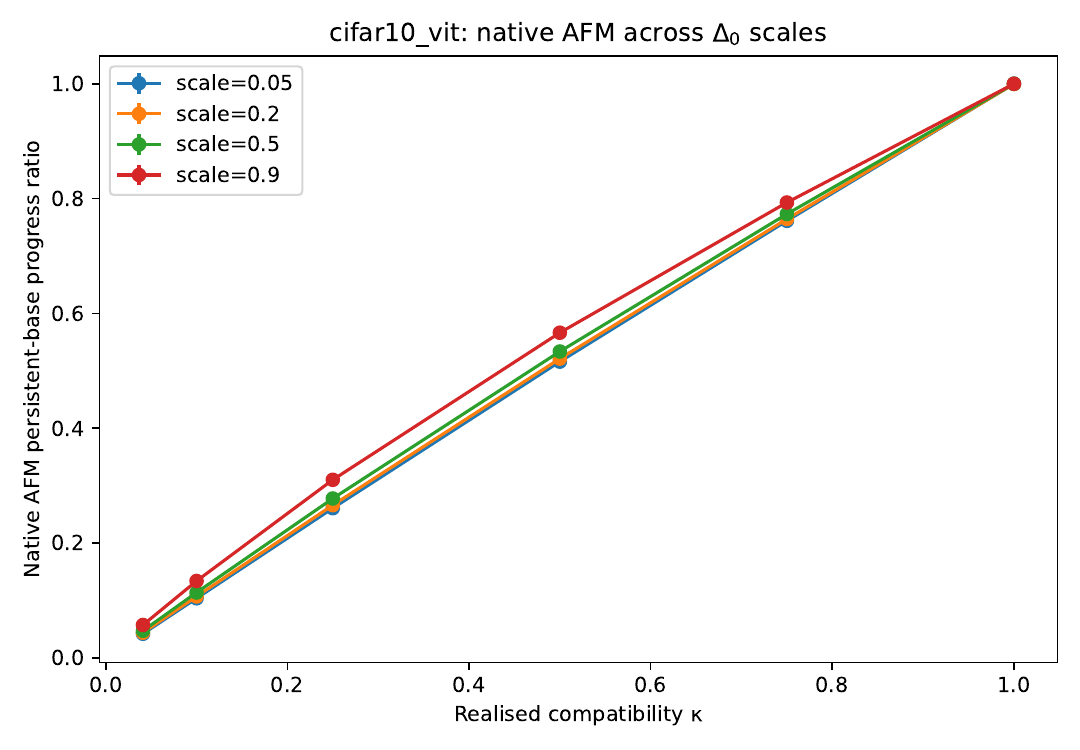}
\end{minipage}

\vspace{1em}

\begin{minipage}[t]{0.58\textwidth}
\raggedright
{\bfseries\large c}\par
\vspace{0.15em}
\centering
\includegraphics[width=\linewidth]{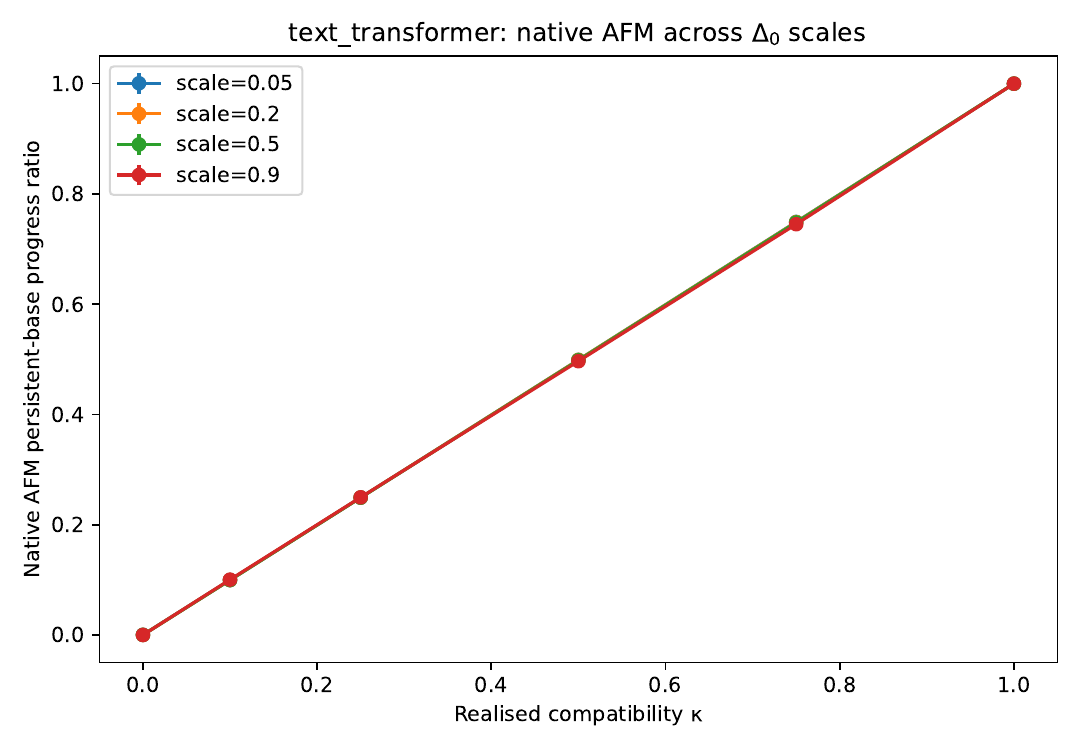}
\end{minipage}

\caption{\textbf{Native AFM multiscale dose responses.}
\textbf{a--c}, Native-AFM responses across the designed local multiscale suite for the
\textbf{a}, CIFAR-10 convolutional network,
\textbf{b}, CIFAR-10 vision transformer, and
\textbf{c}, text transformer.
Internal scale indices are logarithmic expansion coordinates and must not be interpreted as percentages of a natural update.}
\label{fig:ed-multiscale}

\end{figure}

\subsection{Experiment 2: independently constructed directions at fixed compatibility}
\label{follow:sec:directions}

\subsubsection{Question and design}

The second stress test addressed a construction-specific confound: perhaps the original dose response was a property of the particular generalized-eigenmode direction used to realize each $\kappa$, rather than of compatibility itself. Four independently constructed directions were therefore generated for each nonzero requested compatibility level
\begin{equation}
    \kappa^\star\in\{0.1,0.25,0.5,0.75\}.
\end{equation}
The experiment used the CNN, ViT, and text transformer; five seeds; all seven retention budgets; and seven method branches. Validation reported four directions per $\kappa$, 147,000 fixed groups, and 588,000 expected and observed frontier rows, with all 15 runs complete.

The principal diagnostic compares variability across independently generated directions at a fixed $\kappa$ with variability across $\kappa$. Let
\begin{equation}
    R_{\mathrm{dir}/\kappa}
    = \frac{\text{within-$\kappa$ SD across directions}}
    {\text{between-$\kappa$ SD}}.
\end{equation}
Small values indicate that changing the direction while holding compatibility fixed produces much less variation than changing compatibility itself.

\subsubsection{Results}

Table~\ref{follow:tab:independent-directions} records the AFM/projection results across retention budgets. At nontrivial budgets, the matched compatibility slope approaches one and the within-$\kappa$ directional variation is generally far smaller than the between-$\kappa$ variation. The text transformer is the most budget-limited system at tight budgets, but it approaches the same near-unit regime as the retention budget is relaxed.

\begin{table}[ht]
\centering
\caption{Independent-direction robustness for AFM/projection. Each cell gives $R_{\mathrm{dir}/\kappa}$ / matched $\kappa$ slope.}
\label{follow:tab:independent-directions}
\tablefont
\begin{tabular}{lccc}
\toprule
$\beta$ & CIFAR-10 CNN & CIFAR-10 ViT & Text transformer \\
\midrule
0.01 & 0.3718 / 0.0725 & 0.2467 / 0.2740 & 0.4353 / $-0.0066$ \\
0.05 & 0.0922 / 0.6876 & 0.0409 / 0.8535 & 0.5321 / 0.0927 \\
0.10 & 0.0577 / 0.8917 & 0.0220 / 0.9566 & 0.3445 / 0.3367 \\
0.25 & 0.0400 / 1.0130 & 0.0254 / 1.0133 & 0.1151 / 0.8285 \\
0.50 & 0.0379 / 1.0465 & 0.0206 / 1.0294 & 0.0302 / 0.9861 \\
1.00 & 0.0347 / 1.0556 & 0.0205 / 1.0295 & 0.0178 / 1.0033 \\
\bottomrule
\end{tabular}
\end{table}

At $\beta=0.5$, the 95\% seed-level confidence intervals for the matched slope were [1.03046, 1.06024] in the CNN, [1.02516, 1.03442] in the ViT, and [0.97849, 0.99454] in the text transformer. The mean fraction of states whose within-$\kappa$ direction SD was below the between-$\kappa$ variation was 1.0 for all three systems at this budget.

\subsubsection{Interpretation and reporting limitation}

This experiment substantially weakens the explanation that the effect is tied to a single specially chosen generalized-eigenmode recipe. It does not imply that all directions sharing the same $\kappa$ are equivalent; rather, at the tested local scale and ordinary retention budgets, direction-to-direction variation is much smaller than the systematic variation induced by changing compatibility.

The independent-direction analysis does not assume exact equality of realized $\kappa$, update norm or $\DeltaZero$ across the four constructed directions. The reported inference is based on the observed within-compatibility directional variation relative to the systematic between-compatibility effect.

\subsection{Experiment 3: 10-seed generality and a stronger ViT}
\label{follow:sec:generality}

\subsubsection{Question and design}

The original controlled causal experiment used five seeds. The generality experiment doubled the number of seed trajectories to ten and added a stronger CIFAR-10 ViT, yielding four systems:
\begin{enumerate}
    \item CIFAR-10 CNN,
    \item original CIFAR-10 ViT,
    \item stronger CIFAR-10 ViT,
    \item text transformer.
\end{enumerate}
The baseline and higher-capacity ViTs used the same $32\times32$ input and $4\times4$ patch size. The higher-capacity ViT increased embedding dimension from 48 to 96, depth from 3 to 6, and attention heads from 4 to 6; all remaining causal intervention and training configuration entries were unchanged. Uncertainty was summarized with a deterministic Monte Carlo seed bootstrap using 100,000 resamples.

\subsubsection{System-level results}

At $\beta=0.5$, all four AFM/projection matched slopes were strongly positive and near one (Table~\ref{follow:tab:generality-system}).

\begin{table}[ht]
\centering
\caption{Ten-seed AFM/projection matched slopes at $\beta=0.5$.}
\label{follow:tab:generality-system}
\begin{tabular}{lcc}
\toprule
System & Mean matched slope & 95\% CI \\
\midrule
CIFAR-10 CNN & 1.01544 & [1.01324, 1.01778] \\
CIFAR-10 ViT & 1.00064 & [1.00010, 1.00114] \\
Stronger CIFAR-10 ViT & 0.99883 & [0.99824, 0.99942] \\
Text transformer & 0.95950 & [0.95049, 0.96863] \\
\bottomrule
\end{tabular}
\end{table}

\subsubsection{Equal-system pooled dose response}

The four-system equal-weight pooled projection/AFM-base slope increased with the retention budget (Table~\ref{follow:tab:generality-pooled}). This pattern is consistent with a compatibility signal whose exploitable persistent fraction is itself retention-budget dependent.

\begin{table}[ht]
\centering
\caption{Equal-system pooled AFM/projection slope across ten common seeds.}
\label{follow:tab:generality-pooled}
\begin{tabular}{ccc}
\toprule
$\beta$ & Pooled slope & 95\% CI \\
\midrule
0.01 & 0.36331 & [0.35075, 0.37667] \\
0.05 & 0.77171 & [0.76572, 0.77759] \\
0.10 & 0.82114 & [0.81433, 0.82756] \\
0.25 & 0.93121 & [0.92425, 0.93841] \\
0.50 & 0.99360 & [0.99123, 0.99600] \\
1.00 & 1.00379 & [1.00326, 1.00442] \\
\bottomrule
\end{tabular}
\end{table}

\subsubsection{Interpretation}

The generality experiment addresses the small-seed concern and shows that the dose response is not specific to a single CNN, a single ViT implementation, or a single modality. The stronger ViT result is especially useful because it tests a distinct vision-transformer configuration without changing the scientific intervention. It is not a foundation-model-scale demonstration and should not be described as one.

\subsection{Experiment 4: near-zero-compatibility boundary assessment}
\label{follow:sec:kzero}

\subsubsection{Boundary cases}

Among accepted native-AFM rows near requested $\kappa=0$, 65 have a negative empirical margin relative to the coarse theorem-aligned reference $\lambdahat\kappa/3$. The minimum recorded margin is approximately $-0.003857$. Because the empirical reference is not itself a certified bound unless the relevant finite assumptions hold, these cases were assessed explicitly for certification status.

\subsubsection{Assessment result}

All 65 rows were classified as
\begin{quote}
lacking certification of the relevant finite curvature or step condition.
\end{quote}
The system breakdown is given in Table~\ref{follow:tab:kzero}.

\begin{table}[ht]
\centering
\caption{Classification of all negative theorem-aligned-margin cases near requested $\kappa=0$.}
\label{follow:tab:kzero}
\begin{tabular}{lr}
\toprule
System & Count \\
\midrule
CIFAR-10 CNN & 52 \\
CIFAR-10 ViT & 1 \\
Text transformer & 12 \\
\midrule
Total & 65 \\
\bottomrule
\end{tabular}
\end{table}

Numerical verification agreed with the reported $\kappa$ values to within $1.5883\times 10^{-9}$ and with the reported $\rhoPers$ values exactly across all 65 cases.

\subsubsection{Interpretation}

This assessment does \emph{not} prove the nonlinear theorem assumptions for these rows. Its conclusion is narrower: the negative empirical margins occurred outside the set for which the finite curvature/step assumptions had been certified. Therefore these rows cannot legitimately be counted as certified theorem violations. The quantity $\lambdahat\kappa/3$ is accordingly described as a theorem-aligned empirical reference unless the relevant assumptions are checked for that row.

\subsection{Natural-update-scale bridge}
\label{follow:sec:bridge}

\subsubsection{Motivation}

The calibration in Section~\ref{follow:sec:multiscale} showed that the largest designed causal interventions remained far below ordinary unrestricted update norm in the two vision systems. The fixed-norm bridge therefore targets update norms defined directly as fractions of the median natural unrestricted update norm. Only the CIFAR-10 CNN and existing CIFAR-10 ViT are used, because these are the systems with the largest scale gap. Ten seed trajectories and 50 preselected states per seed are used; the states are fixed independently of the resulting feasibility outcomes.

The requested compatibility levels were
\begin{equation}
    \kappa^\star\in\{0.1,0.25,0.5,0.75\},
\end{equation}
and the target update norms were
\begin{equation}
    R^\star \in \{0.01,0.1,0.5,1.0\}\times R_{\mathrm{natural,median}}.
\end{equation}
The primary intervention therefore controls two quantities:
\begin{equation}
    \kappa\approx\kappa^\star,
    \qquad
    \|d\|=R^\star.
\end{equation}
For a state-scale condition to support a four-level within-state comparison, all four same-norm compatibility interventions must yield positive unrestricted finite progress,
\begin{equation}
    \DeltaZero(\kappa)>0
    \qquad \text{for all four tested }\kappa.
\end{equation}

\subsubsection{Admission criterion}

Because the bridge fixes both update direction and update norm, finite unrestricted progress $\DeltaZero$ is an outcome of the nonlinear loss surface. A state-scale condition supports the four-level within-state comparison when all required $\kappa$ interventions have positive unrestricted finite progress at the prescribed common norm. Within-state variation in $\DeltaZero$ is reported descriptively because it can affect normalized ratios, but it is not an admission variable.

Across 4,000 attempted state-scale conditions, 232 satisfied the positive-endpoint criterion. Validation reported:
\begin{itemize}
    \item 232 feasible state-scale conditions among 4,000 attempted conditions;
    \item maximum absolute compatibility error $9.656\times10^{-6}$;
    \item maximum relative update-norm error $3.674\times10^{-5}$;
    \item 14,848 expected and observed merged frontier rows;
    \item 928 expected and observed native-AFM rows;
    \item no validation failures.
\end{itemize}

\subsubsection{Feasibility}

The feasibility result is shown in Table~\ref{follow:tab:bridge-feasibility}. The ViT supports a substantial four-level comparison at 1\% of natural update norm: 231/500 states across all ten seeds. The CNN supports only one such state. At 10\%, 50\%, and 100\% of natural update norm, neither architecture has a state for which all four target-compatibility directions remain positive same-norm finite endpoints.

\begin{table}[ht]
\centering
\caption{Fixed-norm bridge feasibility.}
\label{follow:tab:bridge-feasibility}
\tablefont
\begin{tabular}{lrrrr}
\toprule
System & Natural-norm fraction & Target norm & Feasible/attempted & Seeds with any feasible \\
\midrule
CNN & 0.01 & 0.00060570 & 1/500 (0.2\%) & 1 \\
CNN & 0.10 & 0.00605702 & 0/500 & 0 \\
CNN & 0.50 & 0.03028509 & 0/500 & 0 \\
CNN & 1.00 & 0.06057018 & 0/500 & 0 \\
ViT & 0.01 & 0.00039144 & 231/500 (46.2\%) & 10 \\
ViT & 0.10 & 0.00391441 & 0/500 & 0 \\
ViT & 0.50 & 0.01957205 & 0/500 & 0 \\
ViT & 1.00 & 0.03914410 & 0/500 & 0 \\
\bottomrule
\end{tabular}
\end{table}

Among the 231 feasible ViT states at 1\%, the mean within-state $\DeltaZero$ coefficient of variation is 0.22586 and the maximum is 0.58485. This is direct evidence that finite unrestricted progress varies appreciably across same-norm compatibility directions once the intervention reaches this regime.

\subsubsection{Why the raw ratio appears to fail}

In the 1\% ViT bridge, the raw fixed-norm $\kappa\mapsto\rhoPers$ slope for projection/native AFM is
\begin{equation}
    -0.57177, \qquad \CI=[-1.59,0.46],
\end{equation}
so the ratio does not show a positive dose response. If the analysis stopped at $\rhoPers$, the bridge would appear to undermine a naive universal claim that compatibility must always increase the normalized ratio.

However, this fixed-norm intervention does \emph{not} hold $\DeltaZero$ constant. Because
\begin{equation}
    \rhoPers=\frac{\DeltaPers}{\DeltaZero},
\end{equation}
its derivative with respect to compatibility is
\begin{equation}
    \frac{d\rhoPers}{d\kappa}
    =
    \frac{
      \DeltaZero\,\frac{d\DeltaPers}{d\kappa}
      -\DeltaPers\,\frac{d\DeltaZero}{d\kappa}
    }{\DeltaZero^2}.
\end{equation}
A flat or negative ratio slope can therefore coexist with strongly increasing absolute persistent learning if the unrestricted denominator also increases with compatibility.

\subsection{Decomposition of the fixed-norm bridge}
\label{follow:sec:decomposition}

\subsubsection{Primary decomposition}

The bridge was decomposed into three within-state effects across the four compatibility interventions:
\begin{enumerate}
    \item $\kappa\mapsto\DeltaZero$,
    \item $\kappa\mapsto\DeltaPers$,
    \item $\kappa\mapsto\rhoPers$.
\end{enumerate}
The state-level slopes were averaged within seed and summarized across the ten ViT seeds.

For native AFM/projection at 1\% natural update norm, the decomposition is:
\begin{align}
    \frac{d\DeltaZero}{d\kappa} &= 7.78981\times10^{-5},\\
    \frac{d\DeltaPers}{d\kappa} &= 3.06428\times10^{-4},
    \qquad \CI=[3.03053,3.09315]\times10^{-4},\\
    \frac{d\rhoPers}{d\kappa} &= -0.57177,
    \qquad \CI=[-1.59145,0.45542].
\end{align}
The mean persistent-update acceptance fraction is 1.0 and the mean finite-completion fraction is 0.994118.

Table~\ref{follow:tab:bridge-decomp-methods} shows that the positive absolute-persistent effect is not restricted to one method, although its magnitude is method dependent.

\begin{table}[ht]
\centering
\caption{ViT at 1\% natural update norm, $\beta=0.5$: absolute persistent-progress and normalized-ratio slopes.}
\label{follow:tab:bridge-decomp-methods}
\tablefont
\begin{tabular}{lcc}
\toprule
Method & $\kappa\mapsto\DeltaPers$ slope [95\% CI] & $\kappa\mapsto\rhoPers$ slope [95\% CI] \\
\midrule
EWC-prox & $4.3693\times10^{-5}$ [3.5585, 5.1115]$\times10^{-5}$ & $-7.895$ [$-10.701$, $-5.197$] \\
Linearized distillation & $2.59767\times10^{-4}$ [2.54328, 2.65080]$\times10^{-4}$ & $-3.494$ [$-5.588$, $-1.502$] \\
Projection / AFM base & $3.06428\times10^{-4}$ [3.03053, 3.09322]$\times10^{-4}$ & $-0.572$ [$-1.591$, 0.455] \\
Unrestricted branch & $4.38287\times10^{-5}$ [3.55718, 5.12805]$\times10^{-5}$ & $-7.891$ [$-10.717$, $-5.190$] \\
\bottomrule
\end{tabular}
\end{table}

A separate regression that adjusts for realized finite $\DeltaZero$ yields a positive compatibility coefficient for the projection branch (about 2.694 with a 95\% interval of roughly [2.36, 3.08]). This is useful as a decomposition/sensitivity analysis but should not be treated as the primary causal estimand because $\DeltaZero$ is a post-intervention quantity in the fixed-norm experiment.

\subsubsection{\texorpdfstring{$\DeltaZero$}{Delta0}-heterogeneity stratification}

The 231 feasible ViT states were divided into low-, medium-, and high-$\DeltaZero$-CV strata using the first and second tertile boundaries, 0.07620 and 0.33483. The key projection result is striking: the absolute persistent-progress slope is essentially unchanged across all three strata, while the normalized ratio slope changes dramatically and reverses sign in the high-heterogeneity stratum (Table~\ref{follow:tab:heterogeneity}).

\begin{table}[ht]
\centering
\caption{Projection branch at ViT 1\% natural update norm, stratified by $\DeltaZero$ heterogeneity. Values are identical across the tested projection retention budgets because the projection endpoint saturates to the same branch in this setting.}
\label{follow:tab:heterogeneity}
\begin{tabular}{lrrr}
\toprule
$\DeltaZero$-CV band & States & $\kappa\mapsto\DeltaPers$ slope & $\kappa\mapsto\rhoPers$ slope \\
\midrule
Low & 78 & $3.06394\times10^{-4}$ & 2.2463 \\
Medium & 77 & $3.06677\times10^{-4}$ & 3.1477 \\
High & 76 & $3.06452\times10^{-4}$ & $-7.5366$ \\
\bottomrule
\end{tabular}
\end{table}

For example, at $\beta=0.5$, the 95\% intervals for the absolute persistent-progress slope are approximately [3.03618, 3.09315]$\times10^{-4}$ in the low band, [3.02131, 3.12192]$\times10^{-4}$ in the medium band, and [2.99120, 3.12784]$\times10^{-4}$ in the high band. By contrast, the ratio-slope intervals are [1.945, 2.561], [2.192, 4.298], and [$-13.001$, $-3.271$], respectively.

\subsubsection{Interpretation}

The stratification provides a direct explanation for the apparent ratio failure. The absolute amount of persistently stored learning responds to compatibility with nearly the same slope regardless of how heterogeneous the unrestricted finite denominator is. The normalized ratio, however, is highly sensitive to denominator heterogeneity and can reverse sign even when absolute persistent progress continues to increase strongly.

This does not make $\rhoPers$ an invalid quantity in general. In the original $\DeltaZero$-matched experiment it answers a clean question: what fraction of an approximately equal unrestricted learning opportunity survives the retention constraint? In the fixed-norm bridge, by contrast, $\DeltaZero$ is itself an outcome of changing direction, so $\rhoPers$ is no longer a stable standalone causal summary.

\subsection{Finite-step feasibility boundary}
\label{follow:sec:boundary}

The natural-scale bridge also exposes a second phenomenon that is distinct from the ratio decomposition. Increasing the fixed update norm eventually destroys the ability to construct a common four-level continuum of positive-descent directions at the target compatibilities. This happens before 10\% of the median natural update norm in both tested vision systems under the present direction construction and current objective.

The correct interpretation is therefore conditional:
\begin{enumerate}
    \item \textbf{Where the four-level intervention is jointly feasible}, compatibility can be causally varied at fixed state and fixed update norm, and the ViT 1\% data show a strong positive effect on absolute persistent progress.
    \item \textbf{Where joint feasibility disappears}, there is no matched four-level causal slope to estimate. Those cases are not negative slopes; they are failures of the local compatibility continuum to remain a positive finite descent construction at that step magnitude.
\end{enumerate}

This boundary is itself informative. It separates the locally controlled functional geometry from a larger-step regime in which finite curvature and nonlinear loss-surface structure determine whether the nominally compatible direction remains a useful descent endpoint.

\subsection{Integrated interpretation of the follow-up experiments}

Taken together, the follow-up studies support the following evidence chain.

\begin{enumerate}
    \item \textbf{The effect is not tied to one intervention direction.} Four independently generated directions at each nonzero compatibility level produce much less within-$\kappa$ variation than the systematic between-$\kappa$ effect at ordinary retention budgets.
    \item \textbf{The effect is robust over a designed local multiscale range.} In the CNN and ViT, AFM/projection slopes remain near one as the intervention is enlarged within the multiscale construction.
    \item \textbf{The effect generalizes across more independent seed trajectories and architectures.} Ten-seed results in the CNN, baseline ViT, stronger ViT, and text transformer reproduce a positive dose response, with a pooled slope of 0.9936 at $\beta=0.5$.
    \item \textbf{Near-zero theorem-aligned margins are separated from certified theorem violations.} All 65 negative theorem-aligned margins occur where the relevant finite curvature/step condition is not certified; numerical verification agrees with the reported quantities to the stated precision.
    \item \textbf{At a fixed finite update norm, absolute persistent progress remains compatibility-sensitive even when the normalized ratio does not.} In 231 feasible ViT states at 1\% natural update norm, the absolute AFM/projection slope is $3.06428\times10^{-4}$ with a narrow positive confidence interval, whereas the raw ratio slope crosses zero.
    \item \textbf{Denominator heterogeneity explains the ratio instability.} Across low-, medium-, and high-$\DeltaZero$ heterogeneity strata, the absolute persistent slope is nearly invariant while the ratio slope changes from positive to strongly negative.
    \item \textbf{Finite nonlinear geometry imposes a scale boundary.} Above the 1\% target, the present four-level same-norm positive-endpoint construction becomes infeasible in the CNN and ViT; no causal slope should be claimed there.
\end{enumerate}

The combined evidence supports the following synthesis:
\begin{quote}
Functional compatibility is a causal determinant of retention-constrained persistent learning in the locally feasible regime. The amount of compatibility that can be exploited depends on the retention budget and learning rule. At finite fixed norm, compatibility continues to increase absolute persistent learning where the intervention remains feasible, but unrestricted finite progress also becomes compatibility-dependent, making the normalized ratio an unstable summary. At still larger step magnitudes, nonlinear geometry can eliminate the common positive-descent compatibility continuum itself.
\end{quote}

This synthesis is intentionally different from two stronger statements that the data do not establish: (i) that $\rhoPers$ must increase with $\kappa$ at every finite scale, or (ii) that a four-level compatibility intervention remains realizable at ordinary full natural update magnitude.

\subsection{Additional statistical considerations}

\begin{enumerate}
    \item \textbf{Seed as inferential unit.} The original matched-state, multiscale, and independent-direction analyses aggregate state-level effects within seed. The original matched-state intervals use two-sided 95\% Student-$t$ intervals across the five seed effects. The five-seed multiscale and independent-direction follow-ups use exact seed-bootstrap intervals obtained by enumerating all $5^5=3125$ ordered resamples. The generality and bridge analyses use ten seeds with 100,000 deterministic Monte Carlo bootstrap resamples where reported.
    \item \textbf{No state selection on large-scale success.} Bridge states are fixed before attempting large-scale interventions. Feasibility rates are reported over all attempted states rather than replacing failed states until a balanced successful sample is obtained.
    \item \textbf{Positive-endpoint conditioning.} A bridge slope is conditional on joint feasibility: all tested target-$\kappa$ directions must produce positive same-norm unrestricted progress. The unconditional feasibility rate is therefore a separate result and must accompany conditional slopes.
    \item \textbf{$\DeltaZero$ is post-intervention in the fixed-norm bridge.} Analyses that adjust for realized $\DeltaZero$ are decompositional/sensitivity analyses, not the primary causal estimand.
    \item \textbf{Fixed-norm inclusion criterion.} $\DeltaZero$ similarity is not an admission variable because finite unrestricted progress is an outcome at fixed direction and norm. Feasibility is defined by positive unrestricted finite progress for all tested same-norm compatibility directions and is reported over all 4,000 attempted state-scale conditions.
\end{enumerate}

\section{Experimental methods and reproducibility}

\subsection{Study design and inferential unit}
The study contains three linked experimental layers. The first is the original AFM chronological evaluation on CORe50, CLEAR-10 and CLAD-C, which establishes the framework as an executable retention-plasticity mechanism. The second is a controlled causal experiment in fully unfrozen neural networks that intervenes directly on functional compatibility. The third consists of follow-up stress tests addressing direction specificity, seed and architecture generality, local intervention scale, the near-zero diagnostic boundary and fixed-norm extension toward ordinary update magnitudes. Causal rows are repeated measurements within neural states and seed trajectories; they are never treated as independent observations merely because the row count is large. Unless otherwise stated, a slope is estimated within state across compatibility levels, state slopes are averaged within seed, and uncertainty is computed across seed-level effects.

\subsection{Causal systems and reconstruction}
The controlled experiment used a CIFAR-10 convolutional network, a CIFAR-10 vision transformer and a character-level text transformer trained on WikiText-2. All trainable representations remained unfrozen. The original causal suite used seeds 11, 29, 47, 71 and 101, with 50 causal states per seed and system, giving 750 states. A saved pre-probe parent checkpoint contained the complete model, optimizer, stream position, replay reservoir and random-number-generator state. Every method and compatibility intervention at one causal state was reconstructed from that identical parent checkpoint. The generality experiment used ten seed trajectories and added a stronger CIFAR-10 vision transformer.

\subsection{Functional compatibility intervention}
At a fixed state, the current-learning signal was modified in function space rather than by inserting an arbitrary parameter-space vector. Residual modes were constructed from the generalized eigenproblem
\begin{equation}
J_c P J_c^{\mathsf T}r=\lambda J_cJ_c^{\mathsf T}r,
\end{equation}
and low- and high-compatibility modes were mixed to target six requested levels from 0 to 1 (0, 0.1, 0.25, 0.5, 0.75 and 1). Realized compatibility was recomputed as $\kappaF=\|Pq\|^2/\|q\|^2$ and used in every analysis. The requested-zero condition is a minimum-compatibility boundary condition rather than exact zero in the two vision systems: mean realized values were 0.010681 for the convolutional network, 0.041266 for the vision transformer and $8.3\times10^{-5}$ for text. For nonzero requests, mean absolute compatibility error was at most $1.1\times10^{-5}$ across systems.

\subsection{Matching unrestricted learning}
Changing direction can change the finite improvement available without protection. In the original causal experiment, the common target decrease at each state was set to 50\% of the smallest already-positive unrestricted decrease across the six compatibility conditions. A one-dimensional bisection along each original direction matched this target without rotating the direction. The resulting denominator was
\begin{equation}
\DeltaZero=\mathcal L_{\mathrm{current}}(\theta_{\mathrm{pre}})-\mathcal L_{\mathrm{current}}(\theta_{\mathrm{unrestricted}}).
\end{equation}
The mean within-state relative spread of $\DeltaZero$ was 0.283\%, 0.282\% and 0.268\% in the convolutional network, vision transformer and text transformer, with maxima below 0.40\%. This matched-$\DeltaZero$ design makes $\rhoPers=\DeltaPers/\DeltaZero$ interpretable as the fraction of an approximately equal unrestricted learning opportunity that remains persistent.

\subsection{Retention budgets and methods}
At each causal state, the reference protected drift was the maximum unrestricted drift across compatibility interventions. Every method and compatibility level was evaluated under the same absolute budget $D\leq\max(10^{-8},\beta D_{\mathrm{ref}})$ for $\beta\in\{0,0.01,0.05,0.1,0.25,0.5,1\}$. Because nonlinear protected drift need not be monotone in step scale, generic branches evaluated 33 candidate endpoint scales from zero to the full proposal and selected the best feasible persistent endpoint. The seven evaluated branches were unrestricted learning, replay, projection, linearized distillation, EWC-proximal updating, DER++ and an AFM-compatible projected proposal. The generic AFM proposal is identical to the projection family in this controlled suite and is collapsed with it for cross-system slope summaries. Native AFM was evaluated separately with its accepted backtracking fraction, finite functional completion and endpoint checks.

\subsection{AFM framework and protected transaction}
AFM operates on a nonanticipating supervised stream without learner-visible task, session, episode, segment, intervention or boundary identifiers. It separates persistent model parameters from bounded structural state. A protected transaction begins with a genuine same-state no-protection comparator: the declared unrestricted learner is run from the identical predictive and optimizer state, with the same current minibatch, active coordinates, learning-rate and backtracking rules, mutable buffers and declared private randomness, but without retention constraints. The accepted endpoint is used only as a counterfactual reference and is never installed as the persistent base model.

Protected behaviour is represented functionally. Committed records retain bounded whole-behaviour sensitivity sketches, formed by streaming Jacobian rows through Frequent Directions,\citep{ghashami2016} rather than a permanent scalar importance value for every parameter. A finite family of timescale and spectral-rank policies predicts which protected sensitivities remain relevant. The controller is charged both for residual protected leakage and for current gradient energy blocked by protection. Task-free routing uses only declared observable signatures; outcome evidence may trigger reopening, but creation of a new observable route requires separately controlled signature evidence. When observables do not distinguish semantic states, AFM reports the resulting routing obstruction rather than using evaluator identities.

For a protected update, AFM forms the compatible projection $v_t=\Pi_t d_t^0$ of the same-state comparator. A predeclared assimilation coordinate $\eta_t$ allocates a fraction of the certified retention charge along this reference path. The safe-base operator accepts only a persistent endpoint satisfying the retention, descent, trust-region, collinearity and normalized-assimilation checks. Under the assumptions stated and proved in the methods below, $\lambda_t\geq\eta_t$ and the scalar-gradient comparator yields Eq.~\ref{eq:main-bound}. The protected projector also has an exact first-order rank-plasticity characterization: for protected Jacobian $J$ with singular values $\sigma_1\geq\cdots\geq\sigma_d$, the smallest worst-case first-order leakage over any $(d-r)$-dimensional plastic subspace is $\sigma_{r+1}(J)$.

The persistent safe-base move generally does not reproduce the unrestricted finite endpoint. AFM therefore performs a second, explicitly separate function-space operation. A bounded compact-cardinal residual restores active finite protected outputs, safeguards unselected certified candidates, serves a selected finite transfer target when requested, and completes the current minibatch to the same logits as the no-protection comparator when finite consistency, support, capacity and numerical checks pass. The residual is zero outside its finite support union. Failed checks reject the transaction atomically; they do not authorize installation of the unrestricted base endpoint or reduction of the declared persistent-assimilation requirement.

AFM additionally supports function-preserving structural renewal through a fixed pool of dormant zero-gated modules. Resetting internal parameters while the functional gate is exactly zero leaves the deployed predictor and active protected behaviours unchanged. After reset, AFM recomputes the gradient, protected projector and certificates, and a module is activated only through an accepted protected update. The integrated theory combines these deterministic mechanisms with time-uniform consolidation, evidence-based reopening, route-refinement conditions, bounded resources, a non-convex projected-stationarity identity and a separate convex dynamic-regret specialization. Complete definitions, proofs, obstruction results, algorithm pseudocode, benchmark protocols, ablations, runtime analysis and controlled mechanism tests are provided throughout this paper.

\begin{figure}[tbp]
\centering

\includegraphics[width=0.96\textwidth]{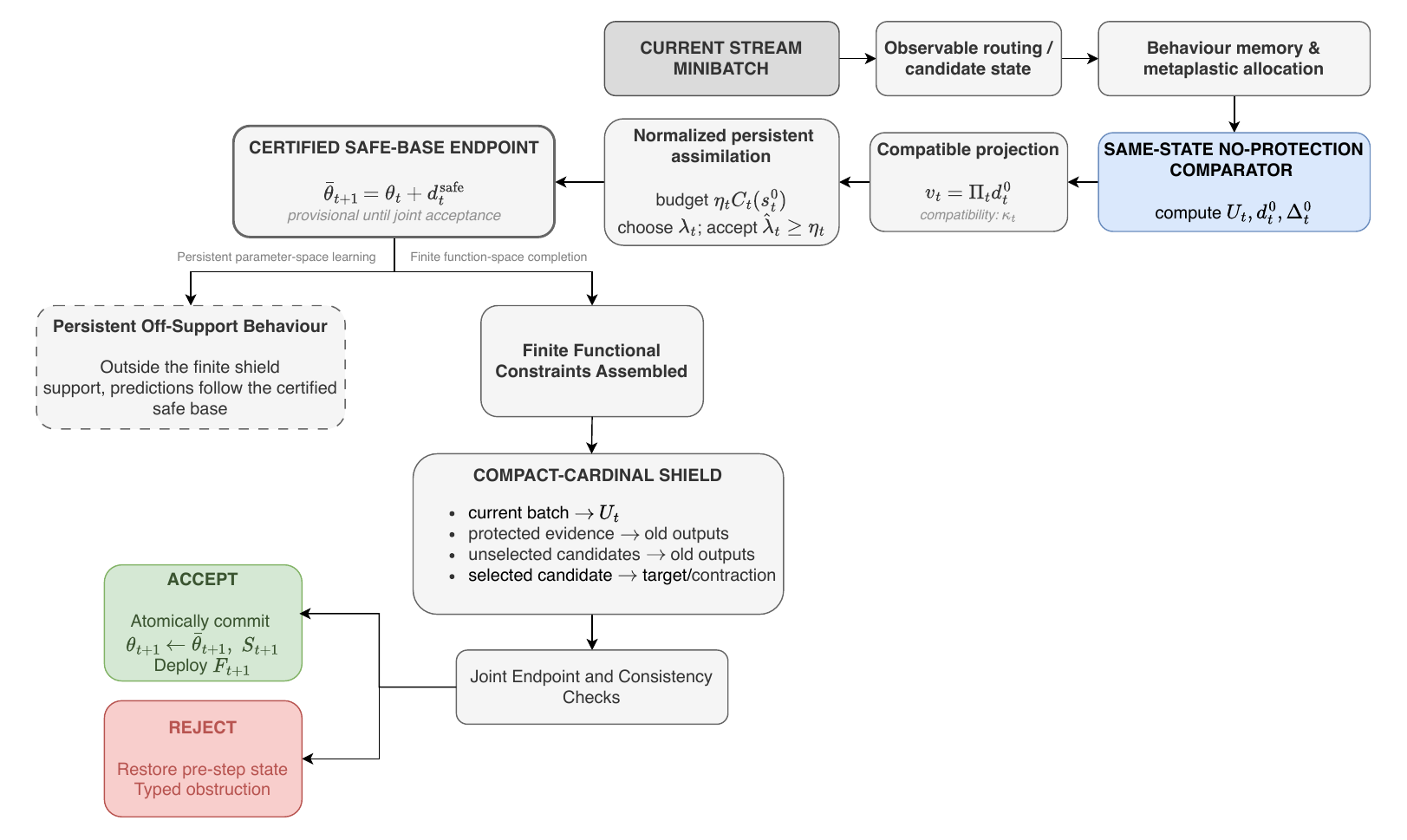}

\caption{\textbf{Full AFM protected-update transaction.}
Detailed original AFM transaction showing same-state comparator construction,
compatibility projection, normalized persistent assimilation, finite functional constraints,
compact-cardinal shield, endpoint checks and accept/reject logic.}
\label{fig:afm-transaction}

\end{figure}

\subsection{Natural-state validation}
The natural validation removed compatibility targeting. Starting from the same parent trajectories, 50 pre-update states were sampled at a fixed every-tenth-step schedule over the next 500 ordinary supervised updates for each system and original seed. No state was selected using its measured compatibility. Validation branches were discarded after measurement; only the ordinary parent update was committed, so the measurement branch did not alter subsequent trajectory state. The 750 natural states recorded the supervised gradient, naturally occurring $\kappaF$, unrestricted decrease, method-specific persistent decrease, protected drift and retention pass. The common stored tolerance was $D\leq0.005$.

\subsection{Independent-direction experiment}
For each requested nonzero compatibility $\{0.1,0.25,0.5,0.75\}$, four independently constructed directions were generated in each of the three original systems and five seeds. All seven retention budgets and seven proposal branches were retained. Validation reported 147,000 fixed groups and 588,000 expected and observed frontier rows. Seed-level 95\% confidence intervals in this five-seed follow-up use an exact seed bootstrap enumerating all $5^5=3125$ ordered resamples. Direction sensitivity was summarized by
\begin{equation}
R_{\mathrm{dir}/\kappa}=\frac{\mathrm{SD\ across\ directions\ at\ fixed\ }\kappaF}{\mathrm{SD\ across\ compatibility\ levels}}.
\end{equation}
At $\beta=0.5$, the mean fraction of states whose within-compatibility directional standard deviation was below the between-compatibility variation was 1.0 in all three systems.

\subsection{Multiscale and generality tests}
The multiscale experiment retained the original three systems, five seeds, seven methods, six requested compatibility levels and seven retention budgets. Internal scale coordinates $s\in\{0.05,0.2,0.5,0.9\}$ form a logarithmic expansion from the original local intervention to a common peak and are not fractions of a natural update. Validation reported 882,000 expected and observed frontier rows with all 15 runs complete. Its five-seed confidence intervals use an exact seed bootstrap enumerating all $5^5=3125$ ordered resamples. A separate calibration compared update norms with the median unrestricted update norm from natural states. The ten-seed generality experiment doubled seed trajectories and added a stronger vision transformer; reported confidence intervals use 100,000 deterministic Monte Carlo bootstrap resamples over seeds.

\subsection{Near-zero boundary assessment}
Among accepted requested-zero native-AFM rows, 65 had a negative empirical margin $M=\rhoPers-\lhat\kappaF/3$: 52 convolutional-network, one vision-transformer and 12 text rows. None of these rows had certification of the relevant finite curvature or step condition. Numerical verification agreed with the reported compatibility values to within $1.5883\times10^{-9}$ and with the persistent-ratio values exactly. These cases are therefore not counted as certified theorem violations; $\lhat\kappaF/3$ is called a theorem-aligned empirical reference when finite assumptions were not independently certified.

\subsection{Fixed-norm bridge}
The bridge used the convolutional network and CIFAR-10 vision transformer, ten seeds and 50 preselected states per seed. Target compatibilities were $\{0.1,0.25,0.5,0.75\}$ and target update norms were 1\%, 10\%, 50\% and 100\% of the architecture-specific median natural unrestricted update norm. A state-scale condition was feasible only when all four target-compatibility directions produced positive unrestricted finite progress at the same target norm. States were preselected independently of the resulting feasibility outcomes.

At fixed norm and direction, finite unrestricted progress is an outcome of the nonlinear loss surface and is therefore measured rather than constrained to match across compatibility levels. The bridge admission criterion is that all four target-compatibility directions produce positive unrestricted finite progress at the prescribed common norm. Under this criterion, 232 state-scale conditions were feasible among 4,000 attempted conditions. Validation reported maximum compatibility error $9.656\times10^{-6}$, maximum relative norm error $3.674\times10^{-5}$, 14,848 expected and observed frontier rows, 928 native-AFM rows and no validation failures.

For the 231 feasible vision-transformer states at 1\% natural norm, primary bridge slopes were estimated within state and summarized across ten seeds. Because $\DeltaZero$ is post-intervention in the fixed-norm experiment, absolute persistent progress $\DeltaPers$ is the primary causal outcome and $\rhoPers$ is a secondary decomposition. The 231 states were additionally divided into tertiles of within-state $\DeltaZero$ coefficient of variation with boundaries 0.07620 and 0.33483. Regression adjustment for realized $\DeltaZero$ is treated only as a sensitivity/decomposition analysis, not a primary causal estimand.

\subsection{Chronological AFM benchmarks}
The original AFM evaluation used CORe50, CLEAR-10 and CLAD-C, the chronological object-classification component of CLAD built from labelled SODA10M imagery.\citep{han2021soda10m} All compared methods used the same compact convolutional predictive base and seed-specific initialization. A common representation prefix was learned and then frozen; thereafter methods operated on the same available head and adapter coordinates without learner-visible task or boundary identifiers. AFM was evaluated at $\eta\in\{0.10,0.50,1.00\}$. Classical controls included no protection, matched SGD, replay, A-GEM\citep{chaudhry2019agem} and online EWC. The modern comparison used OCAR, LPR, aL-SAR, FGH, CCL-DC and MKD. The common predictive base, stream, representation-freeze boundary, checkpoints and five seeds were held fixed, but method-native auxiliary state was retained; the comparison is therefore not an equal-storage or equal-compute study.

Retention $R$ was one minus mean peak-to-final forgetting, clipped to $[0,1]$. Plasticity $P$ was the mean prequential online accuracy over the first and last quarters of predeclared adaptation episodes. The primary family score was origin-referenced hypervolume of non-dominated $(R,P)$ operating points. Raw hypervolume is protocol-specific and is not interpreted as directly comparable across datasets. Paired five-seed primary benchmark comparisons used percentile bootstrap intervals; additional five-seed analyses used exact enumeration of all $5^5$ resamples. Full protocol definitions and benchmark result tables are provided in the detailed evaluation sections below.

\subsection{Statistics}
For the original causal analysis, a least-squares slope of $\rhoPers$ on realized $\kappaF$ was computed within each causal state across the six interventions. Fifty state slopes were averaged within each seed; the five seed means were the inferential replicates for each system, with two-sided 95\% Student-$t$ intervals (four degrees of freedom). Follow-up five-seed multiscale and independent-direction suites use the same state-to-seed aggregation, with exact seed-bootstrap intervals obtained by enumerating all $5^5=3125$ ordered resamples. Ten-seed generality and bridge intervals use a deterministic 100,000-resample seed bootstrap. The bridge feasibility rate is reported over all attempted states and is not replaced by conditional resampling of successful states. No method-level row count is used as an independent-sample size.

\section{Chronological AFM evaluation and benchmark evidence}
\subsection{Experimental evaluation}
\label{sec:experiments}

The experiments address three questions: whether the AFM transaction yields a useful retention-plasticity frontier on real chronological streams, whether that frontier remains competitive against recent continual-learning methods under a common predictive architecture, and whether the internal execution behaves consistently with the theorem-aligned quantities. The nonlinear experiments use complete empirical endpoint checks; they are not presented as outward-rounded pre-step numerical certificates of the full nonlinear theorem.

\subsubsection{Datasets, architecture, and protocol}

We evaluate on CORe50 \citep{lomonaco2017core50}, CLEAR-10 \citep{lin2021clear}, and CLAD-C \citep{verwimp2023clad}, the chronological object-classification component of CLAD built from labeled SODA10M imagery \citep{han2021soda10m}. Table~\ref{tab:protocols} summarizes the frozen protocols. All methods use the same compact convolutional predictive base and seed-specific initialization. A common representation prefix is learned and then frozen; after that boundary, all methods operate on the same available head and adapter coordinates. No method receives task or boundary identifiers, and no challenger receives an external pretrained representation in the controlled comparison.

\begingroup
\hfuzz=3pt
\begin{table}[t]
\centering
\caption{Experimental protocols. All learners receive labels for supervised updates but no task, session, episode, bucket, segment, intervention, or boundary identifiers.}
\label{tab:protocols}
\tablefont
\renewcommand{\arraystretch}{1.08}
\begin{tabularx}{\textwidth}{>{\raggedright\arraybackslash}p{0.14\textwidth}*{3}{>{\raggedright\arraybackslash}X}}
\toprule
 & CORe50 & CLEAR-10 & CLAD-C \\
\midrule
Dataset structure &
Official $128\times128$ CORe50 stream~\cite{lomonaco2017core50}; $10$ classes and $12$ predeclared episodes covering new contexts, recurrence, long-dormancy return, an explicit $0\leftrightarrow1$ target conflict, gradual visual drift, and capacity pressure. &
Official CLEAR-10 imagery~\cite{lin2021clear}; $11$ labels including background, $10$ chronological supervised buckets, and natural temporal drift. &
CLAD-C chronological object classification~\cite{verwimp2023clad} on labeled SODA10M~\cite{han2021soda10m}; $6$ classes and $6$ official chronological training segments. Only labeled object crops are used: no SODA10M unlabeled pretraining, no CLAD-D, and no external pretrained backbone. \\
Learner stream &
Batch size $32$; task/session/episode metadata withheld; evaluator-only semantic regimes and validity intervals. &
$32960$ learner examples, $3296$ per bucket, batch size $32$, $1030$ updates; bucket and period metadata withheld. &
$22249$ official object crops in six preserved segments with item counts 5157, 1154, 6742, 2560, 4517 and 2119; maximum batch size $10$ with partial boundary batches retained. Segment metadata and original labels remain evaluator-only. \\
Representation prefix &
$20$ batches / $640$ images, then backbone freeze. &
Complete first supervised episode: $103$ batches / $3296$ images, then backbone freeze. Optional unlabeled bucket-$0$ pretraining is not used. &
Complete first official training segment: $516$ batches / $5157$ crops, then backbone freeze. \\
Candidate fitting &
$256$ routed examples, $20$ fixed epochs, validation horizon $2048$, risk threshold $0.7$. &
$256$ routed examples, $100$ fixed epochs, validation horizon $4096$, risk threshold $0.7$. &
$256$ routed examples, $100$ fixed epochs, validation horizon $4096$, risk threshold $0.7$. \\
Evaluation &
Checkpoint matrix over active semantic regimes; held-out test and complete original-semantics evaluator. &
Checkpoint at every bucket boundary; $1137$ validation and $4363$ held-out test examples from supervised buckets $1$-$10$; next-bucket evaluation for near-future accuracy. &
Checkpoint after every official training segment; the official data loader yields $32967$ validation and $46915$ held-out test object crops. \\
Matrix & \multicolumn{3}{>{\raggedright\arraybackslash}p{0.80\textwidth}}{$17$ variants per seed and $5$ seeds: $85$ jobs per dataset, $255$ jobs total across the three frozen matrices.} \\
\bottomrule
\end{tabularx}
\end{table}
\endgroup

AFM is evaluated at the predeclared coordinates $\eta\in\{0.10,0.50,1.00\}$. The classical comparison matrix includes the genuine no-protection counterpart, matched SGD, experience replay \citep{rolnick2019replay}, A-GEM \citep{chaudhry2019agem}, online EWC, and a task-aware oracle EWC reference \citep{kirkpatrick2017ewc}. Five paired seeds are used throughout. Benchmark protocol specification and the separation between exploratory and confirmatory runs are documented in Section~\ref{app:complete-results}.

\subsubsection{Metrics and primary analysis}

For semantic regime $r$, let $A_{j,r}$ denote checkpoint-$j$ accuracy, with $i(r)$ its introduction checkpoint and $e(r)$ its final valid checkpoint. Forgetting and backward transfer are
\begin{equation}
F_r=\max_{i(r)\le j\le e(r)}A_{j,r}-A_{e(r),r},
\qquad
\operatorname{BWT}_r=A_{e(r),r}-A_{i(r),r}.
\label{eq:metrics-forgetting}
\end{equation}
The retention score is
\begin{equation}
R=\operatorname{clip}\!\left(1-\frac{1}{|\mathcal R|}\sum_{r\in\mathcal R}F_r,0,1\right).
\label{eq:metric-retention}
\end{equation}
Plasticity $P$ is the mean online accuracy over the first and last quarters of the predeclared adaptation episodes. Online accuracy is prequential: for each incoming minibatch, predictions and minibatch accuracy are recorded before that minibatch is used for the learning update. The same prediction-before-update ordering is used during the initial representation-learning prefix. The primary score for a method family is the origin-referenced area dominated by its non-dominated $(R,P)$ operating points,
\begin{equation}
\operatorname{HV}(\mathcal Q)=
\mu\!\left(\bigcup_{(R,P)\in\operatorname{ND}(\mathcal Q)}[0,R]\times[0,P]\right).
\label{eq:metric-hv}
\end{equation}
Raw hypervolume is protocol-specific and is not interpreted as directly comparable across datasets. The paired primary statistic is the within-dataset difference between AFM and the strongest eligible non-oracle family. For the original five-seed primary comparisons, 95\% confidence intervals are paired percentile-bootstrap intervals over the five seedwise AFM-minus-comparator differences. We use 20,000 resamples with fixed RNG seed 20260729; each resample draws five paired differences with replacement and recomputes their mean, and the interval endpoints are the 2.5\% and 97.5\% order-statistic positions. For subsequent five-seed ablation and modern-comparison analyses, the paired bootstrap is evaluated exactly by enumerating all $5^5=3125$ possible resamples and taking the 2.5th and 97.5th percentiles with linear interpolation.

\subsubsection{Classical comparison}

Table~\ref{tab:primary-hv} reports the confirmatory primary comparison. AFM exceeds the strongest eligible non-oracle family on every paired seed in all three protocols. The five-seed mean hypervolumes are $0.248$ on CORe50, $0.381$ on CLEAR-10, and $0.513$ on CLAD-C. The corresponding paired mean advantages are approximately $0.0090$, $0.0068$, and $0.0110$, and each paired confidence interval remains above zero.

\begin{table}[t]
\centering
\caption{Primary paired retention-plasticity hypervolume results across five seeds. Confidence intervals are paired across seeds; relative improvement is normalized by the comparator mean.}
\label{tab:primary-hv}
\tablefont
\setlength{\tabcolsep}{3.5pt}
\begin{tabular}{llrrrrrr}
\toprule
Dataset & Comparator & AFM HV & Comparator HV & Mean $\Delta$ & 95\% CI & Wins & Relative \\
\midrule
CORe50 & No protection & 0.2478 & 0.2388 & +0.0090 & [0.0059, 0.0126] & 5/5 & 3.76\% \\
CLEAR-10 & No protection & 0.3807 & 0.3740 & +0.0068 & [0.0053, 0.0082] & 5/5 & 1.81\% \\
CLAD-C & A-GEM & 0.5129 & 0.5019 & +0.0110 & [0.0054, 0.0166] & 5/5 & 2.20\% \\
\bottomrule
\end{tabular}
\end{table}

Table~\ref{tab:family-hv} places the result in the complete classical comparison matrix. No protection is the strongest classical non-oracle family on CORe50 and CLEAR-10, while A-GEM is the strongest on CLAD-C. The task-aware oracle EWC reference is not eligible for the primary non-oracle comparison.

\begin{table}[t]
\centering
\caption{Five-seed mean family hypervolume for AFM and the classical comparison families. Oracle EWC receives task information and is reported only as a task-aware reference.}
\label{tab:family-hv}
\tablefont
\begin{tabular}{lrrr}
\toprule
Family & CORe50 & CLEAR-10 & CLAD-C \\
\midrule
AFM & \textbf{0.2478} & \textbf{0.3807} & \textbf{0.5129} \\
No protection & 0.2388 & 0.3740 & 0.4872 \\
Matched SGD & 0.2031 & 0.3596 & 0.4714 \\
Replay & 0.1839 & 0.3503 & 0.4797 \\
A-GEM & 0.1845 & 0.3521 & 0.5019 \\
Online EWC & 0.1829 & 0.3511 & 0.4834 \\
Oracle EWC & 0.1823 & 0.3506 & 0.4478 \\
\bottomrule
\end{tabular}
\end{table}

AFM is a frontier rather than a single fixed operating point. Table~\ref{tab:operating-points} shows the expected movement from stronger preservation at $\eta=0.10$ toward greater acquisition at $\eta=1.00$. On CLEAR-10, all three coordinates reduce forgetting relative to no protection while maintaining similar plasticity. On CLAD-C, the conservative coordinate substantially reduces forgetting and improves final-test accuracy, whereas the full-assimilation coordinate approaches unrestricted acquisition and exhibits more peak-to-final forgetting. That reversal is examined at class level in Section~\ref{app:complete-results}.

\begin{table}[t]
\centering
\caption{Five-seed mean operating points. Fgt. denotes mean forgetting, BWT backward transfer, Test held-out final-test accuracy, Online chronological stream accuracy, and NF CLEAR-10 next-bucket accuracy.}
\label{tab:operating-points}
\tablefont
\begin{tabular}{llrrrrrrr}
\toprule
Dataset & Point & $R$ & $P$ & Fgt.$\downarrow$ & BWT$\uparrow$ & Test$\uparrow$ & Online$\uparrow$ & NF$\uparrow$ \\
\midrule
CORe50 & AFM 0.10 & 0.9648 & 0.2088 & 0.0352 & -0.0186 & 0.2291 & 0.2125 & - \\
CORe50 & AFM 0.50 & 0.9185 & 0.2458 & 0.0815 & -0.0658 & 0.2362 & 0.2594 & - \\
CORe50 & AFM 1.00 & 0.9020 & 0.2595 & 0.0980 & -0.0842 & \textbf{0.2398} & 0.2764 & - \\
CORe50 & No protection & 0.8866 & \textbf{0.2693} & 0.1134 & -0.0991 & 0.2341 & \textbf{0.2866} & - \\
\addlinespace
CLEAR-10 & AFM 0.10 & \textbf{0.9933} & 0.3750 & \textbf{0.0067} & \textbf{0.0056} & 0.3412 & 0.3753 & 0.3263 \\
CLEAR-10 & AFM 0.50 & 0.9828 & 0.3793 & 0.0172 & -0.0013 & \textbf{0.3417} & 0.3799 & 0.3309 \\
CLEAR-10 & AFM 1.00 & 0.9768 & \textbf{0.3834} & 0.0232 & -0.0065 & 0.3386 & \textbf{0.3835} & 0.3329 \\
CLEAR-10 & No protection & 0.9756 & 0.3834 & 0.0244 & -0.0073 & 0.3376 & 0.3834 & \textbf{0.3338} \\
\addlinespace
CLAD-C & AFM 0.10 & \textbf{0.8534} & 0.5706 & \textbf{0.1466} & -0.0081 & \textbf{0.5732} & 0.5713 & - \\
CLAD-C & AFM 0.50 & 0.7915 & 0.5874 & 0.2085 & 0.0293 & 0.5440 & 0.6025 & - \\
CLAD-C & AFM 1.00 & 0.7433 & 0.6040 & 0.2567 & \textbf{0.0382} & 0.5457 & 0.6155 & - \\
CLAD-C & No protection & 0.7685 & \textbf{0.6341} & 0.2315 & 0.0370 & 0.5497 & \textbf{0.6409} & - \\
\bottomrule
\end{tabular}
\end{table}

\subsubsection{Comparison with modern continual-learning methods}

The modern comparison includes OCAR \citep{urettini2025ocar}, LPR \citep{yoo2024lpr}, aL-SAR \citep{seo2025budgeted}, FGH \citep{michel2026fgh}, CCL-DC \citep{wang2024ccldc}, and MKD \citep{michel2024mkd}. These methods span curvature-aware replay, layerwise proximal replay, adaptive freezing and retrieval, learned hypergradient reweighting, collaborative distillation, and momentum-teacher distillation. Each challenger uses one frozen operating configuration. The challengers are not themselves defined by a common predeclared three-point retention-plasticity coordinate analogous to AFM's $\eta$; accordingly, the comparison uses their frozen method-native operating configurations rather than constructing method-specific sweeps over unrelated tuning axes solely to equalize frontier cardinality. The common predictive base, learner-visible chronological stream, representation-freeze rule, checkpoints, and seeds are held fixed. Method-native auxiliary state is retained rather than removed: for example, CCL-DC keeps its second learner and replay state, and MKD keeps its EMA teacher and replay state. The comparison is therefore controlled for the predictive base and information interface, but it is not described as equal-storage or equal-compute. Detailed compatibility decisions are reported in Section~\ref{app:complete-results}.

Table~\ref{tab:ocar-primary} gives the dataset-level primary scores. These values compare AFM's declared operating family with the challengers' frozen method-native configurations; they are not estimates of the challengers' complete achievable hypervolumes under arbitrary hyperparameter sweeps. AFM has the highest mean primary score on CLEAR-10 and CLAD-C. FGH is the sole dataset-level mean reversal, exceeding AFM on CORe50 by about $2.8\%$ while operating at a more acquisitive point with lower retention and greater forgetting than every AFM coordinate. AFM exceeds FGH slightly on CLEAR-10 and substantially on CLAD-C.

\begin{table}[t]
\centering
\caption{Five-seed mean retention-plasticity hypervolume for AFM and six modern challengers. AFM contributes its predeclared three-point frontier; each challenger contributes the $RP$ area of its fixed operating configuration.}
\label{tab:ocar-primary}
\tablefont
\begin{tabular}{lrrrrrrr}
\toprule
Dataset &
AFM HV &
CCL-DC HV &
MKD HV &
FGH HV &
aL-SAR HV &
LPR HV &
OCAR HV \\
\midrule
CORe50
& 0.2478
& 0.2016
& 0.1799
& \textbf{0.2546}
& 0.1801
& 0.1640
& 0.1436 \\
CLEAR-10
& \textbf{0.3807}
& 0.3731
& 0.3513
& 0.3773
& 0.3572
& 0.3485
& 0.3164 \\
CLAD-C
& \textbf{0.5129}
& 0.4921
& 0.4931
& 0.3058
& 0.3296
& 0.4694
& 0.3747 \\
\bottomrule
\end{tabular}
\end{table}

For a compact description of conventional metrics, one AFM coordinate is selected per dataset using AFM's own five-seed mean final-test accuracy, independently of challenger identity: $\eta=1.00$ on CORe50, $\eta=0.50$ on CLEAR-10, and $\eta=0.10$ on CLAD-C. Table~\ref{tab:modern-besteta-aggregate} shows the equal-weighted descriptive summary. AFM has the highest aggregate plasticity, online accuracy, and primary score; CCL-DC has the highest final-test accuracy, OCAR the highest retention and lowest forgetting, and aL-SAR the highest mean BWT. The comparison therefore does not support a claim that AFM maximizes every scalar metric. Its advantage is a stronger joint acquisition-retention profile.

\begin{table}[t]
\centering
\caption{Equal-weighted descriptive summary of conventional metrics across the three datasets. AFM uses the coordinate with highest AFM five-seed mean final-test accuracy on each dataset; the Primary column remains the mean of the declared benchmark-level primary scores.}
\label{tab:modern-besteta-aggregate}
\tablefont
\begin{tabular}{lrrrrrrr}
\toprule
Method &
$R\uparrow$ &
$P\uparrow$ &
Fgt.$\downarrow$ &
BWT$\uparrow$ &
Online$\uparrow$ &
Test$\uparrow$ &
Primary$\uparrow$ \\
\midrule
\textbf{AFM}
& 0.9127
& \textbf{0.4031}
& 0.0873
& -0.0312
& \textbf{0.4092}
& 0.3849
& \textbf{0.3805} \\
CCL-DC
& 0.9424
& 0.3850
& 0.0576
& 0.0081
& 0.3955
& \textbf{0.3950}
& 0.3556 \\
MKD
& 0.9634
& 0.3604
& 0.0366
& 0.0175
& 0.3714
& 0.3916
& 0.3414 \\
LPR
& 0.9538
& 0.3512
& 0.0462
& 0.0202
& 0.3634
& 0.3799
& 0.3273 \\
FGH
& 0.8716
& 0.3592
& 0.1284
& -0.0215
& 0.3816
& 0.3591
& 0.3126 \\
aL-SAR
& 0.9625
& 0.3018
& 0.0375
& \textbf{0.0343}
& 0.3277
& 0.2934
& 0.2890 \\
OCAR
& \textbf{0.9952}
& 0.2799
& \textbf{0.0048}
& 0.0050
& 0.3041
& 0.2791
& 0.2782 \\
\bottomrule
\end{tabular}
\end{table}

Table~\ref{tab:modern-besteta-gainloss} quantifies that trade-off against each challenger. The closest aggregate primary challenger is CCL-DC: AFM is about $7.0\%$ higher in the descriptive primary score, with about $4.7\%$ greater plasticity and $3.5\%$ higher online accuracy, while CCL-DC retains higher mean retention and final-test accuracy. Against the six-challenger mean, AFM has a $19.9\%$ higher primary score, together with higher plasticity, online accuracy, and final-test accuracy, but lower mean retention.

\begin{table}[t]
\centering
\caption{AFM gain or loss relative to each modern challenger in the equal-weighted descriptive summary. R, P, Online, Test, and Primary are relative changes; Fgt. and BWT are AFM-minus-challenger percentage-point differences.}
\label{tab:modern-besteta-gainloss}
\tablefont
\resizebox{\textwidth}{!}{
\begin{tabular}{lrrrrrrr}
\toprule
Comparison &
$\Delta R$ &
$\Delta P$ &
$\Delta$Fgt. &
$\Delta$BWT &
$\Delta$Online &
$\Delta$Test &
$\Delta$Primary \\
\midrule
AFM vs CCL-DC
& -3.14\%
& +4.72\%
& +2.96 pp
& -3.93 pp
& +3.46\%
& -2.55\%
& \textbf{+7.01\%} \\
AFM vs MKD
& -5.26\%
& +11.86\%
& +5.07 pp
& -4.87 pp
& +10.19\%
& -1.70\%
& \textbf{+11.44\%} \\
AFM vs LPR
& -4.31\%
& +14.80\%
& +4.11 pp
& -5.14 pp
& +12.62\%
& +1.32\%
& \textbf{+16.26\%} \\
AFM vs FGH
& +4.72\%
& +12.24\%
& -4.12 pp
& -0.97 pp
& +7.22\%
& +7.19\%
& \textbf{+21.73\%} \\
AFM vs aL-SAR
& -5.17\%
& +33.58\%
& +4.98 pp
& -6.55 pp
& +24.88\%
& +31.20\%
& \textbf{+31.67\%} \\
AFM vs OCAR
& -8.29\%
& +44.00\%
& +8.25 pp
& -3.62 pp
& +34.54\%
& +37.93\%
& \textbf{+36.76\%} \\
\midrule
AFM vs mean of six challengers
& -3.74\%
& +18.72\%
& +3.54 pp
& -4.18 pp
& +14.53\%
& +10.08\%
& \textbf{+19.90\%} \\
\bottomrule
\end{tabular}}
\end{table}

\paragraph{Contemporary methods outside the common-architecture comparison.}
The primary comparison is designed to isolate differences in continual-learning
strategy while holding the predictive architecture, representation source, and
learner-visible stream fixed. We therefore distinguish between methods that can
be instantiated on the common compact network without removing their defining
mechanism and methods whose contribution is intrinsically coupled to a different
architecture, pretrained representation, or method-specific structural
capacity. The latter remain important scientific comparators, but including
them in the same numerical table would change more than the continual-learning
rule itself.

DEMD is a task-free method that represents past experience through a dynamic
memory distribution whose structure is expanded, augmented, and reduced as
streaming novelty changes \citep{ye2025demd}. AdaLin addresses a different
aspect of the stability-plasticity problem by adaptively controlling neuronal
linearity to maintain useful gradient propagation during continual learning
\citep{rohani2026adalin}. Both methods were considered for direct empirical
comparison. However, when the experimental protocol was frozen, a reproducible
authors' implementation suitable for integration into the controlled
same-network evaluation was not available for either method. Their published
experiments also use datasets and evaluation protocols that are not directly
commensurate with the CORe50, CLEAR-10, and CLAD-C streams used here.
Importing their reported accuracies would therefore mix results obtained under
different architectures, data streams, tuning procedures, and evaluation
rules, while an independent reimplementation would introduce an additional
implementation variable into an otherwise controlled comparison. We
consequently discuss DEMD and AdaLin as relevant contemporary approaches but do
not assign them numerical entries in the primary table.

SinglePrompt is excluded for a different reason. Its task-free adaptation
mechanism places learned prompts inside Transformer self-attention blocks
\citep{park2026singleprompt}. The common predictor used in our controlled
comparison is a compact convolutional network with residual adapter capacity
and contains no self-attention blocks in which the SinglePrompt mechanism can
be instantiated. Adding a Transformer solely for this comparator would replace
the shared predictive architecture and alter both parameterization and
representation geometry. Conversely, replacing SinglePrompt's prompt mechanism
by an adapter compatible with the common network would no longer constitute an
evaluation of the published method. We therefore regard SinglePrompt as a
relevant task-free architectural alternative rather than a valid member of the
same-base numerical comparison.

Online-LoRA likewise relies on assumptions that conflict directly with the
frozen experimental interface. The method performs online low-rank adaptation
of an externally pretrained Vision Transformer \citep{wei2025onlinelora}.
External pretrained representations are deliberately excluded from the primary
protocol: every evaluated method receives the same representation learned from
the declared learner-visible prefix before the backbone is frozen. Supplying a
pretrained Vision Transformer only to Online-LoRA would therefore change both
the initial representation and the model architecture, whereas removing the
pretrained Transformer would remove a defining component of the published
method. Its reported results are consequently not numerically combined with
the common-network comparison.

The remaining structural methods also change the learner in ways that cannot be
separated cleanly from their continual-learning mechanisms. SERENA creates and
freezes specialized concept cells within an over-parameterized architecture
\citep{yildirim2026serena}; hence its stability mechanism is tied to
method-specific structural capacity rather than to an update rule that can be
applied unchanged to the common predictor. EG-CNN makes continual evolution of
its feature extraction, refinement, and classification components part of the
learning process itself \citep{leite2026egcnn}. Evaluating EG-CNN on the fixed
common architecture would therefore suppress the structural evolution that
defines the method, while permitting that evolution would break the
architecture-matched control.

S6MOD augments the learner with an additional state-space-model branch and
class-conditional routing after the backbone \citep{liu2025s6mod}. These
components introduce both additional adaptive capacity and a different routing
mechanism, so a comparison against the unchanged common network would conflate
the continual-learning strategy with an architectural expansion. Dual-Arch
makes this distinction still more explicit by assigning stability and
plasticity to separate lightweight networks with different functional roles
\citep{lu2025dualarch}. Collapsing the method into the single common predictor
would remove its defining dual-network construction; retaining both networks
would instead compare different model systems and different adaptive
capacities.

For these reasons, SinglePrompt, Online-LoRA, SERENA, EG-CNN, S6MOD, and
Dual-Arch are not omitted because they are considered less relevant or less
competitive. They answer the stability-plasticity problem partly through
architectural or representation choices that lie outside the controlled
question addressed by the primary experiment: how different continual-learning
rules behave when the underlying predictive network, representation source,
learner-visible information, and chronological stream are held fixed. Their
proper empirical assessment would require a separate architecture- and
resource-aware comparison in which model capacity, pretraining, computational
cost, and auxiliary state are treated as experimental variables rather than
held constant.

\subsubsection{Ablation and theorem-aligned execution evidence}

The main ablation asks whether the joint AFM transaction improves on a persistent protected base update alone. Table~\ref{tab:afm-ablation-rp} reports the paired CORe50 result at $\eta=0.50$. Full AFM improves the single-point $R\times P$ score by about $18.6\%$ relative to both base-only variants, with positive paired intervals. In contrast, collapsing the multiscale trace bank to one timescale produces essentially the same aggregate score at this operating point. The latter result does not invalidate the multiscale theorem; it shows that this particular CORe50 coordinate does not empirically require the full timescale bank. Complete conventional metrics for the ablation are given in Section~\ref{app:mechanism-audits}.

\begin{table}[t]
\centering
\caption{Paired five-seed CORe50 $R\times P$ ablation results. Differences are full AFM at $\eta=0.50$ minus the indicated ablation.}
\label{tab:afm-ablation-rp}
\tablefont
\setlength{\tabcolsep}{3pt}
\begin{tabularx}{\textwidth}{>{\raggedright\arraybackslash}Xrrrrrr}
\toprule
Ablation &
Full AFM &
Ablation &
Mean $\Delta$ &
95\% CI &
Wins &
Relative gain \\
\midrule
Base-only, finite normalized budget
& 0.2257
& 0.1903
& \textbf{+0.0354}
& \textbf{[+0.0276,\,+0.0445]}
& \textbf{5/5}
& \textbf{+18.61\%} \\

Base-only, fixed-absolute budget
& 0.2257
& 0.1903
& \textbf{+0.0354}
& \textbf{[+0.0266,\,+0.0446]}
& \textbf{5/5}
& \textbf{+18.59\%} \\

Single-timescale AFM
& 0.2257
& 0.2257
& +0.0000
& [-0.0008,\,+0.0005]
& 4/5
& +0.01\% \\
\bottomrule
\end{tabularx}
\end{table}

Table~\ref{tab:afm-mechanism-audit} summarizes the execution-level checks over all protected AFM coordinates. Across the three datasets, accepted finite restorations keep the maximum endpoint error below $10^{-6}$. The minimum deployed progress ratio remains extremely close to one, and the realized persistent-base progress ratio exceeds the analytic lower bound on every accepted protected update examined. The near-one deployed ratio verifies execution of the finite endpoint-completion mechanism; it is not interpreted as evidence of population retention away from the finite shield support. The high precision in this table is retained because the numerical margins themselves are the object being audited.

\begin{table}[t]
\centering
\caption{Execution-level mechanism audit over the three protected AFM coordinates and five seeds on each real-world benchmark. The analytic margin is the minimum realized persistent-base progress ratio minus the theorem lower bound.}
\label{tab:afm-mechanism-audit}
\tablefont
\setlength{\tabcolsep}{4pt}
\resizebox{\textwidth}{!}{
\begin{tabular}{lrrrrrrrr}
\toprule
Dataset & Runs & Certs & Commits & Protected nonzero & Exact restore accepted/attempts & Max endpoint error & Min deployed ratio & Min analytic margin \\
\midrule
CORe50 & 15 & 90 & 90 & 8621 & 8636/8829 & $4.768\times10^{-7}$ & 0.99999494 & \textbf{$+2.107\times10^{-3}$} \\
CLEAR-10 & 15 & 99 & 99 & 10416 & 10431/10494 & $9.537\times10^{-7}$ & 0.99999718 & \textbf{$+2.940\times10^{-2}$} \\
CLAD-C & 15 & 488 & 488 & 22917 & 22932/24980 & $9.537\times10^{-7}$ & 0.99993644 & \textbf{$+1.378\times10^{-4}$} \\
\bottomrule
\end{tabular}}
\end{table}

\paragraph{Component-level computational cost.}
The present AFM implementation has substantial wall-clock overhead: median
runtime is approximately $35\times$ matched SGD on CORe50, $15\times$ on
CLEAR-10, and $184\times$ on CLAD-C. To identify the source of this cost,
we instrumented the existing implementation without changing the learning
rule, frozen hyperparameters, data streams, or operating-point selection.
Profiling used the independently selected conventional-metric operating
points, $\eta=1.0$ on CORe50, $\eta=0.5$ on CLEAR-10, and $\eta=0.1$ on
CLAD-C, over the same five frozen seeds used elsewhere in the study. For each profiled operation we recorded its accumulated host-observed elapsed
time, invocation count, and per-invocation duration, while recording
CUDA-event elapsed time separately. The default profiling mode deliberately
avoids per-region CUDA synchronization to limit instrumentation-induced
perturbation.

\begin{table}[t]
\centering
\caption{Component-level host-timing decomposition of AFM at the independently
selected operating points. Entries are median percentages across the five
frozen seeds. Profiling uses low-perturbation host timers without per-region
CUDA synchronization; CUDA-event timing is recorded separately. Because each
component is summarized independently, columns need not sum exactly to
$100\%$.}
\label{tab:afm-runtime-profile}
\tablefont
\begin{tabular}{lrrr}
\toprule
Component
& CORe50
& CLEAR-10
& CLAD-C \\
\midrule
Ordinary learning
& 0.36\%
& 0.44\%
& 0.24\% \\

Same-state no-protection comparator
& 0.52\%
& 0.27\%
& 0.25\% \\

Protection geometry
& \textbf{37.33\%}
& 16.12\%
& \textbf{42.06\%} \\

Candidate machinery
& 1.94\%
& 2.17\%
& 2.00\% \\

Finite counterfactual completion
& \textbf{25.95\%}
& \textbf{22.90\%}
& \textbf{28.55\%} \\

Endpoint verification
& \textbf{31.54\%}
& \textbf{52.68\%}
& \textbf{25.17\%} \\

Routing/metaplastic/control
& 1.05\%
& 1.77\%
& 0.88\% \\

Other/unattributed
& 1.87\%
& 3.48\%
& 1.39\% \\
\bottomrule
\end{tabular}
\end{table}

Table~\ref{tab:afm-runtime-profile} shows that the host-observed execution
cost is highly concentrated. The component-wise median shares of protection geometry,
finite counterfactual completion, and endpoint verification sum to
$94.82\%$ on CORe50, $91.70\%$ on CLEAR-10, and $95.78\%$ on CLAD-C.
Ordinary learning accounts for only $0.36\%$, $0.44\%$, and $0.24\%$,
respectively, while construction and evaluation of the same-state
no-protection comparator accounts for $0.52\%$, $0.27\%$, and $0.25\%$.
The host-timing decomposition is therefore concentrated in constructing and
verifying protected executable updates rather than in the ordinary
forward/backward regions or the counterfactual-comparator region. Because
CUDA execution is asynchronous in this profiling mode, these component
shares characterize host-observed execution time and are not interpreted as
a synchronized decomposition of device execution time.

The dominant category varies with the stream. Protection geometry is
largest on CORe50 at $37.33\%$ and on CLAD-C at $42.06\%$, whereas endpoint
verification dominates CLEAR-10 at $52.68\%$. Finite counterfactual
completion remains substantial on all three datasets, accounting for
$25.95\%$, $22.90\%$, and $28.55\%$, respectively. The operation-level
CLAD-C decomposition in Section~\ref{app:mechanism-audits} separates these
costs into invocation frequency and per-call duration and identifies the
principal implementation-level bottlenecks.

\subsubsection{Empirical scope and limitations}

The empirical evidence supports a restricted claim. AFM yields a positive paired frontier advantage over the strongest tested classical non-oracle family on all five seeds of each frozen protocol, and its primary performance remains competitive or superior across six recent challengers under the common predictive-base interface. The evaluated neural protocols learn a common representation prefix and then freeze that representation, so the empirical results establish continual adaptation over the shared learned representation rather than unrestricted end-to-end continual representation learning. The experiments do not establish universal empirical dominance or strict outward numerical certification of the nonlinear theorem.

Two limitations are particularly informative. First, all protected CLAD-C runs report insufficient signature calibration for a positive route-identification conclusion, so CLAD-C supports the assimilation, restoration, transfer, and finite-protection mechanisms but not a positive routing claim. Second, the CLAD-C full-assimilation forgetting increase is concentrated in Pedestrian and reflects a population-level collapse that also occurs in the no-protection learner after a Pedestrian-free chronological interval. Exact finite Pedestrian evidence can remain protected while held-out population accuracy collapses. The classwise analysis and an inverse-frequency diagnostic are reported in Section~\ref{app:complete-results}.

\subsection{Complete experimental results and protocol details}
\label{app:complete-results}

\subsubsection{Benchmark protocol specification and analysis separation}

All reported five-seed benchmark results use protocols fixed before analysis of the corresponding benchmark outcomes. For CLEAR-10, protocol-development diagnostics conducted before the five-seed evaluation determined the shared representation prefix and candidate-fitting budget using candidate-side fitting and activation behaviour only; held-out test accuracy and hypervolume were not used. The resulting protocol uses the complete first supervised chronological episode as the shared representation prefix and 100 candidate-fitting epochs. Protocol-development runs are excluded from all reported five-seed benchmark summaries. The risk threshold, counterfactual comparator, normalized assimilation rule, and finite restoration construction follow the specification given in this paper.

For CLAD-C, the study uses the chronological object-classification stream defined by CLAD \citep{verwimp2023clad} over labeled SODA10M data \citep{han2021soda10m}. Only labeled bounding-box crops are used; unlabeled pretraining and the detection task are excluded. The official loader version used here yields $22{,}249$ training crops, $32{,}967$ validation crops, and $46{,}915$ test crops; these are the exact cardinalities used in the reported experiments. The official ordering is preserved, and examples are not moved or discarded to match cardinalities from earlier benchmark releases. Segment identity and original-label side information remain evaluator-only. Loader and frontier integrity checks use a separate development seed that is excluded from all reported five-seed results; no CLAD-C test score or exploratory classwise diagnostic is used to tune AFM.

\paragraph{Finite-shield support and guards.}
The finite shield uses the predeclared compact-address support construction
of Section~\ref{app:full-specification}. The replay envelope
$\varepsilon_a$ was fixed at $10^{-8}$, the support multiplier
was fixed at $\kappa=4$, and the label-free guard bank was bounded by
$Q_{\max}=640$ addresses. These quantities are shield
support parameters rather than task-routing thresholds. Conditional on an
accepted transaction, the requested value at each constraint center is
reproduced exactly; changing the support parameters does not define an
approximate center-matching tolerance. Instead, they determine the spatial
extent of the finite residual and whether the required center-center and
center-guard separation conditions are satisfied, thereby affecting
acceptance or obstruction of a shield transaction.

\subsubsection{Classical seedwise results and operating-point trade-offs}

Table~\ref{tab:per-seed-hv} gives the paired seedwise comparison underlying the classical primary analysis. The sign of the difference is positive for every seed in all three protocols.

\begin{table}[t]
\centering
\caption{Per-seed hypervolume for AFM and the strongest non-oracle comparator within each protocol. Hypervolume magnitudes are not compared across datasets.}
\label{tab:per-seed-hv}
\tablefont
\setlength{\tabcolsep}{4pt}
\resizebox{\textwidth}{!}{
\begin{tabular}{rrrrrrrrrr}
\toprule
Seed & CORe50 AFM & CORe50 no-prot. & $\Delta$ & CLEAR AFM & CLEAR no-prot. & $\Delta$ & CLAD AFM & CLAD A-GEM & $\Delta$ \\
\midrule
11  & 0.2481 & 0.2403 & +0.0078 & 0.3852 & 0.3763 & +0.0089 & 0.5065 & 0.4961 & +0.0104 \\
29  & 0.2281 & 0.2205 & +0.0076 & 0.3693 & 0.3652 & +0.0041 & 0.5001 & 0.4951 & +0.0051 \\
47  & 0.2520 & 0.2371 & +0.0149 & 0.3836 & 0.3778 & +0.0058 & 0.5231 & 0.5198 & +0.0033 \\
71  & 0.2621 & 0.2586 & +0.0036 & 0.3752 & 0.3684 & +0.0069 & 0.5216 & 0.5010 & +0.0207 \\
101 & 0.2489 & 0.2377 & +0.0111 & 0.3903 & 0.3822 & +0.0081 & 0.5133 & 0.4977 & +0.0156 \\
\bottomrule
\end{tabular}}
\end{table}

Table~\ref{tab:tradeoffs} expresses the operating-point trade-offs relative to no protection. The conservative AFM coordinate strongly reduces forgetting on all three datasets. Higher $\eta$ shifts the frontier toward acquisition, as intended.

\begin{table}[t]
\centering
\caption{AFM operating-point trade-offs relative to no protection. Test differences are percentage points; positive values favor AFM.}
\label{tab:tradeoffs}
\tablefont
\begin{tabular}{llrrr}
\toprule
Dataset & Coordinate & Forgetting reduction & Relative reduction & Test difference \\
\midrule
CORe50 & 0.10 & 0.0783 & 69.00\% & -0.499 \\
CORe50 & 0.50 & 0.0319 & 28.12\% & +0.205 \\
CORe50 & 1.00 & 0.0154 & 13.59\% & +0.565 \\
CLEAR-10 & 0.10 & 0.0177 & 72.66\% & +0.358 \\
CLEAR-10 & 0.50 & 0.0072 & 29.42\% & +0.408 \\
CLEAR-10 & 1.00 & 0.0012 & 5.06\% & +0.101 \\
CLAD-C & 0.10 & 0.0850 & 36.70\% & +2.356 \\
CLAD-C & 0.50 & 0.0231 & 9.96\% & -0.569 \\
CLAD-C & 1.00 & -0.0252 & -10.87\% & -0.394 \\
\bottomrule
\end{tabular}
\end{table}

For CLEAR-10 at $\eta=0.50$, the paired 95\% interval for the final-test difference relative to no protection is approximately $[+0.11,+0.77]$ percentage points. This interval is computed by exact paired bootstrap over the five seedwise AFM-minus-no-protection final-test differences: all $5^5=3125$ resamples with replacement are enumerated, each replicate is the mean of five resampled paired differences, and the 2.5\% and 97.5\% order-statistic endpoints are used without percentile interpolation. Next-bucket accuracy is about $0.29$ percentage points lower. Thus the balanced coordinate improves retained/final performance without dominating the unrestricted trajectory on every temporal generalization metric. On CLAD-C, A-GEM with memory 32 attains lower mean forgetting than AFM $\eta=0.10$ but also lower plasticity; the family hypervolume therefore evaluates the complete retention-plasticity trade-off rather than selecting the single lowest-forgetting operating point.

\subsubsection{Modern challenger compatibility}

All six modern challengers are run on the same chronological observations, common predictive base, representation-freeze boundary, checkpoint schedule, and five seeds as AFM. The method-native state required by each algorithm is retained. OCAR and LPR use replay reservoirs of 128 learner-visible examples. aL-SAR retains its larger 4000-example memory; CCL-DC and MKD retain 1000-example replay memories together with, respectively, a second collaborating learner and a complete EMA teacher. Consequently, this experiment controls architecture and learner-visible information but not total storage or compute.

Only interface adaptations required by the common base are made. OCAR's curvature preconditioning and LPR's proximal preconditioning act on the available affine adapter and classifier coordinates. FGH's architecture-agnostic gradient-reweighting and prototype mechanisms are applied to the same trainable path because its complete published prompt configurations require pretrained Transformer architectures that would violate the common-base condition \citep{michel2026fgh}. CCL-DC retains both collaborating peers and MKD retains the teacher; removing these components would change the methods \citep{wang2024ccldc,michel2024mkd}.

Two compatibility consequences are reported because they affect interpretation. First, aL-SAR's public coordinate-sampling rule selects $0.01\%$ of the weights of a layer after integer truncation. On the 4096-weight adapter matrices used here this gives zero sampled similarity coordinates, so retrieval on those blocks reduces to the method's frequency-balancing component. The rule is not altered to improve aL-SAR under the compact architecture \citep{seo2025budgeted}. Second, FGH's prototype state is indexed by numeric class label. The CORe50 protocol contains an explicit $0\leftrightarrow1$ target conflict, but FGH receives neither the conflict boundary nor evaluator-side semantic remapping. This preserves the same learner-visible information restriction as AFM.

All 90 modern-challenger runs completed with valid evaluation records. The frozen configurations are summarized in Table~\ref{tab:challenger-configurations}.

\begin{table}[t]
\centering
\caption{Frozen configurations used for the six modern challengers. Settings are method-native unless a shared benchmark value is stated.}
\label{tab:challenger-configurations}
\tablefont
\renewcommand{\arraystretch}{1.08}
\begin{tabularx}{\textwidth}{l >{\raggedright\arraybackslash}p{0.20\textwidth} >{\raggedright\arraybackslash}p{0.19\textwidth} >{\raggedright\arraybackslash}X}
\toprule
Method & Persistent auxiliary state & Optimization & Principal frozen settings \\
\midrule
OCAR & Replay memory, 128 examples & Shared learning rate $10^{-3}$ & Curvature/Fisher, damping, and replay rules retained \\
LPR & Replay memory, 128 examples & Learning rate $0.01$ & $\omega_0=1$, $\beta=1$, preconditioner refresh every 100 iterations \\
aL-SAR & Replay memory, 4000 examples & Adam, $3\times10^{-4}$ & Update batch 16, online factor $0.015625$, unfreeze rate $0.5$, temperature $0.125$, $k=4$, warm-up 50 \\
FGH & Prototype state & Adam, $5\times10^{-3}$ & Hypergradient rate 1, gradient-weight clamp 1000, one online epoch, prototype coefficient 1 \\
CCL-DC & Second learner, replay memory 1000 & AdamW, $5\times10^{-4}$, weight decay $10^{-4}$ & Replay batch 64, one memory iteration, distillation weight 2, temperature 4 \\
MKD & EMA teacher, replay memory 1000 & Adam, $5\times10^{-4}$ & Replay batch 64, distillation weight 5.5, temperature 4, EMA $0.01$, correction interval 10 \\
\bottomrule
\end{tabularx}
\end{table}

Table~\ref{tab:ocar-seedwise} gives the complete seedwise primary scores. Across the 90 matched seed-dataset-challenger comparisons, AFM has the higher primary score in 84. At the dataset-mean level it is higher in 17 of 18 pairwise comparisons. Relative to CCL-DC, AFM's dataset-level mean primary scores are higher by approximately $22.9\%$, $2.1\%$, and $4.2\%$ on CORe50, CLEAR-10, and CLAD-C, respectively. The corresponding gains over MKD are $37.8\%$, $8.4\%$, and $4.0\%$; over OCAR, $72.6\%$, $20.3\%$, and $36.9\%$; over LPR, $51.1\%$, $9.3\%$, and $9.3\%$; and over aL-SAR, $37.6\%$, $6.6\%$, and $55.6\%$. FGH is the only dataset-level reversal: it is about $2.8\%$ higher on CORe50, whereas AFM is about $0.9\%$ higher on CLEAR-10 and $67.8\%$ higher on CLAD-C.

\begin{table}[t]
\centering
\caption{Per-seed retention-plasticity hypervolume for AFM and the six modern challengers. AFM contributes its three-point family hypervolume and each challenger its fixed-configuration $RP$ area.}
\label{tab:ocar-seedwise}
\tablefont
\setlength{\tabcolsep}{4pt}
\textbf{CORe50}\\[0.3em]
\begin{tabular}{rrrrrrrr}
\toprule
Seed & AFM & CCL-DC & MKD & FGH & aL-SAR & LPR & OCAR \\
\midrule
11&0.2481&0.2004&0.1854&0.2632&0.1875&0.1702&0.1483\\
29&0.2281&0.1904&0.1739&0.2452&0.1763&0.1618&0.1428\\
47&0.2520&0.2030&0.1827&0.2520&0.1878&0.1592&0.1345\\
71&0.2621&0.2030&0.1817&0.2620&0.1729&0.1672&0.1497\\
101&0.2489&0.2111&0.1758&0.2509&0.1761&0.1616&0.1427\\
\bottomrule
\end{tabular}
\vspace{0.7em}

\textbf{CLEAR-10}\\[0.3em]
\begin{tabular}{rrrrrrrr}
\toprule
Seed & AFM & CCL-DC & MKD & FGH & aL-SAR & LPR & OCAR \\
\midrule
11&0.3852&0.3698&0.3413&0.3832&0.3573&0.3499&0.3101\\
29&0.3693&0.3747&0.3492&0.3736&0.3455&0.3379&0.3181\\
47&0.3836&0.3763&0.3646&0.3847&0.3641&0.3584&0.3303\\
71&0.3752&0.3639&0.3431&0.3658&0.3558&0.3432&0.3111\\
101&0.3903&0.3807&0.3582&0.3794&0.3635&0.3531&0.3124\\
\bottomrule
\end{tabular}
\vspace{0.7em}

\textbf{CLAD-C}\\[0.3em]
\begin{tabular}{rrrrrrrr}
\toprule
Seed & AFM & CCL-DC & MKD & FGH & aL-SAR & LPR & OCAR \\
\midrule
11&0.5065&0.4970&0.4914&0.2995&0.2989&0.4611&0.3499\\
29&0.5001&0.4960&0.4765&0.3413&0.3136&0.4599&0.3490\\
47&0.5231&0.5035&0.4974&0.3062&0.3526&0.4826&0.4221\\
71&0.5216&0.4818&0.4867&0.2905&0.3300&0.4628&0.3419\\
101&0.5133&0.4820&0.5136&0.2913&0.3526&0.4804&0.4105\\
\bottomrule
\end{tabular}
\end{table}

Table~\ref{tab:modern-overall-descriptive} reports the equal-weighted mean of the three benchmark-level primary scores. It is included only as a cross-protocol descriptive summary because the raw hypervolume scale is dataset specific.

\begin{table}[t]
\centering
\caption{Equal-weighted mean of the three benchmark-level primary scores. This descriptive summary complements, but does not replace, the dataset-specific primary analysis.}
\label{tab:modern-overall-descriptive}
\tablefont
\begin{tabular}{lr}
\toprule
Method & Equal-weighted three-benchmark primary score \\
\midrule
\textbf{AFM} & \textbf{0.3805} \\
CCL-DC       & 0.3556 \\
MKD          & 0.3414 \\
LPR          & 0.3273 \\
FGH          & 0.3126 \\
aL-SAR       & 0.2890 \\
OCAR         & 0.2782 \\
\bottomrule
\end{tabular}
\end{table}

Table~\ref{tab:ocar-metrics} gives the complete conventional metrics for every AFM coordinate and modern challenger. It shows explicitly that several challengers exceed AFM on individual stability or final-accuracy measures even when AFM has the stronger joint primary profile.

\begin{table}[t]
\centering
\caption{Five-seed mean conventional continual-learning metrics for AFM and the six modern challengers. Fgt. denotes forgetting, BWT backward transfer, Test held-out final-test accuracy, and Online chronological stream accuracy.}
\label{tab:ocar-metrics}
\tablefont
\begin{tabular}{llrrrrrr}
\toprule
Dataset & Method &
$R\uparrow$ & $P\uparrow$ &
Fgt.$\downarrow$ & BWT$\uparrow$ &
Test$\uparrow$ & Online$\uparrow$ \\
\midrule
CORe50
& AFM 0.10
  & 0.9648 & 0.2088 & 0.0352 & -0.0186
  & 0.2291 & 0.2125 \\
& AFM 0.50
  & 0.9185 & 0.2458 & 0.0815 & -0.0658
  & 0.2362 & 0.2594 \\
& AFM 1.00
  & 0.9020 & 0.2595 & 0.0980 & -0.0842
  & 0.2398 & 0.2764 \\
& CCL-DC
  & 0.9756 & 0.2066 & 0.0244 & 0.0339
  & 0.2648 & 0.2240 \\
& MKD
  & 0.9952 & 0.1808 & 0.0048 & 0.0374
  & 0.2415 & 0.1929 \\
& FGH
  & 0.8755 & 0.2910 & 0.1245 & -0.1088
  & 0.2493 & 0.3087 \\
& aL-SAR
  & 0.9862 & 0.1827 & 0.0138 & 0.0392
  & 0.2338 & 0.1914 \\
& LPR
  & 0.9934 & 0.1651 & 0.0066 & 0.0273
  & 0.2091 & 0.1698 \\
& OCAR
  & 0.9991 & 0.1437 & 0.0009 & 0.0023
  & 0.1469 & 0.1474 \\
\addlinespace

CLEAR-10
& AFM 0.10
  & 0.9933 & 0.3750 & 0.0067 & 0.0056
  & 0.3412 & 0.3753 \\
& AFM 0.50
  & 0.9828 & 0.3793 & 0.0172 & -0.0013
  & 0.3417 & 0.3799 \\
& AFM 1.00
  & 0.9768 & 0.3834 & 0.0232 & -0.0065
  & 0.3386 & 0.3835 \\
& CCL-DC
  & 0.9906 & 0.3766 & 0.0094 & 0.0102
  & 0.3515 & 0.3769 \\
& MKD
  & 0.9931 & 0.3537 & 0.0069 & 0.0106
  & 0.3337 & 0.3509 \\
& FGH
  & 0.9698 & 0.3891 & 0.0302 & -0.0095
  & 0.3407 & 0.3911 \\
& aL-SAR
  & 0.9918 & 0.3602 & 0.0082 & 0.0109
  & 0.3350 & 0.3586 \\
& LPR
  & 0.9969 & 0.3496 & 0.0031 & 0.0167
  & 0.3414 & 0.3486 \\
& OCAR
  & 0.9989 & 0.3168 & 0.0011 & 0.0020
  & 0.3045 & 0.3173 \\
\addlinespace

CLAD-C
& AFM 0.10
  & 0.8534 & 0.5706 & 0.1466 & -0.0081
  & 0.5732 & 0.5713 \\
& AFM 0.50
  & 0.7915 & 0.5874 & 0.2085 & 0.0293
  & 0.5440 & 0.6025 \\
& AFM 1.00
  & 0.7433 & 0.6040 & 0.2567 & 0.0382
  & 0.5457 & 0.6155 \\
& CCL-DC
  & 0.8608 & 0.5716 & 0.1392 & -0.0198
  & 0.5685 & 0.5856 \\
& MKD
  & 0.9020 & 0.5467 & 0.0980 & 0.0044
  & 0.5995 & 0.5703 \\
& FGH
  & 0.7694 & 0.3974 & 0.2306 & 0.0538
  & 0.4872 & 0.4451 \\
& aL-SAR
  & 0.9095 & 0.3625 & 0.0905 & 0.0529
  & 0.3113 & 0.4330 \\
& LPR
  & 0.8710 & 0.5388 & 0.1290 & 0.0165
  & 0.5892 & 0.5717 \\
& OCAR
  & 0.9878 & 0.3793 & 0.0122 & 0.0106
  & 0.3857 & 0.4477 \\
\bottomrule
\end{tabular}
\end{table}

\subsubsection{CLAD-C Pedestrian analysis}

The full-assimilation coordinate has more peak-to-final forgetting than no protection on CLAD-C. Classwise checkpoint analysis localizes most of the difference to Pedestrian. As shown in Table~\ref{tab:cladc-pedestrian-forensics}, AFM $\eta=1.00$ reaches substantially higher Pedestrian accuracy before the third chronological segment, after which both AFM and no protection collapse to zero held-out Pedestrian accuracy at the same boundary. The corresponding margin shift is negative for both learners and predictions are taken over by Car or Truck. The larger AFM forgetting value therefore reflects a higher acquired peak before a collapse shared with the unrestricted learner, not a collapse unique to AFM.

\begin{table}[t]
\centering
\caption{Exploratory CLAD-C Pedestrian diagnostic from saved checkpoints, reported as five-seed means unless stated otherwise. This diagnostic is not part of the primary benchmark analysis, and no retraining is involved.}
\label{tab:cladc-pedestrian-forensics}
\tablefont
\begin{tabular}{lrr}
\toprule
Quantity & AFM $\eta=1.00$ & No protection \\
\midrule
Pedestrian peak accuracy at boundary 2 & 77.24\% & 60.34\% \\
Pedestrian accuracy at boundary 3 & 0.00\% & 0.00\% \\
Final Pedestrian accuracy & 1.58\% & 1.76\% \\
Pedestrian peak-to-final forgetting & 75.66\% & 58.59\% \\
Mean Pedestrian margin at boundary 2 & +0.482 & +0.053 \\
Mean Pedestrian margin at boundary 3 & -4.202 & -5.587 \\
Mean boundary-2-to-3 margin change & -4.684 & -5.640 \\
Dominant wrong class at boundary 3 & Car 3/5; Truck 2/5 & Car 5/5 \\
\bottomrule
\end{tabular}
\end{table}

Finite protection and population performance coexist in a way consistent with the theorem. In individual seeds, active protected Pedestrian evidence remains present while held-out Pedestrian accuracy falls to zero. The theorem protects the declared finite evidence, not the complete unseen class distribution.

A separate exploratory inverse-frequency diagnostic tested whether aggregate post-prefix class frequency alone explained the full-assimilation penalty. Four paired seeds completed. The mean excess forgetting of AFM $\eta=1.00$ over no protection increased from about $0.0269$ in the primary benchmark runs to about $0.0837$ under inverse-frequency weighting, giving a mean attenuation of approximately $-0.0568$ with paired bootstrap interval $[-0.1113,-0.0023]$. The weighting was itself extreme and degraded both learners. This result rejects the simple explanation based only on global class counts. It does not establish temporal class absence as the unique cause; proving that stronger causal claim would require a dedicated reordering or replay intervention.

\subsubsection{Resource accounting}

The modern comparison preserves method-native auxiliary state. In particular, CCL-DC carries two complete learners and replay, while MKD carries the student, a complete EMA teacher, and replay. AFM carries candidate, projector, metaplastic, event, and finite-shield state. Runtime and memory are therefore interpreted together with predictive performance rather than normalized away. The present AFM implementation has substantial runtime overhead, especially on CLAD-C, as reported in the main text.

\subsection{Ablations, execution audit, and controlled mechanism tests}
\label{app:mechanism-audits}

\subsubsection{Transaction ablation}

The component ablation is performed at the predeclared CORe50 coordinate $\eta=0.50$. Because the normalized persistent update and finite residual are coupled inside the transaction, the experiment does not claim to isolate either component individually. Instead, full AFM is compared with two base-only protected transactions, one using the normalized finite budget and one using an absolute budget, together with a single-timescale AFM variant. Table~\ref{tab:afm-ablation-metrics} gives the conventional metrics.

\begin{table}[t]
\centering
\caption{Five-seed mean CORe50 metrics for full AFM at $\eta=0.50$ and the targeted ablations. The final column is the single-operating-point $R\times P$ product, not the AFM family hypervolume.}
\label{tab:afm-ablation-metrics}
\tablefont
\resizebox{\textwidth}{!}{
\begin{tabular}{lrrrrrrr}
\toprule
Method &
$R\uparrow$ &
$P\uparrow$ &
Fgt.$\downarrow$ &
BWT$\uparrow$ &
Online$\uparrow$ &
Test$\uparrow$ &
$R\times P\uparrow$ \\
\midrule
\textbf{Full AFM, $\eta=0.50$}
& 0.9185
& \textbf{0.2458}
& 0.0815
& -0.0658
& \textbf{0.2594}
& \textbf{0.2362}
& \textbf{0.2257} \\

Base-only, finite normalized budget
& \textbf{0.9984}
& 0.1906
& \textbf{0.0016}
& -0.0005
& 0.1885
& 0.2007
& 0.1903 \\

Base-only, fixed-absolute budget
& 0.9977
& 0.1908
& 0.0023
& -0.0004
& 0.1886
& 0.2016
& 0.1903 \\

Single-timescale AFM
& 0.9205
& 0.2452
& 0.0795
& -0.0638
& 0.2589
& 0.2354
& 0.2257 \\
\bottomrule
\end{tabular}}
\end{table}

The base-only variants obtain near-perfect retention by suppressing acquisition. Relative to the normalized base-only transaction, full AFM increases plasticity by approximately $0.0551$, online accuracy by $0.0708$, and held-out final-test accuracy by $0.0354$; the corresponding paired intervals remain positive. The two base-only budget parameterizations are almost indistinguishable in $R\times P$, so the ablation supports the importance of completing the deployed AFM transaction beyond the persistent base step, not a claim that one budget parameterization alone explains the gain. The single-timescale result is nearly identical to full AFM at this
coordinate, indicating that the observed CORe50 aggregate performance is not
sensitive to the multiscale bank in this particular setting. This empirical
equivalence does not conflict with the multiscale results in
Section~\ref{app:metaplastic-allocation}. Theorems~\ref{thm:multiscale}
and~\ref{thm:single} establish a worst-case representational separation when
temporal relevance must be represented over a long age range with a broad,
approximately scale-free profile; they do not imply that every finite stream
induces such a profile or that the resulting approximation gap must alter
downstream accuracy. A practical multiscale advantage is therefore expected
when protected relevance spans sufficiently separated ages and the resulting
allocation decisions are sensitive to those temporal weights. The present
CORe50 coordinate does not provide empirical evidence that this regime is
active.

\subsubsection{Execution-level audit}

Across the 45 protected real-world AFM runs, there are 677 certifications and commits and $41{,}954$ nonzero protected base updates. Finite restoration is accepted $41{,}999$ times out of $44{,}303$ attempts; failed attempts are rejected rather than committed. The maximum accepted finite endpoint error remains below $10^{-6}$ on every dataset. The minimum realized-minus-requested path-fraction margins are at floating-point roundoff scale, while the persistent-base progress ratio remains above the analytic theorem floor on every accepted protected update examined.

A tighter finite-smoothness diagnostic is deliberately more aggressive than the analytic theorem bound. On CLAD-C it exceeds the realized decrease in 11 of $22{,}932$ accepted restoration events, with maximum ratio discrepancy $3.8\times10^{-4}$ and maximum absolute loss-decrease discrepancy $9.1\times10^{-6}$. These events do not violate the analytic lower bound, whose margin stays positive. No corresponding negative tight-diagnostic events occur on CORe50 or CLEAR-10.

The routing audit also matches the declared information conditions. No protected CORe50 or CLEAR-10 run reports the calibration obstruction, whereas all protected CLAD-C runs do. The implementation therefore withholds the positive route-identification conclusion on CLAD-C. No evaluator-side segment information is introduced.

\subsubsection{Runtime bottleneck analysis}

The component-level decomposition in
Table~\ref{tab:afm-runtime-profile} identifies protection geometry, finite
counterfactual completion, and endpoint verification as the principal
runtime categories. CLAD-C provides the clearest operation-level
decomposition because it also has the largest overall runtime overhead.
Table~\ref{tab:afm-runtime-profile-cladc} therefore separates invocation
frequency from per-call cost for this protocol.

\begin{table}[t]
\centering
\caption{Invocation frequency and host-observed per-call cost of the principal
AFM operations on CLAD-C at $\eta=0.1$. Values are medians across the five
frozen seeds. Host timing uses the low-perturbation profiling mode without
per-region CUDA synchronization; CUDA-event timing is recorded separately.
The final column reports the median share of profiled host-observed elapsed
time.}
\label{tab:afm-runtime-profile-cladc}
\tablefont
\resizebox{\textwidth}{!}{
\begin{tabular}{lrrr}
\toprule
Operation
& Median calls
& Median host-observed time/call
& Host-timing share \\
\midrule
Protected-projector construction
& 1,711
& 2861.552 ms
& \textbf{37.89\%} \\

Compact-cardinal shield construction
& 1,526
& 1611.751 ms
& \textbf{20.58\%} \\

Safe-base backtracking
& 1,665
& 613.933 ms
& 8.60\% \\

Finite-address preparation
& 1,665
& 601.503 ms
& 8.08\% \\

Protected prestate evaluation
& 1,711
& 599.454 ms
& 8.06\% \\

Protected poststate evaluation
& 1,664
& 599.695 ms
& 8.05\% \\

Frontier evaluation
& 25,665
& 16.670 ms
& 3.46\% \\

Candidate fitting
& 37
& 4846.976 ms
& 1.86\% \\

Sketch update
& 142
& 356.499 ms
& 0.47\% \\

Post-restoration re-evaluation
& 1,526
& 28.977 ms
& 0.39\% \\

Comparator endpoint/backtracking
& 1,665
& 15.096 ms
& 0.24\% \\

Ordinary backward pass
& 1,711
& 4.122 ms
& 0.07\% \\

Ordinary forward pass
& 1,711
& 3.930 ms
& 0.07\% \\
\bottomrule
\end{tabular}}
\end{table}

Protected-projector construction is the largest individual component of the
CLAD-C host-timing profile, accounting for $37.89\%$ of measured
host-observed time. It combines a median of 1,711 invocations with an
approximately $2.862$\,s host-observed duration per invocation.
Compact-cardinal shield construction is the second largest component,
accounting for $20.58\%$ across 1,526 median invocations at approximately
$1.612$\,s per call. Safe-base backtracking, finite-address preparation,
and protected prestate and poststate evaluations each account for
approximately $8$-$9\%$ of the host-timing profile.

The invocation analysis distinguishes operations that are individually
expensive in the host-timing profile from operations that dominate aggregate
execution. Candidate fitting has the largest median host-observed duration
per invocation among the listed operations, approximately $4.847$\,s, but
occurs only 37 times and therefore contributes $1.86\%$. Frontier evaluation
occurs 25,665 times but has a median host-observed duration of only
$16.67$\,ms per call and contributes $3.46\%$. Protected-projector
construction is therefore the largest host-visible implementation bottleneck
because substantial per-call duration is combined with repeated invocation.

These host-timing measurements identify protected-projector construction,
compact-cardinal shield construction, finite-address preparation, and
protected-state evaluation as the principal host-visible targets for
implementation optimization. Such optimization can focus on reducing
repeated construction and reevaluation costs without changing the same-state
comparator, persistent-assimilation rule, or finite endpoint constraints that
define the AFM transaction. The separately recorded CUDA-event timings provide
the corresponding device-side diagnostic.

\paragraph{Profiling validity.}
The profiling instrumentation does not modify the AFM learning rule, frozen
hyperparameters, prepared streams, or operating-point selection. The default
profiler uses low-perturbation host timing and deliberately avoids
per-region CUDA synchronization; CUDA-event elapsed times are recorded
separately. This design reduces synchronization-induced perturbation while
requiring the host-region percentages to be interpreted as a decomposition
of host-observed execution rather than synchronized device execution.
CORe50 and CLEAR-10 reproduce the corresponding unprofiled metrics exactly.
CLAD-C uses the same AFM configuration and byte-identical prepared training
and evaluation streams, but its frozen execution mode is nondeterministic.
Independent unprofiled CLAD-C executions with the same nominal seed and AFM
settings likewise produce distinct numerical trajectories. The profiling
results therefore characterize execution of the frozen AFM algorithm rather
than reproduction of one particular numerical realization.

\subsubsection{Controlled mechanism tests}

The controlled experiments exercise recurrence, semantic conflict, observable route change, capacity pressure, and observational indistinguishability. Table~\ref{tab:afm-controlled-mechanisms} summarizes the results. The within-route observable-shift condition produces a route split on every seed. Semantic conflict produces reopening but no unsupported split, and the identical-observation condition remains unresolved, consistent with the absence of learner-observable distinguishing information.

\begin{table}[t]
\centering
\caption{Controlled AFM mechanism tests on three seeds. These experiments test recurrence, conflict, route refinement, capacity pressure, and observational indistinguishability rather than external predictive performance.}
\label{tab:afm-controlled-mechanisms}
\tablefont
\renewcommand{\arraystretch}{1.08}
\begin{tabularx}{\textwidth}{p{0.22\textwidth}c p{0.28\textwidth} >{\raggedright\arraybackslash}X}
\toprule
Scenario & Seeds passed & Structural events & Observed behavior \\
\midrule
Favourable recurrence & 3/3 & commits 6/6/6; protected steps 20/13/10 & recurrent protected learning completed \\
Semantic conflict & 3/3 & reopenings 2/3/1; splits 0/0/0 & conflict handled without an unsupported route split \\
Observable shift & 3/3 & commits 3/3/3; protected steps 41/27/27 & observable-shift control completed \\
Within-route observable shift & 3/3 & splits 1/1/1 & positive route splitting exercised on every seed \\
Capacity pressure & 3/3 & commits 11/11/11; protected steps 30/18/21 & bounded-capacity control completed \\
Identical-observation impossibility & 3/3 & commits 3/3/3; protected steps 12/8/11 & indistinguishable observations did not induce a false semantic resolution \\
\bottomrule
\end{tabularx}
\end{table}

The transfer-conflict diagnostic did not instantiate the required theorem precondition: no transfer attempt occurred on any of its three seeds, so the designed zero-dimensional feasible subspace was not exercised. It is therefore non-informative for the transfer theorem and is reported only to delimit the empirical coverage of the mechanism tests.

\begin{table}[t]\centering\tablefont
\caption{AFM transaction ablation and execution validation}\label{tab:ed-ablation}
\textbf{Retention-plasticity summary}\par\smallskip
\begin{tabularx}{0.98\textwidth}{>{\raggedright\arraybackslash}X rrr}\toprule
Method & $R$ & $P$ & $R\times P$\\\midrule
Full AFM, $\eta=0.50$ & 0.9185 & \textbf{0.2458} & \textbf{0.2257}\\
Base-only, normalized budget & \textbf{0.9984} & 0.1906 & 0.1903\\
Base-only, fixed absolute budget & 0.9977 & 0.1908 & 0.1903\\
Single-timescale AFM & 0.9205 & 0.2452 & 0.2257\\\bottomrule
\end{tabularx}
\vspace{0.8em}
\textbf{Forgetting and predictive metrics}\par\smallskip
\begin{tabularx}{0.98\textwidth}{>{\raggedright\arraybackslash}X rrrr}\toprule
Method & Forgetting & BWT & Online & Test\\\midrule
Full AFM, $\eta=0.50$ & 0.0815 & -0.0658 & \textbf{0.2594} & \textbf{0.2362}\\
Base-only, normalized budget & \textbf{0.0016} & -0.0005 & 0.1885 & 0.2007\\
Base-only, fixed absolute budget & 0.0023 & -0.0004 & 0.1886 & 0.2016\\
Single-timescale AFM & 0.0795 & -0.0638 & 0.2589 & 0.2354\\\bottomrule
\end{tabularx}
\vspace{1em}
\begin{tabularx}{0.98\textwidth}{>{\raggedright\arraybackslash}X >{\raggedright\arraybackslash}p{0.30\textwidth}}\toprule
Execution-level quantity across 45 protected benchmark runs & Recorded value\\\midrule
Certifications and commits & 677\\
Nonzero protected base updates & 41,954\\
Finite restorations accepted / attempted & 41,999 / 44,303\\
Maximum accepted finite endpoint error & $<10^{-6}$ on every dataset\\
Accepted protected updates below analytic theorem floor & 0\\
CLAD-C tighter finite-smoothness diagnostic exceedances & 11 / 22,932 accepted restoration events\\\bottomrule
\end{tabularx}
\vspace{1em}
\parbox{0.94\textwidth}{\tablefont The ablation uses $n=5$ frozen CORe50 seed trajectories at the predeclared $\eta=0.50$ coordinate. Base-only variants attain near-perfect retention by suppressing acquisition; full AFM increases plasticity, online accuracy and held-out final-test accuracy. The single-timescale variant is nearly identical on this particular finite stream, which does not contradict the worst-case multiscale representation theorem. Failed finite-restoration attempts are rejected rather than committed. Runtime decomposition and controlled mechanism tests are reported earlier in this section.}
\end{table}

\section{Discussion}

\label{sec:discussion}

AFM changes the unit at which the stability-plasticity trade-off is quantified. The protected learner is not compared with a nominal unconstrained direction but with the actual same-state endpoint that the declared no-protection learner would accept. This creates a meaningful denominator for persistent plasticity. The resulting guarantee separates the requested assimilation coordinate $\eta_t$ from the compatibility fraction $\kappa_t$: a large $\eta_t$ does not imply that an incompatible gradient can be stored persistently, and the theorem exposes rather than conceals that loss of compatibility.

The second conceptual distinction is between persistent learning and finite functional completion. Exact restoration of protected outputs and exact reproduction of the current comparator endpoint are useful deployed properties, but they do not imply that the unrestricted parameter trajectory has been stored. Outside the finite support of the residual, future behavior is governed by the protected base. This separation is also what makes the empirical ablation interpretable: a base-only learner can preserve almost everything by suppressing acquisition, whereas the complete transaction recovers substantially more plasticity and predictive performance while the finite residual remains distinct from persistent base learning.

The finite/population distinction is equally important. AFM gives exact finite protection for declared evidence and bounded certificates for broader behavior maps when the corresponding assumptions are available. The CLAD-C Pedestrian result shows why these claims must remain separate. A finite set of protected Pedestrian observations can be preserved exactly while the unseen Pedestrian population is overtaken by classifier interference. Treating finite interpolation as population retention would therefore overstate the theorem.

Task-free operation also has an information-theoretic boundary. AFM can route exactly under observable separation and can control false route refinement with separately allocated sequential evidence, but no learner can identify a semantic distinction that induces the same observation law. AFM therefore withholds the positive routing claim when observable separation is insufficient and does not use evaluator boundaries as learner-visible information.

The main practical limitation is computational overhead. The profiling
analysis in Table~\ref{tab:afm-runtime-profile} localizes the host-observed
execution cost primarily to protection geometry, finite counterfactual
completion, and endpoint verification rather than ordinary learning or
construction of the same-state comparator. The detailed CLAD-C analysis in
Section~\ref{app:mechanism-audits} further identifies repeated
protected-projector construction and compact-cardinal shield construction as
the two largest individual components of the host-timing profile. These results provide concrete optimization targets, including reducing
redundant projector construction, finite-address preparation, shield
construction, and protected-state reevaluation. Such optimizations must
preserve the same-state comparator and the distinction between persistent
learning and finite functional completion. Reducing these implementation
costs while preserving the certified AFM transaction is therefore an
important direction for future work. The bounded-resource theorem concerns
structural state and explicit precision, not low wall-clock cost.

Finally, the convex dynamic-regret specialization and the nonconvex neural experiments have different status. The former supplies a global tracking statement under a supplied exact convex projection oracle; the latter are empirical executions of the complete endpoint transaction with numerical verification, not a claim that the real neural network satisfies all outward-certified convex or interval assumptions. Keeping these levels separate is necessary for the theorem and experiments to reinforce rather than overstate one another.

These experiments identify functional compatibility as an experimentally controllable property of incoming learning, not merely a retrospective description of forgetting. The finding reframes the stability-plasticity problem. A neural state does not offer a single undifferentiated capacity for learning: some incoming functional changes are more compatible with protected behaviour than others, and learning rules differ in how effectively they capture that opportunity. This perspective complements replay, regularization, gradient projection and plasticity-preservation approaches rather than replacing them.\citep{schwarz2018progresscompress,dohare2024plasticity,stork2026interference,abbes2026gradient}

The result also defines its own boundary. Compatibility does not determine persistent learning independently of retention allowance, method or finite curvature; $\rhoPers$ is not a universal function of $\kappaF$; exact finite output restoration is not population-level retention; and the present fixed-norm construction does not establish a four-level causal continuum at ordinary full update magnitude. The near-zero compatibility assessment similarly showed that 65 negative theorem-aligned empirical margins occurred where the required finite curvature or step condition had not been certified, rather than constituting certified theorem violations (Table~\ref{tab:ed-zeroaudit}). These qualifications are part of the result: they distinguish a causal local learning geometry from the nonlinear regime in which that geometry can no longer be realized by the same matched intervention.

Together, the theory, intervention and constructive mechanism support a concise principle: functional compatibility controls the local persistent-learning frontier, retention constraints determine how much of that frontier can be used, and nonlinear geometry determines how far it extends. This offers a measurable way to ask not only how an artificial learner should protect the past, but which components of new learning can safely become part of its persistent future.

\section*{Acknowledgements}
The author thanks the ADAPT Centre for access to high-performance computing resources used for the experimental runs. The ADAPT Centre had no role in conceptualization, mathematical or methodological development, software development, study design, data analysis, interpretation or manuscript preparation.

\section*{Funding}
The author received no specific funding for this work.

\section*{Author contributions}
H.J. conceived the study, developed the theory and methodology, implemented the software, designed and ran the experiments, performed the formal and statistical analyses, validated the results, and wrote and revised the manuscript.

\section*{Competing interests}
The author declares no competing interests.

\section*{Data availability}
All datasets used in this study are publicly available from the original sources cited in this paper. No new primary dataset was generated. The public benchmark datasets are not redistributed with this submission. Numerical summaries underlying the reported figures and tables are provided throughout this paper. Saved checkpoints and row-level derived experimental outputs required for additional verification are available from the corresponding author on reasonable request. The released analysis and experiment code described under Code availability provides the procedures and configurations used to reproduce the derived results.

\section*{Code availability}
The AFM implementation, causal-compatibility and follow-up experiment code, configuration files, analysis scripts and reproduction instructions are publicly available at \url{https://github.com/hosseinjavidnia/afm/}.

\bibliographystyle{tmlr}
\bibliography{afm_references}

\end{document}